\theoremstyle{plain}
\newtheorem{proposition}{Proposition} 
\newtheorem{theorem}{Theorem}
\newtheorem{assumption}{Assumption}
\newtheorem{lemma}{Lemma}
\newtheorem{definition}{Definition}
\theoremstyle{definition}
\theoremstyle{remark}
\newcommand{\cut}[1]{{}}
\useunder{\uline}{\ul}{}
\def\eqref#1{equation~\ref{#1}}
\def\1{\bm{1}}
\def\ry{{\textnormal{y}}}
\def\rvf{{\mathbf{f}}}
\def\rvx{{\mathbf{x}}}
\def\rvy{{\mathbf{y}}}
\def\rmA{{\mathbf{A}}}
\def\rmD{{\mathbf{D}}}
\def\rmF{{\mathbf{F}}}
\def\rmI{{\mathbf{I}}}
\def\rmP{{\mathbf{P}}}
\def\rmX{{\mathbf{X}}}
\def\rmY{{\mathbf{Y}}}
\def\vmu{{\bm{\mu}}}
\def\gH{{\mathcal{H}}}
\def\gL{{\mathcal{L}}}
\def\gN{{\mathcal{N}}}
\newcommand{\E}{\mathbb{E}}
\newcommand{\KL}{D_{\mathrm{KL}}}
\newcommand{\sbra}{\left(}
\newcommand{\sket}{\right)}
\newcommand{\ind}[1]{\mathbbm{1}\left[#1\right]}
\newcommand{\hL}{\widehat{\gL}}
\newcommand{\hLg}{\hL^{\gamma}}
\newcommand{\Lg}{\gL^{\gamma}}
\newcommand{\Lgh}{\gL^{\gamma/2}}
\newcommand{\Lgq}{\gL^{\gamma/4}}
\newcommand{\risk}{\gL^0}
\newcommand{\hrisk}{\hL^0}
\renewcommand \thepart{}
\renewcommand \partname{}
\title{Demystifying Structural Disparity in Graph Neural Networks: Can One Size Fit All?}
\author{
Haitao Mao$^1$, Zhikai Chen$^1$, Wei Jin$^{4}$, Haoyu Han$^1$, \\ \textbf{Yao Ma$^3$, Tong Zhao$^2$, Neil Shah$^2$, Jiliang Tang$^1$} \\
$^1$Michigan State University  \quad  $^2$Snap Inc  \quad $^3$Rensselaer Polytechnic Institute. \quad  $^4$ Emory University \\
\{haitaoma, chenzh85,hanhaoy1,tangjili\}@msu.edu, \\
\{tzhao,nshah\}@snap.com \quad
may13@rpi.edu, \quad
wei.jin@emory.edu
}
\begin{document}
\maketitle
\doparttoc 
\faketableofcontents
\begin{abstract}
Recent studies on Graph Neural Networks(GNNs) provide both empirical and theoretical evidence supporting their effectiveness in capturing structural patterns on both homophilic and certain heterophilic graphs. Notably, most real-world homophilic and heterophilic graphs are comprised of a mixture of nodes in both homophilic and heterophilic structural patterns, exhibiting a structural disparity. However, the analysis of GNN performance with respect to nodes exhibiting different structural patterns, e.g., homophilic nodes in heterophilic graphs, remains rather limited. In the present study, we provide evidence that Graph Neural Networks(GNNs) on node classification typically perform admirably on homophilic nodes within homophilic graphs and heterophilic nodes within heterophilic graphs while struggling on the opposite node set, exhibiting a performance disparity. We theoretically and empirically identify effects of GNNs on testing nodes exhibiting distinct structural patterns. We then propose a rigorous, non-i.i.d PAC-Bayesian generalization bound for GNNs, revealing reasons for the performance disparity, namely the aggregated feature distance and homophily ratio difference between training and testing nodes. Furthermore, we demonstrate the practical implications of our new findings via (1) elucidating the effectiveness of deeper GNNs; and (2) revealing an over-looked distribution shift factor on graph out-of-distribution problem and proposing a new scenario accordingly. 
\end{abstract}

\section{Introduction\label{sec:intro}}

\begin{wrapfigure}{R}{0.25\textwidth}
    \centering
    \includegraphics[width=0.25\textwidth]{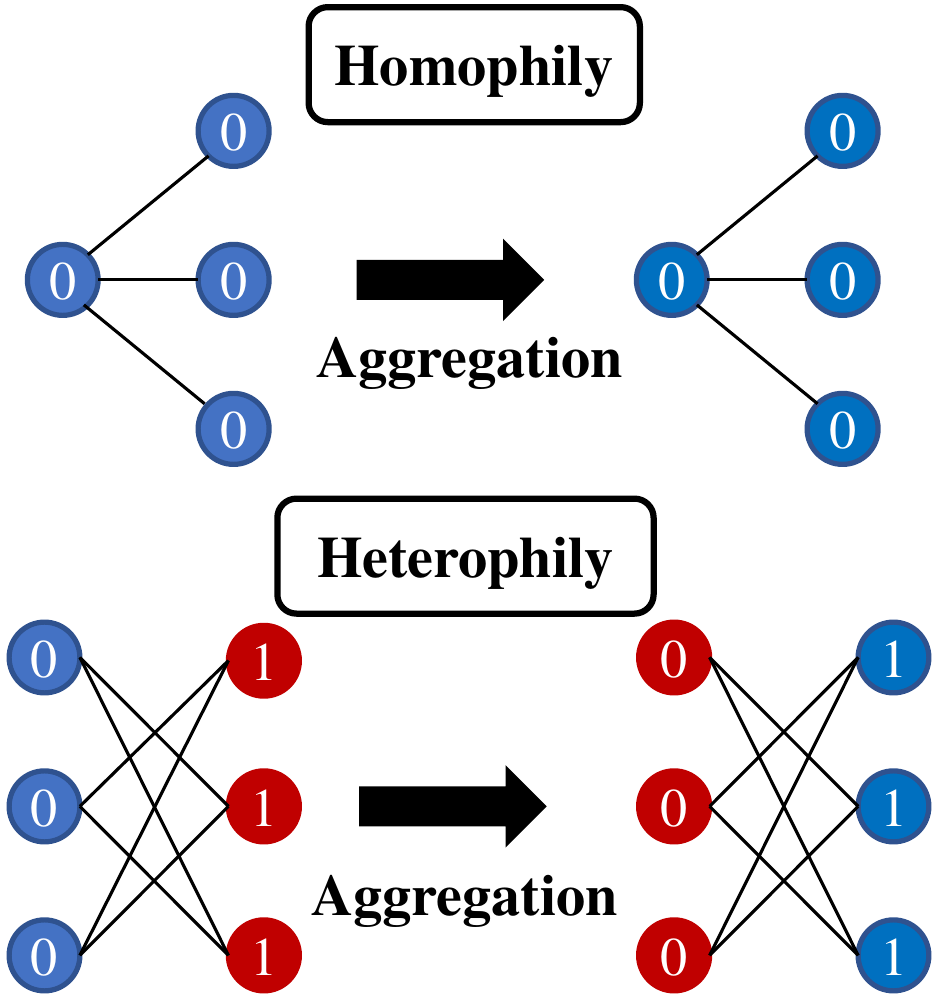}
    \vskip -0.5em
    \caption{\footnotesize Examples of homophilic and heterophilic patterns. Colors/numbers indicate node features/labels.}
    \vskip -2em
    \label{fig:example}
\end{wrapfigure}

Graph Neural Networks~(GNNs)~\cite{ma2021deep, Kipf2016SemiSupervisedCW, xupowerful, hamilton2017inductive} are a powerful technique for tackling a wide range of graph-related tasks~\cite{zhang2018link, xupowerful, he2020lightgcn, wieder2020compact, zhu2021neural,  li2023evaluating, zhang2023company}, especially node classification~\cite{Kipf2016SemiSupervisedCW,hamilton2017inductive,velickovic2017graph,zhao2022learning}, which requires predicting unlabeled nodes based on the graph structure, node features, and a subset of labeled nodes. The success of GNNs can be ascribed to their ability to capture structural patterns through the aggregation mechanism that effectively combines feature information from neighboring nodes~\cite{Ma2021IsHA}.

GNNs have been widely recognized for their effectiveness on homophilic graphs~\cite{baranwal2021graph, Kipf2016SemiSupervisedCW, velickovic2017graph, hamilton2017inductive, Klicpera2018PredictTP, wu2019simplifying,zhao2021data}. 
In homophilic graphs, connected nodes tend to share the same label, which we refer to as \textit{homophilic patterns}.
An example of the homophilic pattern is depicted in the upper part of Figure~\ref{fig:example}, where node features and node labels are denoted by colors~(i.e., blue and red) and numbers~(i.e., 0 and 1), respectively. 
We can observe that all connected nodes exhibit homophilic patterns and share the same label 0.
Recently, several studies have demonstrated that GNNs can also perform well on certain heterophilic graphs~\cite{baranwal2023effects, Ma2021IsHA, luanrevisiting}.
In heterophilic graphs, connected nodes tend to have different labels, which we refer to as \textit{heterophilic patterns}. The example in the lower part of Figure~\ref{fig:example} shows the heterophilic patterns. 
Based on this example, we intuitively illustrate how GNNs can work on such heterophilic patterns (lower right): after averaging features over all neighboring nodes, nodes with label 0 completely switch from their initial blue color to red, and vice versa; despite this feature alteration, the two classes remain easily distinguishable since nodes with the same label~(number) share the same color~(features).

However, existing studies on the effectiveness of GNNs~\cite{baranwal2021graph, baranwal2023effects, Ma2021IsHA, luanrevisiting} only focus on either homophilic or heterophilic patterns solely and overlook the fact that real-world graphs typically exhibit a mixture of homophilic and heterophilic patterns. 
Recent studies~\cite{Li2022FindingGH, lim2021large} reveal that many heterophilic graphs, e.g., {\small\texttt{Squirrel}}
and {\small\texttt{Chameleon}}~\cite{rozemberczki2021multi}, contain over 20\% homophilic nodes. 
Similarly, our preliminary study depicted in Figure~\ref{fig:node_distribution} demonstrates that heterophilic nodes are consistently present in many homophilic graphs, e.g., {\small\texttt{PubMed}}~\cite{sen2008collective} and {\small\texttt{Ogbn-arxiv}}~\cite{hu2020ogb}. 
Hence, real-world homophilic graphs predominantly consist of homophilic nodes as the majority structural pattern and heterophilic nodes in the minority one, while heterophilic graphs exhibit an opposite phenomenon with heterophilic nodes in the majority and homophilic ones in the minority. 

To provide insights aligning the real-world scenario with structural disparity, we revisit the toy example in Figure~\ref{fig:example}, considering both homophilic and heterophilic patterns together. 
Specifically, for nodes labeled 0, both homophilic and heterophilic node features appear in blue before aggregation. 
However, after aggregation, homophilic and heterophilic nodes in label 0 exhibit different features, appearing blue and red, respectively. 
Such differences may lead to performance disparity between nodes in majority and minority patterns. 
For instance, in a homophilic graph with the majority pattern being homophilic, GNNs are more likely to learn the association between blue features and class 0 on account of more supervised signals in majority. 
Consequently, nodes in the majority structural pattern can perform well, while nodes in the minority structural pattern may exhibit poor performance, indicating an over-reliance on the majority structural pattern.
Inspired by insights from the above toy example, we focus on answering following questions systematically in this paper: How does a GNN behave when encountering the structural disparity of homophilic and heterophilic nodes within a dataset? and Can one GNN benefit all nodes despite structural disparity?

\textbf{Present work}. 
Drawing inspiration from above intuitions, we investigate how GNNs exhibit different effects on nodes with structural disparity, the underlying reasons, and implications on graph applications. 
Our study proceeds as follows:
\textbf{First}, we empirically verify the aforementioned intuition by examining the performance of testing nodes w.r.t. different homophily ratios, rather than the overall performance across all test nodes as in~\cite{Ma2021IsHA, baranwal2021graph, luanrevisiting}. 
We show that GCN~\cite{Kipf2016SemiSupervisedCW}, a vanilla GNN, often underperforms MLP-based models on nodes with the minority pattern while outperforming them on the majority nodes.
\textbf{Second}, we examine how aggregation, the key mechanism of GNNs, shows different effects on homophilic and heterophilic nodes. We propose an understanding of why GNNs exhibit performance disparity with a non-i.i.d PAC-Bayesian generalization bound, revealing that both feature distance and homophily ratio differences between train and test nodes are key factors leading to performance disparity.
\textbf{Third}, we showcase the significance of these insights by exploring implications for (1) elucidating the effectiveness of deeper GNNs and (2) introducing a new graph out-of-distribution scenario with an over-looked distribution shift factor. Codes are available at \href{https://github.com/HaitaoMao/Demystify-structural-disparity}{here}.

\section{Prelimaries\label{sec:prelimary}}
\textbf{Semi-Supervised Node classification~(SSNC).}
Let $G=(V,E)$ be an undirected graph, where $V=\left \{v_1, \cdots, v_n \right \}$ is the set of $n$ nodes and $E \subseteq V \times V$ is the edge set.
Nodes are associated with node features $\mathbf{X}\in \mathbb{R}^{n\times d}$, where $d$ is the feature dimension. The number of class is denoted as $K$. The adjacency matrix $\mathbf{A} \in \left \{0,1 \right \}^{n \times n}$ represents graph connectivity where ${\bf A}[i, j]=1$ indicates an edge between nodes $i$ and $j$.
$\rmD$ is a degree matrix and $\rmD[i,i]=d_i$ with $d_i$ denoting degree of node $v_i$. 
Given a small set of labeled nodes, $V_{\text{tr}} \subseteq V$, SSNC task is to predict on unlabeled nodes $V\setminus V_{\text{tr}}$.

\textbf{Node homophily ratio} is a common metric to quantify homophilic and heterophilic patterns. 
It is calculated as the proportion of a node's neighbors sharing the same label as the node~\cite{peigeom, zhu2020beyond, Li2022FindingGH}. 
It is formally defined as $h_i = \frac{|\{u\in \mathcal{N}(v_i): y_u=y_v \}|}{d_i}$,
where $\mathcal{N}(v_i)$ denotes the neighbor node set of $v_i$ and $d_i = |\mathcal{N}(v_i)|$ is the cardinality of this set. 
Following~\cite{Li2022FindingGH, du2022gbk, peigeom}, node $i$ is considered to be homophilic when more neighbor nodes share the same label as the center node with $h_i>0.5$.
We define the graph homophily ratio $h$ as the average of node homophily ratios $h=\frac{\sum_{i\in V}h_i}{|V|}$. 
Moreover, this ratio can be easily extended to higher-order cases $h_i^{(k)}$ by considering $k$-order neighbors $\mathcal{N}_k(v_i)$.

\textbf{Node subgroup} refers to a subset of nodes in the graph sharing similar properties, typically homophilic and heterophilic patterns measured with node homophily ratio. 
Training nodes are denoted as $V_{\text{tr}}$. 
Test nodes $V_{\text{te}}$ can be categorized into $M$ node subgroups, 
$V_{\text{te}} = \bigcup_{m=1}^M V_m$, where nodes in the same subgroup $V_m$ share similar structural pattern.

\section{Effectiveness of GNN on nodes with different structural properties\label{sec:analysis}}


In this section, we explore the effectiveness of GNNs on different node subgroups exhibiting distinct structural patterns, specifically, homophilic and heterophilic patterns. 
It is different from previous studies~\cite{Ma2021IsHA, baranwal2021graph,baranwal2023effects,luan2023graph,luanrevisiting} that primarily conduct analysis on the whole graph and demonstrate effectiveness with an overall performance gain. 
These studies, while useful, do not provide insights into the effectiveness of GNNs on different node subgroups, and may even obscure scenarios where GNNs fail on specific subgroups despite an overall performance gain. 
To accurately gauge the effectiveness of GNNs, we take a closer examination on node subgroups with distinct structural patterns.  
The following experiments are conducted on two common homophilic graphs, Ogbn-arxiv~\cite{hu2020ogb} and Pubmed~\cite{sen2008collective}, and two heterophilic graphs, Chameleon and Squirrel~\cite{rozemberczki2021multi}. 
These datasets are chosen since GNNs can achieve better overall performance than MLP. 
Experiment details and related work on GNN disparity are in Appendix~\ref{app:experiment_details} and~\ref{app:related}, respectively.

\begin{figure*}[!h]
    \subfigure[PubMed ($h$=0.79)]{
    \centering
    \begin{minipage}[b]{0.23\textwidth}
    \includegraphics[width=1.0\textwidth]{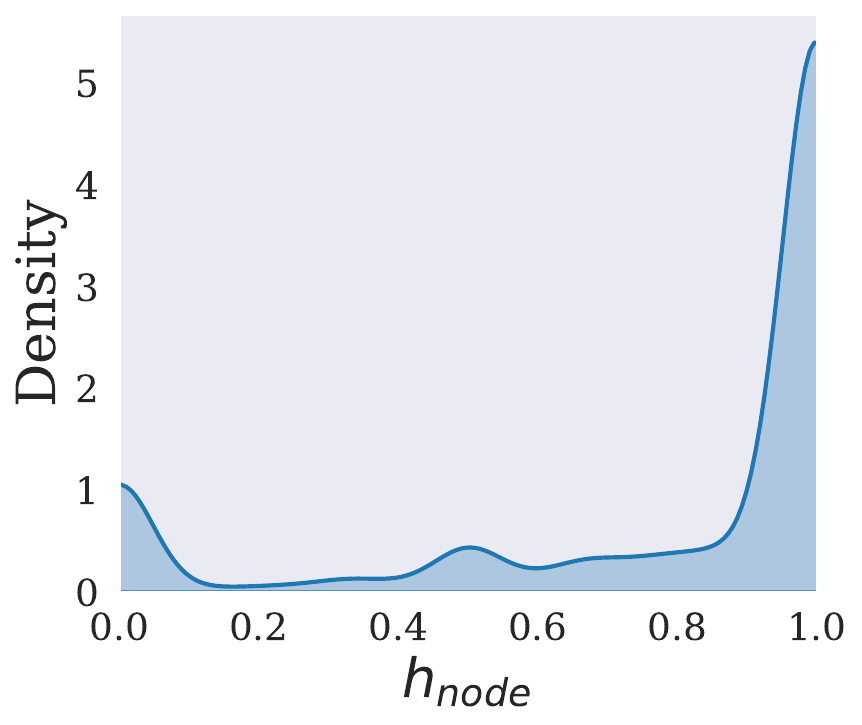}
    \end{minipage}
    }
    \subfigure[Ogbn-arxiv ($h$=0.63)]{
    \centering
    \begin{minipage}[b]{0.23\textwidth}
    \includegraphics[width=1.0\textwidth]{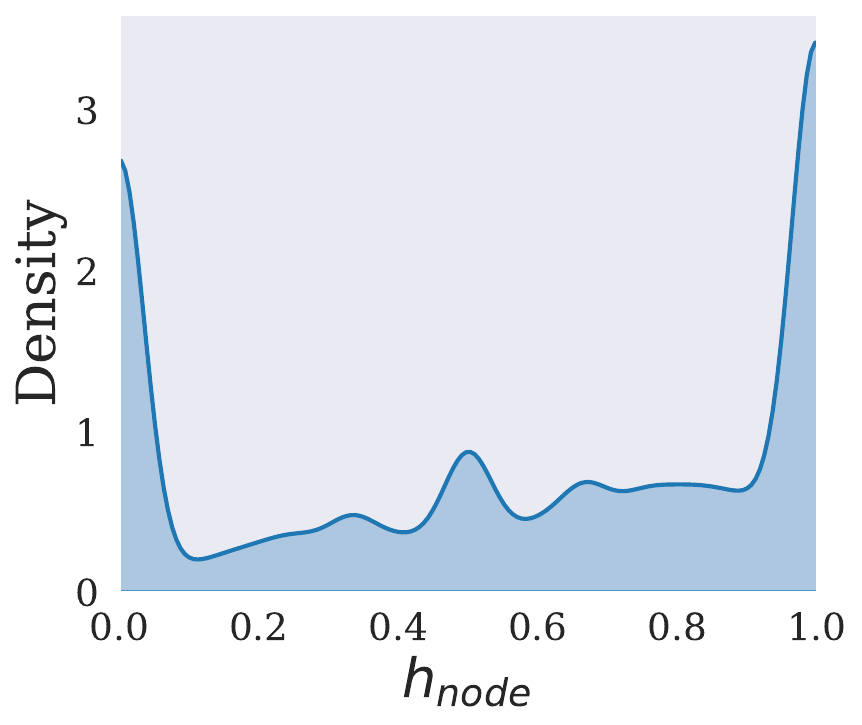}
    \end{minipage}
    }
    \subfigure[Chameleon ($h$=0.22)]{
    \centering
    \begin{minipage}[b]{0.23\textwidth}
    \includegraphics[width=1.0\textwidth]{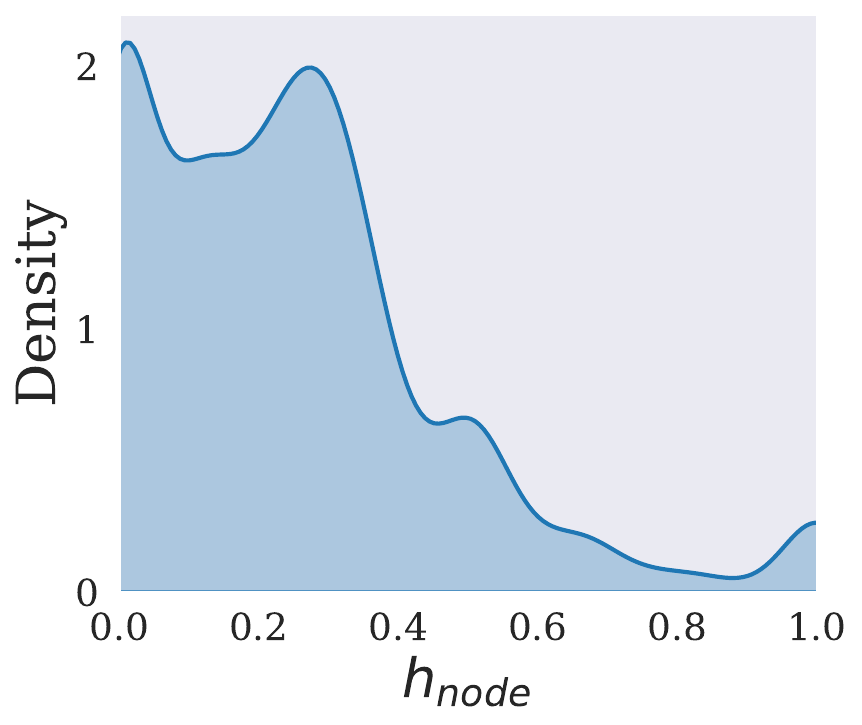}
    \end{minipage}
    }
    \subfigure[Squirrel ($h$=0.25)]{
    \centering
    \begin{minipage}[b]{0.23\textwidth}
    \includegraphics[width=1.0\textwidth]{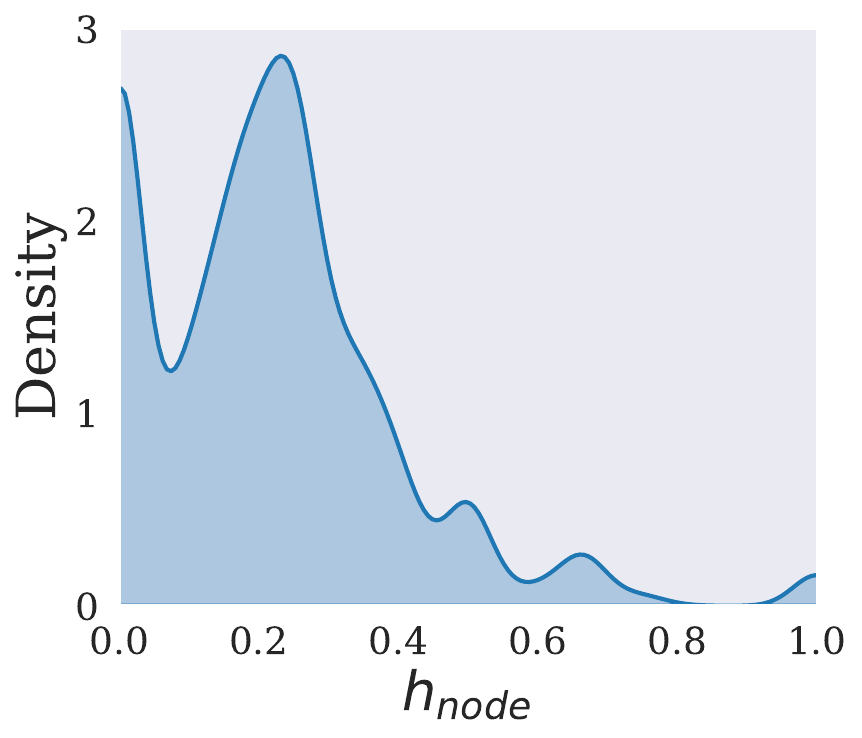}
    \end{minipage}
    }
    \caption{Node homophily ratio distributions. All graphs exhibit a mixture of homophilic and heterophilic nodes despite various graph homophily ratio $h$.  
    \label{fig:node_distribution}}
\end{figure*}
\textbf{Existence of structural pattern disparity within a graph} is to recognize real-world graphs exhibiting different node subgroups with diverse structural patterns, before investigating the GNN effectiveness on them. We demonstrate node homophily ratio distributions on the aforementioned datasets in Figure~\ref{fig:node_distribution}.
We can have the following observations. 
\textbf{Obs.1:}  All four graphs exhibit a mixture of both homophilic and heterophilic patterns, rather than a uniform structural patterns. \textbf{Obs.2:} In homophilic graphs, the majority of nodes exhibit a homophilic pattern with $h_i{>}0.5$, while in heterophilic graphs, the majority of nodes exhibit the heterophilic pattern with $h_i{\le}0.5$. 
We define nodes in majority structural pattern as majority nodes, e.g., homophilic nodes in a homophilic graph.

\begin{figure*}[!h]
    \subfigure[PubMed ($h$=0.79)]{
        \centering
        \begin{minipage}[b]{0.23\textwidth}
        \includegraphics[width=1.0\textwidth]{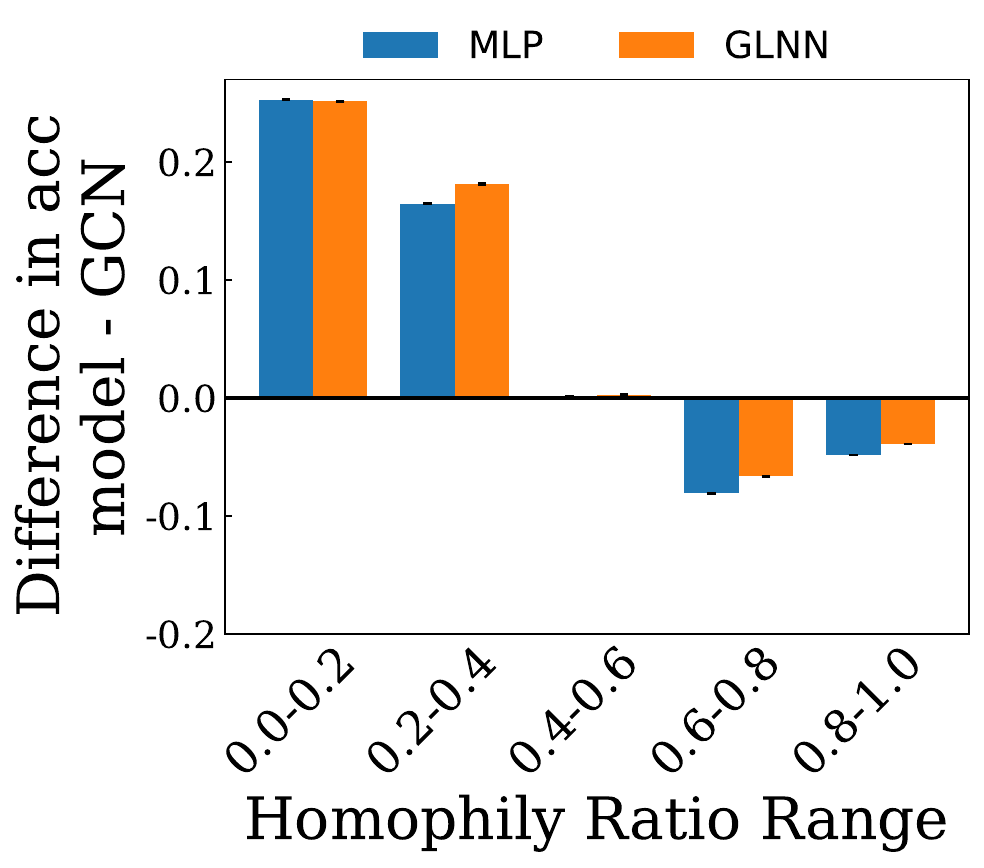}
        \end{minipage}
    }
    \subfigure[Ogbn-arxiv ($h$=0.63)]{
    \centering
    \begin{minipage}[b]{0.23\textwidth}
    \includegraphics[width=1.0\textwidth]{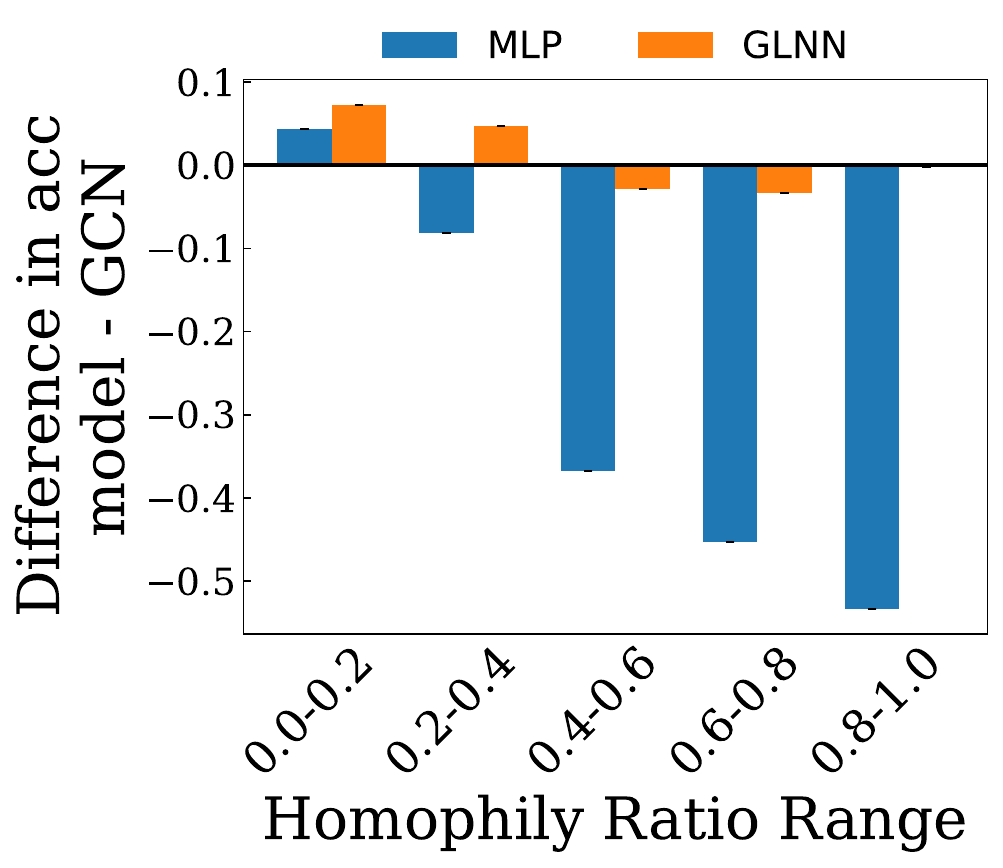}
    \end{minipage}
    }
    \subfigure[Chameleon ($h$=0.22)]{
    \centering
    \begin{minipage}[b]{0.23\textwidth}
    \includegraphics[width=1.0\textwidth]{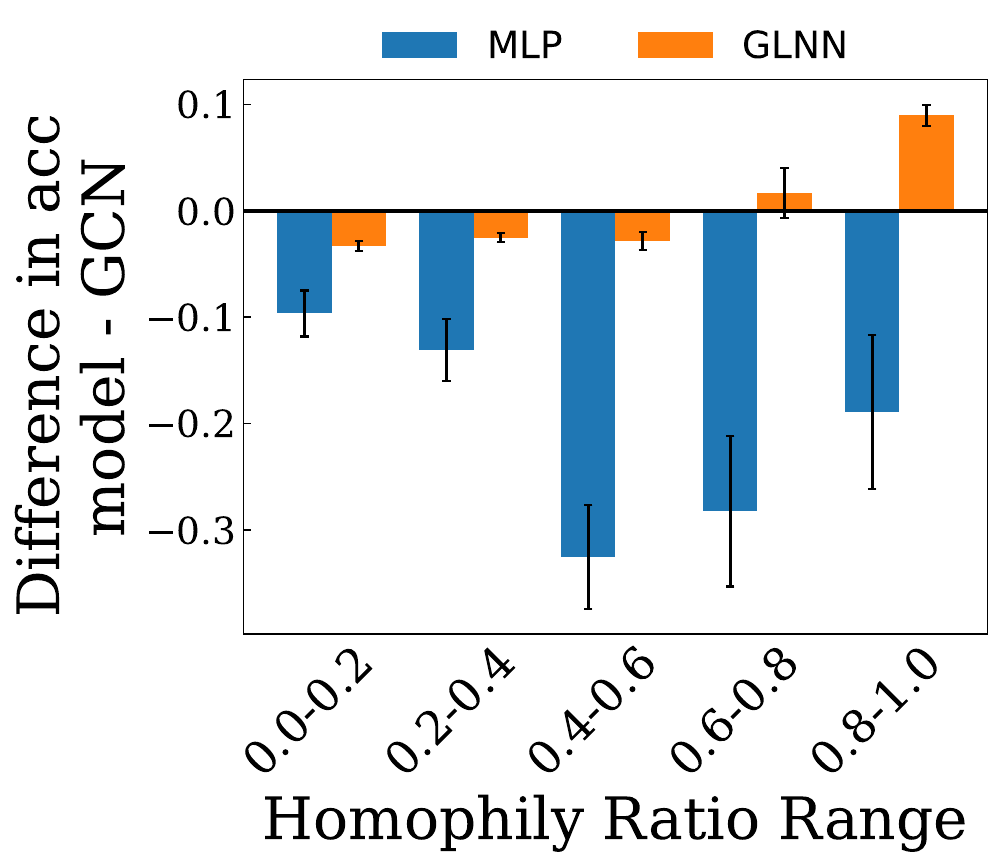}
    \end{minipage}
    }
    \subfigure[Squirrel ($h$=0.25)]{
    \centering
    \begin{minipage}[b]{0.23\textwidth}
    \includegraphics[width=1.0\textwidth]{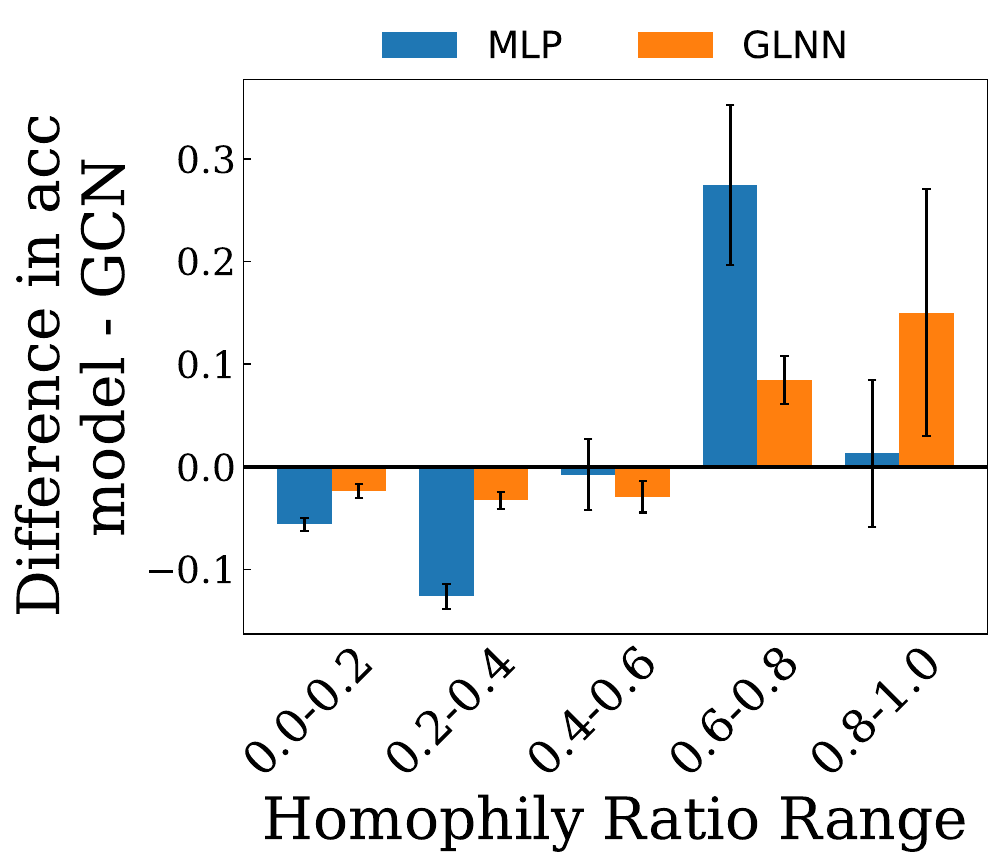}
    \end{minipage}
    }
    \caption{\label{fig:main_perform_compare} Performance comparison between GCN and MLP-based models. Each bar represents the accuracy gap on a specific node subgroup exhibiting a homophily ratio within the range specified on the x-axis. MLP-based models often outperform GCN on heterophilic nodes in homophilic graphs and homophilic nodes in heterophilic graphs with a positive value.  
    }
\end{figure*}

\textbf{Examining GCN performance on different structural patterns.}
To examine the effectiveness of GNNs on different structural patterns, we compare the performance of GCN~\cite{Kipf2016SemiSupervisedCW} a vanilla GNN, with two MLP-based models, vanilla MLP and Graphless Neural Network~(GLNN)~\cite{Zhang2021GraphlessNN}, on testing nodes with different homophily ratios. It is evident that the vanilla MLP could have a large performance gap compared to GCN~(i.e., 20\% in accuracy)~\cite{Ma2021IsHA, Zhang2021GraphlessNN, Kipf2016SemiSupervisedCW}. Consequently, an under-trained vanilla MLP comparing with a well-trained GNN leads to an unfair comparison without rigorous conclusion. Therefore, we also include an advanced MLP model GLNN. It is trained in an advanced manner via distilling GNN predictions and exhibits performance on par with GNNs. Notably, only GCN has the ability to leverage structural information during the inference phase while both vanilla MLP and GLNN models solely rely on node features as input.
This comparison ensures a fair study on the effectiveness of GNNs in capturing different structural patterns with mitigating the effects of node features. 
Experimental results on four datasets are presented in Figure~\ref{fig:main_perform_compare}. In the figure, y-axis corresponds to the accuracy differences between GCN and MLP-based models where positive indicates MLP models can outperform GCN; while x-axis represents different node subgroups with nodes in the subgroup satisfying homophily ratios in the given range, e.g., [0.0-0.2].  
Based on experimental results, the following key observations can be made:
\textbf{Obs.1:} In homophilic graphs, both GLNN and MLP demonstrate superior performance on the heterophilic nodes with homophily ratios in [0-0.4] while GCN outperforms them on homophilic nodes. 
\textbf{Obs.2:} In heterophilic graphs, MLP models often outperform on homophilic nodes yet underperform on heterophilic nodes.  
Notably, vanilla MLP performance on Chameleon is worse than that of GCN across different subgroups. This can be attributed to the training difficulties encountered on Chameleon, where an unexpected similarity in node features from different classes is observed~\cite{luan2022we}. 
Our observations indicate that despite the effectiveness of GCN suggested by~\cite{Ma2021IsHA, luanrevisiting, baranwal2021graph}, GCN exhibits limitations with performance disparity across homophilic and heterophilic graphs. 
It motivates investigation why GCN benefits majority nodes, e.g., homophilic nodes in homophilic graphs, while struggling with minority nodes.
Moreover, additional results on more datasets and significant test results are shown in Appendix~\ref{app:main_compare} and~\ref{app:sign}.

\textbf{Organization.} In light of the above observations, we endeavor to understand the underlying causes of this phenomenon in the following sections by answering the following research questions. 
Section~\ref{subsec:synthetic} focuses on how aggregation, the fundamental mechanism in GNNs, affects nodes with distinct structural patterns differently.
Upon identifying differences, Section~\ref{subsec:discrim} further analyzes how such disparities contribute to superior performance on the majority nodes  as opposed to minority nodes. 
Building on these observations, Section~\ref{subsec:theory} recognizes the key factors driving performance disparities on different structural patterns with a non-i.i.d. PAC-Bayes bound. 
Section~\ref{subsec:theory_verify} empirically corroborates the validity of our theoretical analysis with real-world datasets. 

\subsection{How does aggregation affect nodes with structural disparity differently? \label{subsec:synthetic}}
In this subsection, we examine how aggregation reveals different effects on nodes with structural disparity, serving as a precondition for performance disparity. 
Specifically, we focus on  the discrepancy between nodes from the same class but with different structural patterns.

For a controlled study on graphs, we adopt the contextual stochastic block model~(CSBM) with two classes. 
It is widely used for graph analysis, including generalization~\cite{Ma2021IsHA, baranwal2021graph, wang2022augmentation, baranwal2023effects, wang2022can, fountoulakis2022graph}, clustering~\cite{fortunato2016community}, fairness~\cite{jiang2022fmp, jiangtopology}, and GNN architecture design~\cite{weiunderstanding, choi2023signed,wu2022non}.
Typically, nodes in CSBM model are generated into two disjoint sets $\mathcal{C}_1$ and $\mathcal{C}_2$ corresponding to two classes, $c_1$ and $c_2$, respectively. 
Each node with $c_i$ is associated with features $x \in \mathbb{R}^d$ sampling from $N(\vmu_i, I)$, where $\mu_i$ is the feature mean of class $c_i$ with $i\in \left \{1, 2 \right \}$. The distance between feature means in different classes $\rho= \| \vmu_1-\vmu_2 \|$, indicating the classification difficulty on node features.
Edges are then generated based on intra-class probability $p$ and inter-class probability $q$. 
For instance, nodes with class $c_1$ have probabilities $p$ and $q$ of connecting with another node in class $c_1$ and $c_2$, respectively. 
The CSBM model, denoted as $\text{CSBM}(\vmu_1, \vmu_2, p, q)$, presumes that all nodes follow either homophilic with $p>q$ or heterophilic patterns $p<q$ exclusively. However, this assumption conflicts with real-world scenarios, where graphs often exhibit both patterns simultaneously, as shown in Figure~\ref{fig:node_distribution}. To mirror such scenarios, we propose a variant of CSBM, referred to as CSBM-Structure~(CSBM-S), allowing for the simultaneous description of homophilic and heterophilic nodes. 

\begin{definition}[$\text{CSBM-S}(\mu_1, \mu_2,(p^{(1)}, q^{(1)}), (p^{(2)}, q^{(2)}), \Pr(\text{homo}))$]
    The generated nodes consist of two disjoint sets $\mathcal{C}_1$ and $\mathcal{C}_2$. 
    Each node feature $x$ is sampled from $N(\mu_i, I)$ with $i\in\{1, 2\}$. Each set $\mathcal{C}_i$ consists of two subgroups: $\mathcal{C}_i^{(1)}$ for nodes in homophilic pattern with intra-class and inter-class edge probability $p^{(1)}>q^{(1)}$ and $\mathcal{C}_i^{(2)}$ for nodes in heterophilic pattern with $p^{(2)}<q^{(2)}$. 
    $\Pr(\text{homo})$ denotes the probability that the node is in homophilic pattern. 
    $\mathcal{C}_i^{(j)}$ denotes node in class $i$ and subgroup $j$ with $(p^{(j)},q^{(j)})$. 
    We assume nodes follow the same degree distribution with $p^{(1)}+q^{(1)}=p^{(2)}+q^{(2)}$. 
\end{definition}

Based on the neighborhood distributions, 
the mean aggregated features $\mathbf{F}=\mathbf{D}^{-1}\mathbf{A}\mathbf{X}$ obtained follow Gaussian distributions on both homophilic and heterophilic subgroups.
\begin{equation} 
    \mathbf{f}_{i}^{(j)} \sim N\left(\frac{p^{(j)} \boldsymbol{\mu}_{1}+q^{(j)} \boldsymbol{\mu}_{2}}{p^{(j)}+q^{(j)}}, \frac{\mathbf{I}}{\sqrt{d_i}}\right), \text {for } i \in \mathcal{C}_{1}^{(j)} ; \mathbf{f}^{(j)}_{i} \sim N\left(\frac{q^{(j)} \boldsymbol{\mu}_{1}+p^{(j)} \boldsymbol{\mu}_{2}}{p^{(j)}+q^{(j)}}, \frac{\mathbf{I}}{\sqrt{d_i}}\right), \text{for } i \in \mathcal{C}_{2}^{(j)}
\end{equation}
Where $\mathcal{C}_i^{(j)}$ is the node subgroups with structural pattern with $(p^{(j)}, q^{(j)})$ in label $i$. 
Our initial examination of different effects on aggregation focuses on the aggregated feature distance between homophilic and heterophilic node subgroups within class $c_1$.

\begin{proposition}
    The aggregated feature mean distance between homophilic and heterophilic node subgroups within class $c_1$ is $\left \| \frac{p^{(1)} \vmu_{1}+q^{(1)} \vmu_{2}}{p^{(1)}+q^{(1)}} - \frac{p^{(2)} \vmu_{1}+q^{(2)} \vmu_{2}}{p^{(2)}+q^{(2)}} \right \| > 0$, indicating the aggregated feature of homophilic and heterophilic subgroups are from different feature distributions, with a mean distance larger than 0 distance before aggregation, since original node features draw from the same distribution, regardless of different structural patterns.
\end{proposition}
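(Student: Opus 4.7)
The plan is a short, direct computation that exploits the single constraint the definition of CSBM-S imposes between the two subgroup edge-probability pairs, namely $p^{(1)}+q^{(1)}=p^{(2)}+q^{(2)}$. Before aggregation, both subgroups $\mathcal{C}_1^{(1)}$ and $\mathcal{C}_1^{(2)}$ draw their features from the same Gaussian $N(\vmu_1,\mathbf{I})$, so their population means coincide at $\vmu_1$ and the distance between them is literally $0$. The interesting direction is therefore the post-aggregation bound, and the whole task reduces to showing that the vector $\frac{p^{(1)}\vmu_1+q^{(1)}\vmu_2}{p^{(1)}+q^{(1)}} - \frac{p^{(2)}\vmu_1+q^{(2)}\vmu_2}{p^{(2)}+q^{(2)}}$ has strictly positive norm.

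First I would set $s := p^{(1)}+q^{(1)} = p^{(2)}+q^{(2)}$ and put both fractions over the common denominator $s$. Using $q^{(j)} = s - p^{(j)}$ to eliminate the $q$-variables, the numerator collapses to $(p^{(1)}-p^{(2)})\vmu_1 + (q^{(1)}-q^{(2)})\vmu_2 = (p^{(1)}-p^{(2)})(\vmu_1-\vmu_2)$, so the aggregated-mean difference equals $\frac{p^{(1)}-p^{(2)}}{s}(\vmu_1-\vmu_2)$. Taking norms gives
\begin{equation*}
\left\|\frac{p^{(1)} \vmu_{1}+q^{(1)} \vmu_{2}}{p^{(1)}+q^{(1)}} - \frac{p^{(2)} \vmu_{1}+q^{(2)} \vmu_{2}}{p^{(2)}+q^{(2)}}\right\| \;=\; \frac{|p^{(1)}-p^{(2)}|}{s}\,\|\vmu_1-\vmu_2\|.
\end{equation*}

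Next I would argue both factors on the right are strictly positive. By the CSBM-S definition the first subgroup is homophilic ($p^{(1)}>q^{(1)}$, i.e.\ $p^{(1)}>s/2$) while the second is heterophilic (which under the same total degree forces $p^{(2)}<s/2$), so $p^{(1)}\neq p^{(2)}$ and hence $|p^{(1)}-p^{(2)}|>0$. The factor $\|\vmu_1-\vmu_2\|=\rho>0$ is the standard CSBM assumption that the two classes are distinguishable on raw features; without it the node-classification problem is ill-posed, so it is a benign assumption to state explicitly. Multiplying the two strictly positive scalars yields a strictly positive norm, completing the post-aggregation half of the claim and contrasting it cleanly with the zero pre-aggregation distance.

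I do not anticipate a genuine obstacle here: the statement is essentially an arithmetic identity plus a sign check, and the only subtlety is being explicit about the two nondegeneracy conditions ($p^{(1)}\neq p^{(2)}$ from the homophilic/heterophilic distinction, and $\vmu_1\neq\vmu_2$ from class separability) that the surrounding paper takes for granted. I would simply note both assumptions in a single sentence before displaying the final inequality, so that the reader sees exactly where each hypothesis enters.
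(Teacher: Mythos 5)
Your proposal is correct and follows essentially the same reasoning the paper uses, which is just the observation (stated inline with the proposition) that after aggregation the two subgroups' means become different convex combinations of $\vmu_1$ and $\vmu_2$, while before aggregation both subgroups share the distribution $N(\vmu_1,\mathbf{I})$ and hence have mean distance $0$. Your explicit reduction to $\frac{|p^{(1)}-p^{(2)}|}{p^{(1)}+q^{(1)}}\|\vmu_1-\vmu_2\|$ and the two nondegeneracy conditions ($p^{(1)}\neq p^{(2)}$ from the homophilic/heterophilic split and $\rho=\|\vmu_1-\vmu_2\|>0$) merely make precise what the paper leaves implicit, so there is no gap.
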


Notably, the distance between original features is regardless of the structural pattern.
This proposition suggests that aggregation results in a distance gap between different patterns within the same class.

In addition to node feature differences with the same class, we further examine the discrepancy between nodes $u$ and $v$ 
with the same aggregated feature $\rvf_u=\rvf_v$ but different structural patterns. 
We examine the discrepancy with the probability difference of nodes $u$ and $v$ in class $c_1$, denoted as $|\rmP_1(y_u=c_1|\rvf_u)-\rmP_2(y_v=c_1|\rvf_v)|$. 
$\rmP_1$ and $\rmP_2$ are the conditional probability of $y=c_1$ given the feature $\rvf$ on structural patterns $(p^{(1)}, q^{(1)})$ and $(p^{(2)}, q^{(2)})$, respectively.

\begin{lemma} \label{lemma:csbm_diff}
    With assumptions (1) A balance class distribution with $\rmP(Y=1)=\rmP(Y=0)$ and (2) aggregated feature distribution shares the same variance $\sigma$. 
    When nodes $u$ and $v$ have the same aggregated features $\rvf_u=\rvf_v$ but different structural patterns, $(p^{(1)}, q^{(1)})$ and $(p^{(2)}, q^{(2)})$, we have:
    \begin{equation}
        |\rmP_1(y_u=c_1|\rvf_u)-\rmP_2(y_v=c_1|\rvf_v)| \le \frac{\rho^2}{\sqrt{2\pi}\sigma} |h_u - h_v|
    \end{equation}
\end{lemma}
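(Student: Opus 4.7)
My plan is to write down the class posterior $\rmP_j(y = c_1 \mid \rvf)$ in closed form via Bayes' rule, observe that both posteriors are the same sigmoid function evaluated at two arguments differing only through the scalar $h^{(j)}$, and then apply a one-dimensional mean-value argument in $h$ together with a tight Gaussian-density bound to obtain the stated constant $\rho^2/(\sqrt{2\pi}\sigma)$.

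First, under the balanced-class prior and the common-variance Gaussian class conditionals, Bayes' rule gives
\[
\rmP_j(y = c_1 \mid \rvf) \;=\; \mathrm{sig}\!\left( \frac{(\mathbf{m}_1^{(j)} - \mathbf{m}_2^{(j)})^\top \bigl(\rvf - \frac{\mathbf{m}_1^{(j)} + \mathbf{m}_2^{(j)}}{2}\bigr)}{\sigma} \right),
\]
where $\mathrm{sig}(\cdot)$ is the logistic function and $\mathbf{m}_k^{(j)}$ is the aggregated-feature mean of class $c_k$ under pattern $j$. A direct calculation gives $\mathbf{m}_1^{(j)} - \mathbf{m}_2^{(j)} = (2h^{(j)} - 1)(\vmu_1 - \vmu_2)$ and, crucially, $(\mathbf{m}_1^{(j)} + \mathbf{m}_2^{(j)})/2 = (\vmu_1 + \vmu_2)/2$ does \emph{not} depend on $j$. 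Hence both $\rmP_1(y_u{=}c_1\mid\rvf_u)$ and $\rmP_2(y_v{=}c_1\mid\rvf_v)$ reduce to the same sigmoid evaluated at arguments that differ only in the scalar multiplier $(2h^{(j)}-1)$.

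Second, I apply a mean-value argument in $h$. Since $\rvf_u = \rvf_v$, the two logits differ by exactly $2(h_u - h_v)\,z/\sigma$ with $z := (\vmu_1 - \vmu_2)^\top(\rvf - (\vmu_1+\vmu_2)/2)$, so $|\rmP_1 - \rmP_2|$ is controlled by a supremum of the sigmoid derivative against this linear difference. A Cauchy--Schwarz step bounds $|z| \le \rho\,\|\rvf - (\vmu_1+\vmu_2)/2\|$, producing an intermediate inequality proportional to $\rho\,|h_u - h_v|$ times an $\rvf$-dependent quantity.

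The third and hardest step is to sharpen the sigmoid Lipschitz bound so that the final estimate depends only on $\rho$, $\sigma$, and $|h_u - h_v|$ and carries the exact prefactor $1/(\sqrt{2\pi}\sigma)$. The naive constant $\mathrm{sig}'(\cdot) \le 1/4$ is too loose and leaves the residual $\|\rvf - (\vmu_1+\vmu_2)/2\|$ uncontrolled; the clean form should come from rewriting $\mathrm{sig}'$ evaluated at the logit, multiplied against the displacement $z$, as (a multiple of) the univariate Gaussian density of $\rvf$ along the direction $\vmu_1-\vmu_2$, whose maximum value is $1/(\sqrt{2\pi}\sigma)$. The main obstacle I anticipate is executing this rewriting so that one factor of $\rho$ comes out of the logit slope while the Gaussian-density maximum simultaneously absorbs the $\rvf$-dependence, leaving the advertised $\rho^2/(\sqrt{2\pi}\sigma)$ prefactor. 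Once that identification is made, the lemma follows by combining the mean-value inequality with this tight bound.
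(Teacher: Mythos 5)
Your first two steps are fine and in fact parallel the paper's start: the paper also begins from Bayes' rule with the balanced prior and equal variances (writing each posterior as a ratio of Gaussian exponentials, which is exactly your sigmoid-of-logit form), and it also invokes a mean-value argument (Lagrange's MVT applied to $\exp$, using that the intermediate exponent is nonnegative so the derivative factor is at most $1$). The genuine gap is your third step, which is precisely the step you defer: you never execute the "sharpening," and the mechanism you propose cannot work. The lemma is a pointwise statement for a fixed common feature $\rvf_u=\rvf_v$, so there is no Gaussian density of $\rvf$ available to maximize; the quantity you must control is the logistic derivative evaluated at the logit, which is a logistic (not Gaussian) density in the projection $z=(\vmu_1-\vmu_2)^\top(\rvf-(\vmu_1+\vmu_2)/2)$ and decays like $e^{-c|z|}$ rather than $e^{-z^2/(2\sigma^2)}$. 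Optimizing $\mathrm{sig}'(\xi)\cdot|z|$ over $z$ along your route gives a constant of order $\sigma^2/(e\,|2h-1|)$ for the pattern with homophily ratio closer to $1/2$, which is not of the advertised form $\rho^2/(\sqrt{2\pi}\sigma)\,|h_u-h_v|$ and degenerates as that ratio approaches $1/2$; the factor $1/(\sqrt{2\pi}\sigma)$ simply cannot be conjured from a density maximum here.

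The paper removes the $\rvf$-dependence by a different device: it places both posteriors over a common denominator, bounds the denominator, applies the MVT to the exponential of the combined quadratic exponents, and then expands the resulting difference of quadratic forms. Under its pairing of terms and with $\rvf_u=\rvf_v$, the terms linear in $\rvf$ cancel algebraically (the two class-mean differences $\vmu'_{11}-\vmu'_{12}$ and $\vmu'_{21}-\vmu'_{22}$ are equal and opposite multiples of $\vmu_1-\vmu_2$), leaving only mean-norm differences of size $|h_u-h_v|\cdot\rho^2$ up to the prefactor. That exact cancellation — not any Lipschitz or density-maximum bound — is the heart of the argument and is the idea missing from your plan; without it, the Cauchy--Schwarz residual $\rho\,\|\rvf-(\vmu_1+\vmu_2)/2\|$ you correctly identify as uncontrolled stays uncontrolled, and your proof does not close.
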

Notably, above assumptions are not strictly necessary but employed for elegant expression. 
Lemma~\ref{lemma:csbm_diff} implies that nodes with a small homophily ratio difference $|h_1 - h_2|$ are likely to share the same class, and vice versa. 
Proof details and additional analysis on between-class effects are in Appendix~\ref{app:linear_csbm}.

\begin{wrapfigure}[17]{r}{0.30\textwidth}
    \vspace{-2.5em}
    \centering
    \includegraphics[width=0.30\textwidth]{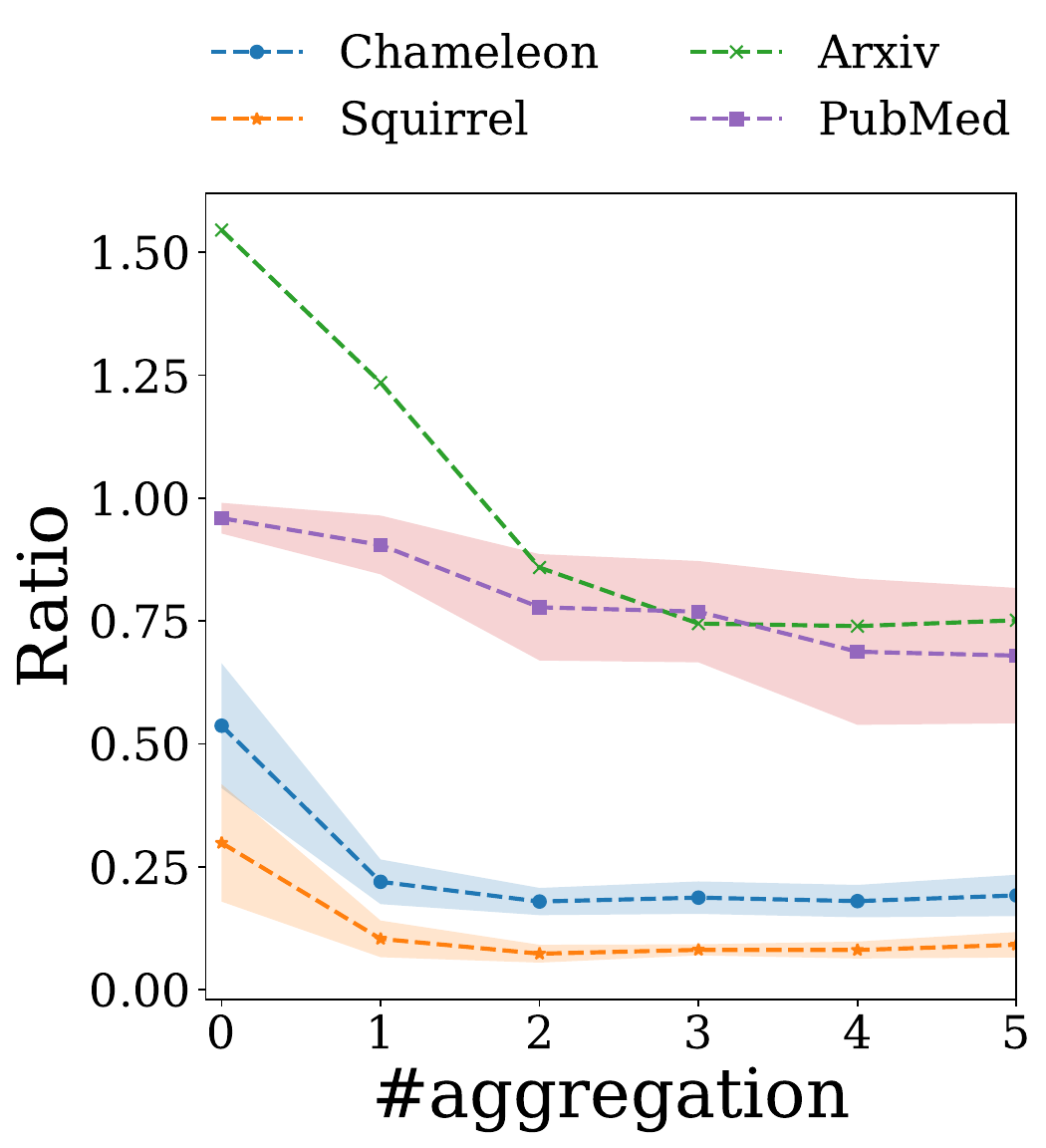}
    \caption{Illustration on discriminative ratio variation along with aggregation. x-axis denotes the number of aggregations and y-axis denotes the discriminative ratio.\label{fig:main_global_distrbution} }
\end{wrapfigure}

\subsection{How does Aggregation Contribute to Performance Disparity?\label{subsec:discrim}}

We have established that aggregation can affect nodes with distinct structural patterns differently. 
However, it remains to be elucidated how such disparity contributes to performance improvement predominantly on majority nodes as opposed to minority nodes. 
It should be noted that, notwithstanding the influence on features, test performance is also profoundly associated with training labels. 
Performance degradation may occur when the classifier is inadequately trained with biased training labels.

We then conduct an empirical discriminative analysis taking both mean aggregated features and training labels into consideration. 
Drawing inspiration from existing literature~\cite{galanti2021role, galanti2022improved, chen2022perfectly, galanti2022generalization}, 
we describe the discriminative ability with the distance between train class prototypes~\cite{vinyals2016matching, snell2017prototypical}, i.e., feature mean of each class, and the corresponding test class prototype within the same class $i$. 
For instance, it can be denoted as $||\mu_i^{\text{tr} }-\mu_i^{\text{ma}}||$, where $\mu_i^{\text{tr}}$ and $\mu_i^{\text{ma}}$ are the prototype of class $i$ on train nodes and test majority nodes, respectively. 
A smaller value suggests that majority test nodes are close to train nodes within the same class, thus implying superior discriminative ability.
A relative discriminative ratio is then proposed to compare the discriminative ability between majority and minority nodes. 
It can be denoted as: $r$=$\sum_{i=1}^K\frac{|| \vmu_i^{\text{tr} }-\vmu_i^{\text{ma}}||}{|| \vmu_i^{\text{tr} }-\vmu_i^{\text{mi}}||}$
where $\mu_i^{\text{mi} }$ corresponds to the prototype on minority test nodes.
A lower relative discriminative ratio suggests that majority nodes are easier to be predicted than minority nodes.


The relative discriminative ratios are then calculated on different hop aggregated features and original features denote as 0-hop.
Experimental results are presented in Figure~\ref{fig:main_global_distrbution}, where the discriminative ratio shows an overall decrease tendency as the number of aggregations increases across four datasets. 
This indicates that majority test nodes show better discriminative ability than the minority test nodes along with more aggregation.
We illustrate more results on GCN in Appendix~\ref{app:discriminative}. 
Furthermore, instance-level experiments other than class prototypes are in Appendix~\ref{app:local_main}.

\subsection{Why does Performance Disparity Happen? Subgroup Generalization Bound for GNNs\label{subsec:theory}}

In this subsection, we conduct a rigorous analysis elucidating primary causes for performance disparity across different node subgroups with distinct structural patterns. 
Drawing inspiration from the discriminative metric described in Section~\ref{subsec:discrim}, we identify two key factors for satisfying test performance: 
(1) test node $u$ should have a close feature distance $\min_{v\in V_{\text{tr}}}\|\mathbf{f}_u - \mathbf{f}_v \|$ to training nodes $V_{\text{tr}}$, indicating that test nodes can be greatly influenced by training nodes.
(2) With identifying the closest training node $v$, nodes $u$ and $v$ should be more likely to share the same class, where  $\sum_{c_i \in \mathcal{C}}\left |\rmP(y_u=c_i|\rvf_u)-\rmP(y_v=c_i|\rvf_v) \right|$ is required to be small.
The second factor, focusing on whether two close nodes are in the same class, is dependent on the homophily ratio difference $|h_u - h_v|$, as shown in Lemma~\ref{lemma:csbm_diff}. 
Notably, since training nodes are randomly sampled, their structural patterns are likely to be the majority one. 
Therefore, training nodes will show a smaller homophily ratio difference with majority test nodes sharing the same majority pattern than minority test nodes, resulting in the performance disparity in distinct structural patterns. 
We substantiate the above intuitions with controllable synthetic experiments in Appendix~\ref{app:synthetic}.

To rigorously examine the role of aggregated feature distance and homophily ratio difference in performance disparity, we derive a non-i.i.d. PAC-Bayesian GNN generalization bound, based on the Subgroup Generalization bound of Deterministic Classifier~\cite{ma2021subgroup}. 
We begin by stating key assumptions on graph data and GNN model to clearly delineate the scope of our theoretical analysis. All remaining assumptions, proof details, and background on PAC-Bayes analysis can be found in Appendix~\ref{app:proof_pac}.
Moreover, a comprehensive introduction on the generalization ability on GNN can be found in~\ref{app:related}.

\begin{definition}[Generalized CSBM-S model]
    Each node subgroup $V_m$ follows the CSBM distribution $V_m \sim \text{CSBM}(\vmu_1, \vmu_2, p^{(i)}, q^{(i)})$, where different subgroups share the same class mean but different intra-class and inter-class probabilities $p^{(i)}$ and $q^{(i)}$. Moreover, node subgroups also share the same degree distribution as $p^{(i)} + q^{(i)} = p^{(j)} + q^{(j)}$.
\end{definition}
Instead of CSBM-S model with one homophilic and heterophilic pattern, we take the generalized CSBM-S model assumption, allowing more structural patterns with different levels of homophily. 

\begin{assumption}[GNN model]
We focus on SGC~\cite{wu2019simplifying} with the following components: 
(1) a one-hop mean aggregation function $g$ with $g(X, G)$ denoting the output.
(2) MLP feature transformation $f(g_i(X, G); W_1, W_2, \cdots , W_L)$, where $f$ is a ReLU-activated $L$-layer MLP with $W_1, \cdots, W_L$ as parameters for each layer. The largest width of all the hidden layers is denoted as $b$. 
\end{assumption}
Notably, despite analyzing simple GNN architecture theoretically, similar  with~\cite{ma2021subgroup,Ma2021IsHA,garg2020generalization}, our theory analysis could be easily extended to the higher-order case with empirical success across different GNN architectures shown in Section~\ref{subsec:theory_verify}.

Our main theorem is based on the PAC-Bayes analysis which typically aims to bound the generalization gap between the expected margin loss $\risk_m$ on test subgroup $V_m$ for a margin $0$ and the empirical margin loss $\hLg_{\text{tr}}$ on train subgroup $V_\text{tr}$ for a margin $\gamma$. Those losses are generally utilized in PAC-Bayes analysis\cite{mcallester2003simplified, maurer2004note, clerico2023wide, dziugaite2021role}. More details are found in Appendix~\ref{app:proof_pac}.
The formulation is shown as follows:

\begin{theorem} [Subgroup Generalization Bound for GNNs]
	\label{theorm:main}
	Let $\tilde{h}$ be any classifier in the classifier family $\gH$ with parameters $\{\widetilde{W}_l\}_{l=1}^L$. 
for any $0< m \le M$, $\gamma \ge 0$, and large enough number of the training nodes $N_\text{tr}=|V_\text{tr}|$, 
there exist $0<\alpha<\frac{1}{4}$ with probability at least $1-\delta$ over the sample of $y^\text{tr}:=\{y_i\}_{i\in V_\text{tr}}$, we have: 
\begin{equation}
\begin{aligned}
    \label{eq:main}
    \risk_m(\tilde{h}) \le \hLg_\text{tr}(\tilde{h}) + O\sbra   \underbrace{\frac{K\rho}{\sqrt{2\pi}\sigma}   (\epsilon_m + |h_\text{tr} - h_m|\cdot \rho)}_{\text{\textbf{(a)}}} + \underbrace{\frac{b\sum_{l=1}^L\|\widetilde{W}_l\|_F^2}{(\gamma/8)^{2/L}N_\text{tr}^{\alpha}}(\epsilon_m)^{2/L}}_{\text{\textbf{(b)}}} + \mathbf{R}\sket
\end{aligned}
\end{equation}
\end{theorem}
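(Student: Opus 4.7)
The plan is to adapt the non-i.i.d.\ PAC-Bayes framework of~\cite{ma2021subgroup} to the generalized CSBM-S setting, which will naturally produce the complexity term \textbf{(b)}, and then to interface it with Lemma~\ref{lemma:csbm_diff} to obtain the distribution-shift term \textbf{(a)}. First, I would decompose the generalization gap along a chain of intermediate quantities. For each test node $u\in V_m$ pick its nearest training neighbor $v(u)\in V_\text{tr}$ in the aggregated-feature space and write
\begin{equation}
\risk_m(\tilde h)-\hLg_\text{tr}(\tilde h) \;\le\; \underbrace{\risk_m(\tilde h)-\Lgh_m(\tilde h)}_{\text{(I): label-shift}}+\underbrace{\Lgh_m(\tilde h)-\Lgq_\text{tr}(\tilde h)}_{\text{(II): feature-shift}}+\underbrace{\Lgq_\text{tr}(\tilde h)-\hLg_\text{tr}(\tilde h)}_{\text{(III): PAC-Bayes}},
\end{equation}
so that the three pieces can be handled independently.

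For piece (III), which is the classical empirical-to-population gap on the training subgroup at margin $\gamma$, I would use the PAC-Bayes perturbation analysis for $L$-layer ReLU MLPs of~\cite{neyshabur2017pac} applied on top of the fixed aggregation $g(X,G)$. Placing a Gaussian prior and a perturbation-matched Gaussian posterior around $\{\widetilde W_l\}$ and bounding the output perturbation by $\prod_l \|W_l\|\cdot\sum_l\|\Delta W_l\|/\|W_l\|$, the standard calculation yields, with probability $1-\delta$ over $y^\text{tr}$, a bound of order $b\sum_l\|\widetilde W_l\|_F^2/((\gamma/8)^{2/L}N_\text{tr}^\alpha)$. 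Piece (II) is controlled by Lipschitz continuity of the margin loss in the aggregated features: for each $u\in V_m$ its displacement to $v(u)\in V_\text{tr}$ is at most $\epsilon_m:=\max_{u\in V_m}\min_{v\in V_\text{tr}}\|g(X,G)_u-g(X,G)_v\|$, and passing the margin from $\gamma/2$ down to $\gamma/4$ absorbs a factor $(\epsilon_m)^{2/L}$ from the MLP Lipschitz constant bounded by the same $b\sum_l\|\widetilde W_l\|_F^2$ term as in (III). Combining (II) and (III) produces term \textbf{(b)} of the theorem.

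Piece (I) is where Lemma~\ref{lemma:csbm_diff} enters and generates term \textbf{(a)}. Here I would write the label-shift as the total-variation gap between the conditional label distribution $\rmP_m(y\mid f)$ on subgroup $V_m$ and the corresponding distribution $\rmP_\text{tr}(y\mid f)$ on the training subgroup, evaluated at shifted features. Summing $|\rmP_m(y=c_i\mid f_u)-\rmP_\text{tr}(y=c_i\mid f_{v(u)})|$ over the $K$ classes and splitting
\begin{equation}
|\rmP_m(y=c_i\mid f_u)-\rmP_\text{tr}(y=c_i\mid f_{v(u)})| \;\le\; |\rmP_m(y=c_i\mid f_u)-\rmP_m(y=c_i\mid f_{v(u)})| + |\rmP_m(y=c_i\mid f_{v(u)})-\rmP_\text{tr}(y=c_i\mid f_{v(u)})|,
\end{equation}
the first summand is bounded by $\rho\,\epsilon_m/(\sqrt{2\pi}\sigma)$ using the Gaussian density Lipschitz constant arising from the CSBM-S posterior, while the second summand is bounded by $\rho^2|h_\text{tr}-h_m|/(\sqrt{2\pi}\sigma)$ directly by Lemma~\ref{lemma:csbm_diff} applied to the two subgroups (generalized to arbitrary homophily levels using the shared-degree assumption $p^{(i)}+q^{(i)}=p^{(j)}+q^{(j)}$). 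Summing over $K$ classes yields $\tfrac{K\rho}{\sqrt{2\pi}\sigma}(\epsilon_m+|h_\text{tr}-h_m|\rho)$, which is exactly term \textbf{(a)}.

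The main obstacle I anticipate is piece (I): the original subgroup bound of~\cite{ma2021subgroup} measures subgroup shift by a generic feature-geometry quantity, whereas our target involves a label-conditional shift that is only meaningful under the CSBM-S generative assumption. Handling this cleanly requires carefully lifting Lemma~\ref{lemma:csbm_diff}, which is stated for two nodes with identical aggregated features, to a statement comparing entire subgroup distributions with possibly different means; the shared-degree and balanced-class assumptions will be essential to ensure the Gaussian variances align so that Pinsker-type or direct density-ratio arguments apply. A secondary subtlety is that $v(u)$ is a random quantity depending on $V_\text{tr}$, so the ``$1-\delta$ over $y^\text{tr}$'' statement in piece (III) must be chained with a deterministic bound in piece (I) using only features, which is compatible because the CSBM-S label distribution conditional on features is independent of the training-label draw. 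The residual $\mathbf{R}$ in the theorem will collect the standard PAC-Bayes logarithmic terms $\sqrt{\log(LC/\delta)/N_\text{tr}}$ that are suppressed in the displayed bound.
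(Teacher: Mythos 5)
Your high-level route is in fact the paper's route: term \textbf{(a)} is obtained by replacing the generic $c$-Lipschitz conditional-probability assumption of \cite{ma2021subgroup} with the CSBM-S-based probability-difference bound (Lemma~\ref{lemma:csbm_diff}, in its $\|\rvf_u-\rvf_v\|\le\epsilon_m$ form, Lemma~\ref{lemma:csbm_diff_dis}), summed over the $K$ classes, and term \textbf{(b)} comes from the Gaussian-perturbation PAC-Bayes machinery of \cite{neyshabur2017pac}. That part of your plan is sound. However, the execution has genuine gaps. First, the decomposition is mis-assembled: as written, piece (I) $=\risk_m(\tilde h)-\Lgh_m(\tilde h)$ compares two expected losses on the \emph{same} subgroup $V_m$ and is non-positive by margin monotonicity, so it cannot carry any label shift; the cross-subgroup, label-conditional content must sit in the piece that compares $V_m$ with $V_{\text{tr}}$. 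But your piece (II) $=\Lgh_m(\tilde h)-\Lgq_{\text{tr}}(\tilde h)$ has the test margin ($\gamma/2$) \emph{larger} than the train margin ($\gamma/4$), which is the wrong direction for the perturbation argument: domination of the test margin loss by the train margin loss under a feature shift of size $\epsilon_m$ requires the margin to shrink from train to test (the paper compares $\Lgq_m$ against $\Lgh_{\text{tr}}$ in the discrepancy term, and $\Lgh_m$ against $\gL^{\gamma}_{\text{tr}}$ inside Lemma~\ref{lemma:margin-loss-difference}), and it additionally requires $\epsilon_m\prod_l\|W_l\|_2\le\gamma/4$.

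Second, and this is the central missing idea: you bound the cross-subgroup piece deterministically for the learned classifier $\tilde h$, but no bound on $\prod_l\|\widetilde W_l\|_2$ relative to $\gamma/\epsilon_m$ is available for $\tilde h$. The paper instead bounds the expected loss discrepancy over the PAC-Bayes prior, $D^{\gamma/2}_{m,\text{tr}}(P;\lambda)=\ln\E_{h\sim P}e^{\lambda(\Lgq_m(h)-\Lgh_{\text{tr}}(h))}$, splitting on the event $T_h^L\epsilon_m>\gamma/8$ (controlled by the Gaussian spectral-norm tail, with Assumption~\ref{ass:concentrated} handling the bad regime), and then transfers the result to $\tilde h$ through the deterministic-classifier theorem (Theorem~\ref{thm:PAC-Bayes-deterministic}). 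Relatedly, the factor $(\epsilon_m)^{2/L}$ in term \textbf{(b)} does not arise from ``passing the margin from $\gamma/2$ to $\gamma/4$''; it comes from the KL term $\KL(Q\|P)\le\sum_l\|\widetilde W_l\|_F^2/(2\sigma^2)$ with the prior variance chosen as $\sigma^2\asymp\sbra\gamma/8\epsilon_m\sket^{2/L}/\bigl(2b(\lambda N_{\text{tr}}^{-\alpha}+\ln 2bL)\bigr)$, together with $\lambda=N_{\text{tr}}^{2\alpha}$ and a union bound over a grid of spectral-norm scales (which also produces the $\ln\frac{LC(2B_m)^{1/L}}{\gamma^{1/L}\delta}$ term inside $\mathbf{R}$). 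Finally, pairing each test node with its nearest training node does not by itself align the averages $\frac{1}{N_m}\sum_{j\in V_m}$ and $\frac{1}{N_{\text{tr}}}\sum_{i\in V_{\text{tr}}}$: the paper needs Assumption~\ref{ass:equal} (equal-sized, disjoint near sets) to turn the matching into a bound on the expected loss difference, and your argument would need that assumption (or an explicit reweighting) as well.
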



The bound is related to three terms: 
\textbf{(a)} describes both large homophily ratio difference $|h_\text{tr} - h_m|$ and large aggregated feature distance $\epsilon = \max_{j\in bV_m}\min_{i\in V_\text{tr}} \|g_i(X, G)-g_j(X, G)\|_2$ between test node subgroup $V_m$ and training nodes $V_\text{tr}$ lead to large generalization error. $\rho=\left \|\vmu_1 - \vmu_2\right \|$ denotes the original feature separability, independent of structure. $K$ is the number of classes.
\textbf{(b)}  further strengthens the effect of nodes with the aggregated feature distance $\epsilon$, leads to a large generalization error. 
\textbf{(c)} $\mathbf{R}$ is a term independent with aggregated feature distance and homophily ratio difference, depicted as $\frac{1}{N_\text{tr}^{1-2\alpha}} + \frac{1}{N_\text{tr}^{2\alpha}} \ln\frac{LC(2B_m)^{1/L}}{\gamma^{1/L}\delta} $, where $B_m= \max_{i\in V_\text{tr}\cup V_m}\|g_i(X,G)\|_2$ is the maximum feature norm. $\mathbf{R}$ vanishes as training size $N_0$ grows. Proof details are in Appendix~\ref{app:proof_pac}

Our theory suggests that both homophily ratio difference and aggregated feature distance to training nodes are key factors contributing to the performance disparity. 
Typically, nodes with large homophily ratio difference and aggregated feature distance to training nodes lead to performance degradation.

\subsection{Performance Disparity Across Node Subgroups on Real-World Datasets\label{subsec:theory_verify}}

\begin{figure*}[!h]
    \subfigure[PubMed ($h$=0.79)]{
        \centering
        \begin{minipage}[b]{0.23\textwidth}
        \includegraphics[width=1.0\textwidth]{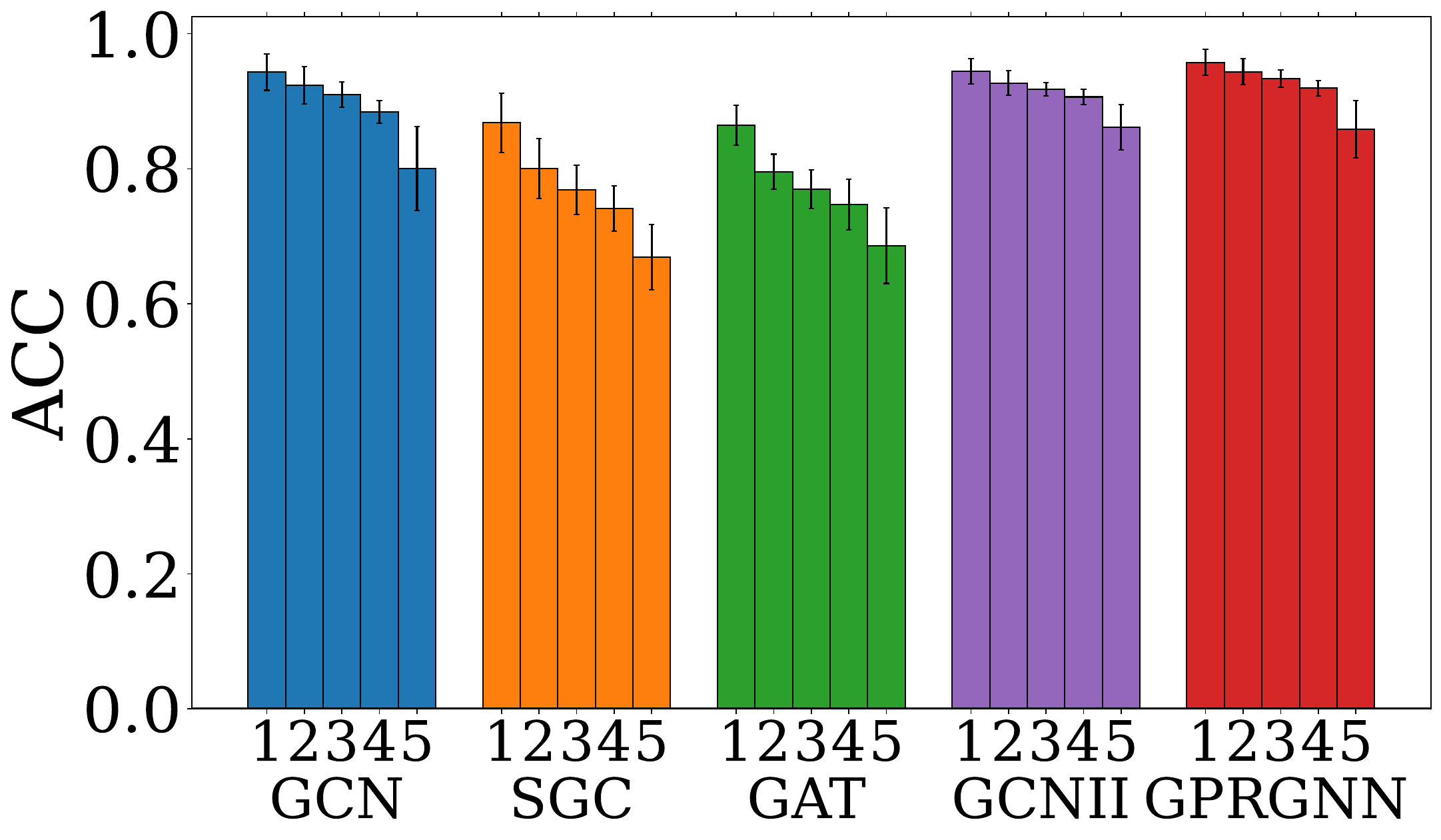}
        \end{minipage}
    }
    \subfigure[Ogbn-arxiv ($h$=0.63)]{
    \centering
    \begin{minipage}[b]{0.23\textwidth}
    \includegraphics[width=1.0\textwidth]{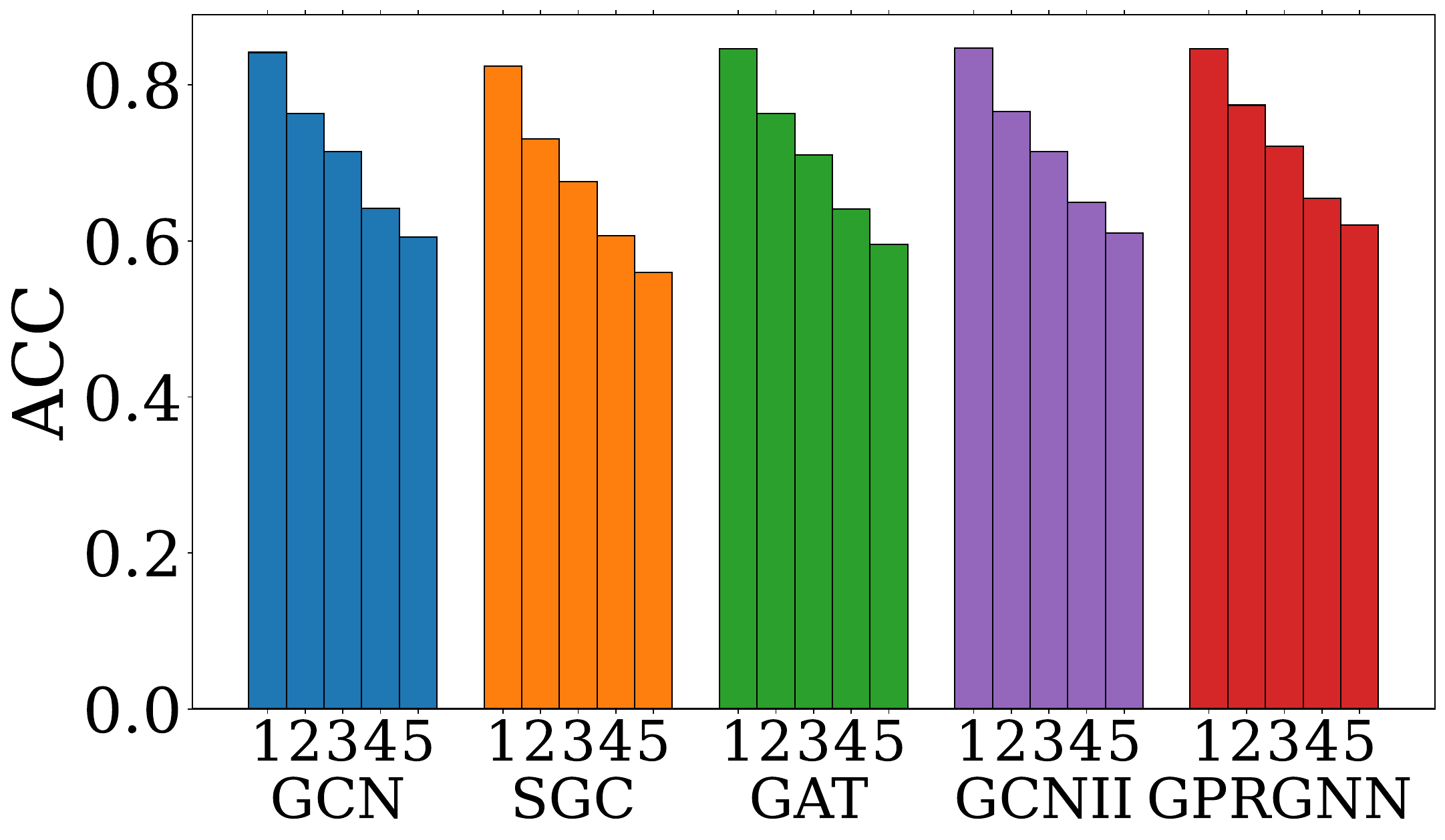}
    \end{minipage}
    }
    \subfigure[Chameleon ($h$=0.22)]{
    \centering
    \begin{minipage}[b]{0.23\textwidth}
    \includegraphics[width=1.0\textwidth]{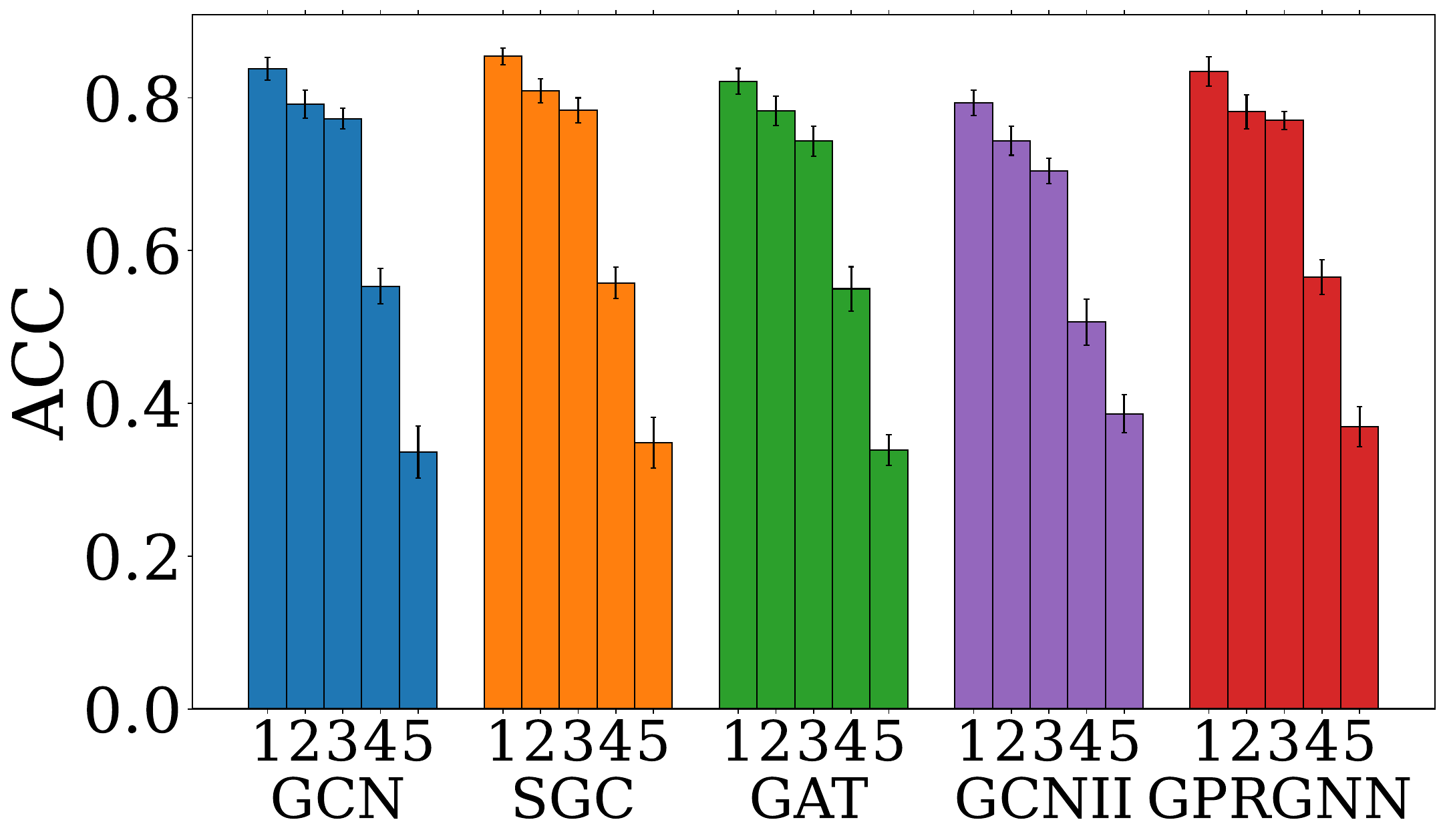}
    \end{minipage}
    }
    \subfigure[Squirrel ($h$=0.25)]{
    \centering
    \begin{minipage}[b]{0.23\textwidth}
    \includegraphics[width=1.0\textwidth]{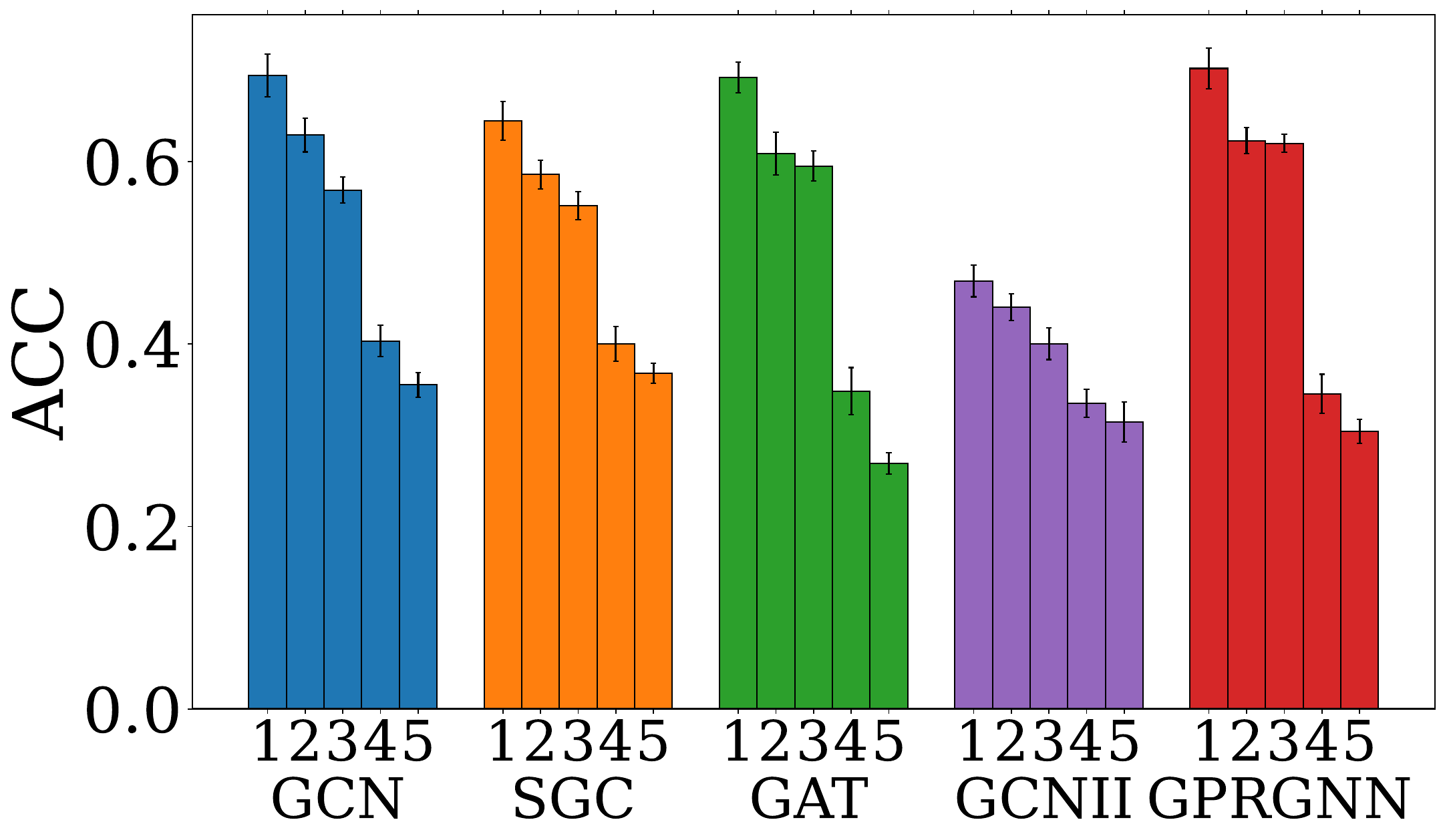}
    \end{minipage}
    }
    \caption{\label{fig:main_fair} 
    Test accuracy disparity across node subgroups by \textbf{aggregated-feature distance and homophily ratio difference} to training nodes. Each figure corresponds to a dataset, and each bar cluster corresponds to a GNN model.  A clear performance decrease tendency can be found from subgroups 1 to 5 with increasing differences to training nodes.
    }
\end{figure*}

\begin{figure*}[!h]
    \subfigure[PubMed ($h$=0.79)]{
        \centering
        \begin{minipage}[b]{0.23\textwidth}
        \includegraphics[width=1.0\textwidth]{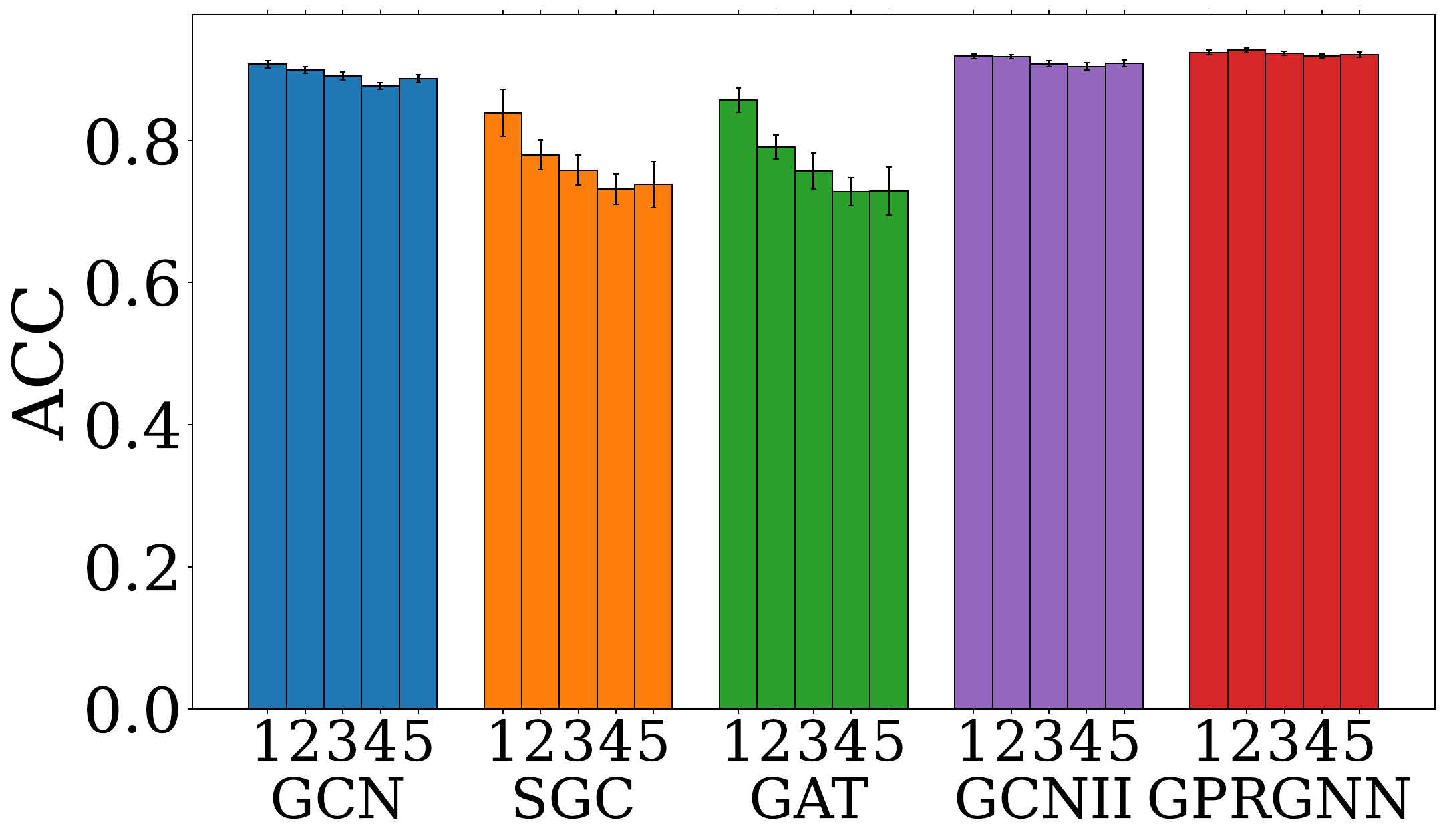}
        \end{minipage}
    }
    \subfigure[Ogbn-arxiv ($h$=0.63)]{
    \centering
    \begin{minipage}[b]{0.242\textwidth}
    \includegraphics[width=1.0\textwidth]{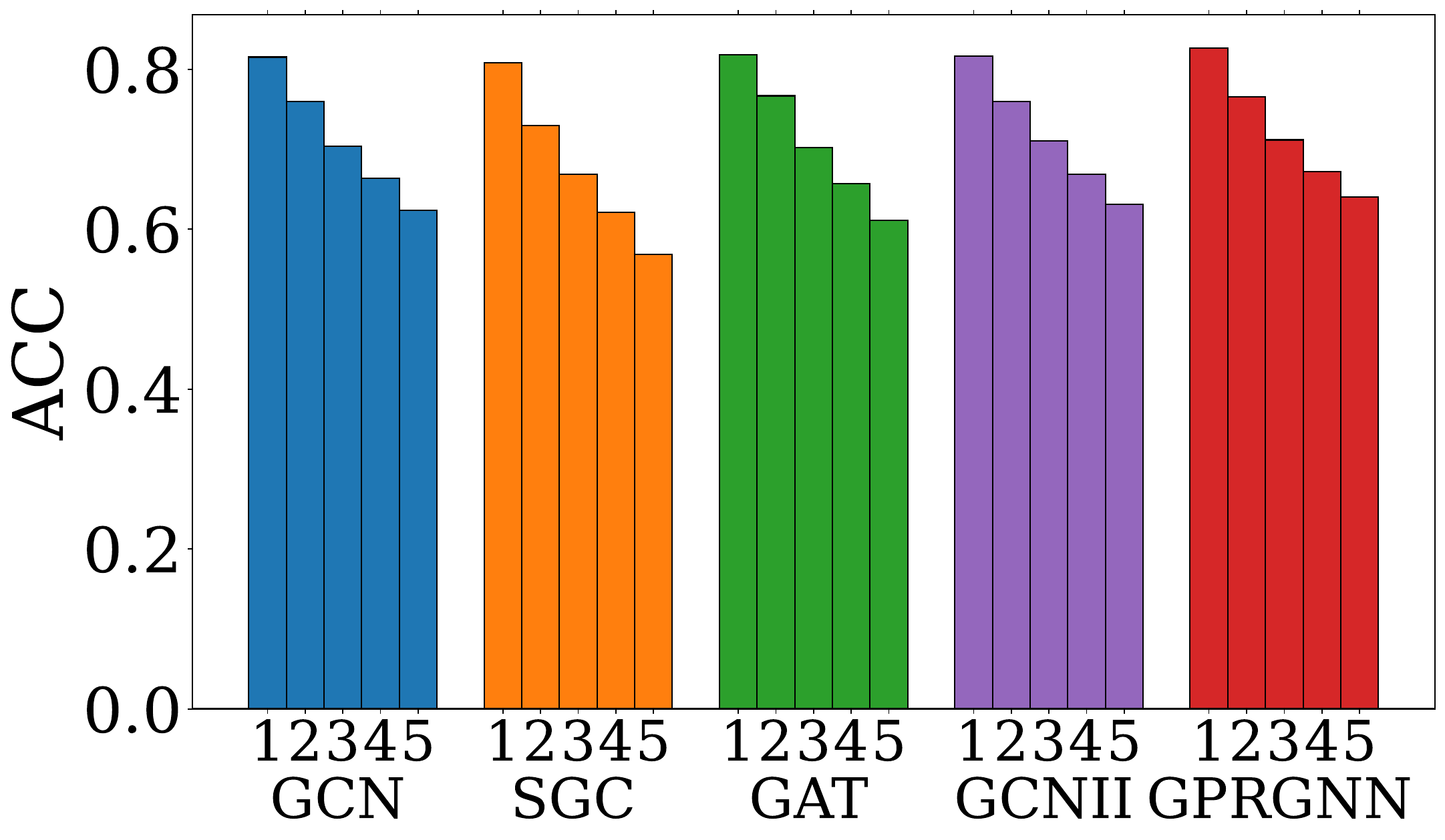}
    \end{minipage}
    }
    \subfigure[Chameleon ($h$=0.22)]{
    \centering
    \begin{minipage}[b]{0.23\textwidth}
    \includegraphics[width=1.0\textwidth]{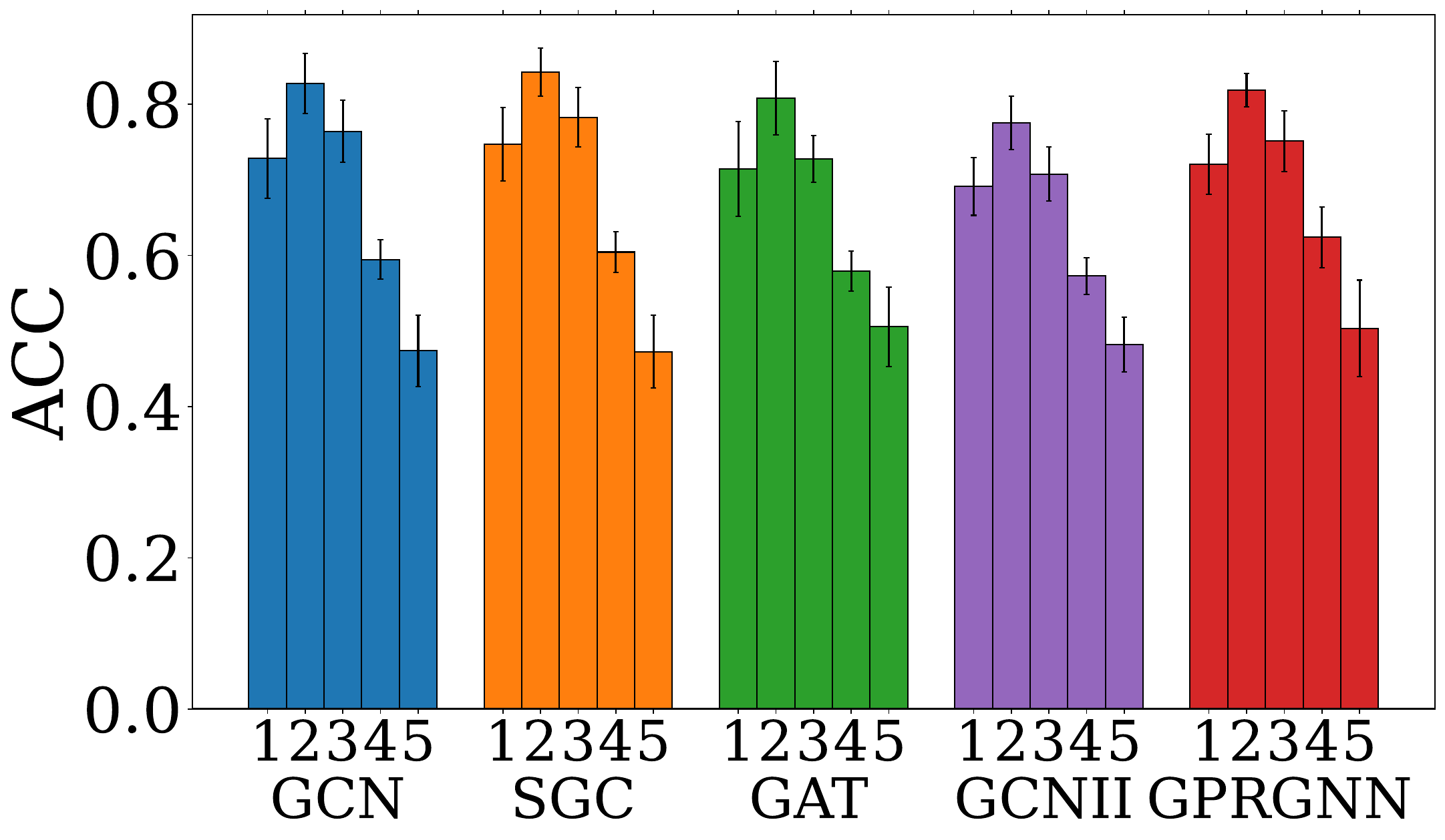}
    \end{minipage}
    }
    \subfigure[Squirrel ($h$=0.25)]{
    \centering
    \begin{minipage}[b]{0.23\textwidth}
    \includegraphics[width=1.0\textwidth]{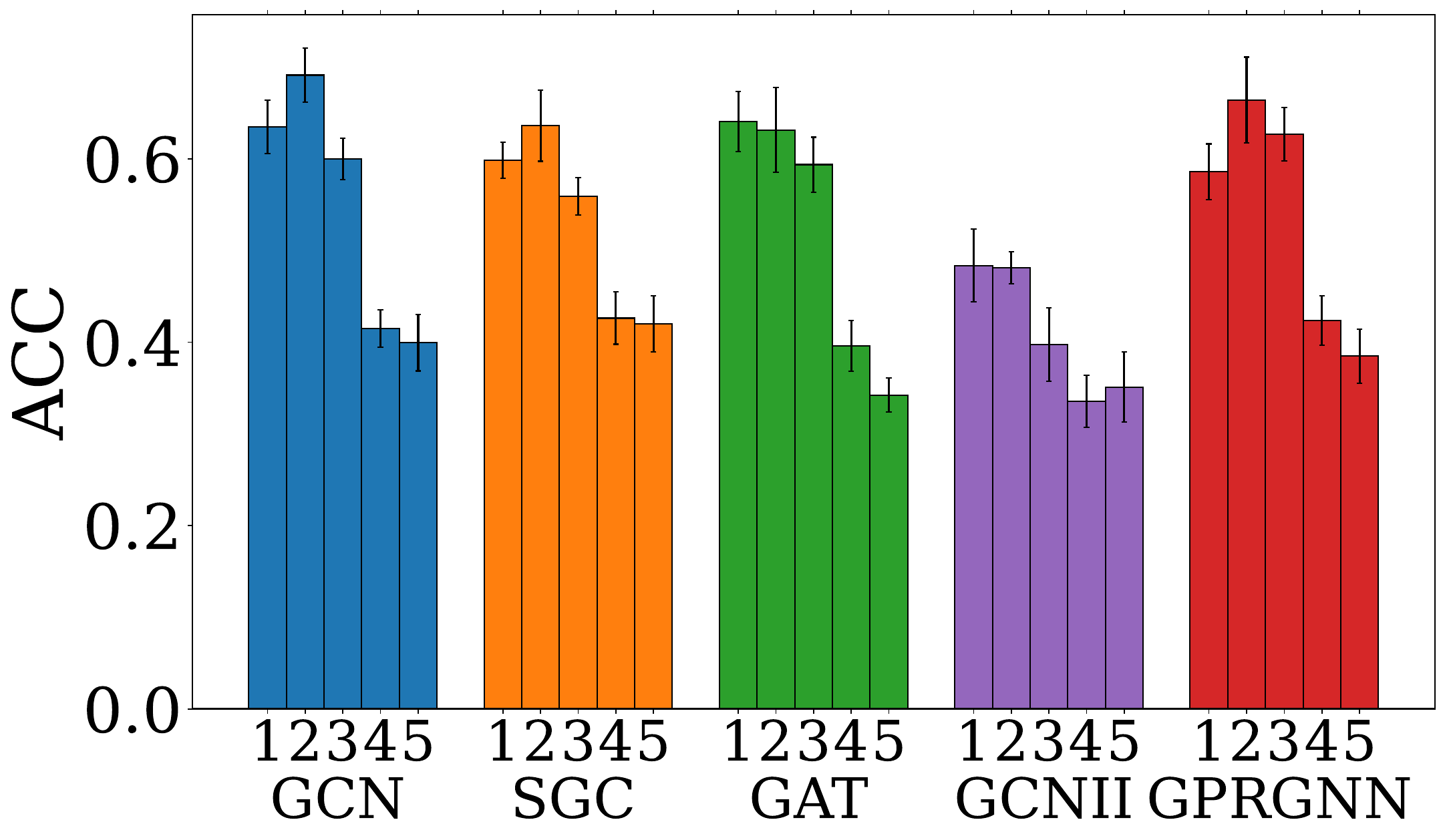}
    \end{minipage}
    }
    \caption{\small Test accuracy disparity across node subgroups by \textbf{aggregated-feature distance} to train nodes. Each figure corresponds to a dataset, and each bar cluster corresponds to a GNN model.  A clear performance decrease tendency can be found from subgroups 1 to 5 with increasing differences to training nodes.\label{fig:main_fair_dis}}
\end{figure*}

\begin{figure*}[!h]
    \subfigure[PubMed ($h$=0.79)]{
        \centering
        \begin{minipage}[b]{0.23\textwidth}
        \includegraphics[width=1.0\textwidth]{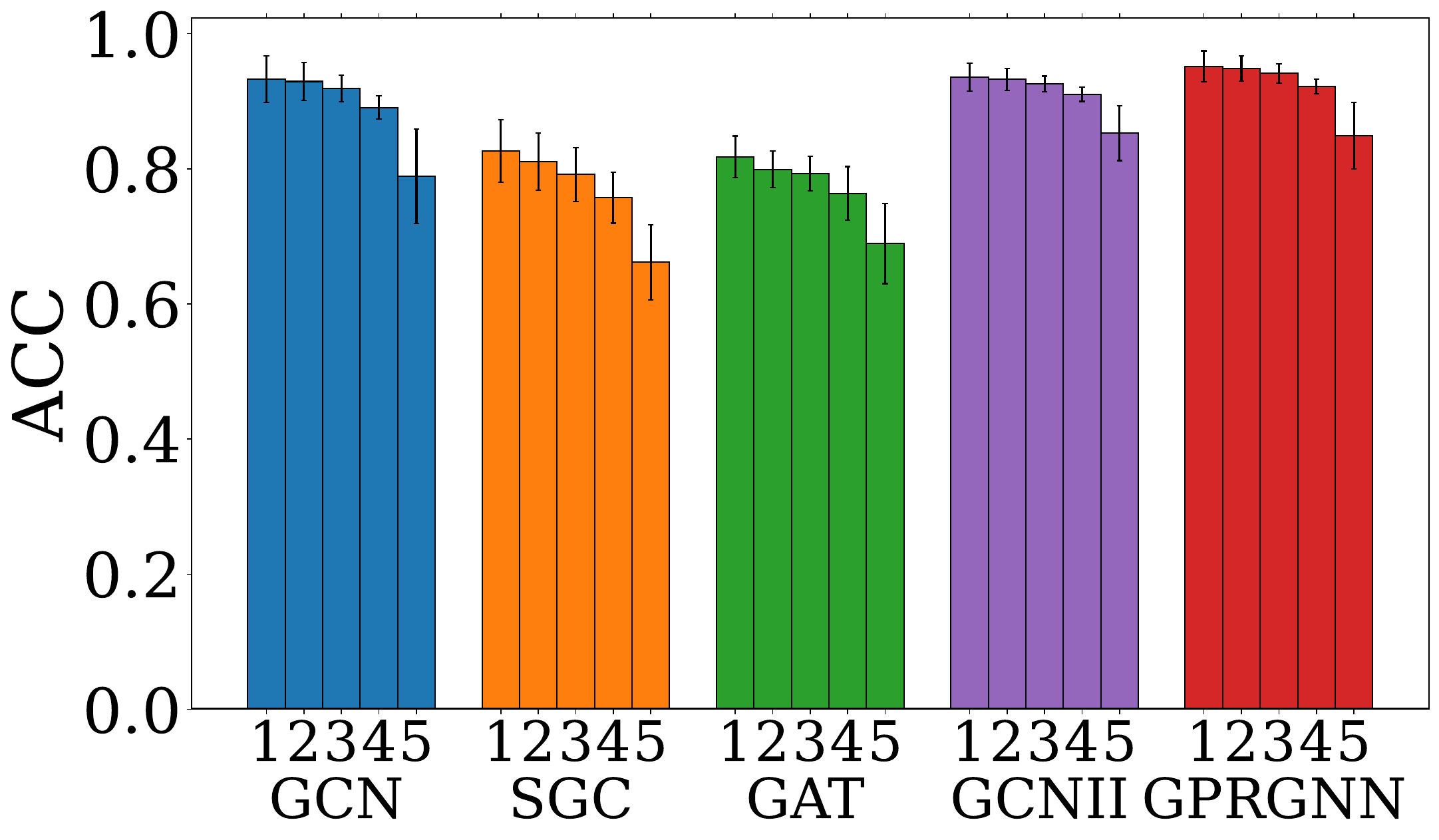}
        \end{minipage}
    }
    \subfigure[Ogbn-arxiv ($h$=0.63)]{
    \centering
    \begin{minipage}[b]{0.23\textwidth}
    \includegraphics[width=1.0\textwidth]{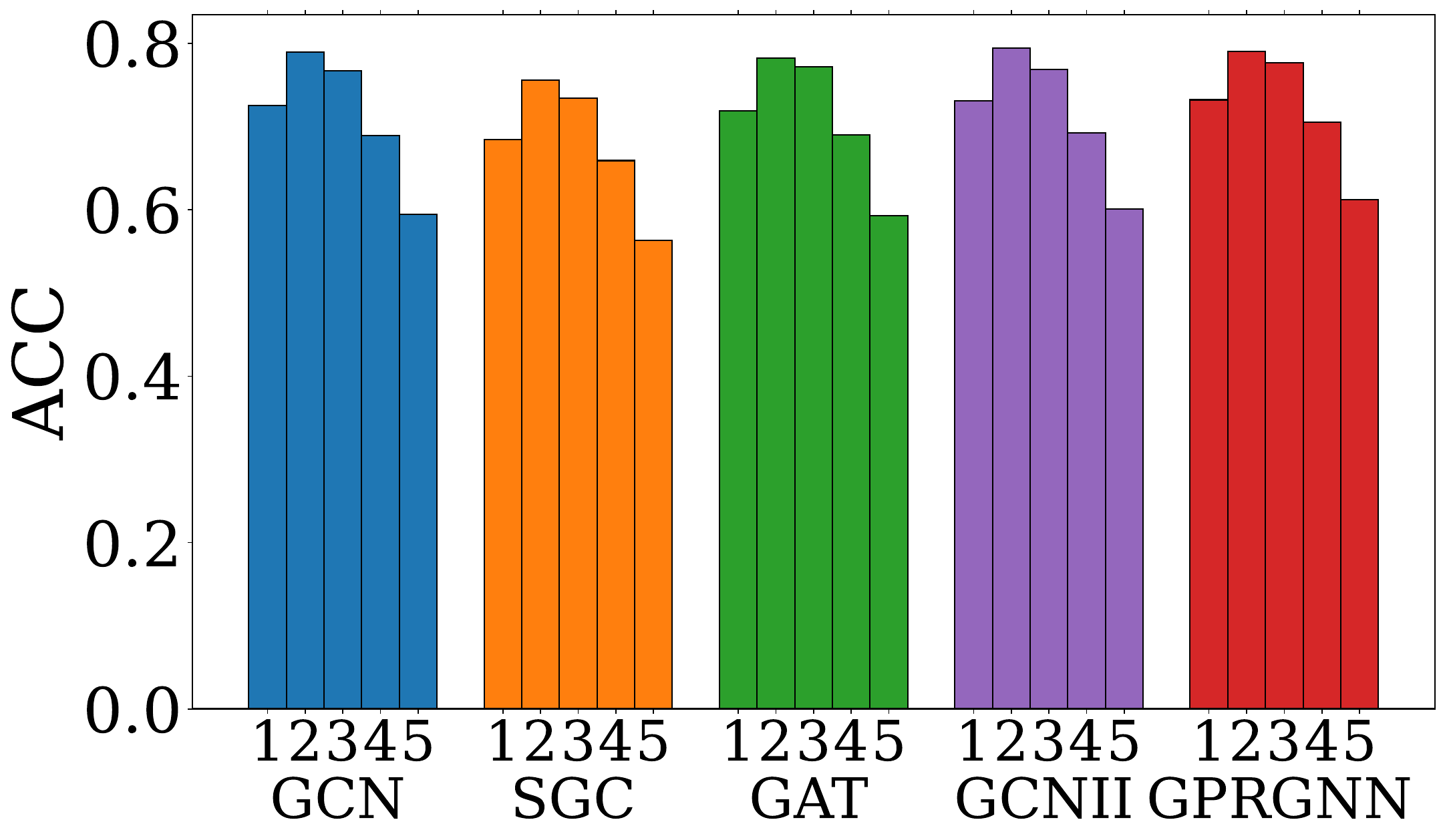}
    \end{minipage}
    }
    \subfigure[Chameleon ($h$=0.22)]{
    \centering
    \begin{minipage}[b]{0.23\textwidth}
    \includegraphics[width=1.0\textwidth]{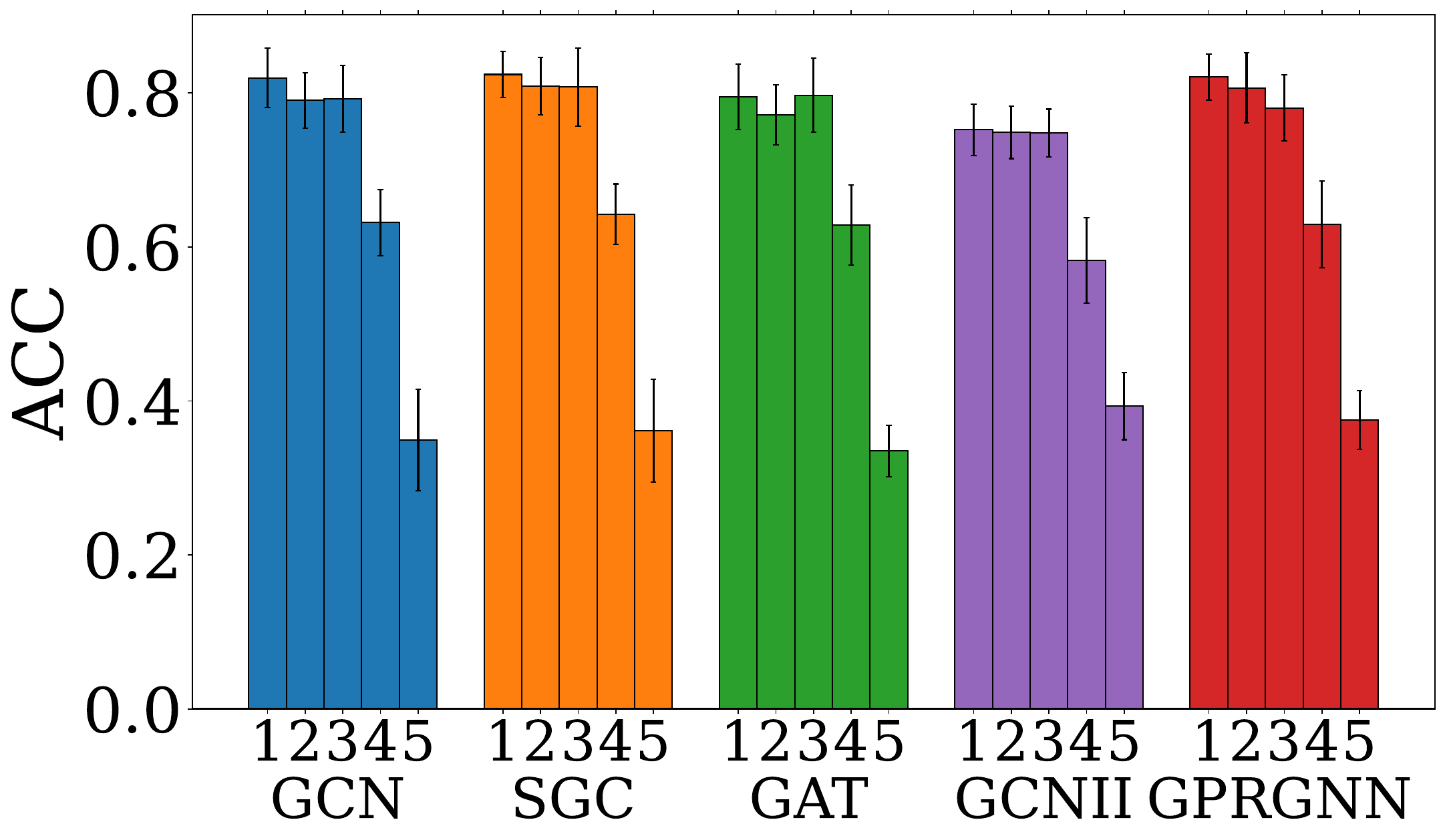}
    \end{minipage}
    }
    \subfigure[Squirrel ($h$=0.25)]{
    \centering
    \begin{minipage}[b]{0.23\textwidth}
    \includegraphics[width=1.0\textwidth]{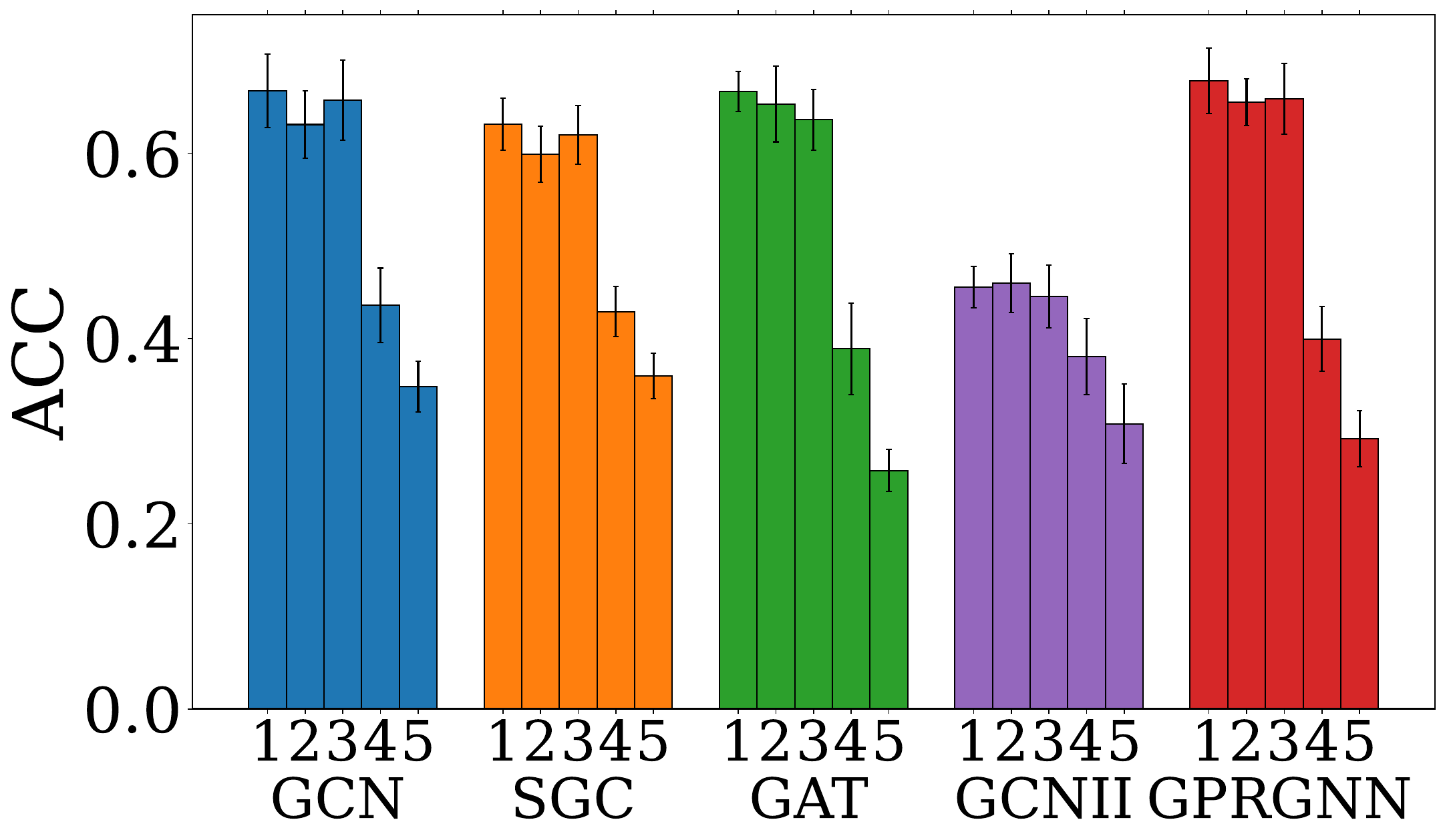}
    \end{minipage}
    }
    \caption{\label{fig:main_fair_homo} \small Test accuracy disparity across node subgroups by \textbf{homophily ratio difference} to train nodes. Each figure corresponds to a dataset, and each bar cluster corresponds to a GNN model.  A clear performance decrease tendency can be found from subgroups 1 to 5 with increasing differences to training nodes.}
\end{figure*}

To empirically examine the effects of theoretical analysis, we compare the performance on different node subgroups divided with both homophily ratio difference and aggregated feature distance to training nodes with popular GNN models including GCN~\cite{Kipf2016SemiSupervisedCW}, SGC~\cite{wu2019simplifying}, GAT~\cite{velickovic2017graph}, GCNII~\cite{chen2020simple}, and GPRGNN~\cite{chienadaptive}. 
Typically, test nodes are partitioned into subgroups based on their disparity scores to the training set in terms of both 2-hop homophily ratio $h_i^{(2)}$ and 2-hop aggregated features $\rmF^{(2)}$ obtained by $\rmF^{(2)}=(\tilde{\rmD}^{-1}\tilde{\rmA})^2 \mathbf{X}$, where $\tilde{\rmA} = \rmA + \rmI$ and $\tilde{\rmD} = \rmD + \rmI$.
For a test node $i$, we measure the node disparity by 
(1) selecting the closest training node $v=\text{arg}\min_{v\in V_0} ||\rmF^{(2)}_u-\rmF^{(2)}_v||$ 
(2) then calculating the disparity score $s_u = ||\rmF^{(2)}_{u}-\rmF^{(2)}_v||_2 + |h^{(2)}_u - h^{(2)}_v|$, where the first and the second terms correspond to the aggregated-feature distance and homophily ratio differences, respectively.
We then sort test nodes in terms of the disparity score and divide them into 5 equal-binned subgroups accordingly.
Performance on different node subgroups is presented in Figure~\ref{fig:main_fair} with the following observations. 
\textbf{Obs.1:} We note a clear  test accuracy degradation with respect to the increasing differences in aggregated features and homophily ratios. 
Furthermore, we investigate on the individual effect of aggregated feature distance and homophily ratio difference in Figure~\ref{fig:main_fair_dis} and~\ref{fig:main_fair_homo}, respectively. An overall trend of performance decline with increasing disparity score is evident though some exceptions are present.
\textbf{Obs.2:} When only considering the aggregated feature distance, there is no clear trend among groups 1, 2, and 3 on GCN, SGC, and GAT on heterophilic datasets.
\textbf{Obs.3:} When only considering the homophily ratio difference, there is no clear trend among groups 1, 2, and 3 across four datasets. 
These observations underscore the importance of both aggregated-feature distance and homophily ratio differences in shaping GNN performance disparity. Combining these factors together provides a more comprehensive and accurate understanding of the reason for GNN performance disparity. 
For a more comprehensive analysis, we further substantiate our finding involving higher-order information and a wider array of datasets in Appendix~\ref{app:sub_fair_higher_homo}.

\textbf{Summary}
In this section, we study GNN performance disparity on nodes with distinct structural patterns and uncover its underlying causes.
We primarily investigate the impact of aggregation, the key component in GNNs, on nodes with different structural patterns in Sections~\ref{subsec:synthetic} and~\ref{subsec:discrim}. 
We observe that aggregation effects vary across nodes with different structural patterns, notably enhancing the discriminative ability on majority nodes. 
These observed performance disparities inspire us to identify crucial factors contributing to GNN performance disparities across nodes with a non-i.i.d PAC-Bayes bound in Section~\ref{subsec:theory}. 
The theoretical analysis indicates that test nodes with larger aggregated feature distances and homophily ratio differences with training nodes experience performance degradation.
We substantiate our findings on real-world datasets in Section~\ref{subsec:theory_verify}.

\section{Implications of graph structural disparity \label{sec:application}}
In this section, we illustrate the significance of our findings on structural disparity via 
(1) elucidating the effectiveness of existing deeper GNNs 
(2) unveiling an over-looked aspect of distribution shift on graph out-of-distribution~(OOD) problem, and introducing a new OOD scenario accordingly. 
Experimental details and discussions on more implications are in Appendix~\ref{app:experiment_details} and~\ref{app:broader}, respectively.

\subsection{Elucidate the effectiveness of Deeper GNNs\label{subsec:deeper}} 
Deeper GNNs~\cite{chienadaptive, chen2020simple, Klicpera2018PredictTP, abu2019mixhop, liu2020towards, li2019deepgcns, xu2018representation} enable each node to capture more complex higher-order graph structure than vanilla GCN, via reducing the over-smoothing problem~\cite{oono2019graph, rusch2023survey, chen2020measuring, liu2020towards, li2018deeper}
Deeper GNNs empirically exhibit overall performance improvement, as demonstrated in Appendix~\ref{app:model_detail}. 
Nonetheless, which structural patterns deeper GNNs can exceed and the reason for its effectiveness remain unclear. 
To investigate this problem, we compare vanilla GCN with different deeper GNNs, including GPRGNN\cite{chienadaptive}, APPNP\cite{Klicpera2018PredictTP}, and GCNII\cite{chen2020simple}, on node subgroups with varying homophily ratios, adhering the same setting with Figure~\ref{fig:main_perform_compare}.
Experimental results are shown in Figure~\ref{fig:main_advance_compare_perform}.
We can observe that deeper GNNs primarily surpass GCN on minority node subgroups with slight performance trade-offs on the majority node subgroups. 
We conclude that the effectiveness of deeper GNNs majorly contributes to improved discriminative ability on minority nodes.    
Additional results on more datasets and significant test are in Appendix~\ref{app:deeper} and~\ref{app:sign_advance}.

\begin{figure*}[!h]
    \subfigure[PubMed ($h$=0.79)]{
        \centering
        \begin{minipage}[b]{0.23\textwidth}
        \includegraphics[width=1.0\textwidth]{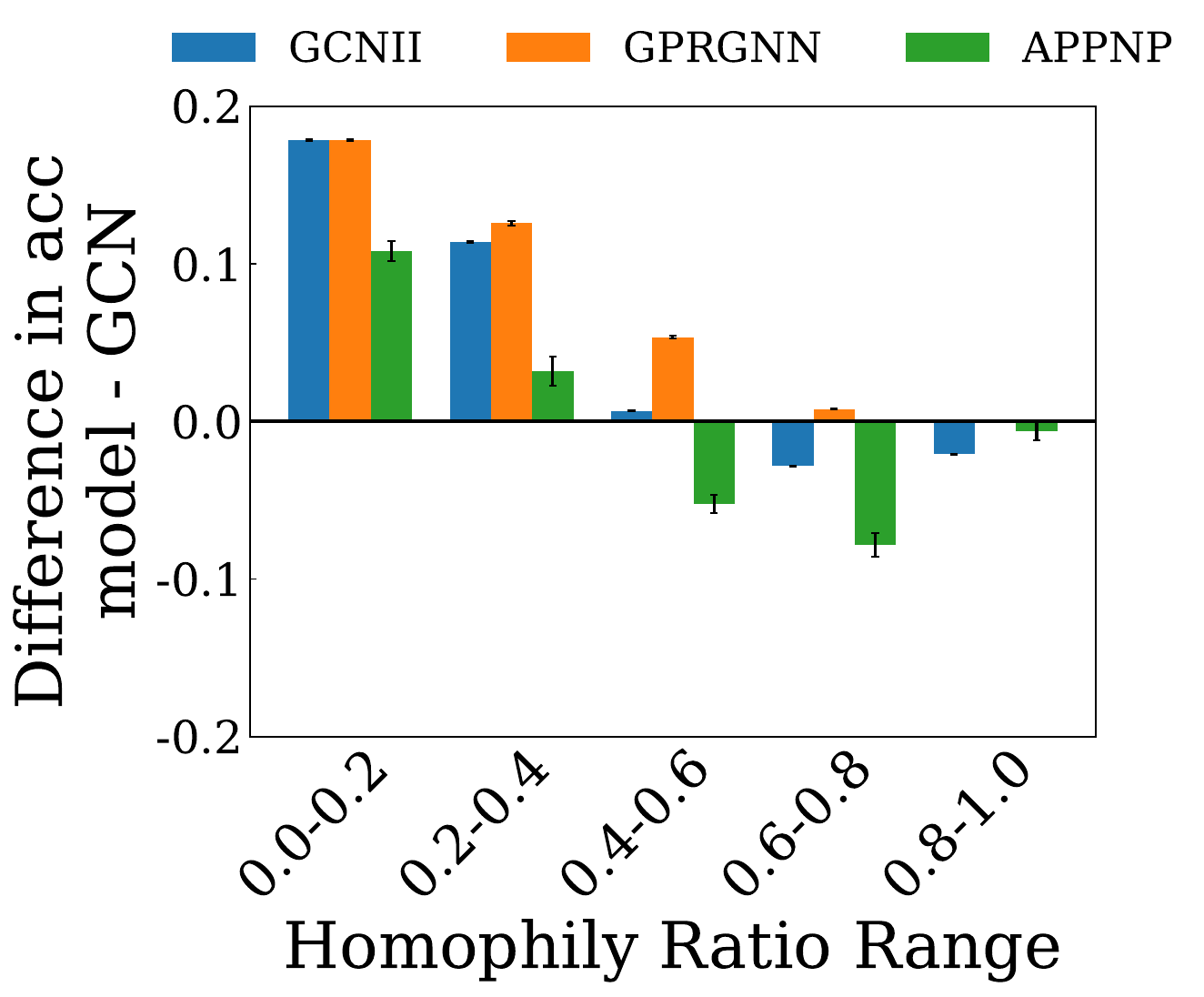}
        \end{minipage}
    }
    \subfigure[Ogbn-arxiv ($h$=0.63)]{
    \centering
    \begin{minipage}[b]{0.245\textwidth}
    \includegraphics[width=1.0\textwidth]{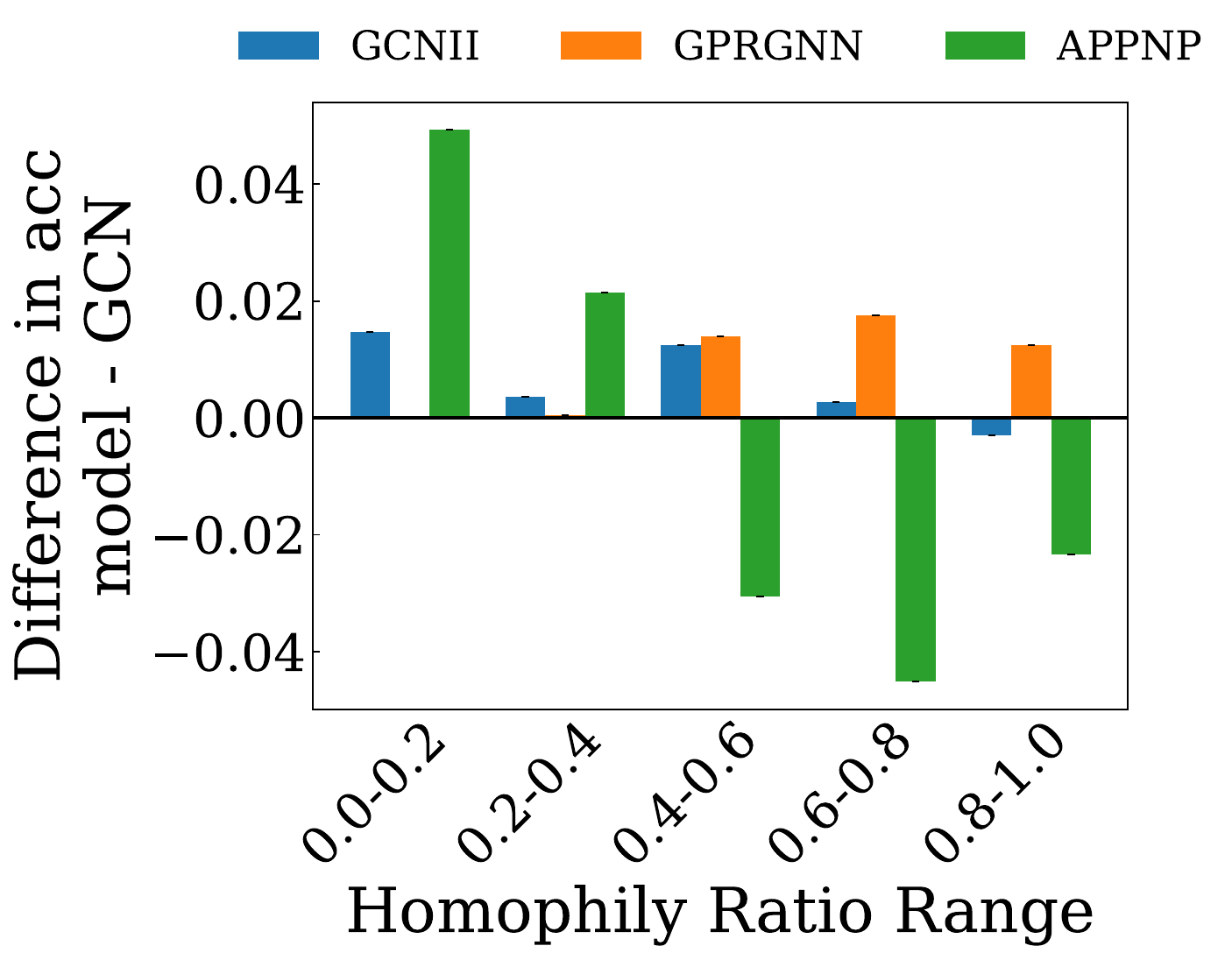}
    \end{minipage}
    }
    \subfigure[Chameleon ($h$=0.22)]{
    \centering
    \begin{minipage}[b]{0.23\textwidth}
    \includegraphics[width=1.0\textwidth]{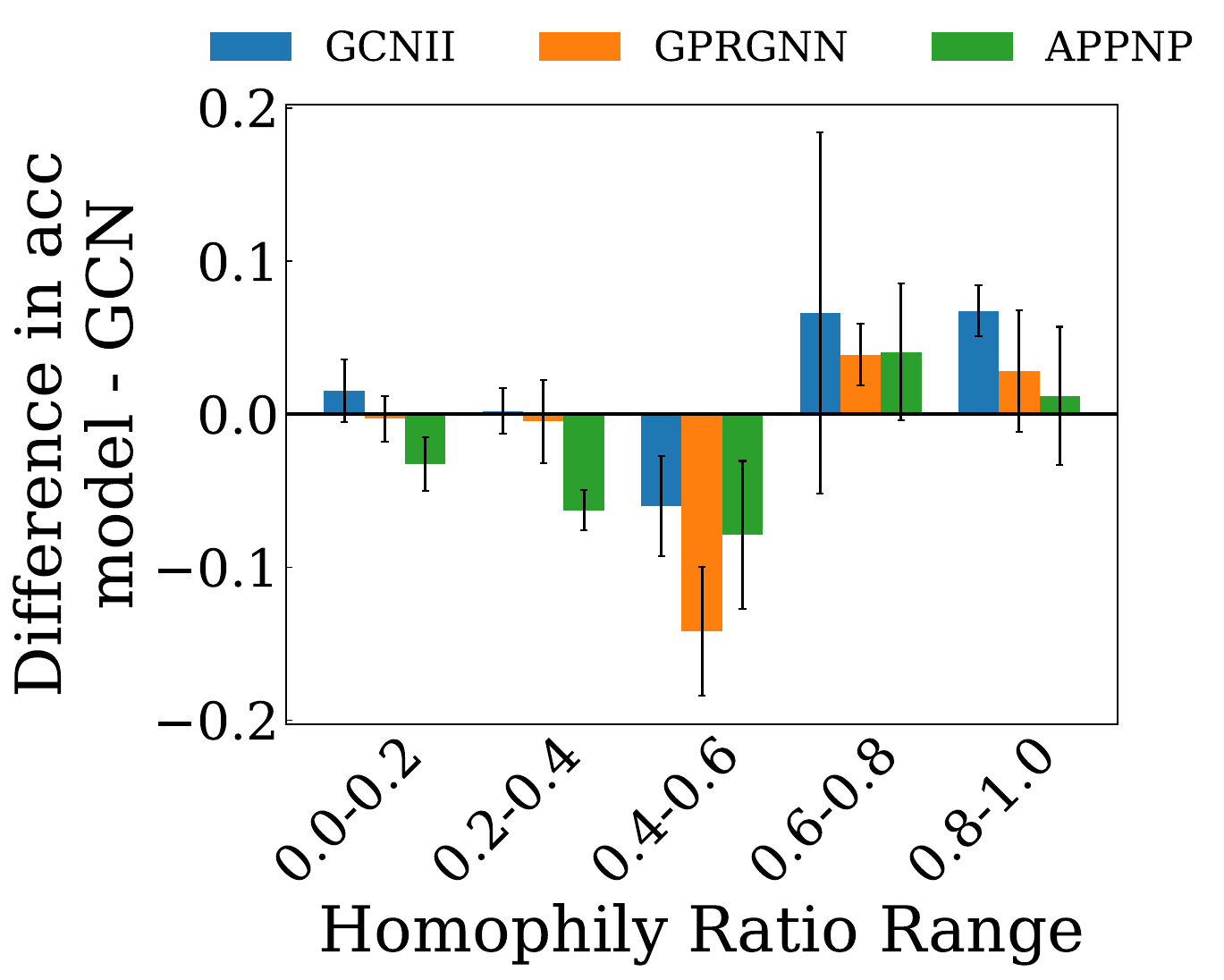}
    \end{minipage}
    }
    \subfigure[Squirrel ($h$=0.25)]{
    \centering
    \begin{minipage}[b]{0.23\textwidth}
    \includegraphics[width=1.0\textwidth]{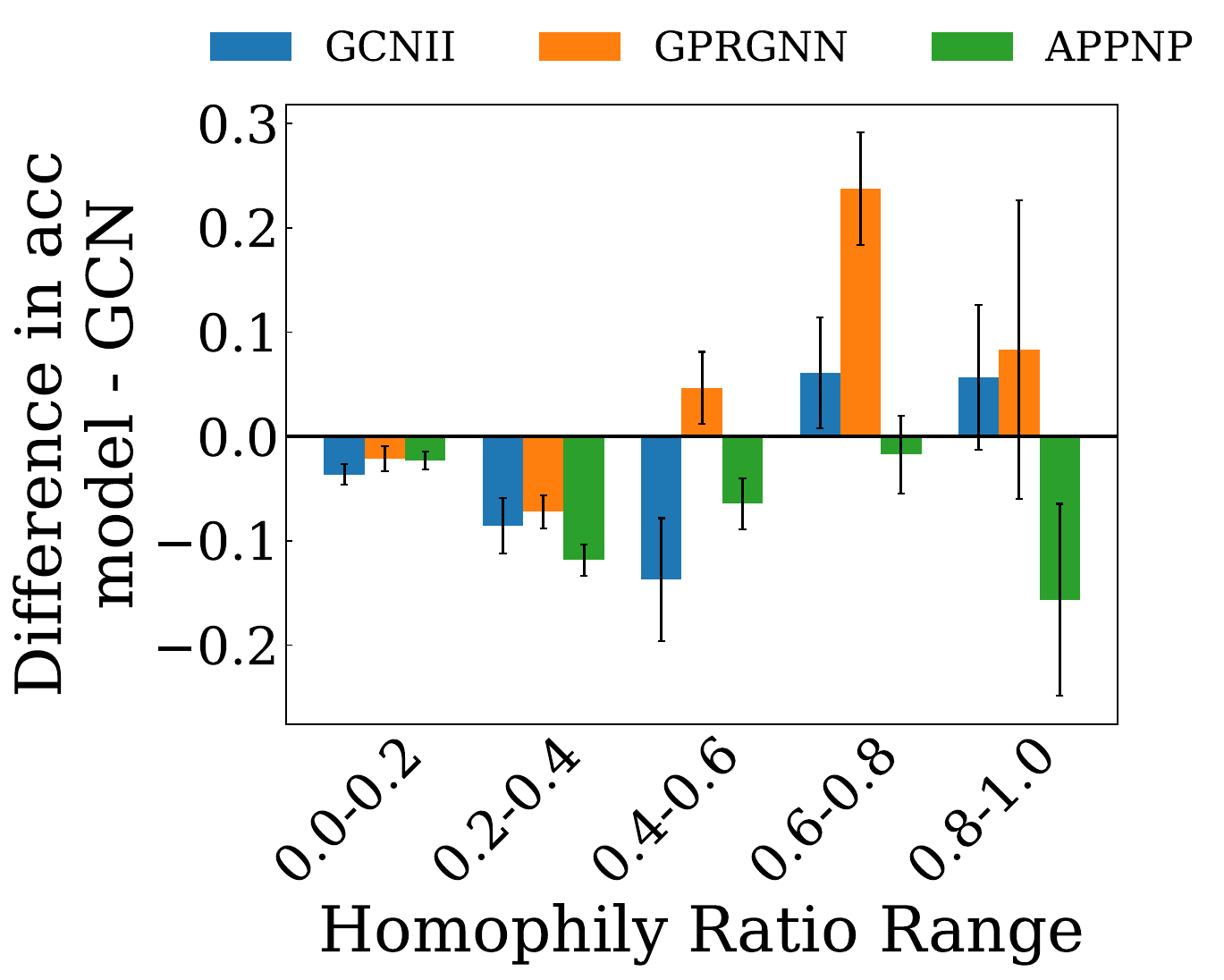}
    \end{minipage}
    }
    \caption{\label{fig:main_advance_compare_perform} Performance comparison between GCN and deeper GNNs. Each bar represents the accuracy gap on a specific node subgroup exhibiting homophily ratio within range specified on x-axis.}
\end{figure*}

\begin{wrapfigure}[14]{r}{0.25\textwidth}
    \centering
    \includegraphics[width=0.25\textwidth]{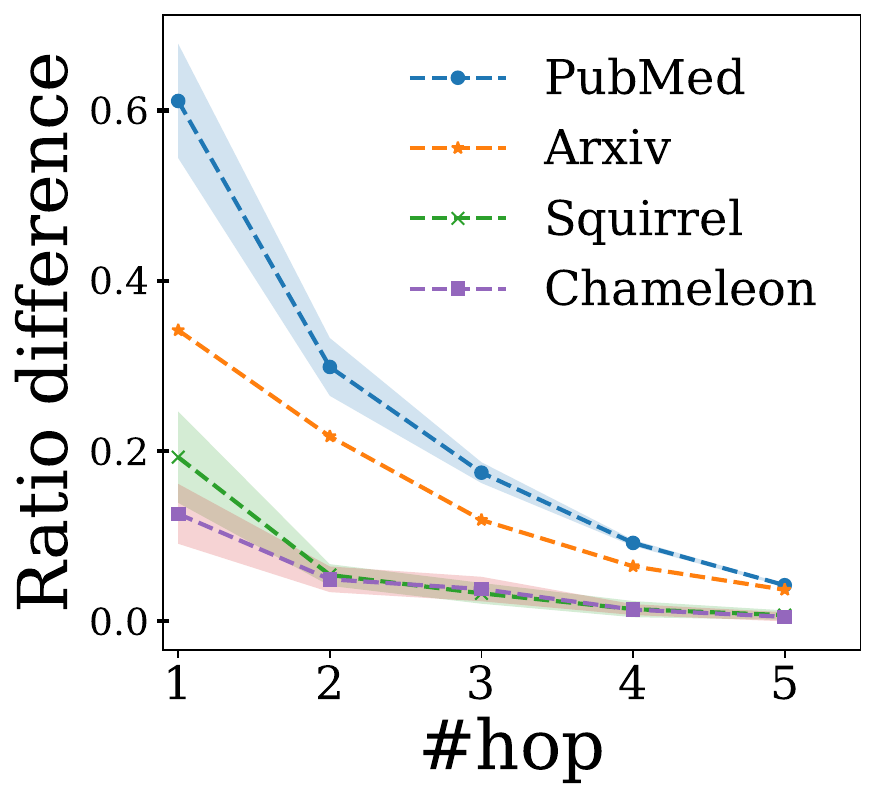}
    \caption{Multiple hop homophily ratio differences between training and minority test nodes. 
    \label{fig:homo_diff}}
\end{wrapfigure}

Having identified where deeper GNNs excel, reasons why effectiveness primarily appears in the minority node group remain elusive. 
Since the superiority of deeper GNNs stems from capturing higher-order information, we further investigate how higher-order homophily ratio differences vary on the minority nodes, denoted as, $|h_u^{(k)}-h_v^{(k)}|$, where node $u$ is the test node, node $v$ is the closest train node to test node $u$. 
We concentrate on analyzing these minority nodes $V_{\text{mi}}$ in terms of default one-hop homophily ratio $h_u$ and examine how $\sum_{u\in V_{\text{mi}}}|h_u^{(k)}-h_v^{(k)}|$ varies with different $k$ orders. 
Experimental results are shown in Figure~\ref{fig:homo_diff}, where a decreasing trend of homophily ratio difference is observed along with more neighborhood hops. 
The smaller homophily ratio difference leads to smaller generalization errors with better performance. 
This observation is consistent with \cite{Li2022FindingGH}, where heterophilic nodes in heterophilic graphs exhibit large higher-order homophily ratios, implicitly leading to a smaller homophily ratio difference.

\subsection{A new graph out-of-distribution scenario \label{subsec:ood}}

The graph out-of-distribution~(OOD) problem refers to the underperformance of GNN due to distribution shifts on graphs. 
Many Graph OOD scenarios~\cite{wu2022handling, zhu2021shift, zhang2019dane, liu2022confidence, hu2020ogb, guigood, ma2021subgroup,mao2021source}, e.g., biased training labels, time shift, and popularity shift, have been extensively studied. 
These OOD scenarios can be typically categorized into covariate shift 
with $\rmP^{\text{train}}(\rmX) \ne \rmP^{\text{test}}(\rmX)$ and concept shift~\cite{quinonero2008dataset, moreno2012unifying, widmer1996learning} with $\rmP^{\text{train}}(\rmY|\rmX) \ne \rmP^{\text{test}}(\rmY|\rmX)$.
$\rmP^{\text{train}}(\cdot)$ and $\rmP^{\text{test}}(\cdot)$ denote train and test distributions, respectively.  
Existing graph concept shift scenarios~\cite{wu2022handling, guigood} introduce different environment variables $e$ resulting in $\rmP(\rmY|\rmX, e_{\text{train}}) \ne \rmP(\rmY|\rmX, e_{\text{test}})$ leading to spurious correlations. 
To address existing concept shifts, algorithms~\cite{wu2022handling, zhang2021stable} have been developed to capture the environment-invariant relationship $\rmP(\rmY|\rmX)$. 
Nonetheless, existing concept shift settings overlook the scenario where there is not a unique environment-invariant relationship $\rmP(\rmY|\rmX)$.  
For instance, $\rmP(\rmY|\rmX_{\text{homo}})$ and $\rmP(\rmY|\rmX_{\text{hete}})$ can be different, indicated in Section~\ref{subsec:synthetic}. 
$\rmX_{\text{homo}}$ and $\rmX_{\text{hete}}$ correspond to features of nodes in homophilic and heterophilic patterns.  
Notably, homophilic and heterophilic patterns are crucial task knowledge that cannot be recognized as the irrelevant environmental variable. 
Consequently, we find that homophily ratio difference between train and test sets could be an important factor leading to an overlook concept shift, namely, graph structural shift.
Notably, structural patterns cannot be considered as environment variables given their integral role in node classification task.
The practical implications of this concept shift are substantiated by the following scenarios: 
\textbf{(1)} graph structural shift frequently occurs in most graphs, with a performance degradation in minority nodes, as depicted in Figure~\ref{fig:main_perform_compare}. 
\textbf{(2)} graph structural shift hides secretly in existing graph OOD scenarios.  
For instance, the FaceBook-100 dataset~\cite{wu2022handling} reveals a substantial homophily ratio difference between train and test sets, averaging 0.36.
This discrepancy could be the primary cause of OOD performance deterioration since the exist OOD algorithms~\cite{wu2022handling,jin2023empowering} that neglect such a concept shift can only attain a minimal average performance gain of 0.12\%.
\textbf{(3)} graph structural shift is a recurrent phenomenon in numerous real-world applications where new nodes in graphs may exhibit distinct structural patterns.
For example, in a recommendation system, existing users with rich data can receive well-personalized recommendations in the exploitation stage (homophily), while new users with less data may receive diverse recommendations during the exploration stage (heterophily).

\begin{wraptable}{r}{0.6\textwidth}
    \centering
    \caption{Performance~(Accuracy) on the proposed OOD split.\label{tab:ood_result}}
    \scalebox{0.8}{
    \begin{tabular}{lcccc}
    \hline
        ~ & Pubmed & Ogbn-Arxiv & Squirrel & Chameleon   \\ \hline
        GCN(i.i.d) & 89.18±0.15 & 72.99±0.14 & 58.09±0.71 & 75.09±0.79   \\ \hline
        GCN  & 51.04±0.16 & 34.94±0.07 & 32.13±4.93 & 43.35±3.47   \\ 
        MLP  & 68.38±0.43 & 33.17±0.37 & 24.57±0.77 & 34.78±4.97   \\ 
        GLNN & 67.51±0.25 & 35.89±0.14 & 31.51±0.70 & 47.01±1.09   \\ 
        GCNII & 67.76±0.36 & 36.81±0.14 & 37.15±1.39 & 41.25±2.03   \\ 
        GPRGNN & 57.24±0.18 & 34.95±0.43 & 42.43±7.71 & 35.27±7.67  \\ \hline
        SRGNN & 57.91±0.10 & 40.37±1.65 & 37.62±1.74 & 42.09±0.43   \\ 
        EERM & 65.37±1.35 & 34.23±0.46 & 40.93±0.57 &  45.84±1.05 \\
        EERM(II) & 67.59±0.91 & 40.28±0.84 & 44.31±0.40 & 48.59±0.78 \\ \hline
    \end{tabular}
    }
\end{wraptable}
Given the prevalence and importance of the graph structural shift, we propose a new graph OOD scenario emphasizing this concept shift. 
Specifically, we introduce a new data split on existing datasets, namely Cora, CiteSeer, PubMed, Ogbn-Arxiv, Chameleon, and Squirrel, where majority nodes are selected for train and validation, and minority ones for test.
This data split strategy highlights the homophily ratio difference and the corresponding concept shift. 
To better illustrate the challenges posed by our new scenario, we conduct experiments on models including GCN, MLP, GLNN, GPRGNN, and GCNII. We also include graph OOD algorithms, SRGNN~\cite{zhu2021shift} and EERM~\cite{wu2022handling}, with GCN encoders. EERM(II) is a variant of EERM with a GCNII encoder 
For a fair comparison, we show GCN performance on an i.i.d. random split, GCN(i.i.d.), sharing the same node sizes for train, validation, and test.
Results are shown in Table~\ref{tab:ood_result} while additional ones are in Appendix~\ref{subapp:ood_perform}. 
Following observations can be made: 
\textbf{Obs.1:} The performance degradation can be found by comparing OOD setting with i.i.d. one across four datasets, confirming OOD issue existence.
\textbf{Obs.2:} MLP-based models and deeper GNNs generally outperform vanilla GCN,  demonstrating the superiority on minority nodes.
\textbf{Obs.3:} Graph OOD algorithms with GCN encoders struggle to yield good performance across datasets, indicating a unique challenge over other Graph OOD scenarios. 
This primarily stems from the difficulty in learning both accurate relationships on homophilic and heterophilic nodes with distinct $\rmP(\rmY|\rmX)$.
Nonetheless, it can be alleviated by selecting a deeper GNN encoder, as the homophily ratio difference may vanish in higher-order structure information, with reduced concept shift.
\textbf{Obs.4:} EERM(II), EERM with GCNII, outperforms the one with GCN. 
Observations suggest that GNN architecture plays an indispensable role in addressing graph OOD issues, highlighting the new direction.

\section{Conclusion \& Discussion \label{sec:conclusion}}
In conclusion, this work provides crucial insights into GNN performance meeting structural disparity, common in real-world scenarios. We recognize that aggregation exhibits different effects on nodes with structural disparity, leading to better performance on majority nodes than those in minority.  
The understanding also serves as a stepping stone for multiple graph applications.

Our exploration majorly focuses on common datasets with clear majority structural patterns while real-world scenarios, offering more complicated datasets, posing new challenges
Additional experiments are conducted on Actor, IGB-tiny, Twitch-gamer, and Amazon-ratings.
Dataset details and experiment results are in Appendix~\ref{app:dataset} and Appendices~\ref{app:main_compare}-\ref{app:discriminative}, respectively.
Despite our understanding is  empirically effective on most datasets, further research and more sophisticated analysis are still necessary.
Discussions on the limitation and broader impact are in Appendix~\ref{app:limitation} and~\ref{app:broader}, respectively.

\section{Acknowledgement}
We want to thank Xitong Zhang, He Lyu, Wenzhuo Tang at Michigan State University, Yuanqi Du at Cornell University, Haonan Wang at the National University of Singapore, and Jianan Zhao at Mila for their constructive comments on this paper.

This research is supported by the National Science Foundation (NSF) under grant numbers CNS 2246050, IIS1845081, IIS2212032, IIS2212144, IIS2153326, IIS2212145, IOS2107215, DUE 2234015, DRL 2025244 and IOS2035472, the Army Research Office (ARO) under grant number W911NF-21-1-0198, the Home Depot, Cisco Systems Inc, Amazon Faculty Award, Johnson\&Johnson, JP Morgan Faculty Award and SNAP. 
  
\bibliographystyle{unsrt}
\bibliography{main}
\newpage
\appendix

\renewcommand \thepart{} 
\renewcommand \partname{}
\part{Appendix} 
\parttoc
\newpage

\section{Related work \label{app:related}}

\textbf{Graph Neural Networks~(GNNs)} have emerged as a powerful technique in Deep Learning, specifically designed for graph-structured data. 
They address the limitations of traditional neural networks in dealing with irregular data structures. 
GNNs learn node representations by aggregating neighborhood and transforming features recursively. 
The node representation can then be successfully utilized to a wide range of graph-related downstream tasks~\cite{Kipf2016SemiSupervisedCW, zhang2018link, xupowerful, he2020lightgcn, wieder2020compact, zhu2021neural, fan2019graph, yang2022reinforcement}.

The aggregation mechanism in GNNs is often viewed as feature smoothing~\cite{li2018deeper, ma2021unified, zhu2021interpreting}. 
This perspective leads some recent studies~\cite{zhu2020beyond, chienadaptive, Li2022FindingGH, halcrow2020grale} claiming that GNN models are overly reliant on homophilic patterns and unsuited to capturing heterophilic patterns
To accommodate heterophilic graphs, recent works propose carefully designed GNN architectures, e.g., CPGNN~\cite{zhu2021graph}, GGNN~\cite{yan2022two}, GPRGNN~\cite{chienadaptive}, GCNII~\cite{chen2020simple} GBK-GNN~\cite{du2022gbk}, ACM-GNN~\cite{luanrevisiting}, Bernnet~\cite{he2021bernnet}.

More recent analyses on GNNs~\cite{baranwal2023effects, Ma2021IsHA, luanrevisiting} indicate that even GCN~\cite{Kipf2016SemiSupervisedCW}, a vanilla GNN, can deliver strong performance on certain heterophilic graphs. 
According to these findings, new metrics and understandings~\cite{bhasin2022graph, platonov2022characterizing, lim2021large, luanrevisiting, Ma2021IsHA, luan2023graph} have been proposed to further expose the remaining weaknesses of GNNs.

There is a concurrent work~\cite{luan2023graph} investigates when Graph Neural Networks help with node classification with a comparison between GNN and MLP.  
Nonetheless, our work is distinct from \cite{luan2023graph} in two primary ways:

\begin{itemize}
    \item \cite{luan2023graph} is majorly grounded on the CSBM-H model focusing on different feature variances for different classes, from a feature perspective. 
    In contrast, our work examines the homophilic and heterophilic patterns from a structural perspective instead of sharing the same node homophily ratio across the graph. 
    \item \cite{luan2023graph} proposes a new metric to identify whether GNN can outperform graph-agnostic MLP on particular datasets. Nonetheless, it still focuses on all the nodes in the whole graph together. 
    In contrast, our work reveals the scenario where GNNs show performance degradation on certain node subgroup across most homophilic and heterophilic graph datasets. 
    Typically, we focus on a node subgroup perspective, rather than all nodes in the whole graph.  
    Our paper highlights the drawback of GNNs in a more general case.
\end{itemize}

\textbf{Fairness on Graph} 
Although recent years have seen a satisfying performance from Graph Neural Networks~(GNNs), risks can be found. 
GNNs may unintentionally learn and perpetuate biases present in the training data, potentially resulting in unfair outcomes for certain populations. 
Such risks raise concerns about biases and discriminatory behaviors when GNNs are used in human-centric applications.

Fairness issues on graphs can be roughly categorized into attribute bias and structural bias, based on the source of the bias. 
Several works~\cite{jiang2022fmp, rahman2019fairwalk,spinelli2021fairdrop,zeng2021fair} focus on fairness issues originating from sensitive node attributes, e.g., gender or underrepresented ethnic groups. 
Other literature~\cite{zhu2021shift, ma2021subgroup, tang2020investigating, dong2022edits, mislove2007measurement} focuses on addressing fairness issues arising from different structural information, e.g., degree, geodesic distance to the training node, and Personal Pagerank score. Our work aligns closely with structural bias, showing that a larger homophily ratio difference between training and test nodes may lead to performance degradation. To the best of our knowledge, we are the first to propose this performance disparity induced by the homophily ratio difference.

\textbf{Generalization ability analysis on Graph Neural Network}
The generalization ability analysis on Graph Neural Networks aims to develop theoretical understandings of GNNs with a focus on the uniqueness of the graph structure. 
Recent research progress reveals the generalization ability~\cite{du2019graph, garg2020generalization, liao2020pac, verma2019stability, scarselli2018vapnik} of GNN across different tasks and settings.
Our work typically studies the generalization ability on the transductive node classification task indicating relationship with~\cite{cong2021provable, baranwal2021graph, baranwal2023effects, Ma2021IsHA}.  
Typically, \cite{cong2021provable} employs the transductive uniform stability~\cite{el2006stable} to understand why deeper GNNs generalize better than the vanilla GCN. 
Several studies~\cite{baranwal2021graph, baranwal2023effects, Ma2021IsHA} investigate the generalization of GCN under the CSBM model assumption, while they focus on either homophilic or heterophilic patterns rather than consider both patterns together. 
\cite{ma2021subgroup} provides the first non-i.i.d. PAC-Bayes generalization bound on GNNs, serving as the basis of our theory~\ref{theorm:main}. 
However, there are key differences between our work and \cite{ma2021subgroup} detailed as follows:
\begin{itemize}
    \item Assumption difference. \cite{ma2021subgroup} assumes the existence of $c$-Lipschitz continuous functions on the conditional probability of $y_i=k$ given the aggregated feature $g_i(X, G)$, denoted as $\rmP(y_i=k|g_i(X, G))$. The assumption suggests that nodes with a closer distance are likely to belong to the same class. 
    However, this assumption may not align with the real-world scenario where nodes with closer distances may belong to different classes when they exhibit a large homophily ratio difference, as shown in Lemma~\ref{lemma:csbm_diff}. 
    In contrast, we utilize a different assumption considering the influence of the structural pattern on the conditional probability, enabling analyses on the scenario that nodes with a closer distance but from different classes. This scenario happens frequently when the graph is a mixture of homophilic and heterophilic patterns. An intuitive illustration can be found in Figure~\ref{fig:example}.
    \item Application scope difference. \cite{ma2021subgroup} is primarily focused on the homophilic graphs and does not easily extend to the heterophilic ones. Empirical evidence can be found in Section~\ref{subsec:theory_verify} and Appendix~\ref{app:theory_verify}. 
    In contrast, our theory can be established on most datasets except for the Actor dataset which structural information hardly helps. 
\end{itemize}

\textbf{Disclaimer:} Structural disparity is different from distribution shift between train and test data. The structural disparity is that homophilic and heterophilic patterns exist simultaneously in a single node set. Such disparity consistently exists in all graphs, as shown in Figure~\ref{fig:node_distribution}. 
If randomly sample train and test data, the distribution shift between train and test will not happen.
Both train and test sets will have homophilic and heterophilic patterns, indicating structural disparity happens on both the train and test sets.

\section{Investigation on the effectiveness of homophily ratio difference with targeted synthetic edge addition algorithm. \label{app:synthetic}}
In Section~\ref{subsec:theory_verify}, we intuitively show that a larger homophily ratio difference could be a key reason for performance disparity. 
We demonstrate the effectiveness of intuition with both theoretical analysis and experiments conducted on real-world datasets. 
To further verify the influence of the homophily ratio difference, we conduct more controllable synthetic experiments adopted from~\cite{Ma2021IsHA}, adding different amounts of synthetic edges on real-world datasets, to further evaluate how the performance of GCN, a vanilla GNN, changes on varied homophily ratio differences.
Typically, we are to manually make heterophilic nodes more homophilic and make homophilic nodes more heterophilic with the targeted homophilic edge addition and the targeted heterophilic edge addition algorithms, respectively. 
Notably, despite synthetic edges added, we utilize the real-world dataset serves as the entrance to keep the analysis more approach to the real-world scenario. 
The following subsections are organized as follows. 
We first introduce the targeted heterophilic and homophilic edge addition algorithms and how it leads to larger homophily ratio difference in Appendix~\ref{subapp:target_algorithm}. 
Detailed experiment analysis can be found in Appendix~\ref{subapp:synthetic_result}. 
Further experiment details can be found in the Appendix~\ref{subapp:detail_graph_generation}. 

\subsection{Targeted heterophilic \& homophilic edge addition algorithms\label{subapp:target_algorithm}}
Both targeted heterophilic and homophilic edge addition algorithms commence with standard, real-world benchmark graphs, modifying the structure by adding synthetic edges to manipulate the homophily ratio on either train nodes or test nodes. 
For example, we can add synthetic heterophilous edges on the targeted test nodes in a homophily graph. 
Consequently, a homophily ratio difference between the training and testing nodes can be observed. 

We first introduce the homophilic edge addition algorithm, as shown in Algorithm~\ref{alg:graph_gen_homo}. 
Specially, we introduce a total of $K$ edges on either training or targeted test nodes on heterophilic dataset, denoted as $\mathcal{V}_{\text{targeted}}$.
For each edge addition, a node $u$ is uniformly sampled from the targeted node set, $\mathcal{V}_{\text{targeted}}$, and obtains its corresponding label, $y_u$.  
Another targeted node $v$ sharing the same label, $y_v$, is selected, and a homophilic edge is added between them, resulting in a homophily ratio decrease. 
Consequently, we can observe a homophily ratio decrease on both target nodes $u$ and $v$ with the new homophily edge added.
Notably, we only add synthetic edges among targeted nodes, ensuring the homophily ratio unchanged on the other nodes.

The heterophilic edge addition algorithm, as shown in the Algorithm~\ref{alg:graph_gen_hete}, is analogous to the homophilic one. 
The only difference is the selection of a newly added heterophilic edge instead of the homophilic one.
While homophilic edges can be readily added by connecting nodes within the same label, adding cross-label heterophilic edges poses a new challenge that the target node $u$ should be connected to which label, other than the target label $y_u$.
Typically, we follow the principle~\cite{Ma2021IsHA} that nodes with the same label should share similar neighborhood label distributions. 
This principle aligns with the CSBM assumption, discussed in Section~\ref{sec:analysis}. 
Specifically, given a real-world graph $G$, we first define a discrete neighborhood target label distribution $\mathcal{D}_c$ for each label $c \in \mathcal{C}$. 
Examples of $\mathcal{D}_c$ can be found in Appendix~\ref{subapp:detail_graph_generation}. 
Specifically, we will first randomly select a label $c$ based on distribution $\mathcal{D}_c$. 
Another node $v$ in the target set with label $c$ is then selected, and a heterophilic edge is added between them. 
This process enables us to add heterophilic edges on target nodes following similar neighborhood label distributions.

\begin{algorithm}
\footnotesize
\SetCustomAlgoRuledWidth{0.35\textwidth}
\SetKwFunction{Union}{Union}\SetKwFunction{FindCompress}{FindCompress}
\SetKwInOut{Input}{input}\SetKwInOut{Output}{output}
\SetKwRepeat{Do}{do}{while}
\Input{$\mathcal{G}=\{\mathcal{V}, \mathcal{E}\}$, $\mathcal{V}_{targeted}\in\mathcal{V}$, $K$, and $\{\mathcal{V}_c\}_{c=0}^{|\mathcal{C}|-1}$}
\Output{$\mathcal{G}' = \{\mathcal{V},\mathcal{E}'\}$ }
Initialize $\mathcal{E}'=\mathcal{E}$, $k=1$  \;
\While{$1\leq k \leq K$}{ 
\Do{$(i,j) \in \mathcal{E'}$}{
Sample node $i \sim $ \textsf{Uniform}$(\mathcal{V}_{targeted})$\;
Obtain the label, $y_i$ of node $i$\;
Sample node $j \sim $ \textsf{Uniform}$(\mathcal{V}_{y_i} \cap \mathcal{V}_{targeted})$\;}
Update edge set $\mathcal{E}' = \mathcal{E'} \cup \{(i,j)\}$\;
$k\leftarrow k+1$\;
}
\Return $\mathcal{G}'=\{\mathcal{V},\mathcal{E}'\}$
\caption{Targeted Homophilic Edge Addition}
\label{alg:graph_gen_homo}
\end{algorithm}

\begin{algorithm}
\footnotesize
\SetCustomAlgoRuledWidth{0.35\textwidth}
\SetKwFunction{Union}{Union}\SetKwFunction{FindCompress}{FindCompress}
\SetKwInOut{Input}{input}\SetKwInOut{Output}{output}
\Input{$\mathcal{G}=\{\mathcal{V}, \mathcal{E}\}$, $\mathcal{V}_{targeted}\in \mathcal{V}$, $K$, $\{\mathcal{D}_c\}_{c=0}^{|\mathcal{C}|-1}$ and $\{\mathcal{V}_c\}_{c=0}^{|\mathcal{C}|-1}$}
\Output{$\mathcal{G}' = \{\mathcal{V},\mathcal{E}'\}$ }
Initialize $\mathcal{E}'=\mathcal{E}$, $k=1$  \;
\While{$1\leq k \leq K$}{ 
Sample node $i \sim $ \textsf{Uniform}$(\mathcal{V}_{targeted})$\;
Obtain the label, $y_i$ of node $i$\;
Sample a number $r\sim$ \textsf{Uniform}(0,1)\;
Sample a label $c \sim \mathcal{D}_{y_i}$\;
Sample node $j \sim $ \textsf{Uniform}$(\mathcal{V}_c \cap \mathcal{V}_{targeted})$\;
Update edge set $\mathcal{E}' = \mathcal{E'} \cup \{(i,j)\}$\;
$k\leftarrow k+1$\;
}
\Return $\mathcal{G}'=\{\mathcal{V},\mathcal{E}'\}$
\caption{Targeted Heterophilous Edge Addition}
\label{alg:graph_gen_hete}
\end{algorithm}

\begin{figure*}[!ht]
    \subfigure[Synthetic graphs generated from Cora with the targeted heterophilic edge algorithm]{
        \centering
        \begin{minipage}[b]{0.47\textwidth}
        \includegraphics[width=1.0\textwidth]{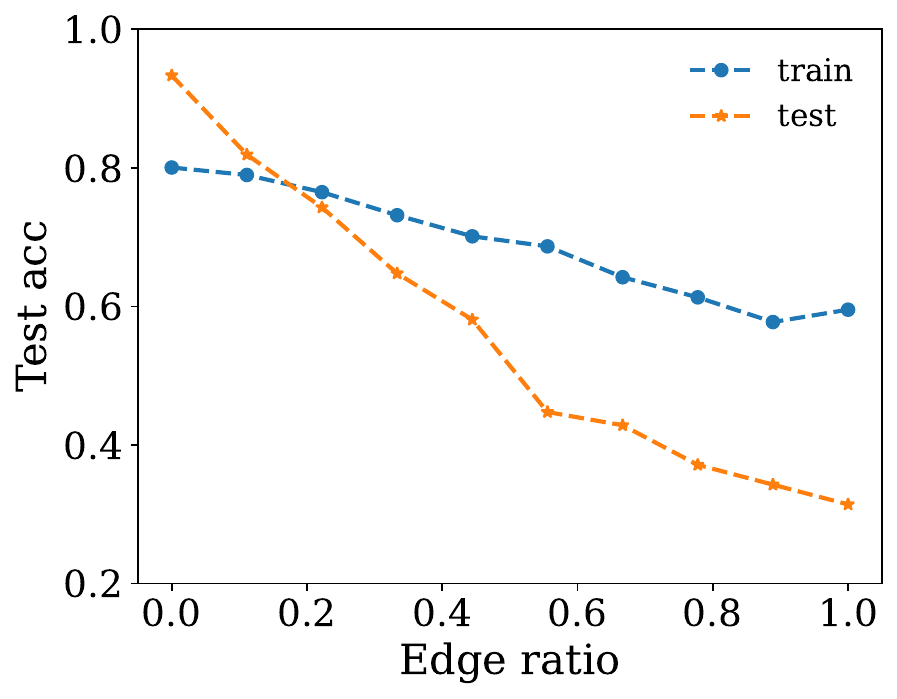}
        \end{minipage}
    }
    \subfigure[Synthetic graphs generated from Squirrel with the targeted homophilious edge algorithm]{
    \centering
    \begin{minipage}[b]{0.47
    \textwidth}
    \includegraphics[width=1.0\textwidth]{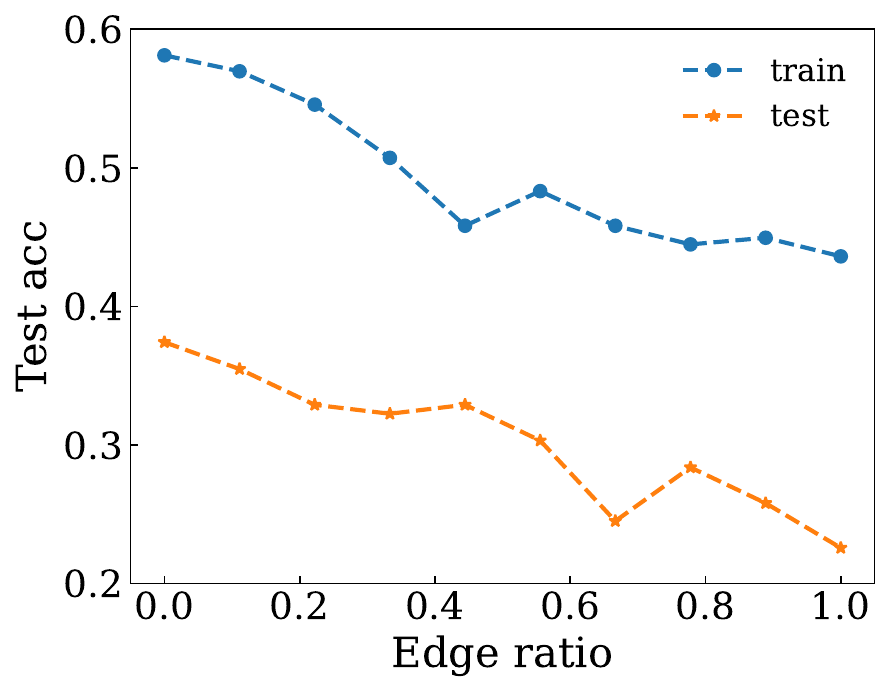}
    \end{minipage}
    }

    \caption{The performance of GCN on synthetic graphs with various edge permutation ratios. The $y$-axis represents the test performance on all nodes when adding synthetic edge on training nodes. When adding synthetic edges on the targeted test subset, The $y$-axis represents the performance on the targeted test node subset. Test performance generally decrease with more synthetic edges added with a large homophily ratio difference.\label{fig:app_synthetic}}
\end{figure*}

\subsection{Detailed experiment results\label{subapp:synthetic_result}}

In this section, we conduct experiments on the homophilic graph, Cora, employing the targeted homophilic edge addition algorithm, and the heterophilic graph, Squirrel, employing the targeted heterophilic edge addition algorithm. 
For each dataset, we apply the synthetic edge addition algorithm to training nodes and a subset of test nodes, respectively, resulting in homophily ratio difference between training and test sets.  
The synthetic edges are added gradually until reaching the predefined maximum budget $K_{\text{max}}$, resulting in multiple synthetic graphs generated. 
More experimental details on the synthetic graphs can be found in Appendix~\ref{subapp:detail_graph_generation}. 
The GCN, a vanilla GNN model, is trained from the sketch on each synthetic graph.    
The experimental results are illustrated in Figure~\ref{fig:app_synthetic}, where the $x$-axis represents the edge permutation ratio calculated as $\frac{K}{K_{\text{max}}}$, where $K$ is the number of added edges on the $i$-th synthetic graph. 
The $y$-axis represents the test performance when adding synthetic edge on training nodes. 
When adding synthetic edges on the targeted test subset, The $y$-axis represents the performance on the targeted test node subset. 
A clear degradation in test performance is observed with more and more synthetic edges added to test nodes and training nodes exclusively.
The above observations clearly demonstrate that the GCN test performance will degrade with a larger homophilic ratio difference between train and test nodes.

\subsection{Details on the generated graph \label{subapp:detail_graph_generation}}
In this subsection, we elaborate on the details of the synthetic graphs generated in Appendix~\ref{subapp:synthetic_result}. 
Specifically, we provide more details regarding the distributions ${\mathcal{D}_c, c \in \mathcal{C}}$ employed in the Cora and Squirrel datasets.
We adopted circulant matrix-like designs for simplicity and straightforward implementation.
We also show details on the number of added edges, and the corresponding graph homophily ratio on the Cora and Squirrel datasets.

\textbf{Cora}
We utilize the targeted heterophilic edge addition algorithm on the Cora dataset. 
The Cora dataset has seven labels that we represent as ${0,1,2,3,4,5,6}$, and the neighborhood distributions ${\mathcal{D}_c, c \in \mathcal{C}}$ for adding heterophilic synthetic edges are presented as follows.
\begin{align}
    &\mathcal{D}_0: \textsf{Categorical}([0,0.5,0,0,0,0,0.5]),\nonumber\\
    &\mathcal{D}_1: \textsf{Categorical}([0.5,0,0.5,0,0,0,0]),\nonumber\\
    &\mathcal{D}_2: \textsf{Categorical}([0,0.5,0,0.5,0,0,0]),\nonumber\\
    &\mathcal{D}_3: \textsf{Categorical}([0,0,0.5,0,0.5,0,0]),\nonumber\\
    &\mathcal{D}_4: \textsf{Categorical}([0,0,0,0.5,0,0.5,0]),\nonumber\\
    &\mathcal{D}_5: \textsf{Categorical}([0,0,0,0,0.5,0,0.5]),\nonumber\\
    &\mathcal{D}_6: \textsf{Categorical}([0.5,0,0,0,0,0.5,0]).\nonumber
\end{align}

The number of added edges $K$ and the corresponding homophily ratio $h$ on the targeted test nodes and the train nodes are presented in Table~\ref{tab:homo_test_syn} and~\ref{tab:homo_train_syn}, respectively. 

\begin{table}[h!]
\centering
\caption{$K$ and $h$ values for graphs with synthetic edges added targeted test nodes on Cora\label{tab:homo_test_syn}}
\begin{tabular}{cccccccccc}
\toprule
   $K$  &  100 & 200 & 300   & 400 &  500 & 600 & 700 & 800 & 900\\ \midrule
  $h$ & 0.645 & 0.483 & 0.382 &  0.321 & 0.279 & 0.249 & 0.223 & 0.206 & 0.194 \\
\bottomrule
\end{tabular}

\label{tab:cora_K_test}
\end{table}

\begin{table}[h!]
\centering
\caption{$K$ and $h$ values for graphs with synthetic edges added targeted training nodes on Cora\label{tab:homo_train_syn}}
\begin{tabular}{cccccccccc}
\toprule
   $K$  &  200 & 400 & 600   & 800 &  1000 & 1200 & 1400 & 1600 & 1800\\ \midrule
  $h$ & 0.620 & 0.507 & 0.424 &  0.365 & 0.324 & 0.292 & 0.264 & 0.243 & 0.224 \\
\bottomrule
\end{tabular}

\label{tab:cora_K_train}
\end{table}

\textbf{Squirrel}
We utilize the targeted homophilic edge addition algorithm on the Squirrel dataset. 
We first sample the label from discrete uniform distributions. Then, we randomly select two nodes with the same label and do not have an edge between them.
The number of added edges $K$ and the corresponding homophily ratio $h$ on the targeted test nodes and train nodes are presented in Table~\ref{tab:hete_test_syn} and~\ref{tab:hete_train_syn}, respectively.

\begin{table}[h!]
\centering
\caption{$K$ and $h$ values for graphs with synthetic edges added targeted test nodes on Squirrel\label{tab:hete_test_syn}}
\begin{tabular}{cccccccccc}
\toprule
   $K$  &  100 & 200 & 300   & 400 &  500 & 600 & 700 & 800 & 900\\ \midrule
  $h$ & 0.237 & 0.345 & 0.412 &  0.467 &  0.505 & 0.536 & 0.562 & 0.585 & 0.606 \\
\bottomrule
\end{tabular}

\label{tab:squirrel_K_test}
\end{table}

\begin{table}[h!]
\centering
\caption{$K$ and $h$ values for graphs with synthetic edges added targeted train nodes on Squirrel\label{tab:hete_train_syn}}
\begin{tabular}{cccccccccc}
\toprule
   $K$ &  1500 & 3000 & 4500   & 6000 &  7500 & 9000 & 10500 & 12000 & 13500 \\ \midrule
  $h$ & 0.261 & 0.299 & 0.330 &  0.356 & 0.379 & 0.399 & 0.417 & 0.434 & 0.448 \\
\bottomrule
\end{tabular}

\label{tab:squirrel_K_train}
\end{table}

\section{Instance-level discriminative analysis\label{app:local_main}}

\begin{figure*}[!h]
    \subfigure[Cora ($h$=0.81)]{
        \centering
        \begin{minipage}[b]{0.45\textwidth}
        \includegraphics[width=1.0\textwidth]{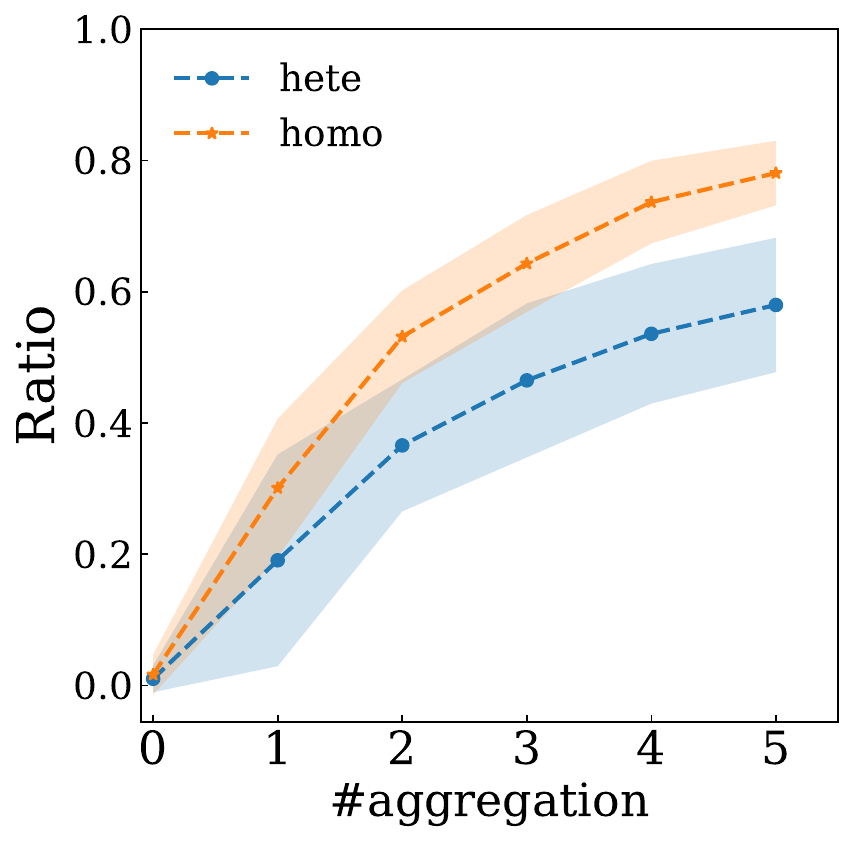}
        \end{minipage}
    }
    \subfigure[CiteSeer ($h$=0.71)]{
    \centering
    \begin{minipage}[b]{0.45\textwidth}
    \includegraphics[width=1.0\textwidth]{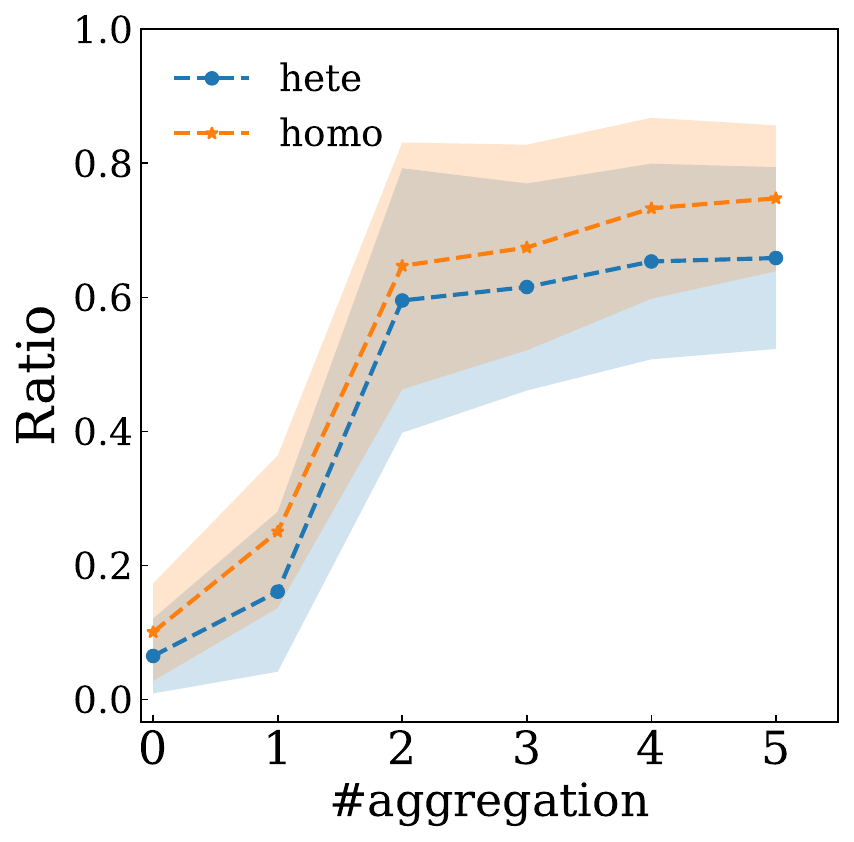}
    \end{minipage}
    }
    
    \subfigure[PubMed ($h$=0.79)]{
        \centering
        \begin{minipage}[b]{0.45\textwidth}
        \includegraphics[width=1.0\textwidth]{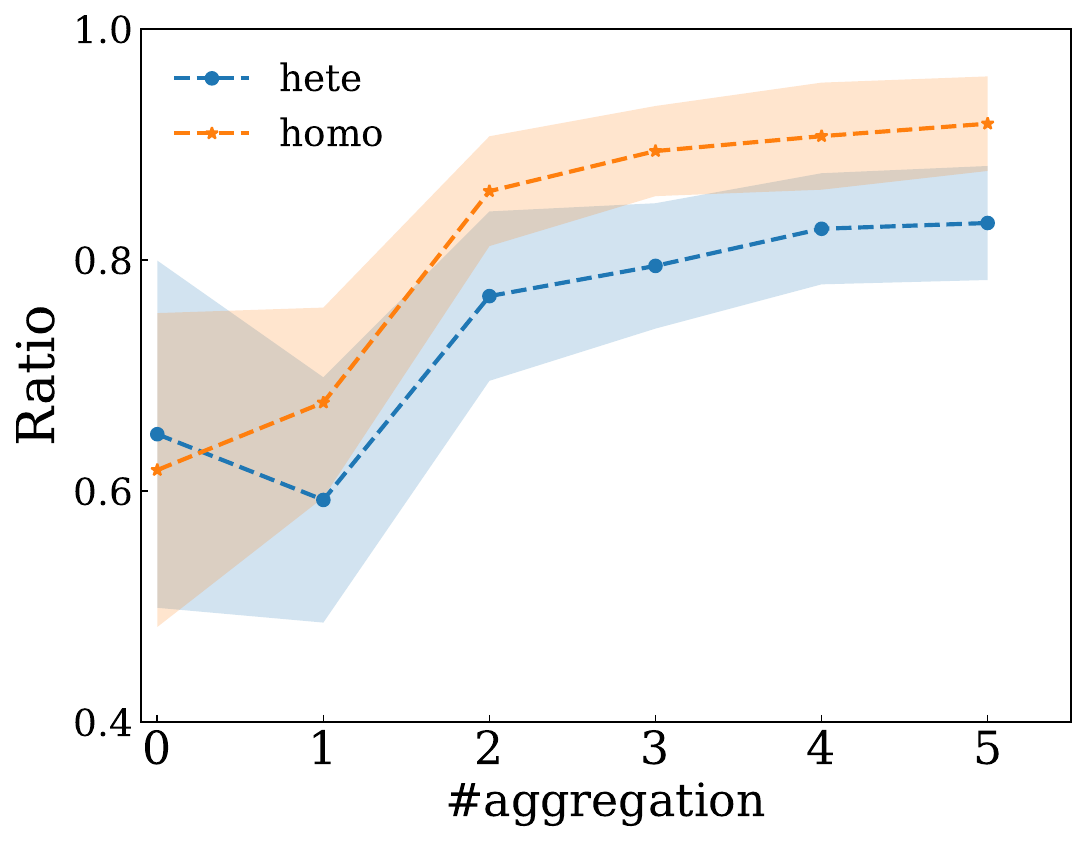}
        \end{minipage}
    }
    \subfigure[Ogbn-arxiv ($h$=0.63)]{
    \centering
    \begin{minipage}[b]{0.45\textwidth}
    \includegraphics[width=1.0\textwidth]{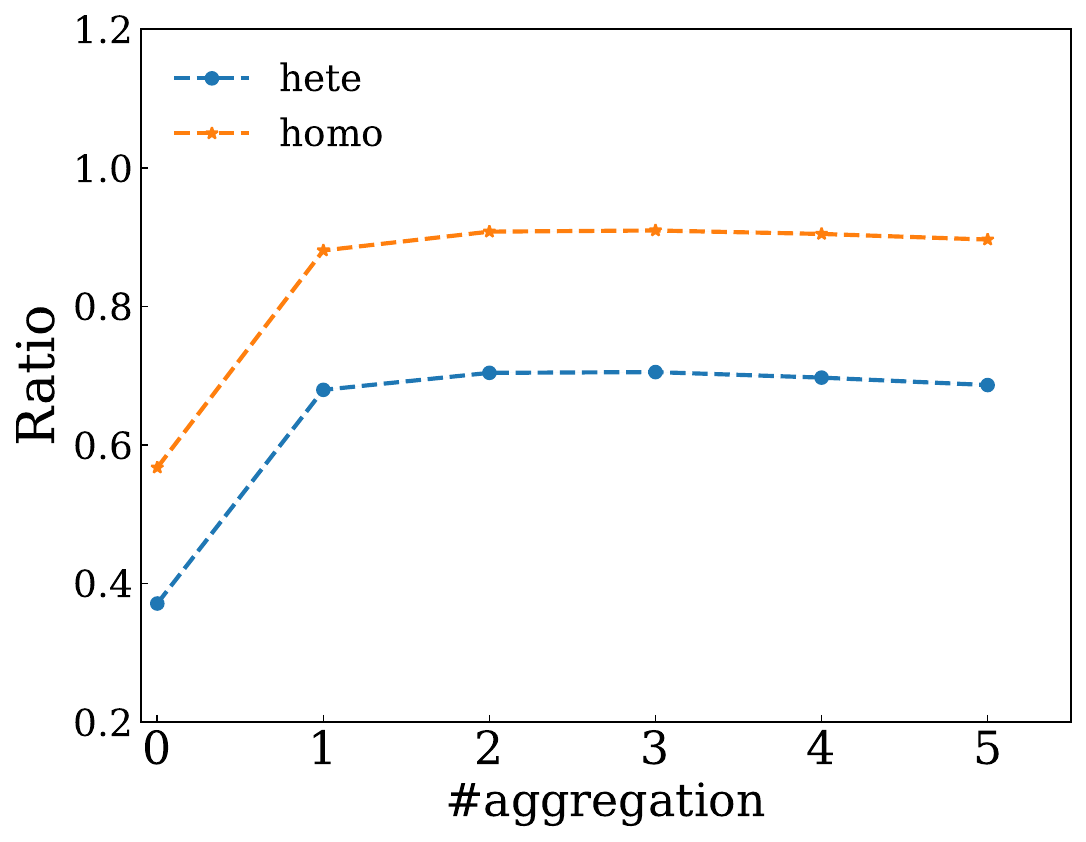}
    \end{minipage}
    }
    
    \subfigure[Chameleon ($h$=0.25)]{
    \centering
    \begin{minipage}[b]{0.45\textwidth}
    \includegraphics[width=1.0\textwidth]{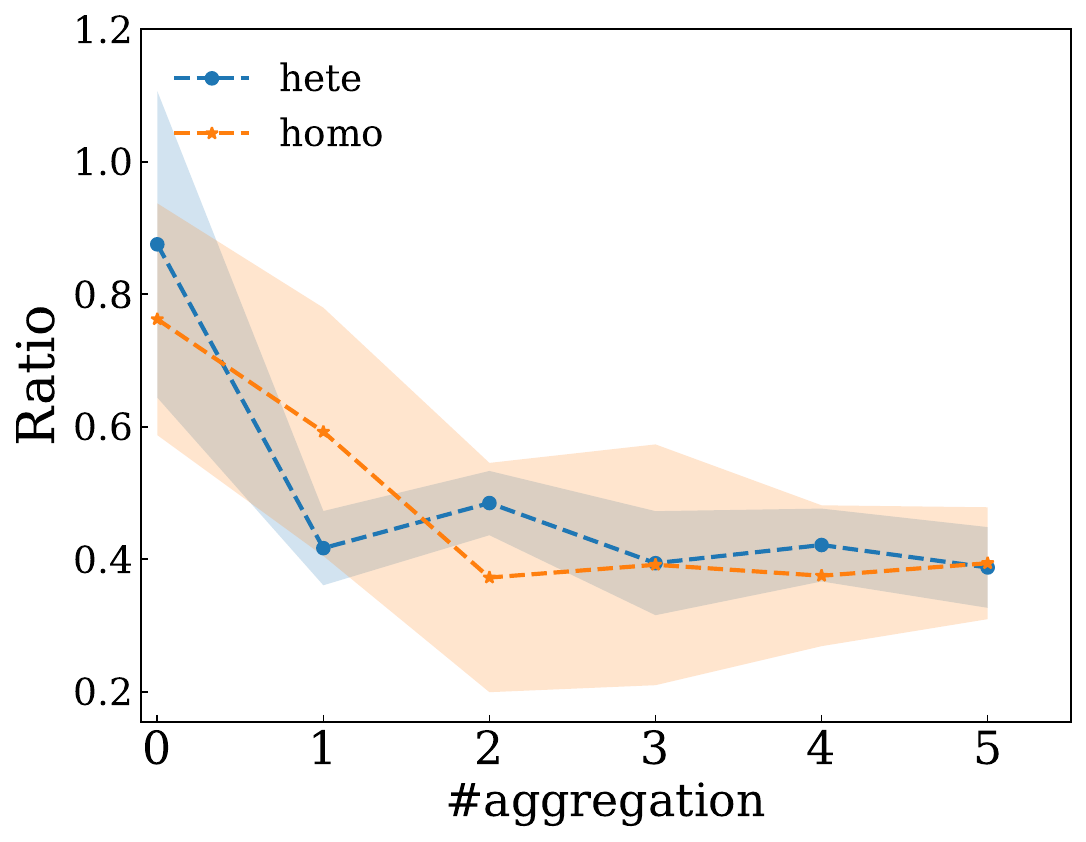}
    \end{minipage}
    } 
    \subfigure[Squirrel ($h$=0.22)]{
    \centering
    \begin{minipage}[b]{0.45\textwidth}
    \includegraphics[width=1.0\textwidth]{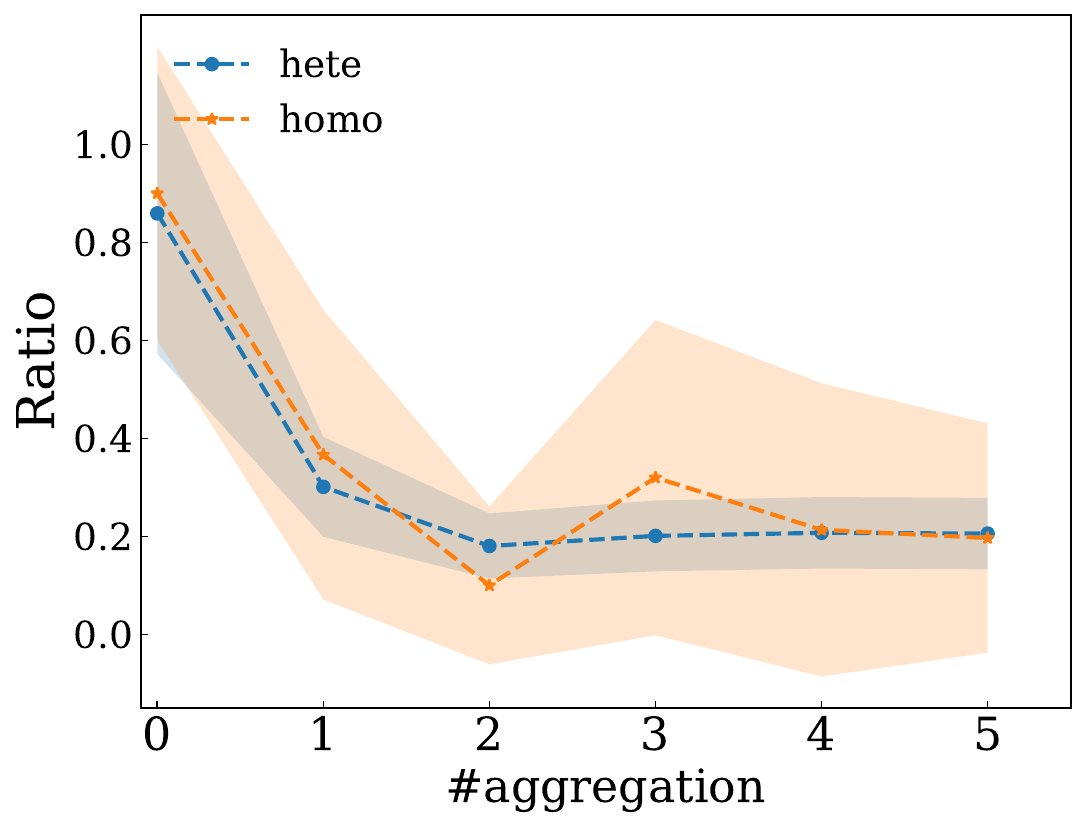}
    \end{minipage}
    }
    \caption{Illustration of the change on local agreement ratio along with the aggregation. The x-axis represents the number of aggregations and the y-axis represents the local agreement ratio. For homophilic graphs, the local agree ratio generally increases along with more aggregations, indicating better cluster effects. For heterophilic graphs, the local agreement ratio shows an opposite phenomenon, which decreases consistently. \label{fig:app_local_agree_main}} 
\end{figure*}

In section~\ref{subsec:discrim}, we conduct 
a discriminative analysis considering the distance between the feature means from different classes from a global perspective.  
In this section, we further investigate the discriminability from an instance-level perspective. 
Specifically, we focus on the feature distance between different nodes feature rather than the feature mean. 

To qualify the discriminative difficulty on
the feature of each individual test node, we propose the two metrics, local agreement ratio, and local accuracy difference. 
The local agreement ratio aims to measure the local clustering property in a KNN manner. 
It can be calculated with the following steps: 
(1) Given a particular node, find the top-$k$ feature-close train nodes. 
We set $k=9$ in our experiment, and L2 distance is utilized as the feature distance metric.  
(2) Given the top-$k$ closest train nodes, we examine the number of nodes on each label. 
If there exists a particular label $c$ with the number of nodes over $\frac{k}{2}$, it indicates that over half of neighborhood nodes reach an agreement on the class $c$.
(3) The local agreement ratio can then be calculated as the proportion of nodes reaching an agreement in the test set, denoted as:
\begin{equation}
    r = \frac{\sum_{v \in V_{\text{te}}} \ind{\exists c \in \mathcal{C}, |\mathcal{M}_v^c| > \frac{k}{2}}}{|V_{\text{te}}|}
\end{equation}
where $r$ is the local agreement ratio, $V_{\text{te}}$ and $\mathcal{C}$ are the test node set and class set, respectively. 
$\mathcal{M}_v$ is the node set including top-$k$ nearest training  of the node $v$. 
$|\mathcal{M}_v^c|$ is the number of nearest training nodes in class $c$. 
A larger agreement ratio on the test set indicates the data are more clustered with a close distance between train and test nodes. 
Nonetheless, a higher agreement ratio does not naturally lead to a better discriminative ability. 
Despite the top-$k$ feature-close nodes reaching an agreement on a particular class, the test node may have a different class from the agreement. 
It indicates that the center node misaligns with the top-$k$ feature-close nodes. 

The local accuracy is then proposed to identify whether the category of the center node aligns with the agreement from top-$k$ feature-close nodes. 
Concretely speaking, the local agreement accuracy is the proportion of agreement nodes in the test set.
It can be calculated as: 
\begin{equation}
    \text{Acc}_{\text{local}} = \frac{\sum_{v \in V_{\text{agree}}} \ind{c_v = c_{\mathcal{N}_v}}}{|V_{\text{agree}}|}    
\end{equation}
where $V_{\text{agree}}$ is the test node set reaching top-$k$ feature-close nodes agreement. 
$c_v$ is the category of node $v$. $c_{\mathcal{M}_v}$ is the agreement category from feature-close nodes. 
A higher local agreement accuracy indicates that most agreement nodes are aligned with the same category.

Similar to the relative discriminative ratio proposed in Section~\ref{subsec:discrim}, we illustrate the local agreement accuracy improvement on the majority nodes over minority nodes. 
Experiments are conducted on four homophilic datasets, Cora, CiteSeer, PubMed and Ogbn-arxiv, and two heterophilic datasets, Chameleon and Squirrel.
The experimental setting details can be found in Section~\ref{app:experiment_details}. 
Experiment results on local agreement ratio and local agreement accuracy difference are illustrated in Figure~\ref{fig:app_local_agree_main} and \ref{fig:app_local_prediction_main}, respectively.

The observations can be found as follows: 
(1) For homophilic graphs, the local agree ratio generally increases along with more aggregations, indicating better cluster effects. 
Meanwhile, the relative accuracy improvement on the majority nodes also increases, further indicating the disparity effects on different nodes group with more improvement on the majority nodes. 
(2) For heterophilic graphs, the local agreement ratio shows an opposite phenomenon, which decreases consistently.  
Meanwhile, the relative accuracy improvement on the majority nodes only increases on the first two hops and decrease and decline from the third hop.
The potential reason is that, despite a general global trend, the heterophilic patterns on each individual node may still be quite complicated, with a local pattern shift disparity. We leave the discussion on more complex local structure patterns as the future work.

\begin{figure*}[!h]
    \subfigure[Homophilic graphs]{
        \centering
        \begin{minipage}[b]{0.48\textwidth}
        \includegraphics[width=1.0\textwidth]{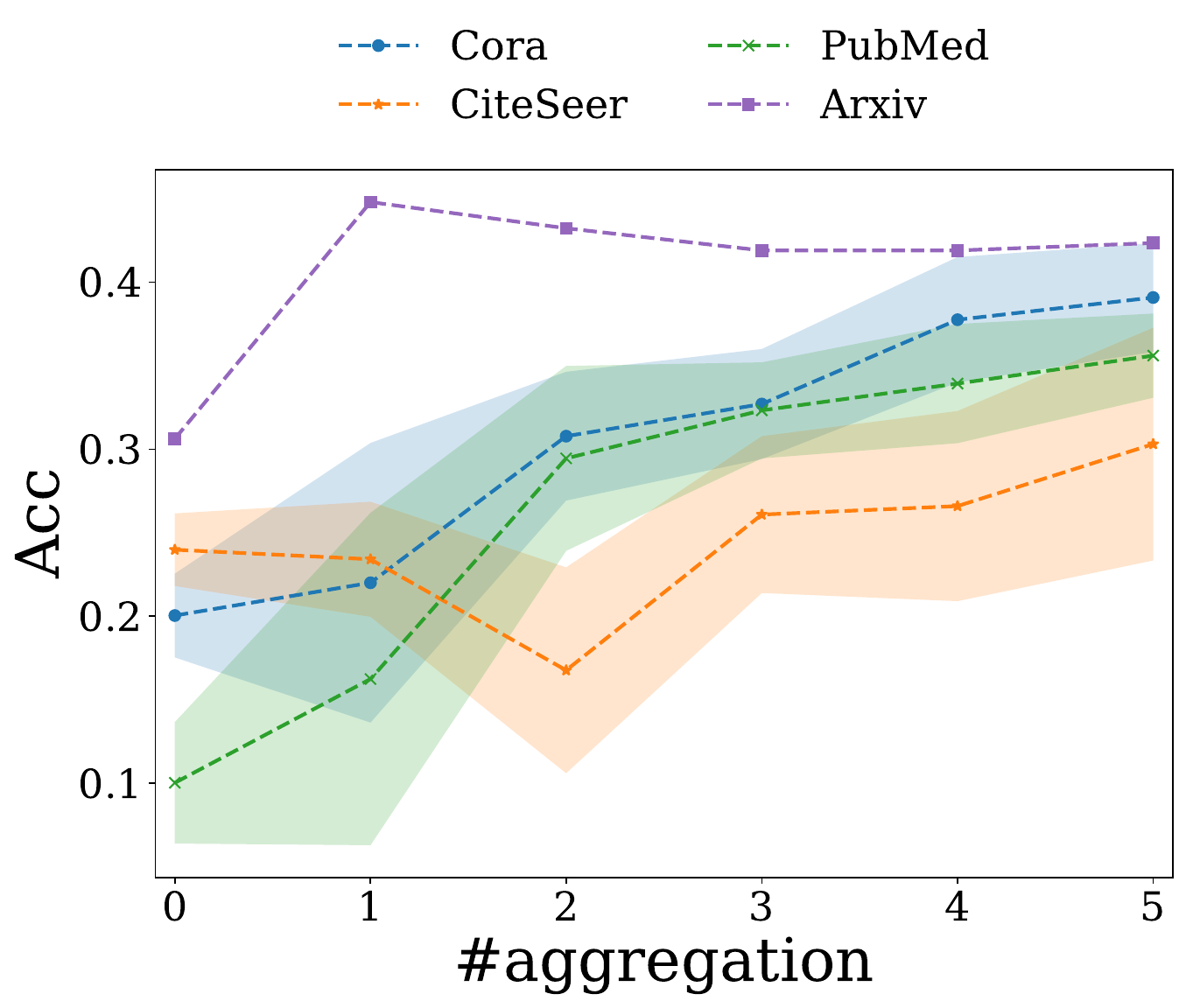}
        \end{minipage}
    }
    \subfigure[Heterophilic graphs]{
    \centering
    \begin{minipage}[b]{0.48\textwidth}
    \includegraphics[width=1.0\textwidth]{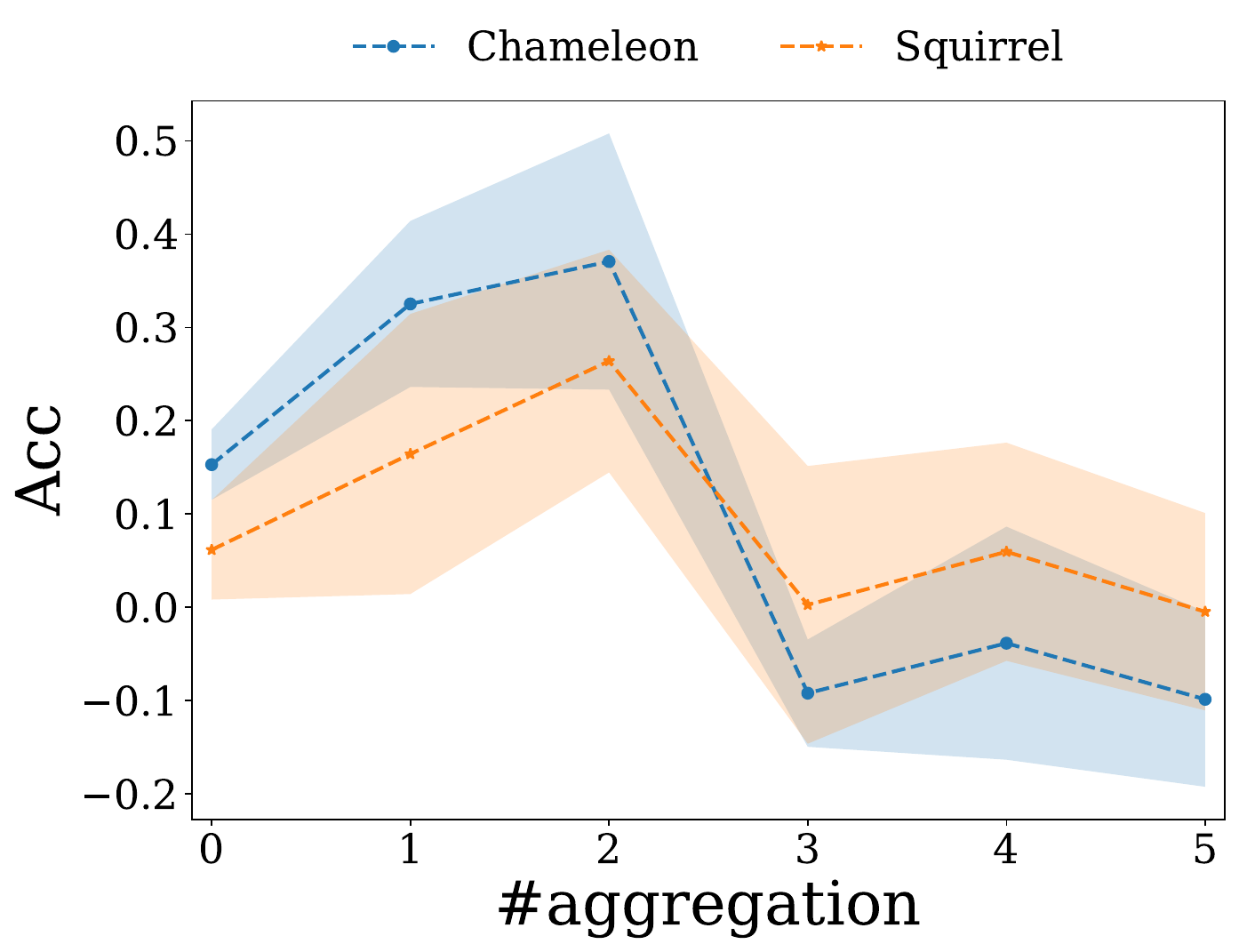}
    \end{minipage}
    }
    \caption{Illustration of the change on local agreement accuracy difference between the majority and minority patterns along with the aggregation. The x-axis represents the number of aggregations and the y-axis represents the majority local agreement accuracy minus the minority local agreement accuracy. For homophilic graphs, the relative accuracy improvement on the majority nodes also increases, further, while the relative accuracy improvement on the majority nodes only increases on the first two hops and decrease and decline from the third hop.\label{fig:app_local_prediction_main}}
\end{figure*}

\section{Effects of aggregation on nodes in different classes with structural disparity  
\label{app:linear_csbm}}
In Section~\ref{subsec:synthetic}, we examine the behavior difference between nodes from the same class but with different structural patterns. 
In this section, we further provide a more complicated analysis focusing on the between-class patterns, i.e., linear separability. 
Notably, linear separability is a good indicator of feature differences in different classes, where features with better linear separability can be easier to distinguish with a suitable linear classifier. 

\subsection{Linear separability analysis based on CSBM model\label{subapp:sepe_main}}  
To ease the analysis, we utilize the CSBM-S model as the data assumption showing as follows: 
\setcounter{definition}{0}
\begin{definition}[$\text{CSBM-S}(\mu_1, \mu_2,(p^{(1)}, q^{(1)}), (p^{(2)}, q^{(2)}), \Pr(\text{homo}))$]
    The generated nodes consist of two disjoint sets $\mathcal{C}_1$ and $\mathcal{C}_2$. 
    each node feature $x$ is sampled from $N(\mu_i, I)$ with $i\in\{1, 2\}$. Each set $\mathcal{C}_i$ consists of two subgroups: $\mathcal{C}_i^{(1)}$ for nodes in homophilic pattern with intra-class and inter-class edge probability $p^{(1)}>q^{(1)}$ and $\mathcal{C}_i^{(2)}$ for nodes in heterophilic pattern with $p^{(2)}>q^{(2)}$. 
    $\Pr(\text{homo})$ denotes the probability that the node is in the homophilic pattern. 
    $\mathcal{C}_i^{(j)}$ denotes node in class $i$ and subgroup $j$ with $(p^{(j)},q^{(j)})$. 
    We assume nodes follow the same degree distribution with $p^{(1)}+q^{(1)}=p^{(2)}+q^{(2)}$. 
\end{definition}

The original node features follow the Gaussian distribution: 
\begin{equation}
    \mathbf{x}_i \sim N\left(\boldsymbol{\mu}_{1}, \mathbf{I}\right) \text { for } i \in \mathcal{C}_{1}; \text { and } \mathbf{x}_i \sim N\left(\boldsymbol{\mu}_{2}, \mathbf{I}\right) \text { for } i \in \mathcal{C}_{2}
\end{equation}
Where $\mathcal{C}_i$ corresponds to the node set corresponding to class $i$. 

Based on the neighborhood distributions, 
the mean aggregated features $\mathbf{F}=\mathbf{D}^{-1}\mathbf{A}\mathbf{X}$ obtained follow Gaussian distributions on both homophilic and heterophilic subgroups.
\begin{equation}
    \mathbf{f}_{i}^{(j)} \sim N\left(\frac{p^{(j)} \boldsymbol{\mu}_{1}+q^{(j)} \boldsymbol{\mu}_{2}}{p^{(j)}+q^{(j)}}, \frac{\mathbf{I}}{\sqrt{d_i}}\right), \text {for } i \in \mathcal{C}_{1}^{(j)} ; \mathbf{f}^{(j)}_{i} \sim N\left(\frac{q^{(j)} \boldsymbol{\mu}_{1}+p^{(j)} \boldsymbol{\mu}_{2}}{p^{(j)}+q^{(j)}}, \frac{\mathbf{I}}{\sqrt{d_i}}\right), \text{for } i \in \mathcal{C}_{2}^{(j)}
\end{equation}
Where $\mathcal{C}_i^{(j)}$ is the node subgroups with structural pattern with $(p^{(j)}, q^{(j)})$ in label $i$.
To more accurately assess the effectiveness of the aggregation mechanism, we execute the largest-margin linear classifiers on nodes and evaluate their performance, to illustrate the linear separability. 
Notably, linear separability depends on the distance between the mean features of different classes as well as the standard deviations within each class. 
Typically, we focus on examining the linear separability on (1) nodes from different classes with the same structural patterns
(2) nodes from different classes with different structural patterns, i.e., homophilic and heterophilic patterns.

We first examine the linear separability for nodes from different classes within the same structural pattern. 
We can summarize the proposition as follows: 
\begin{lemma}[Linear separability on nodes with the same structural patterns \label{lemma:csbm_intra}]
    Considering mean aggregated features are from the same structural pattern $\rvf_i^{(j)}, \text{ for } i \in \{1, 2\} $. For any node $i$, the largest-margin linear classifier on $\rvf_i^{(j)}$ will have a lower probability to misclassify  than $\rvx_i$, when $d_i > \frac{(p^{(1)}+q^{(1)})^2}{(p^{(1)}-q^{(1)})^2}$ 
\end{lemma}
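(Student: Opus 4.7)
The plan is to reduce the comparison to a one-dimensional signal-to-noise ratio computation. For two isotropic Gaussians with equal priors and equal covariance, the largest-margin (equivalently Bayes optimal) linear classifier is the hyperplane orthogonal to $\boldsymbol{\mu}_1-\boldsymbol{\mu}_2$ passing through the midpoint of the two class means. The misclassification probability then equals $\Phi(-r/2)$, where $\Phi$ is the standard normal CDF and $r$ is the Mahalanobis distance between the class means (i.e., distance between means divided by the per-coordinate standard deviation along the discriminative direction). Since $\Phi$ is monotone, it suffices to show that the Mahalanobis distance strictly increases after aggregation exactly when $d_i$ exceeds the stated threshold.

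First I would compute the two quantities explicitly. For the original features $\mathbf{x}_i \sim N(\boldsymbol{\mu}_1,\mathbf{I})$ vs.\ $N(\boldsymbol{\mu}_2,\mathbf{I})$, the Mahalanobis distance between the class means is simply $r_{\mathrm{orig}} = \|\boldsymbol{\mu}_1-\boldsymbol{\mu}_2\|$. For the aggregated features restricted to subgroup $j$, the two class means are $\frac{p^{(j)}\boldsymbol{\mu}_1+q^{(j)}\boldsymbol{\mu}_2}{p^{(j)}+q^{(j)}}$ and $\frac{q^{(j)}\boldsymbol{\mu}_1+p^{(j)}\boldsymbol{\mu}_2}{p^{(j)}+q^{(j)}}$, so their Euclidean distance equals $\frac{|p^{(j)}-q^{(j)}|}{p^{(j)}+q^{(j)}}\|\boldsymbol{\mu}_1-\boldsymbol{\mu}_2\|$. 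Dividing by the per-coordinate standard deviation $1/\sqrt{d_i}$ yields
\begin{equation*}
r_{\mathrm{agg}} = \frac{|p^{(j)}-q^{(j)}|}{p^{(j)}+q^{(j)}}\,\sqrt{d_i}\,\|\boldsymbol{\mu}_1-\boldsymbol{\mu}_2\|.
\end{equation*}

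Next I would compare $r_{\mathrm{agg}}$ with $r_{\mathrm{orig}}$. The inequality $r_{\mathrm{agg}} > r_{\mathrm{orig}}$ is equivalent to $\frac{|p^{(j)}-q^{(j)}|}{p^{(j)}+q^{(j)}}\sqrt{d_i} > 1$, which rearranges to exactly $d_i > \frac{(p^{(j)}+q^{(j)})^2}{(p^{(j)}-q^{(j)})^2}$, matching the lemma's hypothesis for $j=1$. By the monotonicity of $\Phi(-\cdot/2)$, this translates directly into a strictly smaller misclassification probability for the aggregated classifier, completing the argument.

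The main obstacle is more bookkeeping than substance: one must be careful that the two class-conditional aggregated distributions indeed share the same covariance so that the largest-margin hyperplane coincides with the Bayes-optimal one (this relies on the assumed equal degree distribution across classes, $p^{(j)}+q^{(j)}$ being symmetric), and that the variance along the discriminative direction $\boldsymbol{\mu}_1-\boldsymbol{\mu}_2$ is genuinely $1/d_i$ under the isotropic covariance $\mathbf{I}/d_i$. Once these are nailed down, the comparison collapses to the scalar inequality above and the threshold on $d_i$ emerges immediately.
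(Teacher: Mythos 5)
Your proposal is correct and follows essentially the same route as the paper's proof: both fix the midpoint hyperplane orthogonal to $\boldsymbol{\mu}_1-\boldsymbol{\mu}_2$, rescale the aggregated features by $\sqrt{d_i}$ to unit variance (equivalently, your Mahalanobis/signal-to-noise computation), and compare the resulting distance $\frac{|p^{(j)}-q^{(j)}|}{p^{(j)}+q^{(j)}}\sqrt{d_i}\,\|\boldsymbol{\mu}_1-\boldsymbol{\mu}_2\|$ against $\|\boldsymbol{\mu}_1-\boldsymbol{\mu}_2\|$, which by monotonicity of the Gaussian tail gives exactly the threshold $d_i > \frac{(p^{(j)}+q^{(j)})^2}{(p^{(j)}-q^{(j)})^2}$. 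Your bookkeeping on the shared covariance and the $1/\sqrt{d_i}$ per-coordinate standard deviation matches the paper's intended reading of its variance notation, so no gap remains.
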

The detailed proof can be found in Appendix~\ref{subapp:sepe_proof1}. 
The proposition suggests that aggregated features $\rvf$ have better linear separability than the original feature $x$ when the node $i$ satisfies $d_i > (p^{(1)}+q^{(1)})^2/(p^{(1)}-q^{(1)})^2$. 
For instance, when $p^{(1)}=0.9$ and $q^{(1)}=0.1$, the aggregated features are easier show better linear separability with $d_i > 1.75$, which are commonly met in real-world scenarios.  
This indicates that aggregation is likely to contribute to improved feature separability within the same node subgroup sharing a similar structural property. 

We then examine the linear separability for nodes from different classes within different structural patterns, e.g., $h_1^{(1)}$ and $h_2^{(2)}$. 
Before diving deep into the rigorous analysis, we first illustrate a special example to show how the structure can lead to worse linear separability. 
When $p^{(1)}=q^{(2)}$ and $q^{(1)}=p^{(1)}$, we can identify that the $h_1^{(1)}$ and $h_2^{(2)}$ are exactly generated from the same distribution. 
Such distribution can hardly be linearly separable. 
We then conduct a more rigorous analysis showing a similar observation with the same pattern one. 
We can summarize the proposition as follows:
\begin{lemma}[Linear separability on nodes with different structural patterns\label{lemma:csbm_inter}]
    Consider features are from different structural patterns, where $\rvf_i^{(1)} \text{ for } i \in \mathcal{C}_1$ and $\rvf_i^{(2)} \text{ for } i \in \mathcal{C}_2$. For any node $i$, the largest-margin linear classifier will have a lower probability to misclassify $\rvf_i^{(1)}\text{ for } i \in \mathcal{C}_1$ and $\rvf_i^{(2)} \text{ for } i \in \mathcal{C}_2$ than $\rvx_i$ when $d_i > \frac{(p^{(1)}+q^{(1)})^2}{(p^{(1)}-q^{(2)})^2}$ 
\end{lemma}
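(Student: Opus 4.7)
The plan is to mirror the argument of Lemma~\ref{lemma:csbm_intra}, but with the two conditional Gaussians now drawn from \emph{different} structural patterns. I would first write out the two aggregated-feature distributions explicitly using the CSBM-S model: for $i\in\mathcal{C}_1$ with homophilic pattern, $\rvf_i^{(1)}\sim N\!\bigl(\tfrac{p^{(1)}\vmu_1+q^{(1)}\vmu_2}{p^{(1)}+q^{(1)}},\tfrac{\mathbf{I}}{\sqrt{d_i}}\bigr)$, and for $i\in\mathcal{C}_2$ with heterophilic pattern, $\rvf_i^{(2)}\sim N\!\bigl(\tfrac{q^{(2)}\vmu_1+p^{(2)}\vmu_2}{p^{(2)}+q^{(2)}},\tfrac{\mathbf{I}}{\sqrt{d_i}}\bigr)$. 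Because both conditional distributions share a (scalar) isotropic covariance, the largest-margin linear classifier is the perpendicular bisector of the two means, so its Bayes error is a monotone decreasing function of the signal-to-noise ratio $\mathrm{SNR}:=\tfrac{\|\vmu_{\mathrm{class}1}-\vmu_{\mathrm{class}2}\|}{2\sigma}$, e.g.\ $\Phi(-\mathrm{SNR})$.

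Next I would compute the between-class mean distance for the aggregated case. Using the shared-degree assumption $s:=p^{(1)}+q^{(1)}=p^{(2)}+q^{(2)}$, the identity $p^{(1)}-q^{(2)}=p^{(2)}-q^{(1)}$ falls out, and the mean difference collapses to
\begin{equation*}
\frac{p^{(1)}\vmu_1+q^{(1)}\vmu_2}{s}-\frac{q^{(2)}\vmu_1+p^{(2)}\vmu_2}{s}=\frac{p^{(1)}-q^{(2)}}{s}\,(\vmu_1-\vmu_2),
\end{equation*}
whose norm is $\tfrac{|p^{(1)}-q^{(2)}|}{s}\rho$ where $\rho=\|\vmu_1-\vmu_2\|$. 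Combined with standard deviation $1/\sqrt{d_i}$, the aggregated $\mathrm{SNR}_{\mathrm{agg}}$ is proportional to $\sqrt{d_i}\cdot\tfrac{|p^{(1)}-q^{(2)}|}{s}\rho$, whereas for the raw features $\rvx_i\sim N(\vmu_{y_i},\mathbf{I})$ the ratio is proportional to $\rho$.

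Finally I would compare the two misclassification probabilities. Since $\Phi(-\cdot)$ is strictly decreasing, $\mathrm{SNR}_{\mathrm{agg}}>\mathrm{SNR}_{\mathrm{raw}}$ is equivalent to lower Bayes error; rearranging $\sqrt{d_i}\cdot\tfrac{|p^{(1)}-q^{(2)}|}{s}\rho>\rho$ gives exactly
\begin{equation*}
d_i>\frac{(p^{(1)}+q^{(1)})^2}{(p^{(1)}-q^{(2)})^2},
\end{equation*}
which is the stated threshold.

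The only genuinely new obstacle compared to Lemma~\ref{lemma:csbm_intra} is bookkeeping around the asymmetry of the two patterns: one has to notice that the effective between-class gap is governed by $p^{(1)}-q^{(2)}$ (not by $p^{(1)}-q^{(1)}$) and to invoke the equal-degree assumption to write this cleanly as a scalar multiple of $\vmu_1-\vmu_2$. Two sanity checks I would include are (i) the pathological case $p^{(1)}=q^{(2)}$, where the two class means coincide after aggregation and the threshold blows up to $\infty$ (no $d_i$ suffices), and (ii) the symmetric case $p^{(1)}=p^{(2)}$, $q^{(1)}=q^{(2)}$, where the bound reduces exactly to the intra-pattern threshold of Lemma~\ref{lemma:csbm_intra}.
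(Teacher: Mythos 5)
Your proposal is correct and follows essentially the same route as the paper's proof: write the two aggregated class-conditional Gaussians, use the equal-degree assumption $p^{(1)}+q^{(1)}=p^{(2)}+q^{(2)}$ to reduce the between-class mean gap to $\tfrac{|p^{(1)}-q^{(2)}|}{p^{(1)}+q^{(1)}}\rho$, and compare the (scaled-by-$\sqrt{d_i}$) distance-to-boundary against the raw-feature case to obtain the threshold $d_i>\tfrac{(p^{(1)}+q^{(1)})^2}{(p^{(1)}-q^{(2)})^2}$. Your SNR/Bayes-error phrasing is just a repackaging of the paper's explicit distance comparison (and in fact cleanly sidesteps the paper's claim that the aggregated means share the midpoint $(\vmu_1+\vmu_2)/2$, which is not needed for the conclusion), so no further changes are required.
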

we can find that aggregated feature show better separability when $d_i > (p^{(1)}+q^{(1)})^2/|p^{(1)}-q^{(2)}|^2$. The detailed proof can be found in Appendix~\ref{subapp:sepe_proof2}. 
For instance, when $p^{(1)}=0.9, q^{(1)}=0.1$ and $p^{(2)}=0.2, q^{(2)}=0.8$, the aggregated features can only show better linear separability with $d_i > 100$, which are hardly met in real-world scenarios. 
Notice that the only difference with the one in the same pattern one is that the denominator is $|p^{(1)}-q^{(2)}|^2$, smaller than the $|p^{(1)}-q^{(1)}|^2$ since $q^{(1)}>q^{(2)}$. It indicates that only nodes with higher degrees. e.g., $d_i=100$, can achieve improved linear separability. 

Based on our analysis, we can draw the conclusion that when nodes are in the same pattern, aggregation can show improved linear separability on $(p^{(1)}+q^{(1)})^2/|p^{(1)}-q^{(1)}|^2 < d_i < (p^{(1)}+q^{(1)})^2/|p^{(1)}-q^{(2)}|^2$ while the ones in the different patterns cannot. 
It indicates that aggregation can help when nodes are in the same pattern, however, show limitation when nodes are from different patterns. 
Notice that, such linear separability can be evidence for different behavior on nodes with different patterns. Nonetheless, it cannot directly indicate the performance disparity. 
Feature separability is conducted based on the ideal classifier with the largest margin. 
Better feature separability does not necessarily result in better performance if the classifier is not well-trained with biased training data.

\subsection{Linear separability experiment on synthetic CSBM dataset \label{subapp:sepe_exp}} 

In this section, we aim to empirically examine the validity of the theoretical analysis results in Section~\ref{subapp:sepe_main}.
We generate graph with the CSBM model with the following detailed settings. 
The class mean distance $\rho= \|\mu_1 - \mu_2\|$ is set to $0.1$. 
The feature dimension $d$ and the number of nodes $n$ are set to 50, and 500, respectively. 
$p$ and $q$ correspond to probabilities of the intra-class and inter-class probability, respectively. 
The mean aggregation is defined as $\mathbf{F}= \mathbf{D}^{-1}\mathbf{A}\mathbf{X}$. 
It is the key to generating graphs with different structural properties. 
To generate the homophilic node subgroup with $p > q$, we utilize the following settings with $(p=0.01, q=0.005)$, $(p=0.01, q=0.003)$, and $(p=0.01, q=0.001)$. 
To generate the heterphilic node subgroup with $p < q$, we utilize the following settings with $(p=0.001, q=0.005)$, $(p=0.001, q=0.003)$, and $(p=0.001, q=0.002)$. 

Experiments are conducted to examine how a linear model, logistic regression, fits effectively on the aggregated features.
Higher performance on logistic regression indicates better linear separability of features. 
It is important to note that we focus on the fitting ability rather than the generalization performance. 
Consequently, we do not evaluate with a test set; instead, we provide all labels for training and assess the performance on all the nodes.

Experiments on the logistic regression are conducted on the homophilic nodes solely, heterophilic nodes solely, and a mixture of homophilic and heterophilic nodes. 
Typically, we aim to show the linear separability both within the same structural pattern and between homophilic and heterophilic patterns.
Experimental results are presented in Table~\ref{tab:csbm_result}. 
The following observations can be made:
(1) When considering nodes following the same structural pattern, either homophilic pattern or heterophilic pattern, better performance can be observed when the probability difference $|p-q|$ between intra-class and inter-class probability is larger.
(2) Comparing the performance considering both structural patterns with the one on a single structural pattern, we reveal a noticeable performance degradation on the one with mixture patterns. 
This suggests that the nodes with different structural patterns results in decreased linear separability.
(3) Comparing the performance among different mixing structural patterns, it is evident that larger homophily ratio differences, e.g., (p=0.01, q=0.005) compared to (p=0.001, q=0.005), yield poorer performance than those with smaller homophily ratio differences. 
The above observations further supports the validity of Lemma~\ref{lemma:csbm_intra} and~\ref{lemma:csbm_inter}, Theorem~\ref{theorm:main} .

\begin{table}[!ht]
    \centering
    \caption{The performance of logistic regression algorithm on homophilic nodes, heterophilic nodes, and a mixture of homophilic and heterophilic nodes. The results on the first row and first column correpond to the performance on homophilic nodes and heterophilic nodes, solely.
    \label{tab:csbm_result}}
    \begin{tabular}{c|cccc}
    \hline
        Hete\textbackslash{}Homo & - & p=0.01, q=0.005 & p=0.01, q=0.003 & p=0.01, q=0.001  \\ \hline
        - & - & 74.68$\pm$3.19 & 82.71$\pm$1.86 & 92.08$\pm$1.13  \\ 
        p=0.001, q=0.005 & 79.64$\pm$2.11 & 60.84$\pm$0.64 & 62.08$\pm$0.59 & 81.38$\pm$1.02  \\ 
        p=0.001, q=0.003 & 70.08$\pm$1.71 & 59.72$\pm$2.01 & 61.58$\pm$1.08 & 76.60$\pm$0.98  \\ 
        p=0.001, q=0.002 & 62.08$\pm$3.04 & 65.92$\pm$1.95 & 69.42$\pm$1.03 & 74.16$\pm$1.09  \\ \hline
    \end{tabular}
\end{table}

\subsection{Proof details of linear seperability within the same pattern \label{subapp:sepe_proof1}}
In this section, we provide detailed proof of the improved linear separability on aggregated features where nodes from different classes follow the same structural pattern.   
The following theoretical results indicate that it is more difficult to have improved linear separability when nodes follow different structural patterns. \textbf{Notably, the following proof is derived based on~\cite{Ma2021IsHA}. For completeness, we show the previous proof as follows, the difference is that we consider a more complicated CSBM-S model with both homophilic and heterophilic patterns rather than the simple CSBM model only describing either homophilic or heterophilic pattern.}

\setcounter{lemma}{1}
\begin{lemma}[Linear separability on nodes with the same structural patterns ]
    Considering mean aggregated features are from the same structural pattern $\rvf_i^{(j)}, \text{ for } i \in \{1, 2\} $. For any node $i$, the largest-margin linear classifier on $\rvf_i^{(j)}$ will have a lower probability to misclassify  than $\rvx_i$, when $d_i > \frac{(p^{(1)}+q^{(1)})^2}{(p^{(1)}-q^{(1)})^2}$ 
\end{lemma}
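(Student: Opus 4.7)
The plan is to reduce the comparison of linear classifiers to a comparison of signal-to-noise ratios for two isotropic Gaussians, since for this parametric family the largest-margin linear separator coincides with the Bayes-optimal boundary (the perpendicular bisector of the segment joining the two class means), and both its misclassification probability admits a clean closed form.

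First, I would note that for two classes drawn from $N(m_1, \sigma^2 I)$ and $N(m_2, \sigma^2 I)$ with equal priors, the largest-margin linear classifier is the hyperplane orthogonal to $m_1-m_2$ passing through $(m_1+m_2)/2$, and its misclassification probability equals $\Phi\!\left(-\|m_1-m_2\|/(2\sigma)\right)$, where $\Phi$ is the standard normal CDF. Since $\Phi$ is strictly increasing, minimizing the error rate is equivalent to maximizing the signal-to-noise ratio $\|m_1-m_2\|/\sigma$.

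Next, I would instantiate this formula in the two settings. For the original features $\rvx_i$, the class means are $\vmu_1,\vmu_2$ with $\sigma=1$, giving SNR $=\rho:=\|\vmu_1-\vmu_2\|$. For the aggregated features $\rvf_i^{(j)}$ within the same pattern $j=1$, the class means are
\begin{equation*}
\frac{p^{(1)}\vmu_1+q^{(1)}\vmu_2}{p^{(1)}+q^{(1)}},\qquad \frac{q^{(1)}\vmu_1+p^{(1)}\vmu_2}{p^{(1)}+q^{(1)}},
\end{equation*}
whose difference is $\tfrac{p^{(1)}-q^{(1)}}{p^{(1)}+q^{(1)}}(\vmu_1-\vmu_2)$, so the inter-class distance is $\tfrac{|p^{(1)}-q^{(1)}|}{p^{(1)}+q^{(1)}}\rho$. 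The per-coordinate standard deviation of $\rvf_i^{(j)}$ is $1/\sqrt{d_i}$ (reading the scale parameter in the paper's notation consistently with the variance of a mean of $d_i$ independent unit-variance Gaussians). Hence the SNR of $\rvf_i^{(1)}$ is $\tfrac{|p^{(1)}-q^{(1)}|}{p^{(1)}+q^{(1)}}\rho\sqrt{d_i}$.

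Finally, I would compare the two SNRs. The aggregated features have strictly lower misclassification probability than the originals iff
\begin{equation*}
\frac{|p^{(1)}-q^{(1)}|}{p^{(1)}+q^{(1)}}\rho\sqrt{d_i}>\rho \;\Longleftrightarrow\; d_i>\frac{(p^{(1)}+q^{(1)})^2}{(p^{(1)}-q^{(1)})^2},
\end{equation*}
which is exactly the threshold claimed in the lemma. The main subtlety (and the only non-routine point) is to justify that the ``largest-margin'' classifier achieves the same misclassification rate as the Bayes-optimal one in this symmetric isotropic-Gaussian setup, so that the comparison is genuinely between the two signal-to-noise ratios and not between a margin and an error rate; this is immediate once one notes that for equal-covariance, equal-prior isotropic Gaussians the Bayes boundary is itself the max-margin hyperplane separating the two means.
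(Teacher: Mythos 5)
Your proposal is correct and follows essentially the same route as the paper's proof: both fix the perpendicular-bisector hyperplane through $(\vmu_1+\vmu_2)/2$ (which serves both the raw and aggregated features, since the aggregated class means share the same midpoint and direction), and both reduce the comparison to the ratio of inter-class mean separation to noise level, yielding the threshold $d_i>\left(\frac{p^{(1)}+q^{(1)}}{p^{(1)}-q^{(1)}}\right)^2$. The only difference is presentational: you compare signal-to-noise ratios via the Gaussian CDF, whereas the paper rescales $\rvf_i$ by $\sqrt{d_i}$ to unit variance and compares distances to the common decision boundary, which is an equivalent computation.
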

Notably, we only show the case $p^{(1)} > q^{(1)}$ in Appendix~\ref{subapp:sepe_main} for simplicity. It can be easily extend to $p^{(2)} < q^{(2)}$. Proof details can be found as follows. 

Consider the vanilla mean aggregation as $\rmF = \rmD^{-1} \rmA \rmX$, the aggregated features with the same structure patterns follow Gaussian distributions:
\begin{equation}
    \rvf_{i}^{(j)} \sim N\left(\frac{p^{(j)} \boldsymbol{\mu}_{1}+q^{(j)} \boldsymbol{\mu}_{2}}{p^{(j)}+q^{(j)}}, \frac{\mathbf{I}}{\sqrt{d_i}}\right), \text { for } i \in \mathcal{C}_{1}^{(j)} ; \rvf_{i}^{(j)} \sim N\left(\frac{q^{(j)} \boldsymbol{\mu}_{1}+p^{(j)} \boldsymbol{\mu}_{2}}{p^{(j)}+q^{(j)}}, \frac{\mathbf{I}}{\sqrt{d_i}}\right), \text { for } i \in \mathcal{C}_{2}^{(j)}
\end{equation}
Then we denote the expectation of the original node features $\rvx$ in two classes as  $\mathbb{E}_{c_{1}}\left[\mathbf{x}_{i}\right]$ and $\mathbb{E}_{c_{2}}\left[\mathbf{x}_{i}\right]$. Similarly, we denote the expectation of the aggregated features as  $\mathbb{E}_{c_{1}}\left[\mathbf{f}_{i}\right]$ and  $\mathbb{E}_{c_{2}}\left[\mathbf{f}_{i}\right]$.

\begin{proposition} \label{prop:csbm_intra} 
    $\left(\mathbb{E}_{c_{1}}\left[\mathbf{x}_{i}\right], \mathbb{E}_{c_{2}}\left[\mathbf{x}_{i}\right]\right)$ and  $\left(\mathbb{E}_{c_{1}}\left[\mathbf{f}_{i}\right], \mathbb{E}_{c_{2}}\left[\mathbf{f}_{i}\right]\right)$  share the same middle point.  $\mathbb{E}_{c_{1}}\left[\mathbf{x}_{i}\right]-   \mathbb{E}_{c_{2}}\left[\mathbf{x}_{i}\right]$ and  $\mathbb{E}_{c_{1}}\left[\mathbf{f}_{i}\right]-\mathbb{E}_{c_{2}}\left[\mathbf{f}_{i}\right]$ share the same direction. 
\end{proposition}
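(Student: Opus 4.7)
The plan is to verify the two claims by direct computation from the distributional definitions given by the CSBM-S model. Under that model, $\mathbb{E}_{c_1}[\mathbf{x}_i]=\boldsymbol{\mu}_1$ and $\mathbb{E}_{c_2}[\mathbf{x}_i]=\boldsymbol{\mu}_2$, while the mean-aggregated feature distribution gives $\mathbb{E}_{c_1}[\mathbf{f}_i]=\tfrac{p\boldsymbol{\mu}_1+q\boldsymbol{\mu}_2}{p+q}$ and $\mathbb{E}_{c_2}[\mathbf{f}_i]=\tfrac{q\boldsymbol{\mu}_1+p\boldsymbol{\mu}_2}{p+q}$ (suppressing the subgroup superscript since we are inside a single structural pattern). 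Everything else should drop out of simple linear algebra, so I expect the whole argument to be essentially a one-line calculation for each of the two claims.

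For the midpoint claim, I would add the two aggregated expectations and divide by two: the numerator becomes $(p+q)\boldsymbol{\mu}_1+(p+q)\boldsymbol{\mu}_2$, which cancels the $p+q$ in the denominator and yields $\tfrac{1}{2}(\boldsymbol{\mu}_1+\boldsymbol{\mu}_2)$, precisely the midpoint of the original class means. For the direction claim, I would subtract the two aggregated expectations to get
\begin{equation*}
\mathbb{E}_{c_1}[\mathbf{f}_i]-\mathbb{E}_{c_2}[\mathbf{f}_i]=\frac{p-q}{p+q}(\boldsymbol{\mu}_1-\boldsymbol{\mu}_2),
\end{equation*}
so the aggregated between-class vector is a scalar multiple of the original between-class vector $\boldsymbol{\mu}_1-\boldsymbol{\mu}_2$; since we are in the homophilic regime $p>q$ used by Lemma~\ref{lemma:csbm_intra}, the scalar $\tfrac{p-q}{p+q}$ is strictly positive, giving the same direction as claimed.

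There is no real obstacle here; the proposition is essentially a bookkeeping observation designed to justify, in the subsequent argument of Appendix~\ref{subapp:sepe_proof1}, that comparing largest-margin linear classifiers on $\mathbf{x}_i$ versus $\mathbf{f}_i$ reduces to comparing their respective class-mean distances. The one subtlety worth flagging is that the ``same direction'' statement implicitly assumes $p>q$; in the heterophilic case $p<q$ the scalar $\tfrac{p-q}{p+q}$ is negative, so the two difference vectors are antiparallel (equivalently, the class labels swap sides of the midpoint hyperplane after aggregation), which is consistent with the intuition given earlier in the paper that heterophilic aggregation exchanges class means. I would note this briefly but otherwise rely on the homophilic-pattern hypothesis of the surrounding lemma.
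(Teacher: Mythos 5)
Your proof is correct and follows essentially the same route as the paper, which simply asserts the proposition "can be calculated through direct calculations" — you have carried out exactly that computation, obtaining the shared midpoint $\tfrac{1}{2}(\boldsymbol{\mu}_1+\boldsymbol{\mu}_2)$ and the scalar factor $\tfrac{p-q}{p+q}$ on $\boldsymbol{\mu}_1-\boldsymbol{\mu}_2$. Your remark that the "same direction" claim relies on the homophilic case $p>q$ is also consistent with the paper, which states it shows only the $p^{(1)}>q^{(1)}$ case for simplicity and treats the flipped (antiparallel) direction explicitly in the between-pattern proposition of Appendix~\ref{subapp:sepe_proof2}.
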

The proposition can be calculated through direct calculations. 
If we consider the feature distributions of two classes, we can observe that both $\mathbf{x}$ and $\mathbf{f}$ exhibit a systematic relationship. 
As a consequence, we can determine the optimal linear classifier for both $\mathbf{x}_i$ and $\mathbf{f}_i$. 
We then define decision boundary as the hyperplane that is orthogonal to the direction $\mathbf{w}=\frac{\mu_1 - \mu_2}{||\mu_1 - \mu_2||}$ and passes through the middle point $\mathbf{m}=\left(\boldsymbol{\mu}_{1}+\boldsymbol{\mu}_{2}\right)/2$ as:
\begin{equation}
    \mathcal{P}=\left\{\mathbf{x} \mid \mathbf{w}^{\top} \mathbf{x}-\mathbf{w}^{\top}\left(\boldsymbol{\mu}_{1}+\boldsymbol{\mu}_{2}\right) / 2\right\}        
\end{equation}

We then show the proof details on when $\rvf$ has a lower mis-classified probability than $\rvx$ with the same decision boundary, indicating better linear separability. 

\textbf{Proof.}  
We provide the proof only for nodes belonging to class $c_0$, as the case for nodes from class $c_1$ is symmetric and the proof follows the same logic. For a node $i$ in $\mathcal{C}_1$, We have the following:

\begin{equation}
    \begin{array}{c}
    \mathbb{P}\left(\mathbf{x}_{i} \text { is mis-classified }\right)=\mathbb{P}\left(\mathbf{w}^{\top} \mathbf{x}_{i}+\mathbf{b} \leq 0\right) \text { for } i \in \mathcal{C}_{1} \\
    \mathbb{P}\left(\mathbf{f}_{i} \text { is mis-classified }\right)=\mathbb{P}\left(\mathbf{w}^{\top} \mathbf{f}_{i}+\mathbf{b} \leq 0\right) \text { for } i \in \mathcal{C}_{1},
    \end{array}
\end{equation}

we can then scale the decision boundary without changing the original meaning. Then we can have
$\mathbb{P}\left(\mathbf{w}^{\top} \mathbf{f}_{i}+\mathbf{b} \leq 0\right)=\mathbb{P}\left(\sqrt{d_i} \mathbf{w}^{\top}  \mathbf{f}_{i}+\sqrt{d_i} \mathbf{b} \leq 0\right)$
We denote the scaled version of $\mathbf{f}_{i}$  as  $\mathbf{f}_{i}^{\prime}=\sqrt{d_i} \mathbf{f}_{i}$. 
Then, $\mathbf{f}_{i}^{\prime}$ follows:

\begin{equation}
    \mathbf{f}_{i}^{\prime} \sim N\left(\frac{\sqrt{d_i}\left(p^{(j)} \boldsymbol{\mu}_{1}+q^{(j)} \boldsymbol{\mu}_{2}\right)}{p^{(j)}+q^{(j)}}, \mathbf{I}\right), \text { for } i \in \mathcal{C}_{1} 
    \label{eq:csbm_scale}
\end{equation}

Given the scale in Eq.~\eqref{eq:csbm_scale}, the decision boundary for $\mathbf{f}_{i}^{\prime}$ is consequently shifted to $\mathbf{w}^{\top} \mathbf{f}^{\prime}+ \sqrt{d_i} \mathbf{b}=0$. Now, considering that $\mathbf{x}_{i}$ and $\mathbf{f}_{i}^{\prime}$ have the same variance, we can compare the mis-classification probabilities by merely comparing the distance from their expected values to their corresponding decision boundaries. Specifically, the distances can be described as follows:

\begin{equation}
    \begin{aligned}
    \operatorname{dis}_{\mathbf{x}_{i}} & =\frac{\left\|\boldsymbol{\mu}_{1}-\boldsymbol{\mu}_{2}\right\|_{2}}{2} \\
    \operatorname{dis}_{\mathbf{f}_{i}^{\prime}} & =\frac{\sqrt{d_i}|p^{(j)}-q^{(j)}|}{p^{(j)}+q^{(j)}} \cdot \frac{\left\|\boldsymbol{\mu}_{1}-\boldsymbol{\mu}_{2}\right\|_{2}}{2} .
    \end{aligned}    
\end{equation}

The larger the distance from the expectation to the decision boundary indicates a smaller the mis-classification probability. 
Therefore, when $\operatorname{dis}_{\mathbf{f}_{i}^{\prime}} > \operatorname{dis}_{\mathbf{x}_{i}}$, $\mathbf{f}_{i}^{\prime}$ has a lower probability of being misclassified than $\mathbf{x}_{i}$. 
By comparing the two distances, we conclude that when $d_i > \frac{\left(p^{(j)}+q^{(j)}\right)^{2}}{\left(p^{(j)}-q^{(j)}\right)^{2}}$, the mean aggregated feature $\mathbf{f}_{i}^{\prime}$ exhibits a lower probability of misclassification compared to the original node feature $\mathbf{x}_{i}$.

\begin{equation}
    \mathbb{P}\left(\mathbf{f}_{i} \text { is mis-classified }\right)<\mathbb{P}\left(\mathbf{x}_{i} \text { is mis-classified }\right) \text { if } d_i>\left(\frac{p^{(j)}+q^{(j)}}{p^{(j)}-q^{(j)}}\right)^{2}
\end{equation}
which completes the proof.

\subsection{Proof details of linear separability between different structural patterns\label{subapp:sepe_proof2}}

In this section, we provide detailed proof of the improved linear separability on aggregated features where nodes from different classes follow different structural patterns.  
The following theoretical results indicate that it is more difficult to have improved linear separability when nodes follow different structural patterns. 
\textbf{Notably, the following proof is derived based on~\cite{Ma2021IsHA}. For completeness, we show the previous proof as follows, the difference is that we consider a more complicated CSBM-S model with both homophilic and heterophilic patterns rather than the simple CSBM model only describing either homophilic or heterophilic pattern}

\begin{lemma}[Linear separability on nodes with different structural patterns]
    Consider features are from different structural patterns, where $\rvf_i^{(1)} \text{ for } i \in \mathcal{C}_1$ and $\rvf_i^{(2)} \text{ for } i \in \mathcal{C}_2$. For any node $i$, the largest-margin linear classifier will have a lower probability to misclassify $\rvf_i^{(1)}\text{ for } i \in \mathcal{C}_1$ and $\rvf_i^{(2)} \text{ for } i \in \mathcal{C}_2$ than $\rvx_i$ when $d_i > \frac{(p^{(1)}+q^{(1)})^2}{(p^{(1)}-q^{(2)})^2}$ 
\end{lemma}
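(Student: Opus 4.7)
The plan is to mirror the argument for Lemma 2 (same-pattern case) but account for the fact that the two class conditional means of the aggregated features no longer sit symmetrically around $(\vmu_1+\vmu_2)/2$. First I would write down the two Gaussians explicitly: for $i\in\gC_1^{(1)}$ the scaled aggregated feature $\rvf_i^{(1)}$ has mean $\vm_1 := \frac{p^{(1)}\vmu_1+q^{(1)}\vmu_2}{p^{(1)}+q^{(1)}}$, and for $i\in\gC_2^{(2)}$ (heterophilic pattern, so interpreting $p^{(2)}, q^{(2)}$ as the intra- and inter-class edge probabilities) $\rvf_i^{(2)}$ has mean $\vm_2 := \frac{q^{(2)}\vmu_1+p^{(2)}\vmu_2}{p^{(2)}+q^{(2)}}$, both with covariance $\mI/\sqrt{d_i}$. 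Using the degree assumption $s := p^{(1)}+q^{(1)} = p^{(2)}+q^{(2)}$, I would next verify the key algebraic identity
\begin{equation*}
\vm_1 - \vm_2 \;=\; \frac{(p^{(1)}-q^{(2)})}{s}\,(\vmu_1-\vmu_2),
\end{equation*}
so that although $\vm_1$ and $\vm_2$ are no longer symmetric about $(\vmu_1+\vmu_2)/2$, their difference is still colinear with $\vmu_1-\vmu_2$. Consequently the largest-margin linear classifier on the aggregated features has the same normal direction $\vw = (\vmu_1-\vmu_2)/\|\vmu_1-\vmu_2\|$ as the one on the original features, but with a shifted bias corresponding to the midpoint $(\vm_1+\vm_2)/2$.

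Next I would rescale by $\sqrt{d_i}$ to align variances, obtaining $\rvf_i^{(j)\prime} = \sqrt{d_i}\,\rvf_i^{(j)}$ with covariance $\mI$, and translate the decision hyperplane accordingly. Since both rescaled aggregated features and the original features $\rvx_i$ share the same isotropic Gaussian noise and the optimal boundaries are orthogonal to $\vw$, the misclassification probability is a monotone function of the signed distance from the class mean to its decision boundary. Thus it suffices to compare
\begin{equation*}
\mathrm{dis}_{\rvx_i} = \tfrac{1}{2}\|\vmu_1-\vmu_2\|, \qquad
\mathrm{dis}_{\rvf_i^\prime} = \tfrac{\sqrt{d_i}\,|p^{(1)}-q^{(2)}|}{s}\cdot\tfrac{1}{2}\|\vmu_1-\vmu_2\|.
\end{equation*}
Setting $\mathrm{dis}_{\rvf_i^\prime} > \mathrm{dis}_{\rvx_i}$ yields the threshold $d_i > \frac{(p^{(1)}+q^{(1)})^2}{(p^{(1)}-q^{(2)})^2}$, completing the proof.

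The main obstacle, and the reason this is not a trivial copy of Lemma 2, is the asymmetry of the two class means about the original midpoint $(\vmu_1+\vmu_2)/2$: one cannot simply reuse the decision boundary from the unaggregated case. I would handle this by appealing to the colinearity identity above to argue that the \emph{direction} of the optimal margin classifier is unchanged, so that the comparison between $\rvx_i$ and $\rvf_i^\prime$ still reduces to a one-dimensional distance comparison along $\vw$. A minor but worth-noting sanity check is that the denominator $|p^{(1)}-q^{(2)}|$ is strictly smaller than $|p^{(1)}-q^{(1)}|$ in the mixed-pattern regime (because $q^{(2)} > q^{(1)}$ when pattern $2$ is heterophilic), which recovers the paper's intuition that a much larger degree $d_i$ is required for aggregation to help in this setting than in the same-pattern setting of Lemma 2.
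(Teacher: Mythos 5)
Your proposal is correct and follows essentially the same route as the paper's proof: after rescaling by $\sqrt{d_i}$ to equalize variances, both arguments reduce to a one-dimensional distance comparison along the shared normal direction $(\vmu_1-\vmu_2)/\|\vmu_1-\vmu_2\|$, giving $\operatorname{dis}_{\rvf'} = \frac{\sqrt{d_i}\,|p^{(1)}-q^{(2)}|}{p^{(1)}+q^{(1)}}\cdot\frac{\|\vmu_1-\vmu_2\|}{2}$ versus $\operatorname{dis}_{\rvx}=\frac{\|\vmu_1-\vmu_2\|}{2}$ and hence the stated threshold $d_i > \frac{(p^{(1)}+q^{(1)})^2}{(p^{(1)}-q^{(2)})^2}$. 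If anything, your explicit handling of the shifted midpoint $(\vm_1+\vm_2)/2$ via the colinearity identity is slightly more careful than the paper's auxiliary proposition, which loosely asserts the midpoint remains $(\vmu_1+\vmu_2)/2$, but the resulting distances and conclusion coincide.
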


In the following discussion, we only focus on the scenario where nodes in class $c_1$ are in a homophilic pattern with $p^{(1)} > q^{(1)}$, and nodes in class $c_2$ are in a heterophilic pattern with $p^{(2)} < q^{(2)}$. 
The other scenario is symmetric and the proof follows the same logic. 

Consider the vanilla mean aggregation as $\rmF = \rmD^{-1} \rmA \rmX$, the aggregated features with different structure patterns follow Gaussian distributions:
\begin{equation}
    \rvf_{i}^{(1)} \sim N\left(\frac{p^{(1)} \boldsymbol{\mu}_{1}+q^{(1)} \boldsymbol{\mu}_{2}}{p^{(1)}+q^{(1)}}, \frac{\mathbf{I}}{\sqrt{d_i}}\right), \text { for } i \in \mathcal{C}_{1} ; \rvf_{i}^{(2)} \sim N\left(\frac{q^{(2)} \boldsymbol{\mu}_{1}+p^{(2)} \boldsymbol{\mu}_{2}}{p^{(2)}+q^{(2)}}, \frac{\mathbf{I}}{\sqrt{d_i}}\right), \text { for } i \in \mathcal{C}_{2}^{(2)}
\end{equation}
Similar to the notation in Proposition~\ref{prop:csbm_intra}, we denote the expectation of the original node features $\rvx$ in two classes as  $\mathbb{E}_{c_{1}}\left[\mathbf{x}_{i}\right]$ and $\mathbb{E}_{c_{2}}\left[\mathbf{x}_{i}\right]$. Similarly, we denote the expectation of the aggregated features as  $\mathbb{E}_{c_{1}}\left[\mathbf{f}_{i}\right]$ and  $\mathbb{E}_{c_{2}}\left[\mathbf{f}_{i}\right]$. 

\begin{proposition} 
    $\left(\mathbb{E}_{c_{1}}\left[\mathbf{x}_{i}\right], \mathbb{E}_{c_{2}}\left[\mathbf{x}_{i}\right]\right)$ and  $\left(\mathbb{E}_{c_{1}}\left[\mathbf{f}_{i}\right], \mathbb{E}_{c_{2}}\left[\mathbf{f}_{i}\right]\right)$ share the same middle point. 
    When satisfying $p^{(1)} > q^{(2)}$, 
    $\mathbb{E}_{c_{1}}\left[\mathbf{x}_{i}\right]-   \mathbb{E}_{c_{2}}\left[\mathbf{x}_{i}\right]$ and  $\mathbb{E}_{c_{1}}\left[\mathbf{f}_{i}\right]-\mathbb{E}_{c_{2}}\left[\mathbf{f}_{i}\right]$ share the opposite direction. 
    Specifically, the middle point $\mathbf{m}$ and the shared direction $\mathbf{w}$ are as follows: $\mathbf{m}=\left(\boldsymbol{\mu}_{1}+\boldsymbol{\mu}_{2}\right) / 2$, and  $\mathbf{w}=\left(\boldsymbol{\mu}_{1}-\boldsymbol{\mu}_{2}\right) /\left\|\boldsymbol{\mu}_{1}-\boldsymbol{\mu}_{2}\right\|_{2}$. 
    When satisfying $q^{(2)} > p^{(1)}$, $\mathbb{E}_{c_{1}}\left[\mathbf{x}_{i}\right]-   \mathbb{E}_{c_{2}}\left[\mathbf{x}_{i}\right]$ and  $\mathbb{E}_{c_{1}}\left[\mathbf{h}_{i}\right]-\mathbb{E}_{c_{2}}\left[\mathbf{h}_{i}\right]$ are in same directions. 
    It indicates that the largest-margin linear model still shares the same discriminative boundary which flips the prediction on class $c_1$ and $c_2$.
\end{proposition}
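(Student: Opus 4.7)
The plan is a direct calculation followed by a sign analysis. For the original features, since $\mathbf{x}_i \sim \mathcal{N}(\boldsymbol{\mu}_1, \mathbf{I})$ on $\mathcal{C}_1$ and $\mathbf{x}_i \sim \mathcal{N}(\boldsymbol{\mu}_2, \mathbf{I})$ on $\mathcal{C}_2$, reading off the means immediately gives midpoint $(\boldsymbol{\mu}_1 + \boldsymbol{\mu}_2)/2$ and difference vector $\boldsymbol{\mu}_1 - \boldsymbol{\mu}_2$. For the aggregated features, writing $s := p^{(1)} + q^{(1)} = p^{(2)} + q^{(2)}$ (the common-degree assumption built into the CSBM-S definition) and substituting the stated Gaussian means yields
\begin{equation*}
\mathbb{E}_{c_1}[\mathbf{h}_i] = \frac{p^{(1)}\boldsymbol{\mu}_1 + q^{(1)}\boldsymbol{\mu}_2}{s}, \qquad \mathbb{E}_{c_2}[\mathbf{h}_i] = \frac{q^{(2)}\boldsymbol{\mu}_1 + p^{(2)}\boldsymbol{\mu}_2}{s}.
\end{equation*}

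The direction is the main quantity. Subtracting the two aggregated expectations gives
\begin{equation*}
\mathbb{E}_{c_1}[\mathbf{h}_i] - \mathbb{E}_{c_2}[\mathbf{h}_i] = \frac{1}{s}\bigl[(p^{(1)} - q^{(2)})\boldsymbol{\mu}_1 + (q^{(1)} - p^{(2)})\boldsymbol{\mu}_2\bigr].
\end{equation*}
The common-degree constraint yields $q^{(1)} - p^{(2)} = (s - p^{(1)}) - (s - q^{(2)}) = -(p^{(1)} - q^{(2)})$, so the right side collapses to $\tfrac{p^{(1)} - q^{(2)}}{s}(\boldsymbol{\mu}_1 - \boldsymbol{\mu}_2)$. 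Reading off the sign of this scalar then settles both directional claims at once: parallel to $\boldsymbol{\mu}_1 - \boldsymbol{\mu}_2$ when $p^{(1)} > q^{(2)}$, and anti-parallel when $q^{(2)} > p^{(1)}$.

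For the midpoint, averaging the two aggregated means and collecting coefficients produces $\tfrac{p^{(1)}+q^{(2)}}{2s}\boldsymbol{\mu}_1 + \tfrac{q^{(1)}+p^{(2)}}{2s}\boldsymbol{\mu}_2$, whose coefficients always sum to one so the midpoint lies on the segment $[\boldsymbol{\mu}_1,\boldsymbol{\mu}_2]$; it coincides with $(\boldsymbol{\mu}_1 + \boldsymbol{\mu}_2)/2$ precisely under the mirror-symmetry $p^{(1)} = q^{(2)}$ (equivalently $q^{(1)} = p^{(2)}$) that realizes the heterophilic pattern as the exact inversion of the homophilic one. I would flag this explicitly as the implicit assumption behind the midpoint statement, since the common-degree constraint alone is strictly weaker than what is required. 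The classifier assertion is then immediate: the largest-margin separator is determined by the hyperplane through the common midpoint with normal along $\pm(\boldsymbol{\mu}_1 - \boldsymbol{\mu}_2)$, so in the anti-parallel regime the hyperplane itself is geometrically unchanged while the orientation of the decision rule reverses, swapping predicted labels on $\mathcal{C}_1$ and $\mathcal{C}_2$. The main obstacle is thus not algebraic but interpretive---pinning down the mirror-symmetric convention the CSBM-S setup implicitly relies on---and once that is made explicit the remaining steps are elementary substitutions.
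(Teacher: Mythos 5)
Your computation of the direction is correct and is exactly the ``direct calculation'' the paper leaves implicit: writing $s=p^{(1)}+q^{(1)}=p^{(2)}+q^{(2)}$, the difference of the aggregated class means collapses to $\frac{p^{(1)}-q^{(2)}}{s}(\boldsymbol{\mu}_1-\boldsymbol{\mu}_2)$ because $q^{(1)}-p^{(2)}=-(p^{(1)}-q^{(2)})$, and reading off the sign of $p^{(1)}-q^{(2)}$ gives the parallel/anti-parallel dichotomy and the flipped-prediction remark. That half of your argument matches what the paper intends.

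The gap is in your midpoint analysis. You are right that the common-degree constraint alone does not give the shared midpoint, but your proposed repair is incorrect. Averaging the two aggregated means gives $\frac{\boldsymbol{\mu}_1+\boldsymbol{\mu}_2}{2}+\frac{p^{(1)}-p^{(2)}}{2s}(\boldsymbol{\mu}_1-\boldsymbol{\mu}_2)$, so under the degree constraint the midpoint is preserved iff $p^{(1)}=p^{(2)}$ (equivalently $q^{(1)}=q^{(2)}$), i.e.\ the two subgroups have \emph{identical} structural patterns --- which is the within-pattern case of the previous lemma and is impossible here, since $p^{(1)}>q^{(1)}$ and $p^{(2)}<q^{(2)}$ force $p^{(1)}>s/2>p^{(2)}$. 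Your mirror symmetry $p^{(1)}=q^{(2)}$ (hence $q^{(1)}=p^{(2)}$) does not rescue the claim: it makes $p^{(1)}-q^{(2)}=0$, so the two aggregated means coincide at $\frac{p^{(1)}\boldsymbol{\mu}_1+q^{(1)}\boldsymbol{\mu}_2}{s}$ --- precisely the degenerate, inseparable configuration the paper itself points out just before this lemma --- and that common point still differs from $(\boldsymbol{\mu}_1+\boldsymbol{\mu}_2)/2$ unless $p^{(1)}=q^{(1)}$. So the midpoint clause genuinely fails in the cross-pattern regime rather than holding under an extra symmetry; indeed the paper's subsequent margin computation uses the distance $\frac{\sqrt{d_i}\,\lvert p^{(1)}-q^{(2)}\rvert}{2s}\lVert\boldsymbol{\mu}_1-\boldsymbol{\mu}_2\rVert$, i.e.\ half the gap between the two aggregated means (a boundary through \emph{their} midpoint, not through $(\boldsymbol{\mu}_1+\boldsymbol{\mu}_2)/2$), so only the shared-direction part is load-bearing downstream. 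Flagging the midpoint issue was the right instinct, but the specific assumption you propose would make the proposition vacuous rather than true.
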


Notably, with the assumption $p^{(1)}+q^{(1)}=p^{(2)}+q^{(2)}$, the linear classifier can be generally to two cases according to whether the decision boundary direction flips: (1) The decision boundary direction unchanged with $ q^{(2)} >p^{(1)}> p^{(2)}>q^{(1)}$ (2) The decision boundary direction flipped with $p^{(1)} > q^{(2)} > q^{(1)} > p^{(2)}$. 
Notably, the two cases are symmetric with the same conclusion and proof logic. We will focus on the unchanged case with  $ q^{(2)} >p^{(1)}> p^{(2)}>q^{(1)}$ .

The proposition can be calculated through direct calculations. 
If we consider the feature distributions of two classes, we can observe that both $\mathbf{x}$ and $\mathbf{f}$ exhibit a systematic relationship. 
As a consequence, we can determine the optimal linear classifier for both $\mathbf{x}_i$ and $\mathbf{f}_i$. 
We then define decision boundary as the hyperplane that is orthogonal to the direction $\mathbf{w}=\frac{\mu_1 - \mu_2}{||\mu_1 - \mu_2||}$ and passes through the middle point $\mathbf{m}=\left(\boldsymbol{\mu}_{1}+\boldsymbol{\mu}_{2}\right)/2$ as:
\begin{equation}
    \mathcal{P}=\left\{\mathbf{x} \mid \mathbf{w}^{\top} \mathbf{x}-\mathbf{w}^{\top}\left(\boldsymbol{\mu}_{1}+\boldsymbol{\mu}_{2}\right) / 2\right\}        
\end{equation}

We then show the proof details on when $\rvf$ has a lower mis-classified probability than $\rvx$ with the same decision boundary, indicating better linear separability. 

\textbf{Proof.}  
We provide the proof only for nodes belonging to class $c_0$, as the case for nodes from class $c_1$ is symmetric and the proof follows the same logic. For a node $i$ in $\mathcal{C}_1$, We have the following:

\begin{equation}
    \begin{array}{c}
    \mathbb{P}\left(\mathbf{x}_{i} \text { is mis-classified }\right)=\mathbb{P}\left(\mathbf{w}^{\top} \mathbf{x}_{i}+\mathbf{b} \leq 0\right) \text { for } i \in \mathcal{C}_{1} \\
    \mathbb{P}\left(\mathbf{f}_{i} \text { is mis-classified }\right)=\mathbb{P}\left(\mathbf{w}^{\top} \mathbf{f}_{i}+\mathbf{b} \leq 0\right) \text { for } i \in \mathcal{C}_{1},
    \end{array}
\end{equation}

we can then scale the decision boundary without changing the original meaning. Then we can have
$\mathbb{P}\left(\mathbf{w}^{\top} \mathbf{f}_{i}+\mathbf{b} \leq 0\right)=\mathbb{P}\left(\sqrt{d_i} \mathbf{w}^{\top}  \mathbf{f}_{i}+\sqrt{d_i} \mathbf{b} \leq 0\right)$
We denote the scaled version of $\mathbf{f}_{i}$  as  $\mathbf{f}_{i}^{\prime}=\sqrt{d_i} \mathbf{f}_{i}$. 
Then, $\mathbf{f}_{i}^{\prime}$ follows:

\begin{equation}
    \mathbf{f}_{i}^{\prime} \sim N\left(\frac{\sqrt{d_i}\left(p^{(1)} \boldsymbol{\mu}_{1}+q^{(1)} \boldsymbol{\mu}_{2}\right)}{p^{(1)}+q^{(1)}}, \mathbf{I}\right), \text { for } i \in \mathcal{C}_{1}; \mathbf{f}_{i}^{\prime} \sim N\left(\frac{\sqrt{d_i}\left(p^{(2)} \boldsymbol{\mu}_{1}+q^{(2)} \boldsymbol{\mu}_{2}\right)}{p^{(2)}+q^{(2)}}, \mathbf{I}\right), \text { for } i \in \mathcal{C}_{2}  
    \label{eq:csbm_scale_inter}
\end{equation}

Given the scale in Eq.~\eqref{eq:csbm_scale_inter}, the decision boundary for $\mathbf{f}_{i}^{\prime}$ is consequently shifted to $\mathbf{w}^{\top} \mathbf{f}^{\prime}+ \sqrt{d_i} \mathbf{b}=0$. Now, considering that $\mathbf{x}_{i}$ and $\mathbf{f}_{i}^{\prime}$ have the same variance, we can compare the mis-classification probabilities by merely comparing the distance from their expected values to their corresponding decision boundaries. Specifically, the distances can be described as follows:

\begin{equation}
    \begin{aligned}
    \operatorname{dis}_{\mathbf{x}_{i}} & =\frac{\left\|\boldsymbol{\mu}_{1}-\boldsymbol{\mu}_{2}\right\|_{2}}{2} \\
    \operatorname{dis}_{\mathbf{f}_{i}^{\prime}} & =\frac{\sqrt{d_i}|p^{(2)}-q^{(1)}|}{p^{(1)}+q^{(1)}} \cdot \frac{\left\|\boldsymbol{\mu}_{1}-\boldsymbol{\mu}_{2}\right\|_{2}}{2} =\frac{\sqrt{d_i}|p^{(1)}-q^{(2)}|}{p^{(1)}+q^{(1)}} \cdot \frac{\left\|\boldsymbol{\mu}_{1}-\boldsymbol{\mu}_{2}\right\|_{2}}{2}
    \end{aligned}    
\end{equation}

The larger the distance from the expectation to the decision boundary indicates a smaller the mis-classification probability. 
Notably, $q^{(1)}-p^{(2)} = p^{(1)} - q^{(2)}$ with the assumptin $p^{(1)}+p^{(2)} = q^{(1)} + q^{(2)}$
Therefore, when $\operatorname{dis}_{\mathbf{f}_{i}^{\prime}} > \operatorname{dis}_{\mathbf{x}_{i}}$, $\mathbf{f}_{i}^{\prime}$ has a lower probability of being misclassified than $\mathbf{x}_{i}$. 
By comparing the two distances, we conclude that when $d_i >\frac{\left(p^{(1)}+q^{(1)}\right)^{2}}{\left(p^{(1)}-q^{(2)}\right)^{2}}$, the mean aggregated feature $\mathbf{f}_{i}^{\prime}$ exhibits a lower probability of misclassification compared to the original node feature $\mathbf{x}_{i}$.

\begin{equation}
    \mathbb{P}\left(\mathbf{f}_{i} \text { is mis-classified }\right)<\mathbb{P}\left(\mathbf{x}_{i} \text { is mis-classified }\right) \text { if } d_i>\left(\frac{p^{(1)}+q^{(1)}}{p^{(1)}-q^{(2)}}\right)^{2}
\end{equation}
which completes the proof.

\paragraph{A comparison between linear separability within classes and between classes}
Notice that the only difference with the one in the same pattern one is that the denominator is $|p^{(1)
}-q^{(2)}|^2$, smaller than the $|p^{(1)}-q^{(1)}|^2$ since $q^{(1)}>q^{(2)}$. It indicates that only nodes with higher degrees.
For instance, when $p^{(1)}=0.9, q^{(1)}=0.1$ and $p^{(2)}=0.2, q^{(2)}=0.8$, the aggregated features can only show better linear separability with $d_i > 100$ when nodes are from different structural patterns. The above condition can be hardly met in real-world scenarios.  
Based on our analysis, we can draw the conclusion that when nodes are in the same pattern, aggregation can show improved linear separability on $(p^{(1)}+q^{(1)})^2/|p^{(1)}-q^{(1)}|^2 < d_i < (p^{(1)}+q^{(1)})^2/|p^{(1)}-q^{(2)}|^2$ while the ones in the different patterns cannot. 
It indicates that aggregation can help more when nodes are in the same pattern, however, show limitation when nodes are from different patterns. 

\section{Proof details of the conditional probability difference for nodes with the same feature but different structural patterns \label{app:proof_csbm}}
In section~\ref{subsec:synthetic}, we examine the discrepancy between nodes 1 and 2 with the same aggregated feature $\rvf_1=\rvf_2$ but different structural patterns. 
Typically, we examine the discrepancy with the probability difference of nodes 1 and 2 in class $c_1$, denoted as $|p_1(y_u=c_1|\rvf_u)-p_2(y_v=c_1|\rvf_v)|$. 
$p_1$ and $p_2$ are the conditional probabilities of node labels in $c$ given the feature $\rvf$ for nodes on homophilic and heterophilic structural patterns, respectively. 
The lemma is shown as follows:

\setcounter{lemma}{0}
\begin{lemma} \label{lemma:csbm_diff_detail}
    With assumptions (1) A balance class distribution with $\rmP(\rmY=1)=\rmP(\rmY=0)$ and (2) aggregated feature distribution shares the same variance $\sigma \rmI$. 
    When nodes $u$ and $v$ have the same aggregated features $\rvf_u=\rvf_v$ but different structural patterns, $(p^{(1)}, q^{(1)})$ and $(p^{(2)}, q^{(2)})$, we can have:
    \begin{equation}
        |\rmP_1(y_u=c_1|\rvf_u)-\rmP_2(y_v=c_1|\rvf_v)| \le \frac{\rho^2}{\sqrt{2\pi}\sigma} |h_u - h_v| 
    \end{equation}
\end{lemma}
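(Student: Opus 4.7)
The plan is to reduce the problem to a one-dimensional scalar comparison and then exploit the sigmoid representation of the Gaussian posterior. Under balanced priors and Gaussian class-conditionals $f \mid y{=}c_i \sim N(\mu_{i,j}, \sigma^2 I)$ with $\mu_{1,j} = h_j \mu_1 + (1-h_j)\mu_2$ and $\mu_{2,j} = (1-h_j)\mu_1 + h_j \mu_2$, Bayes' rule gives $P_j(c_1\mid f) = \phi(f;\mu_{1,j})/(\phi(f;\mu_{1,j}) + \phi(f;\mu_{2,j}))$. The crucial observation is that the midpoint $m := (\mu_{1,j}+\mu_{2,j})/2 = (\mu_1+\mu_2)/2$ is independent of the structural pattern $j$, so expanding $\|f-\mu_{2,j}\|^2 - \|f-\mu_{1,j}\|^2$ collapses to $2(2h_j-1)(f-m)^\top(\mu_1-\mu_2)$. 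Setting $t := (f-m)^\top(\mu_1-\mu_2)$, we obtain the compact form $P_j(c_1\mid f) = \sigma\!\left((2h_j-1)\,t/\sigma^2\right)$, where $\sigma(\cdot)$ is the logistic sigmoid; all dependence on the structural pattern is now isolated in the scalar $(2h_j-1)$.

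Given this reduction, the next step applies the mean-value theorem together with the $1/4$-Lipschitz property of the sigmoid to conclude
\[
\bigl|P_1(c_1\mid f)-P_2(c_1\mid f)\bigr| \;\le\; \tfrac{1}{4}\,\bigl|(2h_1-1)-(2h_2-1)\bigr|\cdot \tfrac{|t|}{\sigma^2} \;=\; \tfrac{|h_1-h_2|\,|t|}{2\sigma^2}.
\]
Cauchy--Schwarz and $\|\mu_1-\mu_2\|=\rho$ then give $|t| \le \rho\,\|f-m\|$, so the remaining task is to control $\|f-m\|$ in the direction $\mu_1-\mu_2$ by an expression of order $\sigma\rho$, with a clean constant $1/\sqrt{2\pi}$.

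For the final step, the plan is to exploit the one-dimensional Gaussian density bound $\phi(x;\mu,\sigma^2) \le 1/(\sqrt{2\pi}\,\sigma)$. The scalar quantity $t/\rho = (f-m)^\top(\mu_1-\mu_2)/\rho$ is Gaussian with variance $\sigma^2$ in the projected coordinate. Rewriting the sigmoid-difference inequality in integral form against this density, and then replacing the density by its peak value $1/(\sqrt{2\pi}\sigma)$, injects the $1/\sqrt{2\pi}$ factor while turning one power of $\|f-m\|$ into a factor of order $\sigma\rho$. Combining with the Cauchy--Schwarz step yields the claimed bound $|P_1(y_u{=}c_1\mid f_u) - P_2(y_v{=}c_1\mid f_v)| \le \tfrac{\rho^2}{\sqrt{2\pi}\,\sigma}|h_u-h_v|$.

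The hard part will be the last step: extracting the sharp constant $1/\sqrt{2\pi}$ rather than a crude $1/2$ or Lipschitz-only bound. I expect one must either work on a typical region in which the Gaussian density is near its peak, or recast $\sigma'((2h_j{-}1)t/\sigma^2)$ as an integral against $\phi$ so that the peak-density identity $\max_x \phi(x;\mu,\sigma^2) = 1/(\sqrt{2\pi}\sigma)$ appears organically. The two stated assumptions, balanced classes and common variance $\sigma^2$, are used essentially in the first step: they are what collapse the midpoint $m$ across patterns and turn the posterior into a pure sigmoid, without which the $h$-dependence would not factorise into a single scalar and the proof strategy above would break down.
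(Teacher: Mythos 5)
Your reduction through the Cauchy--Schwarz step is correct and in fact cleaner than the paper's algebra: with balanced priors and common covariance $\sigma^2 I$, the aggregated class means under pattern $j$ are $h_j\boldsymbol{\mu}_1+(1-h_j)\boldsymbol{\mu}_2$ and $(1-h_j)\boldsymbol{\mu}_1+h_j\boldsymbol{\mu}_2$, their midpoint $m=(\boldsymbol{\mu}_1+\boldsymbol{\mu}_2)/2$ is pattern-independent, the posterior is the logistic function of $(2h_j-1)\,t/\sigma^2$ with $t=(\mathbf{f}-m)^\top(\boldsymbol{\mu}_1-\boldsymbol{\mu}_2)$, and the mean value theorem plus Cauchy--Schwarz give $|\rmP_1-\rmP_2|\le \tfrac{1}{2\sigma^2}\,|h_u-h_v|\,\rho\,\|\mathbf{f}-m\|$. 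The genuine gap is your last step. The lemma is a pointwise claim for the single fixed feature value $\mathbf{f}_u=\mathbf{f}_v$; there is no integral over $\mathbf{f}$ anywhere in the statement, so there is nothing against which to ``rewrite the inequality in integral form'' and substitute the peak density $1/(\sqrt{2\pi}\sigma)$. Your intermediate bound grows linearly in $\|\mathbf{f}-m\|$, which is unbounded, and restricting to a ``typical region'' would only yield a statement holding with high probability over $\mathbf{f}$, strictly weaker than what is claimed. The obstruction is not cosmetic: take $h_u=1$ and $h_v=\tfrac12$, so that $\rmP_2\equiv\tfrac12$ while $\rmP_1=\mathrm{logistic}(t/\sigma^2)\to 1$ as $t$ grows; the left-hand side then approaches $\tfrac12$ for extreme $\mathbf{f}$, so no argument that keeps only the Lipschitz-in-$h$ information and is uniform in $\mathbf{f}$ can produce a right-hand side proportional to $|h_u-h_v|$ with the stated constant. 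Whatever fills the last step must use more than the sigmoid reduction.

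The paper closes the bound by a different mechanism. It keeps the two Bayes ratios intact, places them over the common (bounded) denominator written as $e^{A}$, and applies Lagrange's mean value theorem to a difference of exponentials, checking that the relevant exponents dominate $A$ so the derivative factor $e^{-\xi}$ is at most $1$; expanding the resulting difference of squared distances to the pattern-specific means then produces the structure $\|\mathbf{f}_u-\mathbf{f}_v\| + |h_u-h_v|\,\rho$, which specializes to $|h_u-h_v|\,\rho$ when $\mathbf{f}_u=\mathbf{f}_v$, and the $1/(\sqrt{2\pi}\sigma)$ prefactor is carried by the Gaussian normalization of the class-conditional densities rather than extracted from a Lipschitz constant of the logistic function. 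So your plan diverges from the paper exactly at the point you identified as hard, and the fix is not a sharper version of your density-peak trick but a switch of mechanism: either work, as the paper does, directly with the exponential differences normalized by the posterior denominator, or bring in explicit control on where $\mathbf{f}$ lies relative to the class means, neither of which is available from steps one through three alone.
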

$\rho = \left \|\vmu_1 - \vmu_2\right \|$ is the original feature separability, independent with structure.
$\rmP_1$ and $\rmP_2$ are the conditional probability of $y=c_1$ given the feature $\rvf$ on structural patterns $(p^{(1)}, q^{(1)})$ and $(p^{(2)}, q^{(2)})$, respectively. 
Lemma~\ref{lemma:csbm_diff_detail} implies that nodes with a small homophily ratio difference $|h_1 - h_2|$ are likely to share the same class, and vice versa. 

We first remind the assumptions of the CSBM-S model. 
There are two assumptions on the models: 
(1) Nodes from different components share the same feature distribution. 
In other word, node features within the same class are sampled from the same Gaussian distribution, regardless of different structural patterns. 
(2) Nodes from different components share similar degree distribution with $p^{(1)} + q^{(1)} = p^{(2)} + q^{(2)}$. 
Notably, our conclusions are still valid without the above assumptions. Those assumptions are not strictly necessary but employed for the elegant expression.

Based on the neighborhood distributions, 
the mean aggregated features $\mathbf{F}=\mathbf{D}^{-1}\mathbf{A}\mathbf{X}$ obtained follow Gaussian distributions on both homophilic and heterophilic node subgroups.
\begin{equation}
    \mathbf{f}_{i}^{(j)} \sim N\left(\frac{p^{(j)} \boldsymbol{\mu}_{1}+q^{(j)} \boldsymbol{\mu}_{2}}{p^{(j)}+q^{(j)}}, \sigma \rmI\right), \text {for } i \in \mathcal{C}_{1}^{(j)} ; \mathbf{f}^{(j)}_{i} \sim N\left(\frac{q^{(j)} \boldsymbol{\mu}_{1}+p^{(j)} \boldsymbol{\mu}_{2}}{p^{(j)}+q^{(j)}}, \sigma \rmI\right), \text{for } i \in \mathcal{C}_{2}^{(j)}
\end{equation}
Where $\mathcal{C}_i^{(j)}$ is the node subgroups with structural pattern with $(p^{(j)}, q^{(j)})$ in label $i$. Notably, there are typically homophilic pattern with $p^{(1)} > q^{(1)}$ and heterophilic pattern with $p^{(2)} < q^{(2)}$.
To simply the assumption for an elegant expression, we denote the aggregated feature mean, $\frac{p^{(j)} \boldsymbol{\mu}_{1}+q^{(j)} \boldsymbol{\mu}_{2}}{p^{(j)}+q^{(j)}}$ as $\vmu_{ij}^{\prime}$, where $i$ and $j$ correspond to the class and the structural pattern, respectively. 
For instance, $\vmu_{11}^{\prime}$ represents the mean of nodes in class $c_1$ with the homophilic pattern. We then show the proof details  as follows:

\begin{proof}
    The conditional probability of class $c_1$ given the aggregated feature $f$, $\rmP_1(\rvy=c_1|\rvf_1)$ can be derived with the Bayes theorem:
    \begin{equation}     
        \begin{aligned}
            \rmP_1(\rvy_u=c_1|\rvf_u) &= \frac{\rmP_1(\rvf_u|\rvy_u=c_1)\rmP(\rvy_u=c_1)}{\rmP_1(\rvf_u|\rvy_u=c_1)\rmP(\rvy_u=c_1)+\rmP_1(\rvf_u|\rvy_u=c_2)\rmP(\rvy_u=c_2)} \\
            &\underset{(a)}{=} \frac{\rmP_1(\rvf_u|\rvy_u=c_1)}{\rmP_1(\rvf_u|\rvy_u=c_1)+\rmP_1(\rvf_u|\rvy_u=c_2)} \\
            &= \frac{\operatorname{exp} \left (\frac{(\rvf_u-\vmu_{11}^{\prime})^2}{\sigma^2} \right )}{\operatorname{exp} \left (\frac{(\rvf_u-\vmu_{11}^{\prime})^2}{\sigma^2} \right ) + \operatorname{exp} \left (\frac{(\rvf_u-\vmu_{12}^{\prime})^2}{\sigma^2} \right )}
        \end{aligned}
    \end{equation}
where (a) we utilize the assumption $\rmP(\rvy=c_1)=\rmP(\rvy=c_2)$. Notably, such assumption aims to simplify for an elegant expression and better understanding, which is not necessary in our proof.

Hence, we have: 
\begin{equation}
    \begin{aligned}
        &\left|\rmP_1(\rvy_u=c_1|\rvf_u)-\rmP_2(\rvy_v=c_1|\rvf_v)\right| \\
        &= \left|\frac{\operatorname{exp} \left (-\frac{||\rvf_u-\vmu^{\prime}_{11}||^2}{\sigma^2} \right )}{\operatorname{exp} \left (-\frac{||\rvf_u-\vmu^{\prime}_{11}||^2}{\sigma^2} \right ) + \operatorname{exp} \left (-\frac{||\rvf_u-\vmu^{\prime}_{21}||^2}{\sigma^2} \right )} - \frac{\operatorname{exp} \left (-\frac{||\rvf_v-\vmu^{\prime}_{12}||^2}{\sigma^2} \right )}{\operatorname{exp} \left (-\frac{||\rvf_v-\vmu^{\prime}_{12}||^2}{\sigma^2} \right ) + \operatorname{exp} \left (-\frac{||\rvf_v-\vmu^{\prime}_{22}||^2}{\sigma^2} \right )} \right| \\
        &= \frac{\left|\operatorname{exp} \left (-\frac{(\rvf_u-\vmu^{\prime}_{11})^2}{\sigma^2} \right )\operatorname{exp} \left (-\frac{(\rvf_v-\vmu^{\prime}_{22})^2}{\sigma^2} \right )-\operatorname{exp} \left (-\frac{(\rvf_v-\vmu^{\prime}_{12})^2}{\sigma^2} \right )\operatorname{exp} \left (-\frac{(\rvf_u-\vmu^{\prime}_{21})^2}{\sigma^2} \right ) \right|}{\left [ \operatorname{exp} \left (-\frac{||\rvf_u-\vmu^{\prime}_{11}||^2}{\sigma^2} \right ) + \operatorname{exp} \left (-\frac{||\rvf_u-\vmu^{\prime}_{21}||^2}{\sigma^2} \right ) \right ]     \left [ \operatorname{exp} \left (-\frac{||\rvf_v-\vmu^{\prime}_{12}||^2}{\sigma^2} \right ) + \operatorname{exp} \left (-\frac{||\rvf_v-\vmu^{\prime}_{22}||^2}{\sigma^2} \right ) \right ]}
    \end{aligned}
\end{equation}

Notably, the denominator can be denoted as 
\begin{equation}
    \mathbf{m}\!=\!\left [ \operatorname{exp}\! \left (-\frac{||\rvf_u\!-\!\vmu^{\prime}_{11}||^2}{\sigma^2} \right )\! +\! \operatorname{exp}\! \left (-\frac{||\rvf_u\!-\!\vmu^{\prime}_{21}||^2}{\sigma^2} \right ) \right ]    \! \left [ \operatorname{exp}\! \left (-\frac{||\rvf_v\!-\!\vmu^{\prime}_{12}||^2}{\sigma^2} \right )\! +\! \operatorname{exp}\! \left (-\frac{||\rvf_v\!-\!\vmu^{\prime}_{22}||^2}{\sigma^2} \right ) \right ]
\end{equation}
where each exponential component is in $ \operatorname{exp}\left (-\frac{||\rvf_i - \vmu^{\prime}_{ij}||^2}{\sigma^2} \right ) \in [0, 1]$. 
Therefore, the denominator $\mathbf{m}$ is in the range $[0, 4]$. 
We then denote $\mathbf{m} = \operatorname{exp}(-A)$, where $A$ is a constant, correspondingly. 
Hence, we can have:
\begin{equation}
    \begin{aligned}
        &\left|\rmP_1(\rvy_u=c_1|\rvf_u)-\rmP_2(\rvy_v=c_1|\rvf_v)\right| \\
        &= \frac{\left|\operatorname{exp} \left (-\frac{(\rvf_u-\vmu^{\prime}_{11})^2}{\sigma^2} \right )\operatorname{exp} \left (-\frac{(\rvf_v-\vmu^{\prime}_{22})^2}{\sigma^2} \right )-\operatorname{exp} \left (-\frac{(\rvf_u-\vmu^{\prime}_{12})^2}{\sigma^2} \right )\operatorname{exp} \left (-\frac{(\rvf_v-\vmu^{\prime}_{21})^2}{\sigma_2^2} \right ) \right|}{\operatorname{exp}(-A)} \\
        &= \left|\operatorname{exp} \left (-\frac{(\rvf_u-\vmu^{\prime}_{11})^2 + (\rvf_v-\vmu^{\prime}_{22})^2}{\sigma^2} -A \right ) -\operatorname{exp} \left (-\frac{(\rvf_u-\vmu^{\prime}_{12})^2 + (\rvf_2-\vmu^{\prime}_{21})^2 }{\sigma^2} - A \right ) \right| \\
        &\underset{(a)}{\le} \frac{1}{\sigma^2}\left | \left [ (\rvf_u - \vmu^{\prime}_{11})^2 - (\rvf_v - \vmu^{\prime}_{12})^2 \right ] + \left [ (\rvf_u - \vmu^{\prime}_{21})^2 - (\rvf_v- \vmu_{22})^2 \right ]  \right| \\
        &\underset{(b)}{=} \frac{1}{\sigma^2}\left | \left [ \left (\rvf_u - \frac{p^{(1)}\vmu_1 + q^{(1)}\vmu_2}{p^{(1)} + q^{(1)}} \right )^2 - \left (\rvf_v - \frac{p^{(2)}\vmu_1 + q^{(2)}\vmu_2}{p^{(2)} + q^{(2)}} \right )^2 \right ]\right. \\
        & ~~~~~~~~~~\left. +  \left [ \left (\rvf_u - \frac{p^{(1)}\vmu_2 + q^{(1)}\vmu_1}{p^{(1)} + q^{(1)}} \right )^2 - \left (\rvf_v - \frac{p^{(2)}\vmu_2 + q^{(2)}\vmu_1}{p^{(2)} + q^{(2)}} \right )^2 \right ]  \right| \\
        &\le \frac{\|\vmu_1 - \vmu_2\|}{\sqrt{2\pi}\sigma}   (\|\rvf_u - \rvf_v\| + |\frac{p^{(1)}}{p^{(1)} + q^{(1)}} - \frac{p^{(2)}}{p^{(2)} + q^{(2)}}|\cdot \|\vmu_1 - \vmu_2\|) \\  
        &\underset{(c)}{=} \frac{1}{\sqrt{2\pi}\sigma} |h_1 - h_2| \cdot \rho^2
    \end{aligned}
\end{equation}
$\rho= \left \|\vmu_1 - \vmu_2\right \|$ is the original feature separability, independent with structure.
$|h_1 - h_2|$ is  the homophily ratio difference between node $1$ and $2$. We explain the key steps as follows: \textbf{(a)} is derived from Lagrange's mean value theorem. From the Lagrange mean value theorem we have, 
\begin{equation}
    \operatorname{exp}(-x) - \operatorname{exp}(-y) = -\operatorname{exp}(-\xi) (y-x)
\end{equation}
where $\xi$ is between $x$ and $y$. Therefore, 
\begin{equation} \label{eq:Lagrange}
    \left |\operatorname{exp}(-x) - \operatorname{exp}(-y) \right | = \operatorname{exp}(-\xi) |y-x| \le |x-y|
\end{equation}
since $\operatorname{exp}(-\xi) \le 1$.

Knowing that
$\left|\rmP_1(\rvy_u=c_1|\rvf_u)-\rmP_2(\rvy_v=c_1|\rvf_v)\right| < 1 $, we can easily get 
\begin{equation} 
    \frac{\operatorname{exp} \left (-\frac{(\rvf_u-\vmu^{\prime}_{11})^2}{\sigma^2} \right )\operatorname{exp} \left (-\frac{(\rvf_v-\vmu^{\prime}_{22})^2}{\sigma^2} \right ) - \operatorname{exp}(B)   }{\operatorname{exp}(-A)}  < 1
\end{equation}
where $\operatorname{exp}(B) 
> 0$, we can get $\frac{(\rvf_u-\vmu^{\prime}_{11})^2 + (\rvf_u-\vmu^{\prime}_{22})^2}{\sigma^2}>A$. 
Similarly, with $\left|\rmP_1(\rvy_u=c_1|\rvf_u)-\rmP_2(\rvy_v=c_1|\rvf_v)\right| > 0 $, we can get
$\frac{(\rvf_2-\vmu^{\prime}_{12})^2 + (\rvf_2-\vmu^{\prime}_{21})^2 }{\sigma^2} > A$.

In conclusion, we have:
\begin{equation}
    \begin{array}{c}
    A - \frac{(\rvf_2-\vmu^{\prime}_{12})^2 + (\rvf_2-\vmu^{\prime}_{21})^2 }{\sigma^2} < 0 \\
    A - \frac{(\rvf_1-\vmu^{\prime}_{11})^2 + (\rvf_1-\vmu^{\prime}_{22})^2}{\sigma^2}  < 0
    \end{array}
\end{equation}

Let $x=A - \frac{(\rvf_2-\vmu^{\prime}_{12})^2 + (\rvf_2-\vmu^{\prime}_{21})^2 }{\sigma^2}$ and $y=A - \frac{(\rvf_1-\vmu^{\prime}_{11})^2 + (\rvf_1-\vmu^{\prime}_{22})^2}{\sigma^2}$ in ~\eqref{eq:Lagrange}. The proof is complete.
 
\textbf{(b)} we utilize $\vmu_{i1}^{\prime} = \frac{p^{(i)} \boldsymbol{\mu}_{1}+q^{(i)} \boldsymbol{\mu}_{2}}{p^{(i)} + q^{(i)}}$ and $\vmu_{i2}^{\prime} = \frac{p^{(i)} \boldsymbol{\mu}_{2}+q^{(i)} \boldsymbol{\mu}_{1}}{p^{(i)} + q^{(i)}}$. 

\textbf{(c)} we utilize $\rvf_u = \rvf_v$ and $h_i = \frac{p^{(i)}}{p^{(i)} + q^{(i)}}$.   
Notice that, (c) step can be easily generally to the case that $\rvf_u \ne \rvf_v$ with $\| \rvf_u - \rvf_v \| \le \epsilon$.
We can easily proof with the same logis. The lemma is shown as follows:

\begin{lemma} \label{lemma:csbm_diff_dis}
    With assumptions (1) A balance class distribution with $\rmP(\rmY=1)=\rmP(\rmY=0)$ and (2) aggregated feature distribution shares the same variance $\sigma I$. 
    When nodes have the same aggregated features $\| \rvf_u-\rvf_v\| \le \epsilon$ but different structural patterns, $(p^{(1)}, q^{(1)})$ and $(p^{(2)}, q^{(2)})$, we can have:
    \begin{equation}
        |\rmP_1(y_u=c_1|\rvf_u)-\rmP_2(y_v=c_1|\rvf_v)| \le \frac{\rho}{\sqrt{2\pi}\sigma}   (\epsilon_m + |h_1 - h_2|\cdot \rho)
    \end{equation}
\end{lemma}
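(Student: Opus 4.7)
The plan is to mirror the argument for Lemma~\ref{lemma:csbm_diff_detail} essentially verbatim through its first three steps, then replace the final algebraic collapse (which used $\rvf_u=\rvf_v$) by a triangle-inequality split that isolates the contribution of the feature distance $\epsilon$ from the contribution of the homophily ratio difference $|h_u-h_v|$.

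First, I would apply Bayes' rule under the balanced class assumption to write
\[
\rmP_j(\rvy=c_1\mid \rvf) \;=\; \frac{\exp(-\|\rvf-\vmu'_{1j}\|^2/\sigma^2)}{\exp(-\|\rvf-\vmu'_{1j}\|^2/\sigma^2)+\exp(-\|\rvf-\vmu'_{2j}\|^2/\sigma^2)},
\]
for $j\in\{1,2\}$, where $\vmu'_{ij}$ denotes the aggregated mean for class $i$ under structural pattern $j$. Subtracting the two conditional probabilities and clearing a common denominator $\exp(A)\in[0,4]$ (exactly as in the proof of Lemma~\ref{lemma:csbm_diff_detail}), I invoke Lagrange's mean value theorem in the form $|\exp(-x)-\exp(-y)|\le|x-y|$ to bound the difference by
\[
\tfrac{1}{\sigma^2}\bigl|\,[(\rvf_u-\vmu'_{11})^2-(\rvf_v-\vmu'_{12})^2] + [(\rvf_u-\vmu'_{21})^2-(\rvf_v-\vmu'_{22})^2]\,\bigr|.
\]

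The departure from the earlier proof occurs here. Substituting $\vmu'_{ij}=\tfrac{p^{(j)}\vmu_i+q^{(j)}\vmu_{3-i}}{p^{(j)}+q^{(j)}}$ and using $h_v=\tfrac{p^{(j)}}{p^{(j)}+q^{(j)}}$, each paired difference of the form $(\rvf_u-\vmu'_{i1})^2-(\rvf_v-\vmu'_{i2})^2$ can be written as $(\rvf_u-\vmu'_{i1}+\rvf_v-\vmu'_{i2})\cdot((\rvf_u-\rvf_v)-(\vmu'_{i1}-\vmu'_{i2}))$. The second factor splits naturally into an $\rvf_u-\rvf_v$ piece (bounded in norm by $\epsilon$) and a $\vmu'_{i1}-\vmu'_{i2}$ piece (which equals $(h_u-h_v)(\vmu_1-\vmu_2)$ up to sign). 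A triangle inequality then yields the bound
\[
\tfrac{\rho}{\sqrt{2\pi}\sigma}\bigl(\epsilon+|h_u-h_v|\cdot\rho\bigr),
\]
matching the claim.

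The main obstacle I anticipate is bookkeeping rather than conceptual: I must keep track of the sign flips between $\vmu'_{1j}$ and $\vmu'_{2j}$ so that the two paired differences reinforce (rather than cancel) the $|h_u-h_v|\rho$ contribution, and I need a uniform bound on $\|\rvf_u-\vmu'_{i1}+\rvf_v-\vmu'_{i2}\|$ in order for the outer constant $\rho/(\sqrt{2\pi}\sigma)$ to appear. As in the earlier proof, this constant factor arises from combining the $\exp(-\xi)\le 1$ bound in Lagrange with the Gaussian normalization baked into the denominator $\exp(A)$; no new analytic ingredient is needed beyond what is already used for Lemma~\ref{lemma:csbm_diff_detail}, so once the $\epsilon$ piece is correctly split off by the triangle inequality the remainder of the argument is an exact reprise of the $\rvf_u=\rvf_v$ case.
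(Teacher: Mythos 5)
Your plan coincides with the paper's own route: the paper proves the $\rvf_u=\rvf_v$ case by exactly the chain you describe (Bayes with balanced classes, common denominator $e^{A}$, the Lagrange bound $|e^{-x}-e^{-y}|\le|x-y|$, then the difference-of-squares expansion), and its penultimate display already retains the $\|\rvf_u-\rvf_v\|$ term before specializing it to zero, so the $\epsilon$-version is obtained precisely by bounding $\|\rvf_u-\rvf_v\|\le\epsilon$ there, which is what you propose. Your explicit factorization $(\|a\|^2-\|b\|^2)=(a+b)\cdot(a-b)$ with $\vmu'_{i1}-\vmu'_{i2}=\pm(h_u-h_v)(\vmu_1-\vmu_2)$ is just a more detailed spelling-out of the same step, so this is essentially the paper's argument.
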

$\rho = \left \|\vmu_1 - \vmu_2\right \|$ is the original feature separability, independent with structure.
Lemma~\ref{lemma:csbm_diff_dis} implies that nodes with a small homophily ratio difference $|h_1 - h_2|$ are likely to share the same class, and vice versa. 
\end{proof}

\section{Proof details of PAC-Bayes Bound\label{app:proof_pac}}

In this section, we provide background knowledge, assumptions, and proof details on the PAC-Bayes analysis. 
Our PAC-Bayes bound is derived based on~\cite{ma2021subgroup}, which is the first PAC-Bayesian analysis for GNN on the semi-supervised node classification task. A details comparison between our bound and \cite{ma2021subgroup} can be found in Appendix~\ref{app:related}. 

\subsection{Background Knowledge on PAC-Bayes Analysis}
\textbf{PAC-Bayes Analysis}
The Probably Approximately Correct-Bayesian~(PAC-Bayes) analysis~\cite{mcallester2003simplified} is a powerful theoretical framework utilizing Bayesian learning principles for analyzing the generalization ability of machine learning models. 
Typically, PAC-Bayesian Analysis is to connect and bound the difference between a training error of a machine learning model with its expected generalization error. 
Various PAC-Bayes bounds~\cite{maurer2004note, DR17, clerico2023wide, dziugaite2021role} are proposed for advanced Deep Learning models in recent years. 
Such theoretical guidance shows practical success in many real-world applications including pretraining model~\cite{ding2022pactran}, medical image~\cite{sicilia2021pac}, and robustness~\cite{wang2022improving}.

\textbf{PAC-Bayes analysis on Graph Neural Networks}
\cite{liao2020pac} is the first PAC-Bayesian generalization bound for GNNs. 
Nonetheless, it focuses on the i.i.d. graph classification task, which is different from the non-i.i.d. node classification task. 
\cite{ma2021subgroup} provides the first PAC-Bayesian generalization bound for GNN on semi-supervised node classification task. 
Our following theorem is developed based on it and generalized into more generalized scenarios. 
A more detailed discussion on the difference of our work and \cite{ma2021subgroup} can be found in the Appendix~\ref{app:related}.

\subsection{PAC-Bayesian Analysis on Subgroup Generalization bound of Deterministic classifier}
In this section, we provide a more detailed discussion on the PAC-Bayesian analysis with the semi-supervised subgroup Generalization bound of Deterministic classifiers proposed in~\cite{ma2021subgroup}, which serves as the basis of proof. 
Notably, in the main content, we denote the training node subgroup as $V_\text{tr}$. As we focus on the relationship between the train nodes and test subgroups $V_m$ with $0 < m \le M$, we re-denote the train nodes as $V_0$ . 
The PAC-Bayesian analysis on subgroup Generalization bound of Deterministic classifiers, focusing on the deterministic classifiers $\tilde{h}$, a classifier drawn in the predictor family $\gH$, learned on training nodes. 
The primary objective of the PAC-Bayesian analysis is
to derive bounds on the generalization gap between the empirical margin loss of $\tilde{h}$ denoting as $\hLg_0(\tilde{h})$ on the train node subgroup $V_0$ and the corresponding expected margin loss $\risk_m(\tilde{h})$ on the test node subgroup $V_m$. 
The empirical marginal loss $\hLg_0(\tilde{h})$ on the train node subgroup $V_0$ is defined as: 
\begin{equation}
	\label{eq:emp-margin-loss}
	\hLg_0(\tilde{h}) := \frac{1}{N_0} \sum_{i\in V_0}\ind{\tilde{h}_i(X, G)[y_i] \le \gamma + \max_{k\neq y_i}\tilde{h}_i(X, G)[k]},
\end{equation}
where $\ind{\cdot}$ is the indicator function and $\gamma$ is the margin. 
$V_0$ is the training node set. $N_0=|V_0|$ is the number of the training node. 
$y_i$ is the label corresponding to node $i\in V_0$.
$\tilde{h}_i(X, G) \in \mathbb{R}^K$ is the predictor output, where $\tilde{h}_i(X, G)[k]$ refers to the prediction probability of class $k$ on sample $i\in V_0$.  

The expected margin loss is the expectation of 
$\risk_m(\tilde{h})$ on the test node subgroup $V_m$ of the corresponding empirical loss $\hrisk_m(\tilde{h})$.
\begin{equation}
	\label{eq:margin-loss}
	\risk_m(\tilde{h}) := \E_{y_i\sim \Pr(\ry\mid g_i(X, G)), i\in V_m} \hrisk_m(\tilde{h}).
\end{equation}
where $\Pr(\ry \mid g_i(X, G))$ is the conditional distribution. $g(X, G): \mathbb{R}^{N\times D}\times \mathcal{G}_N \to \mathbb{R}^{N\times D^{\prime}}$ is the aggregation function, typically, $g_i(X, G) = \frac{1}{|\mathcal{N}(i)}\sum_{j\in \mathcal{N}(i)} X_j$ as default. 
$\mathcal{G}_n$ is the space for all undirected graphs with $n$ nodes.

To bound the generalization gap between the expected margin loss $\risk_m(\tilde{h})$ on test subgroup $V_m$ and the empirical margin loss $\hLg_0$ on train subgroup $V_0$. The theorem is shown as follows:
\begin{theorem} [Subgroup Generalization of Deterministic Classifiers~\cite{ma2021subgroup}]
	\label{thm:PAC-Bayes-deterministic}
	Let $\tilde{h}$ be any classifier in $\gH$. For any $0< m \le M$, for any $\lambda > 0$ and $\gamma \ge 0$, for any ``prior'' distribution $P$ on $\gH$ that is independent of the training data on $V_0$, with probability at least $1-\delta$ over the sample of $y^0$, for any $Q$ on $\gH$ such that $\Pr_{h\sim Q}\sbra \max_{i\in V_0\cup V_m}\|h_i(X, G) - \tilde{h}_i(X, G)\|_{\infty} < \frac{\gamma}{8}\sket > \frac{1}{2}$, we have
	\begin{equation}
		\label{eq:general-deterministic}
		\risk_m(\tilde{h}) \le \hLg_0(\tilde{h}) + \frac{1}{\lambda}\sbra \underbrace{2(\KL(Q\|P)+1)}_{\text{\textbf{(a)}}} + \ln\frac{1}{\delta} + \frac{\lambda^2}{4N_{\text{tr}}} + D^{\gamma/2}_{m, 0}(P;\lambda) \sket
	\end{equation}
\end{theorem}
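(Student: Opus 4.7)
The proof follows the classical PAC-Bayes \emph{derandomization} recipe, specialized to the non-i.i.d.\ transductive node classification setting. The plan is to: (i) choose a randomized posterior $Q$ concentrated near the target deterministic classifier $\tilde{h}$; (ii) apply a standard stochastic PAC-Bayes bound on the training subgroup $V_0$; (iii) use the $\gamma/8$-closeness hypothesis to trade randomized losses on both $V_0$ and $V_m$ for deterministic margin losses of $\tilde{h}$; and (iv) bridge the expected loss on $V_0$ to the expected loss on $V_m$ via a change-of-measure argument whose cost is captured by the disparity term $D^{\gamma/2}_{m,0}(P;\lambda)$.

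First I would invoke a Catoni/McAllester-style PAC-Bayes inequality on $V_0$: for any prior $P$ independent of the random labels $y^0$ and any posterior $Q$, with probability at least $1-\delta$,
\begin{equation}
    \E_{h \sim Q}\Lg_0(h) \;\le\; \E_{h \sim Q}\hLg_0(h) \;+\; \frac{1}{\lambda}\!\left( \KL(Q\|P) + \ln\tfrac{1}{\delta} + \tfrac{\lambda^2}{4N_0} \right).
\end{equation}
The hypothesis $\Pr_{h \sim Q}[\max_{i \in V_0 \cup V_m}\|h_i(X,G) - \tilde h_i(X,G)\|_\infty < \gamma/8] > \tfrac12$ then lets me sandwich losses by shifting margins: on that high-$Q$-probability event, the indicator structure of the margin loss gives $\hLg_0(\tilde h) \ge \hLgh_0(h)$ on the training side and $\risk_m(\tilde h) \le \Lgh_m(h)$ on the test side, directly from the $\gamma/8$ closeness of prediction scores. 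Conditioning a $Q$-expectation on an event of $Q$-probability at least $\tfrac12$ inflates it by at most a factor of $2$, which is the origin of the coefficient $2$ in term \textbf{(a)}; the additive $+1$ absorbs the $\ln 2$ and similar universal constants picked up during this conditioning step.

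The main obstacle is step (iv): the transfer from $V_0$ to $V_m$. Since the transductive setting need not provide identical distributions on the two subgroups, I cannot apply PAC-Bayes directly on $V_m$; instead I would introduce a prior-dependent quantity of the schematic form
\begin{equation}
    D^{\gamma/2}_{m,0}(P;\lambda) \;:=\; \ln \E_{h \sim P}\exp\!\Big( \lambda \big(\Lgh_m(h) - \hLgh_0(h)\big) \Big),
\end{equation}
which upper-bounds, via Donsker--Varadhan / the standard change-of-measure inequality, the gap between the expected test loss and empirical training loss for an arbitrary posterior $Q$, at the additive price $\KL(Q\|P)/\lambda$ that has already been allocated to term \textbf{(a)}. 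Because $P$ is independent of the training labels, this disparity is a deterministic quantity and does not consume probability mass from $\delta$. Careful bookkeeping of the three margin regimes $\gamma,\,\gamma/2,\,0$ across the transitions $\tilde{h}\!\to\! h$ on $V_0$, $Q\!\to\! P$ via Donsker--Varadhan, and $h\!\to\!\tilde{h}$ on $V_m$, combined with a Markov step to turn the exponential-moment inequality into a high-probability statement, should then assemble into exactly the claimed bound.
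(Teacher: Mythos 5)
This theorem is not proved in the paper itself; it is imported verbatim from~\cite{ma2021subgroup}, so your sketch has to be measured against the proof given there (and recapped in Appendix~\ref{app:proof_pac}). Your overall recipe --- concentrate a posterior $Q$ around $\tilde h$, shift margins using the $\gamma/8$-closeness event, derandomize by conditioning on that event (which is indeed the source of the factor $2(\KL(Q\|P)+1)$), and pay for the $V_0\to V_m$ transfer through a prior exponential-moment quantity --- is the same derandomized PAC-Bayes strategy as the source proof. However, the middle of your argument has a genuine accounting flaw. You invoke two separate change-of-measure steps: a stand-alone Catoni/McAllester bound on $V_0$ in step (ii) and a second Donsker--Varadhan transfer in step (iv), each of which charges $\KL(Q\|P)$ (the first also charging $\ln(1/\delta)$ and $\lambda^2/(4N_0)$). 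Assembled literally this double-counts the KL term, and the coefficient $2$ in term \textbf{(a)} cannot simultaneously be explained by the conditioning step and by absorbing a second KL charge. The actual proof performs a \emph{single} change of measure with $f(h)=\lambda\big(\Lgq_m(h)-\hLgh_0(h)\big)$, then factors the prior moment $\E_{h\sim P}e^{f(h)}$ into a label-independent discrepancy times the moment generating function of the empirical fluctuation on $V_0$; Hoeffding's lemma applied to the conditionally independent training labels produces $\lambda^2/(4N_0)$, and one Markov step over the draw of $y^0$ produces $\ln(1/\delta)$. Your stand-alone PAC-Bayes bound on $V_0$ is redundant in this scheme.

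The second, related gap is your definition of the discrepancy: you put the empirical loss $\hLgh_0(h)$ inside $D$ and then claim $D$ is deterministic because $P$ is independent of the training data. With the empirical loss inside, $D$ is a function of the sampled labels $y^0$ and is therefore random, so it cannot be exempted from the probability budget; closing exactly this gap between $\hLgh_0$ and $\Lgh_0$ is what the $\lambda^2/(4N_0)$ and $\ln(1/\delta)$ terms are for. The quantity $D^{\gamma/2}_{m,0}(P;\lambda)$ in the statement is defined with population losses on both subgroups, $\ln\E_{h\sim P}e^{\lambda(\Lgq_m(h)-\Lgh_0(h))}$, precisely so that it is label-independent. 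Relatedly, your test-side margin is $\gamma/2$, whereas a $\gamma/8$ perturbation bound supports the chain from margin $0$ for $\tilde h$ to margin $\gamma/4$ for $h$, which is why the definition uses $\Lgq_m$; taking $\gamma/2$ still yields a valid inequality, but it bounds the risk by a different (larger) discrepancy functional than the $D^{\gamma/2}_{m,0}$ appearing in the theorem.
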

The generalization bound is typically related to the following four terms: 
\textbf{(a)} $D_{\mathrm{KL}}(Q \| P):=\int \ln \frac{d Q}{d P} d Q$ is the KL-divergence between the learned (posterior) predictor distribution $Q$ and the predefined (prior) distribution $P$, independent of the training data. 
The generalization gap is bounded by the discrepancy between $P$ and $Q$. 
It is a general term in PAC-Bayes analysis considering as the model complexity measurement.
\textbf{(2)} $\frac{\lambda^2}{4N_{0}}$ will vanish with larger number of training samples $N_{0}$
Notably, the above two common terms in PAC-Bayes analysis are not our focus.
\textbf{(3)} $\ln\frac{1}{\delta}$ is the term related with the probability $1-\delta$. 
\textbf{(4)} The expected loss Discrepancy between the training nodes $V_0$ and targeted test node subgroup $V_m$ is essential in our analysis, denoted as:
\begin{equation}
    D^{\gamma/2}_{m, 0}(P;\lambda) = \ln \E_{h\sim P} e^{\lambda \sbra \Lgq_m(h) - \Lgh_{0}(h)\sket}   
\end{equation}
where a prior distribution $P$ over $\gH$, $\gamma$ is the correponding loss margin, $\lambda > 0$ is a parameter, reflects the concentration of the learning distribution $Q$. 
Intuitively speaking, the term $D^{\gamma/2}_{m, 0}(P;\lambda)$ signifies the difference in expected loss between $V_m$ and $V_0$, evaluated in an expectation with respect to the prior distribution $P$.

The expected loss Discrepancy is typically for 
the non-i.i.d. semi-supervised setting, which trivially comes true in i.i.d. setting. 
In the i.i.d. case, all samples in $V_m$ and $V_0$ are i.i.d., where $\gL^{\gamma/4}_m(h) = \gL^{\gamma/4}_0(h) < \gL^{\gamma/2}_{0}(h) \le 0$ for any classifier $h$, leading to a trivial upper bound $D^{\gamma/2}_{m, 0}(P;\lambda) \le 0$. 
To provide a meaningful upper bound, we are required to essentially assume there exists the relationship between train node subgroup $V_0$ and test node subgroup $V_m$.  Typically, we expect the expected margin loss $\Lgq_m(h)$ on $V_m$ is not much larger than $\Lgh_{0}(h)$ on $V_0$ when the number of samples becomes large.
Taking a further step, to bound the difference $\Lgq_m(h) - \Lgh_{0}(h)$, we should find suitable assumptions and derive proof to bound on (1) the difference on $P(g(X, G))$ controlling with the feature distance between $V_0$ and $V_m$.  
(2) The differencce on $P(Y|g(X, G))$ controlling with the homophily ratio as shown in Lemma~\ref{lemma:csbm_diff}.

\subsection{Proof details of Expected Loss Discrepancy}

To establish the generalization guarantee, it becomes necessary to provide an upper bound for the expected loss discrepancy, $D^{\gamma}_{m, 0}(P;\lambda)$, which is the main focus in the proof. 
It emerges that certain assumptions about the data are required to derive a meaningful and useful upper bound. The assumptions are shown as follows. Notably, the assumptions~\ref{ass:equal} and~\ref{ass:concentrated} are adapted from \cite{ma2021subgroup}. 
\textbf{Remark: Notably, the following proof is derived based on~\cite{ma2021subgroup}. For completeness, we show the previous proof as follows, the difference is that we consider a more complicated conditional probability $\rmP(y_i=k|g_i(X, G))$, correlated with the homophily ratio difference, rather than consider the conditional probability as $c$-Lipschitz continuous functions. 
In other words, our proof further strengthens the effect of the structure disparity.}

\setcounter{definition}{0}
\begin{definition}[Generalized CSBM-S model \label{ass:data}]
    Each node subgroup $V_m$ follows the CSBM distribution $V_m \sim \text{CSBM}(\vmu_1, \vmu_2, p^{(i)}, q^{(i)})$, where different subgroups share the same class mean but different intra-class and inter-class probabilities $p^{(i)}$ and $q^{(i)}$. Moreover, node subgroups also share the same degree distribution as $p^{(i)} + q^{(i)} = p^{(j)} + q^{(j)}$.
\end{definition}
Instead of only considering one homophilic and one heterophilic pattern in CSBM-S model, the generalized CSBM-S model allows more diverse structural patterns with different levels of homophily.
\setcounter{assumption}{0}
\begin{assumption}[Data follows Generalized CSBM-S model assumption]
    The graph data is generated from the Generalized CSBM-S model. 
\end{assumption}

\begin{assumption} [Equal-Sized and Disjoint Near Sets \label{ass:equal}]
	For any $0<m\le M$, assume the near sets of each $i\in V_0$ with respect to $V_m$ are disjoint and have the same size $s_m\in \mathbb{N}^{+}$.
\end{assumption}
Assumption~\ref{ass:equal} assumes that $V_m$ can be divided into equally sized partitions, each can be identified by the corresponding training samples. 
It assumes that test nodes are closely aligned with the respective training sample, while distant to the other training samples.

\begin{definition} [Distance To Training Set and Near Set]
	For each $0<m\le M$, define the distance from the subgroup $V_m$ to the training set $V_0$ as 
	\[\epsilon_m := \max_{j\in V_m}\min_{i\in V_0} \|g_i(X, G) - g_j(X, G)\|_2.\]
	Further, for each $i\in V_0$, define the near set of $i$ with respect to $V_m$ as
	\[V_m^{(i)}:=\{j\in V_m\mid \|g_i(X, G) - g_j(X, G)\|_2\le \epsilon_m\}.\] 
	Clearly, 
	\[V_m = \cup_{i\in V_0} V_m^{(i)}.\]
\end{definition}

\begin{assumption}[Concentrated Expected Loss Difference\label{ass:concentrated}]
    Let $P$ be a distribution on $\gH$, defined by sampling the vectorized MLP parameters from $\gN(0, \sigma^2 I)$ for some $\sigma^2 \le  \frac{\sbra \gamma/8\epsilon_m\sket^{2/L}}{2b(\lambda N_0^{-\alpha} + \ln 2bL)}$. For any $L$-layer GNN classifier $h\in \gH$ with model parameters $W_1^h,\ldots, W_L^h$, define $T_h:= \max_{l=1,\ldots,L}\|W_l^h\|_2$. Assume that there exists some $0 < \alpha < \frac{1}{4}$ satisfying
    \[\Pr_{h\sim P}\sbra \gL^{\gamma/4}_m(h) - \gL^{\gamma/2}_{0}(h) > N_0^{-\alpha} + cK\epsilon_m \mid T_h^L\epsilon_m > \frac{\gamma}{8} \sket \le e^{-N_0^{2\alpha}}.\]
\end{assumption}

Assumption~\ref{ass:concentrated} postulates that the expected margin loss on the test node subgroup, $V_m$, is not significantly larger that on the train node subgroup, $V_0$, as the number of training samples $N_0=|V_0|$ becomes larger. 
More discussions about this assumption can be found in Appendix A.5 of~\cite{ma2021subgroup}.

\setcounter{lemma}{3}
\begin{lemma} [Bound for $D^{\gamma}_{m, 0}(P;\lambda)$ (Adaption of Lemma 6 in~\cite{ma2021subgroup})]
	\label{lemma:D-bound-appendix}
	Under Assumption~\ref{ass:data},~\ref{ass:equal}~and~\ref{ass:concentrated}, for any $0< m \le M$, any $0 < \lambda \le N_0^{2\alpha}$ and $\gamma \ge 0$, assume the ``prior'' $P$ on $\gH$ is defined by sampling the vectorized MLP parameters from $\gN(0, \sigma^2 I)$ for some $\sigma^2 \le  \frac{\sbra \gamma/8\epsilon_m\sket^{2/L}}{2b(\lambda N_0^{-\alpha} + \ln 2bL)}$. We have
	\begin{equation}
	    \label{eq:D-bound-appendix}
	    D^{\gamma/2}_{m, 0}(P;\lambda) \le \ln 3 +  \frac{\lambda K\rho}{\sqrt{2\pi}\sigma}   (\epsilon + |h_0 - h_m|\cdot \rho).
	\end{equation}
\end{lemma}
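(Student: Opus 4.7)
I would follow the overall architecture of Lemma 6 in~\cite{ma2021subgroup}, substituting their $c$-Lipschitz assumption on $\Pr(\ry \mid g_i(X,G))$ with our Lemma~\ref{lemma:csbm_diff} (the CSBM-S bound on conditional-probability gaps), which makes the homophily-ratio difference appear as a separate additive term. The object to bound is $\E_{h\sim P}\bigl[e^{\lambda(\Lgq_m(h)-\Lgh_0(h))}\bigr]$, and I would split the prior expectation into three disjoint events: $\gA:=\{h:T_h^L\epsilon_m\le \gamma/8\}$, $\gA^c_{\text{conc}}:=\{h\in\gA^c:\Lgq_m(h)-\Lgh_0(h)\le N_0^{-\alpha}+cK\epsilon_m\}$, and $\gA^c_{\text{tail}}:=\gA^c\setminus\gA^c_{\text{conc}}$. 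For $\gA^c_{\text{tail}}$, Assumption~\ref{ass:concentrated} gives prior mass at most $e^{-N_0^{2\alpha}}$, so using $\Lgq_m-\Lgh_0\le 1$ and $\lambda\le N_0^{2\alpha}$ its contribution is at most $e^{\lambda-N_0^{2\alpha}}\le 1$. For $\gA^c_{\text{conc}}$, the exponent is directly bounded, yielding a contribution $\le e\cdot e^{\lambda cK\epsilon_m}$ (the $\lambda N_0^{-\alpha}\le 1$ factor absorbs into the $\ln 3$).

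\textbf{Main work on $\gA$.} When $T_h^L\epsilon_m\le \gamma/8$, standard Lipschitz propagation through the ReLU MLP (as in~\cite{liao2020pac,ma2021subgroup}) gives $\|h_i(X,G)-h_j(X,G)\|_\infty\le\gamma/8$ for every $i\in V_0$ and $j\in V_m^{(i)}$. A margin-shift argument then implies $\mathbf{1}[h_j[k]\le \gamma/4+\max_{k'\ne k}h_j[k']]\le \mathbf{1}[h_i[k]\le \gamma/2+\max_{k'\ne k}h_i[k']]$. Using Assumption~\ref{ass:equal} so that $V_m=\sqcup_{i\in V_0}V_m^{(i)}$ with $|V_m^{(i)}|=s_m$ and $N_m=s_m N_0$, I would rewrite
\begin{equation*}
\Lgq_m(h) - \Lgh_0(h) \le \frac{1}{N_0}\sum_{i\in V_0}\sum_{k}\Bigl[\tfrac{1}{s_m}\!\sum_{j\in V_m^{(i)}}\!\Pr(\ry{=}k\mid g_j(X,G)) - \Pr(\ry{=}k\mid g_i(X,G))\Bigr]\,\mathbf{1}\!\left[h_i[k]\le \tfrac{\gamma}{2}+\max_{k'\ne k}h_i[k']\right]
\end{equation*}
where the indicator on the right is exactly the one defining $\Lgh_0(h)$. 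The residual inside the brackets is at most $\sum_k \tfrac{1}{s_m}\sum_{j\in V_m^{(i)}}|\Pr(\ry{=}k\mid g_j)-\Pr(\ry{=}k\mid g_i)|$, and this is where Lemma~\ref{lemma:csbm_diff} enters: with $\|g_i-g_j\|\le\epsilon_m$ and homophily-ratio gap at most $|h_0-h_m|$ between training and subgroup nodes, the inner bound is $\frac{\rho}{\sqrt{2\pi}\sigma}(\epsilon_m+|h_0-h_m|\rho)$ per class, summing over $K$ classes to $\frac{K\rho}{\sqrt{2\pi}\sigma}(\epsilon_m+|h_0-h_m|\rho)$. Combining the three regimes and taking logarithms yields the claimed $\ln 3+\frac{\lambda K\rho}{\sqrt{2\pi}\sigma}(\epsilon_m+|h_0-h_m|\rho)$.

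\textbf{Main obstacles.} Two points require care. First, the prior variance hypothesis $\sigma^2\le (\gamma/(8\epsilon_m))^{2/L}/[2b(\lambda N_0^{-\alpha}+\ln 2bL)]$ must be shown to make $\Pr_{h\sim P}(\gA)$ large enough that the $\gA$ contribution dominates with only a constant multiplicative overhead; this is a direct application of Gaussian-tail control on the operator norms of the MLP layers. Second, and more substantively, Lemma~\ref{lemma:csbm_diff} is stated for the two-class CSBM-S, and its use here requires that the Generalized CSBM-S assumption lets me apply the same near-set bound uniformly in $k$ with the homophily gap $|h_0-h_m|$ playing the role of $|h_u-h_v|$. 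Verifying that the near-set homophily ratios are within $|h_0-h_m|$ of the training homophily, rather than some larger intra-subgroup deviation, is the subtle step; it hinges on the fact that all near-set nodes of $V_m^{(i)}$ inherit the subgroup-level structural pattern by Assumption~\ref{ass:data}, so their homophily ratios concentrate around the subgroup value used to define $h_m$.
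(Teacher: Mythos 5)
Your overall route is the same as the paper's: split the prior expectation according to whether $T_h^L\epsilon_m\le\gamma/8$, handle the good regime via the near-set decomposition of Assumption~\ref{ass:equal}, the margin-shift inequality, and Lemma~\ref{lemma:csbm_diff} (in its $\|g_i(X,G)-g_j(X,G)\|\le\epsilon_m$ form, Lemma~\ref{lemma:csbm_diff_dis}), and handle the bad regime via Assumption~\ref{ass:concentrated} together with the trivial bound $\Lgq_m(h)-\Lgh_0(h)\le 1$; this is precisely the paper's argument (its auxiliary Lemma~\ref{lemma:margin-loss-difference} is your ``main work on $\gA$'' step). The step that fails as written is your treatment of the concentrated part of the bad regime: you bound its contribution by $e\cdot e^{\lambda cK\epsilon_m}$ on the grounds that $\lambda N_0^{-\alpha}\le 1$. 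That inequality is false in the regime that matters: the lemma allows any $0<\lambda\le N_0^{2\alpha}$, and in the proof of Theorem~\ref{theorm:main} one sets $\lambda=N_0^{2\alpha}$, so $\lambda N_0^{-\alpha}=N_0^{\alpha}$ is unbounded and the factor $e^{\lambda N_0^{-\alpha}}$ cannot be absorbed into the additive constant.

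The correct mechanism --- and the actual reason the hypothesis $\sigma^2\le\sbra\gamma/8\epsilon_m\sket^{2/L}/\bigl[2b(\lambda N_0^{-\alpha}+\ln 2bL)\bigr]$ carries the $\lambda N_0^{-\alpha}$ term --- is a quantitative bound on the bad event itself: the Gaussian spectral-norm tail $\Pr(\|W_l\|_2>t)\le 2b e^{-t^2/(2b\sigma^2)}$ with $t=\sbra\gamma/8\epsilon_m\sket^{1/L}$ and a union bound over the $L$ layers give $\Pr_{h\sim P}\sbra T_h^L\epsilon_m>\gamma/8\sket\le 2bL e^{-\sbra\gamma/8\epsilon_m\sket^{2/L}/(2b\sigma^2)}\le e^{-\lambda N_0^{-\alpha}}$. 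Multiplying the concentrated bad-regime contribution $e^{\lambda N_0^{-\alpha}+\lambda X}$, where $X=\frac{K\rho}{\sqrt{2\pi}\sigma}(\epsilon_m+|h_0-h_m|\cdot\rho)$, by this probability cancels $e^{\lambda N_0^{-\alpha}}$ exactly, so the three contributions total at most $1+e^{\lambda X}+e^{\lambda X}\le 3e^{\lambda X}$ and the claim follows. In your ``obstacles'' paragraph the Gaussian tail control appears only as making $\Pr(\gA)$ ``large enough,'' which is not its role ($\Pr(\gA)$ is simply bounded by $1$); what is needed is the upper bound $e^{-\lambda N_0^{-\alpha}}$ on $\Pr(\gA^c)$. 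A smaller point: you apply Assumption~\ref{ass:concentrated} in its literal form with excess $N_0^{-\alpha}+cK\epsilon_m$, but to land on the stated right-hand side the bad-regime excess must be controlled by the same quantity $X$ (i.e., the assumption has to be read with $c=\rho/(\sqrt{2\pi}\sigma)$ and the homophily-difference term included, which is how the paper invokes it); otherwise the bad-regime term does not fold into $\ln 3+\frac{\lambda K\rho}{\sqrt{2\pi}\sigma}(\epsilon_m+|h_0-h_m|\cdot\rho)$.
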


We first present the following Lemma~\ref{lemma:margin-loss-difference} that bounds the difference between the margin loss on $V_m$ and that on $V_0$.
\begin{lemma}[Adaption of Lemma 5 in~\cite{ma2021subgroup}] 
	\label{lemma:margin-loss-difference}
	Suppose an $L$-layer GNN classifier $h$ is associated with model parameters $W_1,\ldots,W_L$. Define $T_h :=\max_{l=1,\ldots,L}\|W_l\|_2$. Under Assumption~\ref{ass:data} and~\ref{ass:equal}, for any $0< m \le M$ and $\gamma \ge 0$, if $\epsilon_m T_h^L\le \frac{\gamma}{4}$, then
    \begin{equation}
         \Lgh_m(h) - \Lg_0(h) \le \frac{K\rho}{\sqrt{2\pi}\sigma}   (\epsilon + |h_0 - h_m|\cdot \rho)
    \end{equation}
\end{lemma}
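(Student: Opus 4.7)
The plan is to adapt the proof of Lemma 5 in Ma et al.\ (2021), replacing their Lipschitz assumption on $\Pr(y\mid g(X,G))$ with our CSBM-S conditional-probability bound. Concretely, I would start by partitioning $V_m = \bigcup_{i\in V_0} V_m^{(i)}$ using Assumption~\ref{ass:equal}, so that $\Lgh_m(h)$ can be written as an average over pairs $(i,j)$ with $i\in V_0$, $j\in V_m^{(i)}$, and the aggregated features satisfy $\|g_i(X,G)-g_j(X,G)\|_2 \le \epsilon_m$ by definition of the near set and of $\epsilon_m$.

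Next I would exploit the Lipschitzness of the GNN classifier $h$ in the aggregated features. Since $h$ is an $L$-layer MLP with ReLU activations on top of the aggregation $g$, and $T_h=\max_l \|W_l\|_2$, a standard induction gives $\|h_i(X,G)-h_j(X,G)\|_\infty \le T_h^L \|g_i(X,G)-g_j(X,G)\|_2 \le T_h^L \epsilon_m \le \gamma/4$ under the assumed bound on $\epsilon_m T_h^L$. This margin-shift step converts a $\gamma/2$-margin event at node $j$ into a $\gamma$-margin event at the corresponding training node $i$, up to a slack of $\gamma/4$ absorbed into the loss-margin index; comparing $\ind{\cdot\le \gamma/2}$ to $\ind{\cdot\le \gamma}$ accounts for the discrepancy caused by the model output alone.

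The remaining discrepancy between $\Lgh_m(h)$ and $\Lg_0(h)$ comes from the fact that labels at $i$ and $j$ are drawn from different conditional distributions, because $i$ and $j$ may sit in different structural subgroups of the generalized CSBM-S model. Here I would invoke the generalization of Lemma~\ref{lemma:csbm_diff} to the case $\|\rvf_u-\rvf_v\|\le \epsilon_m$ (namely Lemma~\ref{lemma:csbm_diff_dis}), which gives
\begin{equation}
    \bigl|\rmP_{m}(y=c \mid g_j(X,G)) - \rmP_{0}(y=c \mid g_i(X,G))\bigr| \le \frac{\rho}{\sqrt{2\pi}\sigma}\bigl(\epsilon_m + |h_0-h_m|\cdot \rho\bigr)
\end{equation}
for each class $c$. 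Summing the per-class total-variation contributions over the $K$ classes yields the $K$ factor in the claimed bound, and taking expectation over the partition of $V_m$ completes the estimate.

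The main obstacle I anticipate is the bookkeeping in combining the two sources of slack (the GNN-output shift $T_h^L\epsilon_m$, which lives on the margin scale, and the conditional-probability gap, which lives on the label-distribution scale) without double-counting $\epsilon_m$. The margin-shift argument must be tight enough that only the label-distribution gap contributes the $(\epsilon_m + |h_0-h_m|\rho)$ factor, while the condition $T_h^L\epsilon_m\le \gamma/4$ exactly neutralises the output-shift contribution. A secondary subtlety is checking that the generalized CSBM-S assumption (shared class means, matched degree distribution $p^{(i)}+q^{(i)}=p^{(j)}+q^{(j)}$) really allows Lemma~\ref{lemma:csbm_diff_dis} to be applied subgroup-by-subgroup, so that the structural disparity between $V_0$ and $V_m$ is captured entirely by $|h_0-h_m|$ rather than by an additional feature-mean shift.
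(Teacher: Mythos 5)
Your proposal matches the paper's proof essentially step for step: the same near-set decomposition of $\Lgh_m(h)-\Lg_0(h)$ into pairs $(i,j)$ via Assumption~\ref{ass:equal}, the same split into a margin-shift term (killed by the Lipschitz bound $\|h_i-h_j\|_\infty\le T_h^L\epsilon_m\le\gamma/4$, which forces $\gL^{\gamma/2}(h_j,k)\le\gL^{\gamma}(h_i,k)$) plus a conditional-probability term bounded per class by Lemma~\ref{lemma:csbm_diff_dis}, and the same summation over $K$ classes giving the factor $K$. Your anticipated concern about double-counting $\epsilon_m$ is resolved exactly as you suggest: in the paper's decomposition the output-shift term contributes zero, so only the label-distribution gap carries the $(\epsilon_m+|h_0-h_m|\rho)$ factor.
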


\begin{proof}
	For simplicity in this proof, for any $i\in V_0\cup V_m$ and $k=1,\ldots,K$, we use $h_i$ to denote $h_i(X, G)$ and use $\eta_k(i)$ to denote $\Pr(y_i=k\mid g_i(X, G))$. And define $\gL^{\gamma}(h_i, y_i) := \ind{h_i[y_i] \le \gamma + \max_{k\neq y_i}h_i[k]}$. Then we can write
	\begin{align*}
		& \Lgh_m(h) - \Lg_0(h) \\
		=& \E_{y^m}\left[ \frac{1}{N_m} \sum_{j\in V_m}\gL^{\gamma/2}(h_j, y_j) \right] - \E_{y^0}\left[ \frac{1}{N_0} \sum_{i\in V_0}\gL^{\gamma}(h_i, y_i) \right] \\
		=& \frac{1}{N_0} \E_{y^0,y^m} \sum_{i\in V_0} \frac{1}{s_m}\sbra\sum_{j\in V_m^{(i)}} \gL^{\gamma/2}(h_j, y_j)\sket - \gL^{\gamma}(h_i, y_i) \\
	\end{align*}
	where in the last step we have used Assumption~\ref{ass:equal}. Therefore,
	\begin{align}
		& \Lgh_m(h) - \Lg_0(h) \nonumber \\
		=& \frac{1}{N_0} \sum_{i\in V_0} \frac{1}{s_m}\sbra\sum_{j\in V_m^{(i)}} \E_{y_j}\gL^{\gamma/2}(h_j, y_j)\sket - \E_{y_i}\gL^{\gamma}(h_i, y_i) \nonumber \\
		=& \frac{1}{N_0} \sum_{i\in V_0} \frac{1}{s_m}\sbra\sum_{j\in V_m^{(i)}} \sum_{k=1}^K\eta_k(j)\gL^{\gamma/2}(h_j, k)\sket - \sum_{k=1}^K\Pr(y_i = k)\gL^{\gamma}(h_i, k) \nonumber \\
		=& \frac{1}{N_0} \sum_{i\in V_0} \frac{1}{s_m}\sum_{j\in V_m^{(i)}} \sum_{k=1}^K \sbra\eta_k(j)\gL^{\gamma/2}(h_j, k) - \eta_k(i)\gL^{\gamma}(h_i, k) \sket \nonumber \\
		=& \frac{1}{N_0} \sum_{i\in V_0} \frac{1}{s_m}\sum_{j\in V_m^{(i)}} \sum_{k=1}^K \sbra\eta_k(j)\sbra\gL^{\gamma/2}(h_j, k) - \gL^{\gamma}(h_i, k)\sket + \sbra \eta_k(j) - \eta_k(i)\sket\gL^{\gamma}(h_i, k) \sket \label{eq:same-eta} \\
		\le & \frac{1}{N_0} \sum_{i\in V_0} \frac{1}{s_m}\sum_{j\in V_m^{(i)}} \sum_{k=1}^K \sbra 1\cdot \sbra\gL^{\gamma/2}(h_j, k) - \gL^{\gamma}(h_i, k)\sket + \sbra \eta_k(j) - \eta_k(i)\sket\cdot 1 \sket, \label{eq:sample-wise-loss-diff}
	\end{align}
	where the last inequality utilizes the facts that both $\eta_k(j)$ and $\gL^{\gamma}(h_i, k)$ are upper-bounded by $1$.
    According to the lemma~\ref{lemma:csbm_diff_dis}, we can get:
    \begin{equation}
        \eta_k(j) - \eta_k(i) \le \frac{\rho}{\sqrt{2\pi}\sigma}   (\epsilon + |h_0 - h_m|\cdot \rho)
    \end{equation}
    $\rho = \|\vmu_0 - \vmu_1\|$ denotes the original feature separability, independent with structure.
 
	Further, as $h_i = f(g_i(X, G);W_1,\ldots,W_L)$ where $f$ is a ReLU-activated MLP, so 
	\[\|h_i - h_j\|_{\infty} \le \|g_i(X, G)-g_j(X, G)\|_2\prod_{l=1}^L\|W_l\|_2 \le \epsilon_m T_h^L \le \frac{\gamma}{4}.\]
	This implies that, for any $k=1,\ldots,K$, 
	\[\gL^{\gamma/2}(h_j, k) \le \gL^{\gamma}(h_i, k).\]
	Detailed proof can be found in Lemma~\ref{lemma:margin-loss}. 
 
    So we have
	\begin{align*}
		& \Lgh_m(h) - \Lg_0(h) \\
		\le & \frac{1}{N_0} \sum_{i\in V_0} \frac{1}{s_m}\sum_{j\in V_m^{(i)}} \sum_{k=1}^K 0 + \frac{\rho}{\sqrt{2\pi}\sigma}   (\epsilon + |h_0 - h_m|\cdot \rho) \\
		=& \frac{K\rho}{\sqrt{2\pi}\sigma}   (\epsilon + |h_0 - h_m|\cdot \rho)
	\end{align*}
\end{proof}

Then we can prove the Bound for $D^{\gamma}_{m, 0}(P;\lambda)$.

\setcounter{lemma}{2}
\begin{lemma} [Bound for $D^{\gamma}_{m, 0}(P;\lambda)$]
	Under Assumption~\ref{ass:data},~\ref{ass:equal}~and~\ref{ass:concentrated}, for any $0< m \le M$, any $0 < \lambda \le N_0^{2\alpha}$ and $\gamma \ge 0$, assume the ``prior'' $P$ on $\gH$ is defined by sampling the vectorized MLP parameters from $\gN(0, \sigma^2 I)$ for some $\sigma^2 \le  \frac{\sbra \gamma/8\epsilon_m\sket^{2/L}}{2b(\lambda N_0^{-\alpha} + \ln 2bL)}$. We have
	\begin{equation}
	    D^{\gamma/2}_{m, 0}(P;\lambda) \le \ln 3 +  \frac{\lambda K\rho}{\sqrt{2\pi}\sigma}   (\epsilon + |h_0 - h_m|\cdot \rho).
	\end{equation}
\end{lemma}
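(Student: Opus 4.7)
The plan is to unfold the definition $D^{\gamma/2}_{m,0}(P;\lambda) = \ln \E_{h\sim P}\exp\bigl(\lambda(\Lgq_m(h) - \Lgh_0(h))\bigr)$ and control the inner expectation by splitting classifiers drawn from $P$ according to the spectral-norm event
\[
E := \{T_h^L \epsilon_m \le \gamma/8\}, \qquad T_h := \max_{l=1,\dots,L}\|W_l^h\|_2.
\]
On $E$, the per-node perturbation satisfies $\|h_i(X,G) - h_j(X,G)\|_\infty \le \epsilon_m T_h^L \le \gamma/8$ whenever $\|g_i - g_j\|_2 \le \epsilon_m$, which is exactly the hypothesis of Lemma~\ref{lemma:margin-loss-difference} (applied with margin parameter $\gamma/2$ in place of $\gamma$). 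That lemma then yields the deterministic bound $\Lgq_m(h) - \Lgh_0(h) \le \frac{K\rho}{\sqrt{2\pi}\sigma}(\epsilon_m + |h_0 - h_m|\rho)$, so the contribution of $E$ to the expectation is at most $\exp\!\bigl(\frac{\lambda K\rho}{\sqrt{2\pi}\sigma}(\epsilon_m + |h_0 - h_m|\rho)\bigr)$.

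Off $E$, I would further split according to Assumption~\ref{ass:concentrated}. On the sub-event $E^c \cap \{\Lgq_m(h) - \Lgh_0(h) \le N_0^{-\alpha} + cK\epsilon_m\}$, the exponent is at most $\lambda(N_0^{-\alpha} + cK\epsilon_m)$, which is $O(1)$ by the hypothesis $\lambda \le N_0^{2\alpha}$ and can be absorbed into the main bound (since $\lambda K\epsilon_m \cdot \text{const}$ is dominated by the advertised right-hand side). On the remaining sub-event $E^c \cap \{\Lgq_m - \Lgh_0 > N_0^{-\alpha} + cK\epsilon_m\}$, Assumption~\ref{ass:concentrated} yields conditional probability $\le e^{-N_0^{2\alpha}}$ given $E^c$, while $\Lgq_m - \Lgh_0 \le 1$ trivially, so the contribution is bounded by $\Pr(E^c)\, e^{-N_0^{2\alpha}} e^{\lambda}$, which is negligible.

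The final ingredient is bounding $\Pr_{h\sim P}(E^c) = \Pr(T_h^L \epsilon_m > \gamma/8)$. Since the MLP weights are i.i.d.\ $\gN(0,\sigma^2)$ with layer widths $\le b$, a standard Gaussian spectral-norm tail gives $\Pr(\|W_l\|_2 > t) \le 2b\exp(-t^2/(2b\sigma^2))$; a union bound over the $L$ layers and the choice $t = (\gamma/(8\epsilon_m))^{1/L}$ combined with the prescribed $\sigma^2 \le (\gamma/8\epsilon_m)^{2/L} / \bigl(2b(\lambda N_0^{-\alpha} + \ln 2bL)\bigr)$ is calibrated so that $\Pr(E^c) \cdot \exp(\lambda(N_0^{-\alpha} + cK\epsilon_m))$ is at most $1$. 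Adding the three contributions gives the inner expectation $\le 3\exp\!\bigl(\frac{\lambda K\rho}{\sqrt{2\pi}\sigma}(\epsilon_m + |h_0 - h_m|\rho)\bigr)$, and taking logarithms delivers the $\ln 3 + \frac{\lambda K\rho}{\sqrt{2\pi}\sigma}(\epsilon_m + |h_0-h_m|\rho)$ bound.

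The main obstacle will be the careful calibration in the third step: showing that the chosen $\sigma^2$ is exactly small enough that Gaussian concentration of the layer-wise spectral norms forces $\Pr(E^c)$ below $\exp(-\lambda(N_0^{-\alpha} + cK\epsilon_m))$ so that the bad-event contribution does not blow up when multiplied by the worst-case exponential $e^{\lambda}$. This is where the factor $(\gamma/8\epsilon_m)^{2/L}$ in the definition of $\sigma^2$ plays its role, and one must track the $1/L$-th power dependencies carefully; the rest of the proof is a mechanical assembly of Lemma~\ref{lemma:margin-loss-difference} (which inherits Lemma~\ref{lemma:csbm_diff_dis} to replace the Lipschitz step in~\cite{ma2021subgroup}) with the three-way decomposition.
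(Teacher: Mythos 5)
Your proposal follows essentially the same route as the paper's proof (itself adapted from Lemma~6 of~\cite{ma2021subgroup}): the same decomposition over the event $T_h^L\epsilon_m\le\gamma/8$, Lemma~\ref{lemma:margin-loss-difference} on the good event, Assumption~\ref{ass:concentrated} plus the trivial bound $\Lgq_m-\Lgh_0\le 1$ on the bad event, and the Gaussian spectral-norm tail with $t=(\gamma/8\epsilon_m)^{1/L}$ and a union bound over layers to exploit the prescribed $\sigma^2$, yielding the factor $3$ and hence $\ln 3$. The only minor imprecision is in your calibration bookkeeping: the choice of $\sigma^2$ gives $\Pr(T_h^L\epsilon_m>\gamma/8)\le e^{-\lambda N_0^{-\alpha}}$, which cancels only the $e^{\lambda N_0^{-\alpha}}$ factor (not the full $e^{\lambda(N_0^{-\alpha}+cK\epsilon_m)}$, and $\lambda N_0^{-\alpha}$ need not be $O(1)$); the remaining structure-dependent term is simply absorbed into the main exponent exactly as in the paper, so the argument goes through unchanged.
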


\begin{proof}
Recall that $D^{\gamma/2}_{m, 0}(P;\lambda) = \ln \E_{h\sim P} e^{\lambda \sbra \gL^{\gamma/4}_m(h) - \gL^{\gamma/2}_0(h)\sket}$. We prove the upper bound of $D^{\gamma/2}_{m, 0}(P;\lambda)$ by decomposing the space $\gH$ into the two regimes: a regime with bounded spectral norms of the model parameters required by Lemma~\ref{lemma:margin-loss-difference}, and its complement. Following Lemma~\ref{lemma:margin-loss-difference}, for any classifier $h$ with parameters $W_1, \ldots, W_L$, we define $T_h :=\max_{l=1,\ldots,L}\|W_l\|_2$. 

We first prove an upper bound on the probability $\Pr(T_h^L\epsilon_m > \frac{\gamma}{8})$ over the drawing of $h\sim P$. For any $h$, as its vectorized MLP parameters $\text{vec}(W_l)$, for each $l=1,\ldots,L$, is sampled from $\gN(0, \sigma^2 I)$, we have the following spectral norm bound~\citep{tropp2015introduction}, for any $t > 0$, 
\[\Pr(\|W_l\|_2 > t) \le 2be^{-\frac{t^2}{2b\sigma^2}},\]
where $b$ is the maximum width of all hidden layers of the MLP. Setting $t=\sbra\frac{\gamma}{8\epsilon_m}\sket^{1/L}$ and applying a union bound, we have that
\[\Pr(T_h^L\epsilon_m > \frac{\gamma}{8}) = \Pr(T_h > \sbra\frac{\gamma}{8\epsilon_m}\sket^{1/L}) \le 2bLe^{-\frac{\sbra \gamma/8\epsilon_m\sket^{2/L}}{2b\sigma^2}}\le e^{-\lambda N_0^{-\alpha}},\]
where the last inequality utilizes the condition $\sigma^2 \le  \frac{\sbra \gamma/8\epsilon_m\sket^{2/L}}{2b(\lambda N_0^{-\alpha} + \ln 2bL)}$. 

For any $h$ satisfying $T_h^L\epsilon_m \le \frac{\gamma}{8}$, by Lemma~\ref{lemma:margin-loss-difference}, we know that $e^{\lambda \sbra \gL^{\gamma/4}_m(h) - \gL^{\gamma/2}_0(h)\sket} \le e^{\frac{ K \lambda \rho}{\sqrt{2\pi}\sigma}   (\epsilon + |h_0 - h_m|\cdot \rho)}$.
For all $h$ such that $T_h^L\epsilon_m > \frac{\gamma}{8}$, by Assumption~\ref{ass:concentrated}, with probability at least $1-e^{-N_0^{2\alpha}}$,
\[e^{\lambda \sbra \gL^{\gamma/4}_m(h) - \gL^{\gamma/2}_0(h)\sket} \le e^{\lambda N_0^{-\alpha} + \frac{ K \lambda \rho}{\sqrt{2\pi}\sigma}   (\epsilon + |h_0 - h_m|\cdot \rho)}.\]
Also note that $ \gL^{\gamma/4}_m(h) - \gL^{\gamma/2}_0(h) \le 1$ trivially holds for any $h$. Therefore we have
\begin{align*}
	& D^{\gamma/2}_{m, 0}(P;\lambda) \\
	=& \ln \E_{h\sim P} e^{\lambda \sbra \gL^{\gamma/4}_m(h) - \gL^{\gamma/2}_0(h)\sket} \nonumber \\
	\le & \ln\sbra \Pr(T_h^L\epsilon_m > \frac{\gamma}{8}) \sbra e^{-N_0^{2\alpha}} \cdot e^\lambda + (1-e^{-N_0^{2\alpha}})\cdot e^{\lambda N_0^{-\alpha} + \frac{ K \lambda \rho}{\sqrt{2\pi}\sigma}   (\epsilon + |h_0 - h_m|\cdot \rho)} \sket \right. \\
    &~~~~ \left. + \Pr(T_h^L\epsilon_m \le \frac{\gamma}{8}) e^{\frac{ K \lambda \rho}{\sqrt{2\pi}\sigma}   (\epsilon + |h_0 - h_m|\cdot \rho)} \sket \nonumber \\
	\le & \ln 3 + \frac{ K \lambda \rho}{\sqrt{2\pi}\sigma}   (\epsilon + |h_0 - h_m|\cdot \rho)\\
\end{align*}
\end{proof}

\setcounter{lemma}{4}

\begin{lemma} \label{lemma:margin-loss}
Define $\mathcal{L}^\gamma(h_i, y_i)= \ind{ h_i[y_i]\le \gamma + \max_{k\ne y_i} h_i[k] }$, if $||h_i - h_j||_\infty \le \frac{\gamma}{4}$, for any $k=1, 2, \cdots, K$, $\gL^{\gamma/2}(h_j, k) \le \gL^{\gamma}(h_i, k)$
\end{lemma}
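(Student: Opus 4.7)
The plan is to prove the lemma by direct argument on the indicator functions, using only the definition of the margin loss and the $\ell_\infty$ closeness of $h_i$ and $h_j$. Since both sides take values in $\{0,1\}$, it suffices to show the implication: whenever $\mathcal{L}^{\gamma/2}(h_j,k)=1$, we also have $\mathcal{L}^{\gamma}(h_i,k)=1$. In other words, if the $\gamma/2$-margin is violated at $h_j$ in class $k$, then the (larger) $\gamma$-margin is also violated at $h_i$ in class $k$. This is the natural direction, because enlarging the margin from $\gamma/2$ to $\gamma$ and simultaneously perturbing the scores by up to $\gamma/4$ should only make violation easier, not harder.

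To execute this, I would first unfold the assumption $\mathcal{L}^{\gamma/2}(h_j,k)=1$ as $h_j[k]\le \gamma/2 + \max_{k'\ne k} h_j[k']$. Then I would use the coordinatewise bound from $\|h_i-h_j\|_\infty \le \gamma/4$ in two places: $h_i[k]\le h_j[k]+\gamma/4$ on the left side, and $h_j[k']\le h_i[k']+\gamma/4$ for every $k'\ne k$ on the right side, which by taking a maximum gives $\max_{k'\ne k} h_j[k'] \le \max_{k'\ne k} h_i[k']+\gamma/4$. Chaining these three inequalities yields
\begin{equation*}
h_i[k]\;\le\; h_j[k]+\tfrac{\gamma}{4}\;\le\; \tfrac{\gamma}{2}+\max_{k'\ne k} h_j[k']+\tfrac{\gamma}{4}\;\le\; \gamma+\max_{k'\ne k} h_i[k'],
\end{equation*}
which is exactly the condition $\mathcal{L}^{\gamma}(h_i,k)=1$.

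There is essentially no obstacle: the argument is a two-line triangle-inequality manipulation, and the constants $\gamma/4$ and $\gamma/2$ are chosen precisely so that two applications of the perturbation bound absorb into the gap $\gamma-\gamma/2=\gamma/2$. The only thing to be careful about is that the perturbation bound must be applied to $h_i[k]-h_j[k]$ in one direction and to $h_j[k']-h_i[k']$ in the other direction, which is why the $\ell_\infty$ (two-sided) form of the assumption is needed; a one-sided bound would not suffice. No additional assumptions on the structure of $h_i,h_j$ (e.g.\ that they come from the GNN/MLP model) are required—the lemma is a purely pointwise stability statement about the margin indicator, which is exactly how it is used inside the proof of Lemma~\ref{lemma:margin-loss-difference}.
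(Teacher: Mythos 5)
Your proof is correct and follows essentially the same route as the paper: both show that $\gL^{\gamma/2}(h_j,k)=1$ implies $\gL^{\gamma}(h_i,k)=1$ by bounding $h_i[k]\le h_j[k]+\gamma/4$ and $\max_{l\ne k}h_j[l]\le \max_{l\ne k}h_i[l]+\gamma/4$ and absorbing the two $\gamma/4$ perturbations into the margin gap. Your chained single display is in fact a cleaner rendering of the paper's two-step argument (which contains a typo in its second display), so there is nothing to fix.
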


We need to proof from $h_j[k] \le \frac{\gamma}{2} + \max_{l\ne k} h_j[l]$ to $h_i[k] \le \gamma + \max_{i\ne k} h_i[l]$
\begin{align*}
    h_j[k] &\le \frac{\gamma}{2} + \max_{l\ne k} h_j[l] \\
    h_j[k] + (h_i[k]-h_j[k]) &\le \frac{\gamma}{2} + \max_{l\ne k} h_j[l] + (h_i[k]-h_j[k]) \\
    h_i[k] &\le \frac{\gamma}{2} + \max_{l\ne k} h_j[l] + (h_i[k]-h_j[k]) \\
    h_i[k] &\le \frac{3}{4} \gamma + \max_{l\ne k} h_j[l] \\ 
\end{align*}

Then we need to proof $\frac{3}{4} \gamma + \max_{l\ne k} h_j[l] \le \gamma + \max_{l\ne k} h_i[l]$
\begin{align*}
    \max_{l\ne k} h_j[l] - \max_{m\ne k} h_i[m] &= \max_{l\ne k}  \left [ h_j[l] - \max_{m\ne k} h_i[m] \right ] \\
    & \le \max_{l \ne k}  \left [ h_j[l] - h_j[l] \right ] \\
    & \le \frac{\gamma}{4}
\end{align*}
The proof complete.

\subsection{Proof details of subgroup generalization Bound for GNNs}
With the bound for $D^{\gamma}_{m, 0}(P;\lambda)$, we can then derive the subgroup generalization Bound for GNNs. 
\textbf{Remark: Notably, the following proof is derived based on~\cite{ma2021subgroup}. For completeness, we show the previous proof as follows, the difference is that we consider a more complicated conditional probability $\rmP(y_i=k|g_i(X, G))$, correlated with the homophily ratio difference, rather than consider the conditional probability as $c$-Lipschitz continuous functions. 
In other words, our proof further strengthens the effect of the structure disparity.}

\setcounter{assumption}{0}
\begin{assumption}[GNN model\label{ass:model}]
We focus on SGC~\cite{wu2019simplifying} with the following components: 
(1) a one-hop mean aggregation function $g$ with $g(X, G)$ denoting the output.
(2) MLP feature transformation $f(g_i(X, G); W_1, W_2, \cdots , W_L)$, where $f$ is a ReLU-activated $L$-layer MLP with $W_1, \cdots, W_L$ as parameters for each layer. The largest width of all the hidden layers is denoted as $b$. 
\end{assumption}
Despite analyzing simple GNN architecture theoretically, similar  with~\cite{ma2021subgroup,Ma2021IsHA,garg2020generalization}, our theory analysis could be easily extended to the higher-order case with empirical success across different GNN architectures shown in Section~\ref{subsec:theory_verify}.
Notably, the following assumptions~\ref{ass:equal} and~\ref{ass:concentrated} are adapted from \cite{ma2021subgroup}.

\begin{assumption}
	\label{assump:bounded-feature}
	Define $B_m:= \max_{i\in V_0\cup V_m}\|g_i(X,G)\|_2$. For any classifier $\tilde{h}\in \gH$ with parameters $\{\widetilde{W}_l\}_{l=1}^L$, assume $\|\widetilde{W}_l\|_F\le C$ for $l=1,\ldots,L$. Assume $B_m, C$ are constants with respect to $N_0$.
\end{assumption}

\setcounter{theorem}{0}
\begin{theorem} [Subgroup Generalization Bound for GNNs]
	\label{theorm:main_appendix}
	Let $\tilde{h}$ be any classifier in $\gH$ with parameters $\{\widetilde{W}_l\}_{l=1}^L$. 
for any $0< m \le M$, $\gamma \ge 0$, and large enough $N_0$, with probability at least $1-\delta$ over the sample of $y^0$, we have
	\begin{equation}
    \begin{aligned}
		\risk_m(\tilde{h}) \le \hLg_0(\tilde{h}) + O\sbra  \frac{2e^{\frac{1}{2}}}{\sqrt{2\pi} \sigma} (\epsilon_m + |h_0 - h_m|\cdot |\mu_0 - \mu_1|) + \frac{b\sum_{l=1}^L\|\widetilde{W}_l\|_F^2}{(\gamma/8)^{2/L}N_0^{\alpha}}(\epsilon_m)^{2/L}  \right. \\ 
        \left. + \frac{1}{N_0^{1-2\alpha}} + \frac{1}{N_0^{2\alpha}} \ln\frac{LC(2B_m)^{1/L}}{\gamma^{1/L}\delta} \sket.
    \end{aligned}
	\end{equation}
\end{theorem}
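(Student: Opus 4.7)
The plan is to build the bound by plugging the new structural-pattern ingredient (Lemma~\ref{lemma:csbm_diff}) into the non-i.i.d.\ PAC-Bayes scaffold of Theorem~\ref{thm:PAC-Bayes-deterministic}. The starting point is that for any deterministic classifier $\tilde{h}\in\gH$, Theorem~\ref{thm:PAC-Bayes-deterministic} gives
\[
\risk_m(\tilde{h})\le \hLg_0(\tilde{h})+\tfrac{1}{\lambda}\bigl(2(\KL(Q\|P)+1)+\ln\tfrac{1}{\delta}+\tfrac{\lambda^2}{4N_0}+D^{\gamma/2}_{m,0}(P;\lambda)\bigr),
\]
provided we choose a prior $P$ on $\gH$ that is independent of $y^0$ and a posterior $Q$ concentrated around $\tilde{h}$. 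Following the standard PAC-Bayesian recipe for ReLU networks, I will take $P$ to be a zero-mean Gaussian on the vectorized MLP weights with variance $\sigma^2 \le (\gamma/8\epsilon_m)^{2/L}/(2b(\lambda N_0^{-\alpha}+\ln 2bL))$ and $Q$ the same Gaussian recentered at $\tilde{W}_1,\dots,\tilde{W}_L$. This choice controls $\KL(Q\|P)=O(\sigma^{-2}\sum_l\|\tilde{W}_l\|_F^2)$ and guarantees, via a union-bound spectral-norm argument, that $Q$ lies in the ``$\gamma/8$-perturbation ball'' required by the theorem.

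The heart of the argument is to bound the expected loss discrepancy $D^{\gamma/2}_{m,0}(P;\lambda)=\ln\E_{h\sim P}\exp\{\lambda(\Lgq_m(h)-\Lgh_0(h))\}$, and this is where the structural-disparity terms enter. I will split $\gH$ into the event $\{T_h^L\epsilon_m\le \gamma/8\}$ and its complement, where $T_h:=\max_l\|W_l\|_2$. On the complement Assumption~\ref{ass:concentrated} supplies an $N_0^{-\alpha}$ concentration control directly. On the ``good'' event, a node-by-node decomposition analogous to Lemma~\ref{lemma:margin-loss-difference} reduces the per-pair loss gap to two pieces: (i)~an MLP-Lipschitz piece $\|h_i-h_j\|_\infty\le \epsilon_m T_h^L$ which, together with the halved margin $\gamma/2$ vs.\ $\gamma$, forces $\gL^{\gamma/2}(h_j,k)\le \gL^{\gamma}(h_i,k)$; and (ii)~a conditional-probability piece $|\eta_k(j)-\eta_k(i)|$ for a training node $i\in V_0$ and a nearby test node $j\in V_m^{(i)}$. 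Piece (ii) is precisely the quantity controlled by Lemma~\ref{lemma:csbm_diff} (applied in the generalized CSBM-S form via Definition~1, and in its distance-aware variant used in Lemma~\ref{lemma:csbm_diff_dis}), yielding the bound $\tfrac{\rho}{\sqrt{2\pi}\sigma}(\epsilon_m+|h_0-h_m|\rho)$ on each $|\eta_k(j)-\eta_k(i)|$, summed over $K$ classes.

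Combining the two regimes inside the $\ln\E_{h\sim P}\exp(\cdot)$ then yields $D^{\gamma/2}_{m,0}(P;\lambda)\le \ln 3+\lambda\cdot \tfrac{K\rho}{\sqrt{2\pi}\sigma}(\epsilon_m+|h_0-h_m|\rho)$, which is exactly term~(a) of the target bound once divided by $\lambda$. Term~(b) comes from $\tfrac{1}{\lambda}\KL(Q\|P)$ after plugging in the admissible $\sigma^2$ and simplifying $\sigma^{-2}\sum_l\|\tilde W_l\|_F^2$ to $b\sum_l\|\tilde{W}_l\|_F^2(\gamma/8)^{-2/L}\epsilon_m^{2/L}(\ln 2bL+\lambda N_0^{-\alpha})$, and the residual $\mathbf{R}$ gathers $\ln(1/\delta)$, $\lambda/N_0$, and the $e^{-N_0^{2\alpha}}$ concentration leftover from Assumption~\ref{ass:concentrated}.

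The last step is to pick $\lambda=\Theta(N_0^{2\alpha})$ to balance the $\lambda/N_0$ term against the $N_0^{-\alpha}$ pieces, which produces the $1/N_0^{1-2\alpha}$ and $N_0^{-2\alpha}\ln(\cdots)$ rates collected in $\mathbf{R}$, and to absorb $\ln 3$ and constants into the $O(\cdot)$. The main obstacle will be the per-sample analysis in the ``good'' regime: the original bound in~\cite{ma2021subgroup} relies on a $c$-Lipschitz assumption on $\rmP(y_i=k\mid g_i(X,G))$, which does \emph{not} hold when two nearby nodes sit in different structural patterns; replacing that step with Lemma~\ref{lemma:csbm_diff_dis} under the generalized CSBM-S assumption is the essential new ingredient, and verifying that Assumptions~\ref{ass:equal} and~\ref{ass:concentrated} remain compatible with a mixture of homophilic/heterophilic subgroups sharing the same degree distribution $p^{(i)}+q^{(i)}$ is the subtle bookkeeping that must be done carefully.
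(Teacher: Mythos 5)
Your plan follows the paper's own route almost step for step: the same scaffold (Theorem~\ref{thm:PAC-Bayes-deterministic}), the same Gaussian prior/posterior pair, the same two-regime split of the expected loss discrepancy $D^{\gamma/2}_{m,0}(P;\lambda)$ with Assumption~\ref{ass:concentrated} handling the large-spectral-norm event, the same near-set decomposition in which the $c$-Lipschitz assumption of \cite{ma2021subgroup} is replaced by the structural lemma (Lemma~\ref{lemma:csbm_diff_dis}) to control $|\eta_k(j)-\eta_k(i)|$, and the same final choice $\lambda=N_0^{2\alpha}$. So in strategy there is nothing different from the paper, and the part you identify as the essential new ingredient is indeed the one the paper relies on.

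The one concrete omission is the treatment of the perturbation-ball condition. You constrain the prior variance only by $\sigma^2 \le (\gamma/8\epsilon_m)^{2/L}/\bigl(2b(\lambda N_0^{-\alpha}+\ln 2bL)\bigr)$, which is what the discrepancy bound needs, but the guarantee $\Pr_{h\sim Q}\bigl(\max_{i\in V_0\cup V_m}\|h_i(X,G)-\tilde h_i(X,G)\|_\infty<\gamma/8\bigr)>\tfrac12$ (Lemma~\ref{lemma:neyshabur}) additionally requires $\sigma \le \gamma/\bigl(84LB_m\beta^{L-1}\sqrt{b\ln(4bL)}\bigr)$ for some $\beta$ approximating $\tilde\beta=\bigl(\prod_{l=1}^L\|\widetilde W_l\|_2\bigr)^{1/L}$. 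Since the prior must not depend on the learned weights, the paper fixes a grid of at most $LC(2B_m)^{1/L}/\gamma^{1/L}$ values of $\beta$ covering $(\gamma/2B_m)^{1/L}\le\tilde\beta\le C$ (classifiers outside this range satisfy the bound trivially) and applies a union bound over the grid; this is exactly the origin of the $\frac{1}{N_0^{2\alpha}}\ln\frac{LC(2B_m)^{1/L}}{\gamma^{1/L}\delta}$ term in the statement, whereas your residual $\mathbf{R}$ only records $\ln(1/\delta)$. Without the second constraint on $\sigma$ and the $\beta$-covering step, the posterior-concentration claim fails for classifiers with large spectral norms and the stated log factor cannot be recovered; adding this bookkeeping (standard from \cite{neyshabur2017pac}, and carried out as the second step of the paper's proof) completes your argument, which is otherwise sound and identical in approach to the paper's.
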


The proof of Theorem~\ref{theorm:main_appendix} relies on the combination of Theorem~\ref{thm:PAC-Bayes-deterministic}, Lemma~\ref{lemma:D-bound-appendix}, and an intermediate result of the Theorem 1 in \cite{neyshabur2017pac} (which we state as Lemma~\ref{lemma:neyshabur} below). 

\begin{lemma} [\cite{neyshabur2017pac}]
	\label{lemma:neyshabur}
	Let $\tilde{h}$ be any classifier in $\gH$ with parameters $\{\widetilde{W}_l\}_{l=1}^L$. Define $\tilde{\beta} = \sbra\prod_{l=1}^L\|\widetilde{W}_l\|_2\sket^{1/L}$. Let $\{U_l\}_{l=1}^L$ be the random perturbation to be added to $\{\widetilde{W}_l\}_{l=1}^L$ and $\text{vec}(\{U_l\}_{l=1}^L)\sim \gN(0, \sigma^2 I)$. Define $B_m :=\max_{i\in V_0\cup V_m}\|g_i(X, G)\|_2$. If 
	\[\sigma \le \frac{\gamma}{84LB_m\beta^{L-1}\sqrt{b\ln(4bL)}},\]
	and $\beta$ is any constant satisfying $|\tilde{\beta} - \beta| \le \frac{\tilde{\beta}}{L}$, then with respect to the random draw of $\{U_l\}_{l=1}^L$,
	\[\Pr(\max_{i\in V_0\cup V_m}\|f(g_i(X, G); \{\widetilde{W}_l\}_{l=1}^L) - f(g_i(X, G); \{\widetilde{W}_l + U_l\}_{l=1}^L)\|_{\infty} < \frac{\gamma}{8} ) > \frac{1}{2}.\]
\end{lemma}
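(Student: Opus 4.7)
The plan is to combine a high-probability spectral-norm bound on each Gaussian perturbation $U_l$ with a deterministic layer-wise telescoping argument for how perturbations propagate through the ReLU MLP $f$. This is a classical Neyshabur-style weight-perturbation lemma, so the route is well-tread but requires careful bookkeeping of constants to match the $1/84$ slack in the hypothesis on $\sigma$.

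First, I would bound $\|U_l\|_2$ with high probability. Since $\mathrm{vec}(U_l)\sim \gN(0,\sigma^2 I)$ and $U_l$ has at most $b$ rows and $b$ columns, standard Gaussian matrix concentration (e.g.\ the operator-norm tail bound via Slepian/Gordon or Vershynin Thm.~4.4.5) gives $\Pr(\|U_l\|_2 > \sigma\sqrt{2b\ln(4bL)}) \leq 1/(2L)$. A union bound over $l=1,\ldots,L$ then yields that with probability at least $1/2$ every $\|U_l\|_2$ is simultaneously bounded by $\sigma\sqrt{2b\ln(4bL)}$. Plugging in the hypothesis on $\sigma$, this forces $\|U_l\|_2 \leq \gamma/(42L B_m \beta^{L-1})$ for every $l$, and together with $|\tilde{\beta}-\beta|\leq \tilde{\beta}/L$ places us in the small-perturbation regime $\|U_l\|_2\leq \|\widetilde{W}_l\|_2/L$ needed below.

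Second, conditional on this good event, I would show deterministically that the output perturbation is $<\gamma/8$. Writing $f_\ell, f_\ell'$ for the outputs of the first $\ell$ layers with weights $\widetilde{W}$ and $\widetilde{W}+U$ respectively, the $1$-Lipschitz property of ReLU and sub-multiplicativity of the operator norm give the recursion
\begin{equation*}
\|f_\ell'(x)-f_\ell(x)\|_2 \leq \|\widetilde{W}_\ell\|_2\,\|f_{\ell-1}'(x)-f_{\ell-1}(x)\|_2 + \|U_\ell\|_2\,\|f_{\ell-1}'(x)\|_2.
\end{equation*}
Iterating and bounding $\|f_{\ell-1}'(x)\|_2 \leq B_m \prod_{\ell'<\ell}(\|\widetilde{W}_{\ell'}\|_2+\|U_{\ell'}\|_2)$ yields
\begin{equation*}
\|f_L'(x)-f_L(x)\|_2 \leq B_m\sum_{\ell=1}^L \|U_\ell\|_2\prod_{\ell'\neq\ell}\bigl(\|\widetilde{W}_{\ell'}\|_2+\|U_{\ell'}\|_2\bigr).
\end{equation*}
On the good event each factor is at most $(1+1/L)\|\widetilde{W}_{\ell'}\|_2$, so the product is bounded by $(1+1/L)^{L-1}\tilde{\beta}^{L-1}\leq e\tilde{\beta}^{L-1}$, and $|\tilde{\beta}-\beta|\leq \tilde{\beta}/L$ upgrades this to at most $e^{2}\beta^{L-1}$. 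Substituting the per-layer bound on $\|U_\ell\|_2$ and using $\|\cdot\|_\infty\leq \|\cdot\|_2$ then reduces the desired $\gamma/8$ output bound to precisely the hypothesis $\sigma\leq \gamma/(84 L B_m\beta^{L-1}\sqrt{b\ln(4bL)})$, taking the maximum over $i\in V_0\cup V_m$ via the definition of $B_m$.

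The main obstacle is the telescoping step: one must propagate perturbations through $L$ non-linear layers while simultaneously bounding each intermediate activation $\|f_{\ell-1}'(x)\|_2$ by a product of \emph{perturbed} spectral norms, and then absorb the $(1+1/L)^L\leq e$ and $(\tilde{\beta}/\beta)^{L-1}\leq e$ slack without inflating the constant past what the hypothesis on $\sigma$ affords. The constant $84$ in the lemma absorbs precisely these two $e$-factors, the $\sqrt{2}$ from the Gaussian matrix concentration, and the target tolerance $1/8$; tracking these multiplicative losses in order is the only delicate accounting in the proof.
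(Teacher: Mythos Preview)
The paper does not actually prove this lemma: it is quoted verbatim as ``an intermediate result of the Theorem~1 in \cite{neyshabur2017pac}'' and used as a black box in the proof of Theorem~\ref{thm:main-appendix}. Your proposal correctly reconstructs the argument from that reference --- Gaussian operator-norm concentration on each $U_l$, a union bound to get the good event with probability $\ge 1/2$, and then the deterministic layer-wise telescoping bound on how perturbations propagate through a ReLU MLP --- so it matches the (cited) proof in spirit and detail.
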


Then we prove Theorem~\ref{theorm:main}, a re-stata as Theorem~\ref{thm:main-appendix} with replacing the notation of training node subgroup $V_\text{train}$ to $V_0$.
\setcounter{theorem}{0}
\begin{theorem} [Subgroup Generalization Bound for GNNs (Adaption of Theorem 3 in~\cite{ma2021subgroup})]
	\label{thm:main-appendix}
	Let $\tilde{h}$ be any classifier in $\gH$ with parameters $\{\widetilde{W}_l\}_{l=1}^L$. Under Assumptions~\ref{ass:data},~\ref{ass:model},~\ref{ass:equal},~and~\ref{ass:concentrated}, for any $0< m \le M$, $\gamma \ge 0$, and large enough $N_0$, with probability at least $1-\delta$ over the sample of $y^0$, we have
	\begin{equation}
        \begin{aligned}
            \risk_m(\tilde{h}) \!\le\! \hLg_\text{tr}(\tilde{h}) \!+ \!O\sbra   \frac{K\rho}{\sqrt{2\pi}\sigma}   (\epsilon_m \!+\! |h_\text{tr} \!-\! h_m|\cdot \rho) + \frac{b\sum_{l=1}^L\|\widetilde{W}_l\|_F^2}{(\gamma/8)^{2/L}N_\text{tr}^{\alpha}}(\epsilon_m)^{2/L} \!+ \!\frac{1}{N_0^{1-2\alpha}}\! +\! \frac{1}{N_0^{2\alpha}} \ln\frac{1}{\delta}\sket
        \end{aligned}
    \end{equation}
    
\end{theorem}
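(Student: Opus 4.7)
The plan is to derive the bound by extending the PAC-Bayesian framework for deterministic classifiers (Theorem~\ref{thm:PAC-Bayes-deterministic}) to the Generalized CSBM-S setting, where the crucial novelty is replacing the $c$-Lipschitz assumption on $\Pr(\ry \mid g_i(X, G))$ used in \cite{ma2021subgroup} with the CSBM-based conditional probability bound from Lemma~\ref{lemma:csbm_diff_detail}. This substitution is what injects the homophily ratio difference $|h_\text{tr}-h_m|\cdot \rho$ into the final bound alongside the aggregated feature distance $\epsilon_m$.

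First, I would instantiate Theorem~\ref{thm:PAC-Bayes-deterministic} by choosing the ``prior'' $P$ over $\gH$ as a Gaussian $\gN(0,\sigma^2 I)$ on the vectorized MLP parameters, with $\sigma^2$ calibrated as in Lemma~\ref{lemma:D-bound-appendix} to be small enough to satisfy the perturbation condition $\sigma^2 \le (\gamma/8\epsilon_m)^{2/L}/[2b(\lambda N_\text{tr}^{-\alpha} + \ln 2bL)]$. Next, I would construct the posterior $Q$ centered at the learned classifier $\tilde{h}$ with the same covariance, invoking Lemma~\ref{lemma:neyshabur} (Neyshabur et al.) to guarantee that $\Pr_{h\sim Q}[\max_i \|h_i - \tilde{h}_i\|_\infty < \gamma/8] > 1/2$; this controls the KL term $\KL(Q\|P) = O(b\sum_l \|\widetilde{W}_l\|_F^2/\sigma^2)$, which after the choice of $\sigma^2$ yields term (b) of the target bound.

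The technical heart of the argument is bounding the expected loss discrepancy $D^{\gamma/2}_{m,\text{tr}}(P;\lambda)$, which I would handle via Lemma~\ref{lemma:D-bound-appendix}. Following that proof, I split the sample space of $h\sim P$ according to whether $T_h^L \epsilon_m \le \gamma/8$. On the bounded-spectral-norm regime, Lemma~\ref{lemma:margin-loss-difference} applies, and this is where the substitution matters: writing $\Lgh_m(h) - \Lg_\text{tr}(h)$ as a sum of per-node differences between $\eta_k(j)\gL^{\gamma/2}(h_j,k)$ and $\eta_k(i)\gL^{\gamma}(h_i,k)$, the loss-difference piece $\gL^{\gamma/2}(h_j,k)-\gL^{\gamma}(h_i,k)$ vanishes by the margin-propagation Lemma~\ref{lemma:margin-loss} while the conditional-probability piece $|\eta_k(j)-\eta_k(i)|$ is now bounded by $\tfrac{\rho}{\sqrt{2\pi}\sigma}(\epsilon_m + |h_\text{tr}-h_m|\rho)$ via Lemma~\ref{lemma:csbm_diff_dis} rather than by a Lipschitz constant times $\epsilon_m$. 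Summing over $K$ classes gives term (a). On the complement regime, spectral tail bounds for Gaussian random matrices together with Assumption~\ref{ass:concentrated} ensure that the bad event contributes only an exponentially small $e^{-N_\text{tr}^{2\alpha}}$ correction.

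The final step is to optimize $\lambda$ in Theorem~\ref{thm:PAC-Bayes-deterministic}: plugging in the bound for $D^{\gamma/2}_{m,\text{tr}}$ along with $\KL(Q\|P)$, setting $\lambda \asymp N_\text{tr}^{2\alpha}$ (within the allowable range $0<\lambda \le N_\text{tr}^{2\alpha}$), and combining terms produces the three contributions (a), (b), and the residual $\mathbf{R}=\tfrac{1}{N_\text{tr}^{1-2\alpha}} + \tfrac{1}{N_\text{tr}^{2\alpha}}\ln \tfrac{LC(2B_m)^{1/L}}{\gamma^{1/L}\delta}$, where the log factor absorbs a standard covering-number argument over the value of $\beta$ required to apply Lemma~\ref{lemma:neyshabur} uniformly over $\tilde{h}$. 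The main obstacle I anticipate is the last bookkeeping step: coordinating the choice of $\sigma^2$ (which must be small enough for the perturbation bound and the concentration condition, yet large enough to keep $\KL(Q\|P)$ manageable) with the choice of $\lambda$, while simultaneously ensuring the covering argument over $\beta$ introduces only the claimed logarithmic overhead; the conditional-probability step itself is largely mechanical once Lemma~\ref{lemma:csbm_diff_detail} is available.
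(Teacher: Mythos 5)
Your proposal is correct and follows essentially the same route as the paper's proof: Gaussian prior/posterior calibrated to satisfy both the perturbation condition of Lemma~\ref{lemma:neyshabur} and the variance condition of Lemma~\ref{lemma:D-bound-appendix}, bounding the expected loss discrepancy by splitting on $T_h^L\epsilon_m \le \gamma/8$ and replacing the Lipschitz assumption of \cite{ma2021subgroup} with Lemma~\ref{lemma:csbm_diff_dis} inside Lemma~\ref{lemma:margin-loss-difference}, then setting $\lambda = N_0^{2\alpha}$ and finishing with the union bound over the $\beta$-cover that yields the $\ln\frac{LC(2B_m)^{1/L}}{\gamma^{1/L}\delta}$ factor. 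No gaps; the bookkeeping you flag (coordinating $\sigma^2$, $\lambda$, and the covering argument) is handled in the paper exactly as you outline.
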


\begin{proof} Note that, the following proof is adopted from~\cite{ma2021subgroup}.
There are two main steps in the proof. In the first step, for a given constant $\beta > 0$, we first define the ``prior'' $P$ and the ``posterior'' $Q$ on $\gH$ in a way complying the conditions in Lemma~\ref{lemma:D-bound-appendix} and Lemma~\ref{lemma:neyshabur}. Then for all classifiers with parameters satisfying $|\tilde{\beta} - \beta| \le \frac{\tilde{\beta}}{L}$, where $\tilde{\beta} = \sbra\prod_{l=1}^L\|\widetilde{W}_l\|_2\sket^{1/L}$, we can derive a generalization bound by applying Theorem~\ref{thm:PAC-Bayes-deterministic} and Lemma~\ref{lemma:D-bound-appendix}. In the second step, we investigate the number of $\beta$ we need to cover all possible relevant classifier parameters and apply a union bound to get the final bound. 
The second step is essentially the same as~\cite{neyshabur2017pac} while the first step differs by the need of incorporating Lemma~\ref{lemma:D-bound-appendix}.

We first show the first step. Given a choice of $\beta$ independent of the training data, let
\[\sigma = \min\sbra \frac{\sbra \gamma/8\epsilon_m\sket^{1/L}}{\sqrt{2b(\lambda N_0^{-\alpha} + \ln 2bL)}}, \frac{\gamma}{84LB_m\beta^{L-1}\sqrt{b\ln(4bL)}} \sket.\]
Assume the ``prior'' $P$ on $\gH$ is defined by sampling the vectorized MLP parameters from $\gN(0, \sigma^2 I)$; and the ``posterior'' $Q$ on $\gH$ is defined by first sampling a set of random perturbations $\{U_l\}_{l=1}^L$ with $\text{vec}(\{U_l\}_{l=1}^L)\sim \gN(0, \sigma^2 I)$ and then adding them to $\{\widetilde{W}_l\}_{l=1}^L$, the parameters of $\tilde{h}$. Then for any $\tilde{h}$ with $\{\widetilde{W}_l\}_{l=1}^L$ satisfying $|\tilde{\beta} - \beta| \le \frac{\tilde{\beta}}{L}$, by Lemma~\ref{lemma:neyshabur}, we have 
\[\Pr_{h\sim Q}\sbra \max_{i\in V_0\cup V_m}|h_i(X, G) - \tilde{h}_i(X, G)|_{\infty} < \frac{\gamma}{8}\sket > \frac{1}{2}.\]
Therefore, by applying Theorem~\ref{thm:PAC-Bayes-deterministic}, we know the bound~(\ref{eq:general-deterministic}) holds for $\tilde{h}$, i.e., with probability at least $1-\delta$,
\begin{align}
	& \risk_m(\tilde{h}) - \hLg_0(\tilde{h}) \nonumber \\
	\le & \frac{1}{\lambda}\sbra 2(\KL(Q\|P) + 1) + \ln\frac{1}{\delta} + \frac{\lambda^2}{4N_0} + D^{\gamma/2}_{m, 0}(P;\lambda) \sket \nonumber \\
	\le & \frac{1}{\lambda}\sbra 2(\KL(Q\|P) + 1) + \ln\frac{1}{\delta} + \frac{\lambda^2}{4N_0} + \ln 3 + \frac{\lambda K\rho}{\sqrt{2\pi}\sigma}   (\epsilon_m + |h_\text{tr} - h_m|\cdot \rho) \sket \label{eq:lemma-D-bound} \\
	\le & \frac{2}{N_0^{2\alpha}}\KL(Q\|P) + \frac{1}{N_0^{2\alpha}}\sbra \ln\frac{3}{\delta} + 2 \sket + \frac{1}{4N_0^{1-2\alpha}} +  \frac{K\rho}{\sqrt{2\pi}\sigma}   (\epsilon_m + |h_\text{tr} - h_m|\cdot \rho) , \label{eq:sqrt-N}
\end{align}
where in~(\ref{eq:lemma-D-bound}) we have applied Lemma~\ref{lemma:D-bound-appendix} to bound $D^{\gamma/2}_{m, 0}(P;\lambda)$ under Assumptions~\ref{ass:data},~\ref{ass:equal},~and~\ref{ass:concentrated}; and in~(\ref{eq:sqrt-N}) we have set $\lambda =N_0^{2\alpha}$. 

Moreover, since both $P$ and $Q$ are normal distributions, we know that 
\[\KL(Q\|P) \le \frac{\sum_{l=1}^L \|\widetilde{W}_l\|_F^2}{2\sigma^2}.\]

By Assumption~\ref{assump:bounded-feature}, both $B_m$ and $C$ are constant with respect to $N_0$. Later we will show that we only need $\beta \le C$. Therefore, for large enough $N_0$, we can have
\[ \frac{\sbra \gamma/8\epsilon_m\sket^{1/L}}{\sqrt{2b(N_0^{\alpha} + \ln 2bL)}} < \frac{\gamma}{84LB_m\beta^{L-1}\sqrt{b\ln(4bL)}}, \]
which implies, 
\[ \sigma = \frac{\sbra \gamma/8\epsilon_m\sket^{1/L}}{\sqrt{2b(N_0^{\alpha} + \ln 2bL)}},\]
and hence,
\begin{align}
	\KL(Q\|P) \le \frac{b(N_0^{\alpha} + \ln 2bL)\sum_{l=1}^L \|\widetilde{W}_l\|_F^2}{(\gamma/8)^{2/L}} (\epsilon_m)^{2/L}. \label{eq:KL-upper-bound}
\end{align}

Therefore, with probability at least $1-\delta$,
\begin{align}
	& \risk_m(\tilde{h}) - \hLg_0(\tilde{h}) \nonumber \\
	\le & \frac{K\rho}{\sqrt{2\pi} \sigma}(\epsilon_m + |h_0 - h_m|\cdot \rho) + \frac{2}{N_0^{2\alpha}}\KL(Q\|P) + \frac{1}{N_0^{2\alpha}}\sbra \ln\frac{3}{\delta} + 2 \sket + \frac{1}{4N_0^{1-2\alpha}} \nonumber \\
	\le & O\sbra \frac{K\rho}{\sqrt{2\pi} \sigma}(\epsilon_m + |h_0 - h_m|\cdot \rho) + \frac{b\sum_{l=1}^L\|\widetilde{W}_l\|_F^2}{(\gamma/8)^{2/L}N_0^{\alpha}}(\epsilon_m)^{2/L} + \frac{1}{N_0^{1-2\alpha}} + \frac{1}{N_0^{2\alpha}} \ln\frac{1}{\delta} \sket. \label{eq:single-beta}
\end{align}
 
Then we show the second step, i.e., finding out the number of $\beta$ we need to cover all possible relevant classifier parameters. Similarly as \cite{neyshabur2017pac}, we will show that we only need to consider $(\frac{\gamma}{2B_m})^{1/L}\le \tilde{\beta} \le C$ (recall that $\|\widetilde{W}_l\|_F\le C, l=1,\ldots,L$).
For any $\tilde{\beta}$ outside this range, the bound~(\ref{eq:main-appendix}) automatically holds.
If $\tilde{\beta} < (\frac{\gamma}{2B_m})^{1/L}$, then for any node $i\in V_0$, $\|\tilde{h}_i(X, G)\|_{\infty} < \frac{\gamma}{2}$, which implies $\hLg_0(\tilde{h}) = 1$ as the difference between any two output logits for any training node is smaller than $\gamma$. Also noticing that $\risk_m(\tilde{h}) \le 1$ by definition, so the bound~(\ref{eq:main-appendix}) trivially holds. And for $\tilde{\beta}$ in this range, $|\beta - \tilde{\beta}|\le \frac{1}{L}(\frac{\gamma}{2B_m})^{1/L}$ is a sufficient condition for $\beta$ to satisfy $|\tilde{\beta} - \beta| \le \frac{\tilde{\beta}}{L}$, and we need at most $\frac{LC(2B_m)^{1/L}}{\gamma^{1/L}}$ of $\beta$ to cover all $\tilde{\beta}$ in the above range. Taking a union bound on all such $\beta$, which is equivalent to replace $\delta$ with $\frac{\delta}{\frac{LC(2B_m)^{1/L}}{\gamma^{1/L}}}$ in~(\ref{eq:single-beta}), it gives us the final result: with probability at least $1-\delta$,

\begin{equation}
    \begin{aligned}
        \label{eq:main-appendix}
		\risk_m(\tilde{h}) \le \hLg_0(\tilde{h}) + O\sbra  \frac{K\rho}{\sqrt{2\pi} \sigma} (\epsilon_m + |h_0 - h_m|\cdot \rho) + \frac{b\sum_{l=1}^L\|\widetilde{W}_l\|_F^2}{(\gamma/8)^{2/L}N_0^{\alpha}}(\epsilon_m)^{2/L}  \right. \\ 
        \left. + \frac{1}{N_0^{1-2\alpha}} + \frac{1}{N_0^{2\alpha}} \ln\frac{LC(2B_m)^{1/L}}{\gamma^{1/L}\delta} \sket.
    \end{aligned}
\end{equation}
\end{proof}

\section{Experiment details\label{app:experiment_details}}
\subsection{Hardware \& Software Environment\label{app:environment}}
The experiments are performed on two Linux servers (CPU: Intel(R) Xeon(R) CPU E5-2690 v4 @
2.60GHz, Operation system: Ubuntu 16.04.6 LTS). For GPU resources, four NVIDIA Tesla V100
cards are utilized The python libraries we use to implement our experiments are PyTorch 1.12.1 and PyG 2.1.0.post1. The maximum time for training one epoch is no more than one minute.

\subsection{Model details \& hyperparameter settings \& results\label{app:model_detail}}

In this section, we provide details for baseline methods, hyperparameter search space, and the accuracy results.

\textbf{Details for baseline methods}
We provide details for baseline methods in this section.
MLP, GCN ~\cite{Kipf2016SemiSupervisedCW}, GAT ~\cite{velickovic2017graph}, and SGC \cite{wu2019simplifying} are classical baseline models, we directly adopt the implementation of them from Pytorch Geometric. Notice that, SRGNN and EERM are two baseline method specifically for the Graph OOD scenario.

\begin{itemize}
    \item GLNN, as proposed by ~\cite{Zhang2021GraphlessNN}, address the scalability challenges faced by Graph Neural Networks (GNNs) in practical industry deployments due to their data dependency. By leveraging the strengths of both GNNs and MLPs through knowledge distillation (KD), the authors demonstrate that MLP performance can be significantly improved. The authors' code can be found at  \url{https://github.com/snap-research/graphless-neural-networks}. In this work, we adopt GCN as the teacher model. 
    \item APPNP, developed by ~\cite{Klicpera2018PredictTP}, leverages the relationship between graph convolutional networks (GCN) and PageRank to develop an enhanced propagation scheme based on personalized PageRank. APPNP focuses on addressing the limitations of traditional neural message-passing algorithms in graph-based semi-supervised classification, which only considers a small, fixed neighborhood of nodes. The author's code is available at \url{https://github.com/gasteigerjo/ppnp}.
    \item GPRGNN, introduced by ~\cite{chienadaptive}, aggregates features across multiple steps and subsequently combines them linearly, with the weights of the linear combination being learned during model training. The original features prior to aggregation are also incorporated into the combination. The authors' code is available at \url{https://github.com/jianhao2016/GPRGNN}.
    \item GCNII, developed by ~\cite{chen2020simple}, is a deep learning method for graph-structured data that extends the traditional GCN model by incorporating two innovative yet straightforward techniques: "Initial residual" and "Identity mapping." These techniques effectively address the over-smoothing issue commonly encountered in shallow GCN models. The author's code is available at \url{https://github.com/chennnM/GCNII}.
    \item SRGNN, introduced by \cite{zhu2021shift}, is a novel framework incorporating a shift-robust regularizer that encourages the learned representation to enhance the ood generalization capabilities of GNNs and overcome the limitations posed by localized graph training data. 
    The author's code is available at \url{https://github.com/GentleZhu/Shift-Robust-GNNs}.
    \item EERM, introduced by \cite{wu2022handling}, proposes a principled methodology to identify and quantify these factors in graph data and incorporate a novel invariance-inducing regularization term into the GNN training objective.
    The author's code is available at \url{https://github.com/qitianwu/GraphOOD-EERM}.
\end{itemize}

\textbf{hyperparameter searching range}
For model-specific parameters, we use the same notations for the following models considering brevity. Details are shown as follows.
\begin{itemize}
    \item $\lambda$:  For GLNN, $\lambda$ refers to the weight of knowledge distillation loss. For GCNII, $\lambda$ refers to the strength of identity mapping. 
    \item $\alpha$: For APPNP and GPRGNN, $\alpha$ refers to the teleport probability.  For GCNII, $\alpha$ refers to the strength of residual connection. 
    \item $K$: For SGC, APPNP, and GPRGNN, $K$ refers to the number of transformation layers. 
\end{itemize}

In hyperparameter searching, we utilize the common range of hyperparameters for most models. 
Notice that, there are particular models where the reported optimal parameters in their papers are quite specific. 
In order to achieve better implementation results, we expand the search range on top of the original reported set for these models.

\begin{table}[htbp]
  \centering
  \caption{Hyperparameter searching range for baseline models on Cora, CiteSeer, PubMed}
    \scalebox{0.45}{\begin{tabular}{c|ccccccccc}
    \toprule
    \toprule
    \multicolumn{10}{c}{\textbf{Hyperparameters for Baseline GNNs}} \\
    \midrule
Models\textbackslash{}Hyperparameters & lr    & weight\_decay & dropout & hidden & \# layers & Gat heads & $\lambda$ & $\alpha$ & $K$ \\
    \midrule
     GCN & \{1e-2, 5e-2, 5e-3\}  & \{0,  5e-4, 5e-5\} & \{0., 0.5, 0.8\} & \{16, 32, 64\}    & 2    & -     & -     & -     & -    \\
           SGC & \{1e-2, 5e-2, 5e-3\}  & \{0, 5e-4, 5e-5\} & \{0., 0.5, 0.8\} & \{16, 32, 64\}    & \{1,2\}    & -     & -      & -     & \{1,2,3\}    \\
 GAT & \{1e-2, 5e-2, 5e-3\} & \{0, 5e-4, 5e-5\} & \{0., 0.5, 0.6, 0.8\} & \{8, 16, 32, 64\} & 2 & \{1, 4, 8\} & - & - & - \\          
           GLNN & \{1e-2, 5e-2, 5e-3\}  & \{0, 5e-4, 5e-5\} & \{0., 0.5, 0.8\} & \{16, 32, 64\}    & \{1, 2\}    & -     & \{0.0-1.0\}     & -     & -    \\
         APPNP & \{1e-2, 5e-2, 5e-3\}  & \{0, 5e-4, 5e-5\} & \{0., 0.5, 0.8\} & \{8, 16, 32, 64\}    & 2    & -     & -   & \{0.0-1.0\}   & \{5,6,7,8,9,10\}        \\
           MLP & \{1e-2, 5e-2, 5e-3\}  & \{0, 5e-4, 5e-5\} & \{0., 0.5, 0.8\} & \{16, 32, 64\}    & 2     & -     & -     & -     & -    \\
            GPRGNN & \{1e-2, 5e-2, 5e-3\}  & \{0, 1e-3, 1e-5, 1e-4, 5e-4, 5e-5\} & \{0., 0.2, 0.5, 0.8\} & \{16, 32, 64, 128\}    & \{2,3\}    & -     & -     & \{0.0-1.0\}   & \{5,8,9,10\}    \\
           GCNII & \{1e-2, 5e-3, 1e-3\}  & \{0,  5e-5, 5e-4\} & \{0., 0.1, 0.2, 0.4,  0.5, 0.8\} & \{ 16, 32, 64, 128, 256\}    & \{2, 4, 8, 16, 32, 64\}    & -     & \{0.0-1.0\}     & \{0.0-1.0\}     & -    \\
    \bottomrule
    \bottomrule
    \end{tabular}}
     \label{tab:hyperparameters_baselines}
\end{table}

\begin{table}[htbp]
  \centering
  \caption{Hyperparameter searching range for baseline models on Chameleon, Squirrel, Actor, Amazon-ratings}
    \scalebox{0.44}{\begin{tabular}{c|ccccccccc}
    \toprule
    \toprule
    \multicolumn{10}{c}{\textbf{Hyperparameters for Baseline GNNs}} \\
    \midrule
 Models\textbackslash{}Hyperparameters & lr    & weight\_decay & dropout & hidden & \# layers & Gat heads & $\lambda$ & $\alpha$ & $K$ \\
    \midrule
    GCN & \{1e-2, 5e-2, 5e-3, \}  & \{0, 5e-4, 5e-5 \} & \{0., 0.2, 0.5, 0.8\} & \{16, 32, 64, 128, 256\}    & \{2,3,4\}    & -     & -     & -     & -    \\
           SGC & \{1e-2, 5e-2, 5e-3, 3e-5\}  & \{0, 5e-4, 5e-5\} & \{0, 0.2, 0.5, 0.8\} & \{16, 32, 64, 128, 256\}    & \{2, 3\}    & -     & -     & -     & \{1,2,3\}    \\
 GAT & \{1e-2, 5e-2, 1e-3, 2e-3, 3e-5\} & \{0, 5e-3, 5e-2\} & \{0., 0.2, 0.5, 0.6, 0.8\} & \{8, 16, 32, 64, 128, 256\} & \{2, 3\} & \{1, 4, 8\} & - & - & - \\          
           GLNN & \{1e-2, 5e-2, 5e-3\}  & \{0, 5e-4, 5e-3\} & \{0., 0.2, 0.5, 0.8\} & \{16, 32, 64, 128, 256\}    & \{1, 2\}    & -     & \{0.0-1.0\}     & -     & -    \\
         APPNP & \{1e-2, 5e-2, 5e-3\}  & \{0, 5e-3, 1e-2\} & \{0., 0.2,0.5, 0.8\} & \{8, 16, 32, 64, 128\}    & 2    & -     & -   & \{0.0-1.0\}   & \{5,6,7,8,9,10\}        \\
           MLP & \{1e-2, 5e-2, 5e-3\}  & \{0, 5e-4, 5e-3, 5e-5, 5e-8\} & \{0., 0.2, 0.5, 0.8\} & \{16, 32, 64, 128, 256\}    & 2     & -     & -     & -     & -    \\
            GPRGNN & \{1e-2, 5e-2, 5e-3\}  & \{0, 1e-3, 1e-5, 1e-4, 5e-5\} & \{0., 0.2, 0.5, 0.8\} & \{16, 32, 64, 128\}    & \{2,3\}    & -     & -     & \{0.0-1.0\}   & \{5,8,9,10\}    \\
           GCNII & \{1e-2, 5e-3, 1e-3\}  & \{0,  5e-5, 5e-4\} & \{0., 0.1, 0.2, 0.4,  0.5, 0.8\} & \{ 16, 32, 64, 128, 256\}    & \{2, 4, 8, 16, 32, 64\}    & -     & \{0.0-1.0\}     & \{0.0-1.0\}     & -    \\
\bottomrule
    \bottomrule
    \end{tabular}}
     \label{tab:hyperparameters_baselines2}
\end{table}

\begin{table}[htbp]
  \centering
  \caption{Hyperparameter searching range for baseline models on OGB-Arxiv, Twitch-gamer, IGB-tiny}
    \scalebox{0.45}{\begin{tabular}{c|ccccccccc}
    \toprule
    \toprule
    \multicolumn{10}{c}{\textbf{Hyperparameters for Baseline GNNs}} \\
    \midrule
 Models\textbackslash{}Hyperparameters & lr    & weight\_decay & dropout & hidden & \# layers & Gat heads & $\lambda$ & $\alpha$ & $K$ \\
    \midrule
         GCN & \{1e-2, 5e-2, 5e-3\}  & \{0, 5e-4, 5e-5\} & \{0., 0.5, 0.8\} & \{128, 256, 512\}    & \{2, 3\}    & -     & -     & -     & -    \\
           SGC & \{1e-2, 5e-2, 5e-3\}  & \{0, 5e-4, 5e-5\} & \{0., 0.5, 0.8\} & \{64, 128, 256, 512\}    & \{2, 3\}    & -     & -     & -     & \{1,2,3\}    \\
 GAT & \{1e-2, 5e-2, 5e-3\} & \{0, 5e-4, 5e-5\} & \{0., 0.5, 0.6, 0.8\} & \{64, 128, 256, 512\}    & \{2, 3\} & \{1, 4, 8\} & - & - & - \\          
           GLNN & \{1e-2, 5e-2, 5e-3\}  & \{0, 5e-4, 5e-3\} & \{0., 0.5, 0.8\} & \{1024, 2048\}    & \{2, 3\}    & -     & \{0.0-1.0\}     & -     & -    \\
           APPNP & \{1e-2, 5e-2, 5e-3\}  & \{0, 5e-3, 1e-2, 5e-4, 5e-5\} & \{0., 0.2, 0.3, 0.5, 0.8\} & \{128, 256, 512\}    & \{2, 3\}    & -     & -   & \{0.0-1.0\}   & \{5,6,7,8,9,10\}        \\
           MLP & \{1e-2, 5e-2, 5e-3\}  & \{0, 5e-4, 5e-5\} & \{0., 0.5, 0.8\} & \{256, 512\}    & \{2, 3\}     & -     & -     & -     & -    \\
            GPRGNN & \{1e-2, 5e-2, 5e-3\}  & \{0, 1e-3, 1e-5, 1e-4, 5e-5\} & \{0., 0.2, 0.5, 0.8\} & \{128, 256, 512\}    & \{2,3\}    & -     & -     & \{0.0-1.0\}   & \{5,8,9,10\}    \\
           GCNII & \{1e-2, 5e-3, 1e-3\}  & \{0,  5e-5, 5e-4\} & \{0., 0.1, 0.2, 0.4,  0.5, 0.8\} & \{ 64, 128, 256, 512\}    & \{2, 4, 8, 16, 32, 64\}    & -     & \{0.0-1.0\}     & \{0.0-1.0\}     & -    \\
\bottomrule
    \bottomrule
    \end{tabular}}
     \label{tab:hyperparameters_baselines3}
\end{table}

\textbf{Reimplementation experimental results} 
We illustrate the experimental results with the above hyperparameter search in Table~\ref{tab:homo_results} and~\ref{tab:hete_results} on homophilic and heterophilic datasets, respectively.

\begin{table}[!ht]
    \centering
    \caption{The accuracy of GNN and MLP models on homophilic graphs \label{tab:homo_results}}
    \begin{tabular}{l|ccccc}
    \hline
        Dataset & Cora & Citeseer & Pubmed & Arxiv & IGB-tiny \\ \hline
        MLP & 61.1±1.2 & 60.0±1.4 & 69.0±2.3 & 54.0±0.1 & 73.2±0.1 \\
GLNN & 81.3±1.5 & 73.0±2.7 & 78.2±2.6 & 71.7±0.1 & 73.2±0.1 \\
GCN & 81.5±1.4 & 73.7±1.6 & 77.9±2.0 & 71.4±0.1 & 70.7±0.1 \\
SGC & 81.7±1.4 & 72.7±2.2 & 77.0±2.7 & 68.0±0.1 & 71.0±0.1 \\
GAT & 82.2±1.1 & 73.6±1.6 & 77.3±1.5 & 71.0±0.1 & 70.8±0.2 \\
APPNP & 83.1±1.3 & 75.0±1.1 & 79.6±1.3 & 70.3±0.5 & 71.2±0.1 \\
GCNII & 82.8±1.1 & 73.8±1.7 & 79.0±2.5 & 71.7±0.5 & 73.5±0.1 \\
GPRGNN & 82.9±1.4 & 72.4±1.8 & 78.3±2.1 & 72.3±0.3 & 73.9±0.1 \\ \hline
    \end{tabular}
\end{table}

\begin{table}[!ht]
    \centering
    \caption{The accuracy of GNN and MLP models on heterophilic graphs \label{tab:hete_results}}
    \begin{tabular}{l|ccccc}
    \hline
        Dataset & Chameleon & Squirrel & Twitch-gamers & Actor & Amazon-ratings \\ \hline
        MLP & 49.0±2.4 & 30.1±1.7 & 60.7±0.2 & 37.0±0.7 & 45.9±0.8 \\
GLNN & 39.2±2.7 & 52.3±1.4 & 61.1±0.1 & 37.3±1.0 & 54.0±0.7 \\
GCN & 68.0±2.0 & 54.7±1.4 & 62.2±0.2 & 30.7±0.9 & 49.0±0.6 \\
SGC & 69.1±1.8 & 53.0±1.1 & 62.0±2.0 & 30.0±1.5 & 46.5±0.6 \\
GAT & 67.0±1.9 & 53.2±1.7 & 59.9±0.3 & 30.7±1.0 & 48.0±0.5 \\
APPNP & 56.7±2.5 & 42.4±1.9 & 59.7±0.1 & 37.0±1.3 & 44.9±0.8 \\
GCNII & 64.7±1.8 & 44.0±1.5 & 64.5±0.3 & 36.0±1.2 & 50.0±0.5 \\
GPRGNN & 68.5±1.4 & 53.8±1.4 & 61.9±0.2 & 36.5±1.4 & 49.8±0.5 \\ \hline
    \end{tabular}
\end{table}

\subsection{Dataset details\label{app:dataset}}
In this paper, we majorly focus on two homophilic datasets, PubMed and Ogbn-Arxiv and two heterophilic datasets, Squirrel and Chameleon.
Moreover, we consider more datasets for comprehensive experimental results including Cora, CiteSeer, IGB-tiny, Actor, Amazon-rating, and Twitch-gamers. 
Dataset statistics details can be found in Tab.~\ref{tab:dataset_statstics}.
The license of datasets can be found in Table~\ref{tab:license}.

\begin{table*}[!ht]
\centering
\caption{The license of datasets. \label{tab:license}}

\begin{tabular}{l|c}
\hline
Dataset & license \\ \hline
Cora & NLM license \\
CiteSeer & NLM license \\
PubMed & NLM license \\
Ogbn-arxiv & ODC-BY \\
IGB-tiny& ODC-By-1.0 \\
Chameleon & MIT license \\
Squirrel & MIT license\\
Actor & MIT license\\
Amazon-ratings & MIT license\\
Twitch-gamers & MIT license \\

\hline
\end{tabular}
\end{table*}

We then provide a detailed description on those datasets and the corresponding data split as follows:
\begin{itemize}
    \item Planetoid datasets include Cora, CiteSeer, and PubMed. 
    They consist of scientific publications as nodes. 
    The citation links between the papers create a graph structure, with nodes representing papers and edges representing citations. 
    We utilize the 10 fixed splits in \cite{yang2021extract}. 
    For each split, we have 20 nodes per class for training, 30 nodes per class for validation, and all other nodes for testing.
    More details can be found on \url{https://github.com/BUPT-GAMMA/CPF/tree/master/data/npz}
    
    \item Ogbn-Arxiv dataset is based on a large-scale scholarly paper citation network, where nodes represent papers and edges represent citation links between them.
    We utilize the default fix split provided by \cite{hu2020ogb}. 
    More details can be found on \url{https://ogb.stanford.edu/docs/nodeprop/#ogbn-arxiv}.

    \item Wikipedia datasets include Chameleon and Squirrel, which are a collection of web pages from Wikipedia. The node features  correspond to several informative nouns in the respective Wikipedia pages. The edge represents the link between node. 
    We utilize the 10 fixed splits in \cite{peigeom}. 
    For each split, we have 48\% nodes for training, 32\% nodes for validation, and 20\% for test.
    More details can be found on \url{https://github.com/graphdml-uiuc-jlu/geom-gcn}

    \item The Actor dataset~\cite{tang2009social} is a graph that represents the co-occurrence relationships between actors. 
    We utilize the 10 fixed splits in \cite{peigeom}. 
    For each split, we have 48\% nodes for training, 32\% nodes for validation, and 20\% for test.
    More details can be found on \url{https://github.com/graphdml-uiuc-jlu/geom-gcn}
    \item  The Twitch-gamers dataset represents the relationships between accounts on the streaming platform Twitch, where each node corresponds to a Twitch account, and edges are present between accounts that are mutual followers. We utilize the 5 fixed splits in \cite{lim2021large}. 
    For each split, we have 50\% nodes for training, 25\% nodes for validation, and 25\% for test.
    More details can be found on \url{https://github.com/CUAI/Non-Homophily-Large-Scale}

    \item Amazon-ratings a new dataset derived from the Amazon product co-purchasing network metadata dataset, which is part of the SNAP Datasets ~\cite{leskovec2014snap}. 
    We utilize the 10 fixed splits in \cite{platonov2023a}. 
    For each split, we have 50\% nodes for training, 25\% nodes for validation, and 25\% for test.
    More details can be found on \url{https://github.com/yandex-research/heterophilous-graphs/tree/main/data}

    \item IGB-tiny is a recent new citation graph dataset. It utilizes two publicly available datasets: Microsoft Academic Graph (MAG) and SemanticScholar Corpus. We utilize the homogeneous setting with default fix split provide by \cite{khatua2023igb}. 
    More details can be found on \url{https://github.com/IllinoisGraphBenchmark/IGB-Datasets}.
\end{itemize}

\textbf{Details on Data distribution\label{app:density}}
The node homophily ratio distribution for different datasets are presented in Figure~\ref{fig:app_node_density}.
The following observations can be made:
(1) Cora, CiteSeer, PubMed, and Ogbn-Arxiv are four homophilic datasets where most nodes have a homophily ratio $h > 0.5$.
(2) Chamaleon, Squirrel, and Actor are three heterophilic datasets where most nodes have a homophily ratio $h \le 0.5$. 
Nonetheless, the Actor dataset exhibits entirely different properties from Chamaleon and Squirrel where graph structure information is barely useful, leading to performance degradation. 
Empirical evidence can be found in Table~\ref{tab:hete_results} where the vanilla MLP outperforms all the GNN models. 
(3) IGB-tiny~\cite{khatua2023igb} and Twitch-gamers~\cite{lim2021large} are two representative datasets with no apparent majority structural pattern, exhibiting graph homophily ratios of 0.567 and 0.556, respectively.
Specifically, IGB-tiny contains both large numbers of homophilic and heterophilic nodes, respectively, whereas, most nodes in the Twitch-gamers dataset do not show a clear homophilic nor heterophilic pattern with a homophily ratio close to 0.5. 
Notably, the IGB-tiny is still recognized as a homophilic graph~\cite{khatua2023igb} while Twitch-gamers is generally recognized as a homophilic graph~\cite{Li2022FindingGH, lim2021large}.
(4) Nodes in Amazon-ratings~\cite{platonov2023a} exhibit diverse structural patterns with multiple peaks observed for homophily ratios 0.0, 0.2, 0.4, 0.6, and 0.8.

\begin{table*}[]
\centering
\caption{Detailed dataset statistics \label{tab:dataset_statstics}}
\scalebox{0.70}{
\begin{tabular}{cccccc|ccccc} 
\hline 
Dataset    & Cora  & CiteSeer & PubMed & Arxiv & IGB-tiny & Squirrel & Chameleon & Amazon-ratings  & Actor & Twitch-gamers \\\hline 
Homo Ratio & 0.814 & 0.714    & 0.792  & 0.631 & 0.567    & 0.216    & 0.247 & 0.376    & 0.221 & 0.556        \\
Nodes      & 2485  & 2110     & 19717  & 169343  & 100000   & 5201    & 2277 & 24492      & 7600  & 249992       \\
Edges      & 5069  & 3668     & 44324  & 583121  & 223538   &  198493   & 31421 & 93050     & 15009 & 93050        \\
Features   & 1433  & 3703     & 500    & 128        & 1024     & 2089     & 2235 & 300     & 932   & 300          \\
Classes    & 7     & 6        & 3      & 40         & 19       & 5        & 5   & 5      & 5     & 5    \\ 

\hline 
\end{tabular}
}
\end{table*}

\begin{figure*}[!ht]
    \subfigure[Cora ($h$=0.81)]{
        \centering
        \begin{minipage}[b]{0.31\textwidth}
        \includegraphics[width=1.0\textwidth]{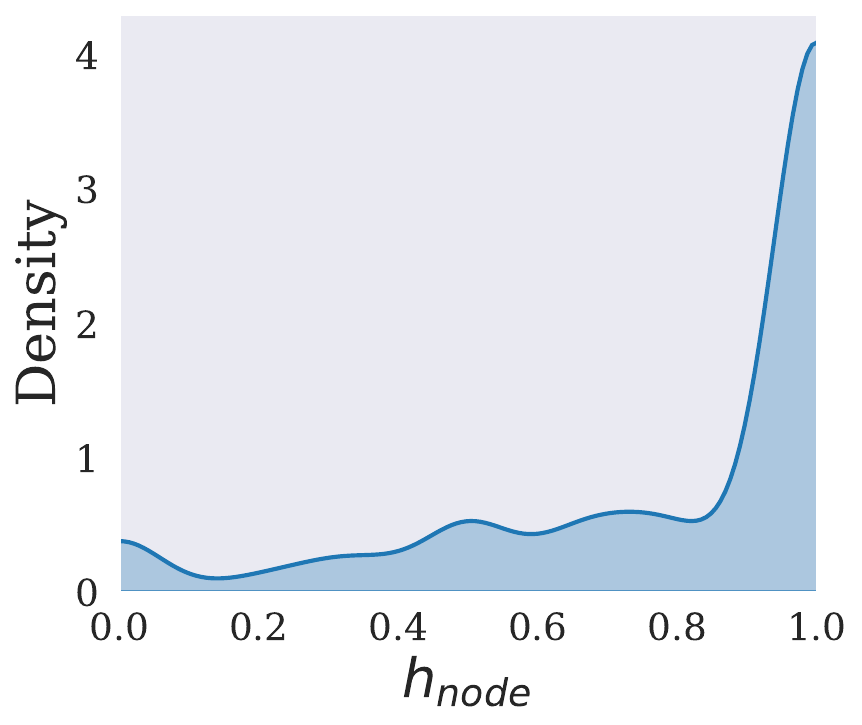}
        \end{minipage}
    }
    \subfigure[CiteSeer ($h$=0.71)]{
    \centering
    \begin{minipage}[b]{0.31\textwidth}
    \includegraphics[width=1.0\textwidth]{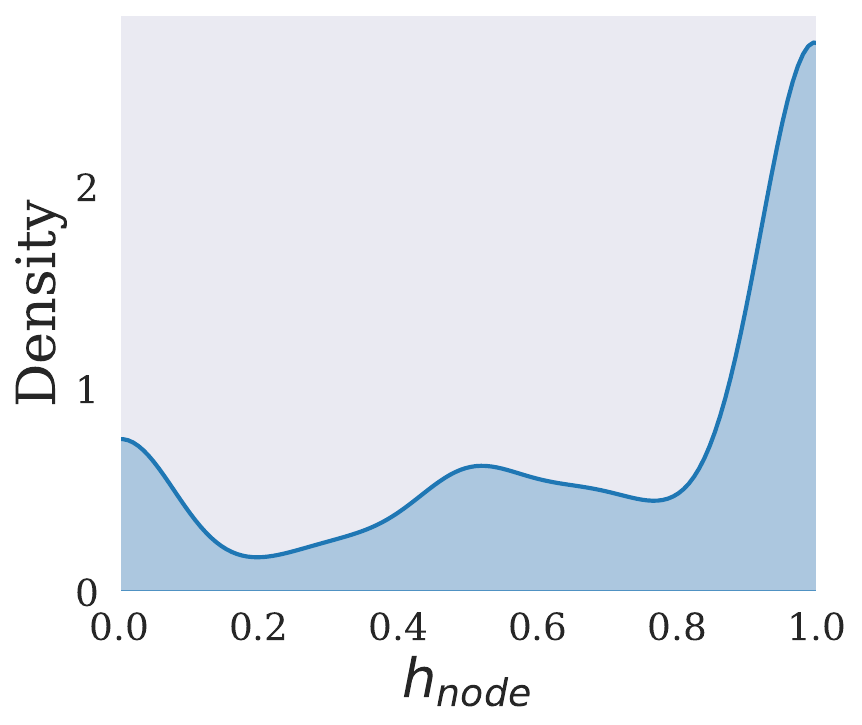}
    \end{minipage}
    }
    \subfigure[PubMed ($h$=0.79)]{
    \centering
    \begin{minipage}[b]{0.31\textwidth}
    \includegraphics[width=1.0\textwidth]{fig/distribution/PubMed_distribution.pdf}
    \end{minipage}
    }

    \subfigure[Ogbn-Arxiv ($h$=0.63)]{
    \centering
    \begin{minipage}[b]{0.31\textwidth}
    \includegraphics[width=1.0\textwidth]{fig/distribution/arxiv_distribution.pdf}
    \end{minipage}
    }
    \subfigure[IGB-tiny ($h$=0.57)]{
    \centering
    \begin{minipage}[b]{0.31\textwidth}
    \includegraphics[width=1.0\textwidth]{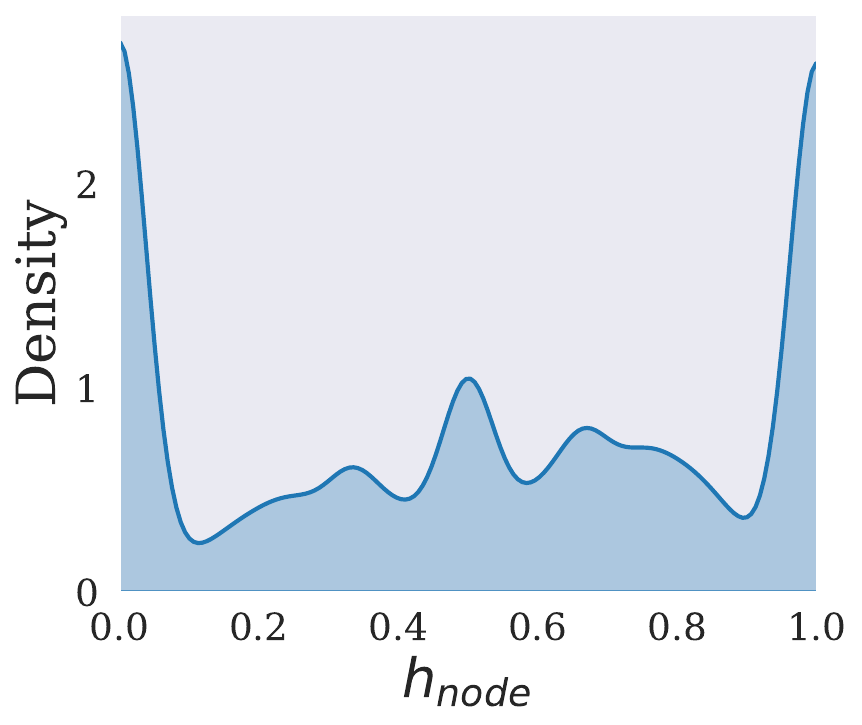}
    \end{minipage}
    }    
    
    \subfigure[Chameleon ($h$=0.25)]{
    \centering
    \begin{minipage}[b]{0.31\textwidth}
    \includegraphics[width=1.0\textwidth]{fig/distribution/Chameleon_distribution.pdf}
    \end{minipage}
    }
    \subfigure[Squirrel ($h$=0.22)]{
    \centering
    \begin{minipage}[b]{0.31\textwidth}
    \includegraphics[width=1.0\textwidth]{fig/distribution/Squirrel_distribution.pdf}
    \end{minipage}
    }
    \subfigure[Actor ($h$=0.22)]{
    \centering
    \begin{minipage}[b]{0.31\textwidth}
    \includegraphics[width=1.0\textwidth]{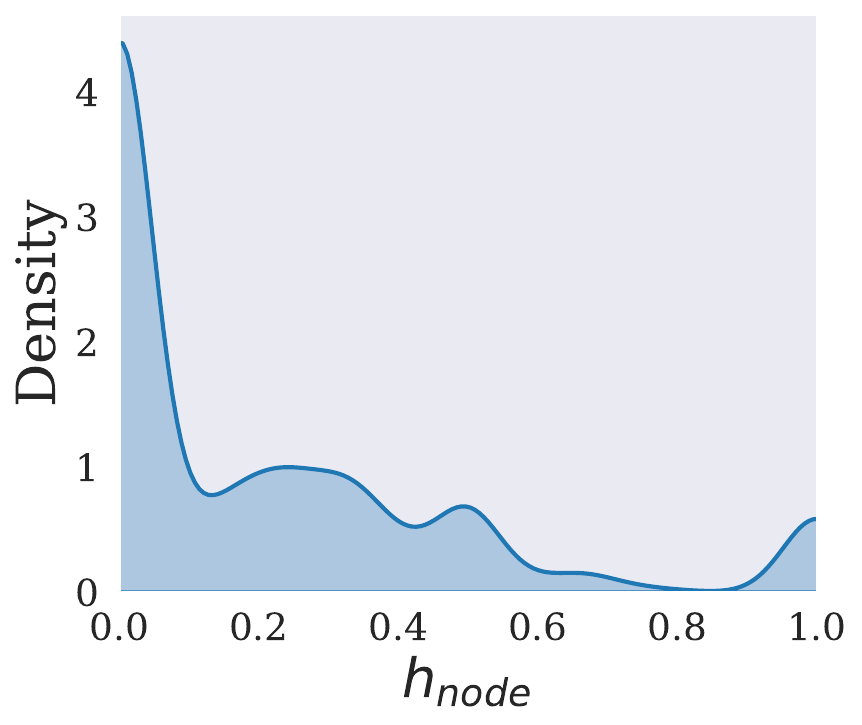}
    \end{minipage}
    }
    
    \subfigure[Twitch-gamers ($h$=0.56)]{
    \centering
    \begin{minipage}[b]{0.31\textwidth}
    \includegraphics[width=1.0\textwidth]{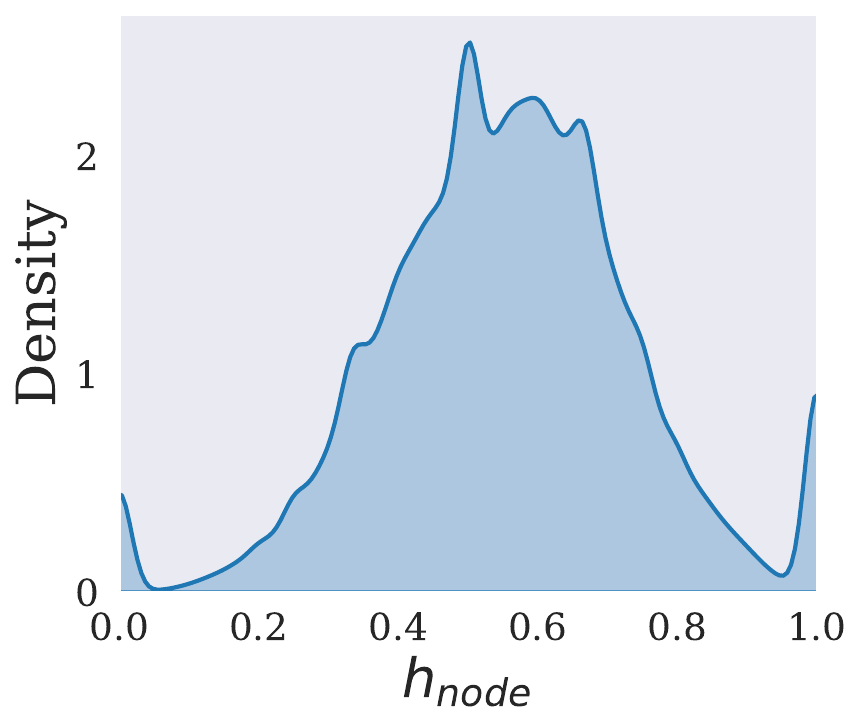}
    \end{minipage}
    }
    \subfigure[Amazon-ratings ($h$=0.38)]{
    \centering
    \begin{minipage}[b]{0.31\textwidth}
    \includegraphics[width=1.0\textwidth]{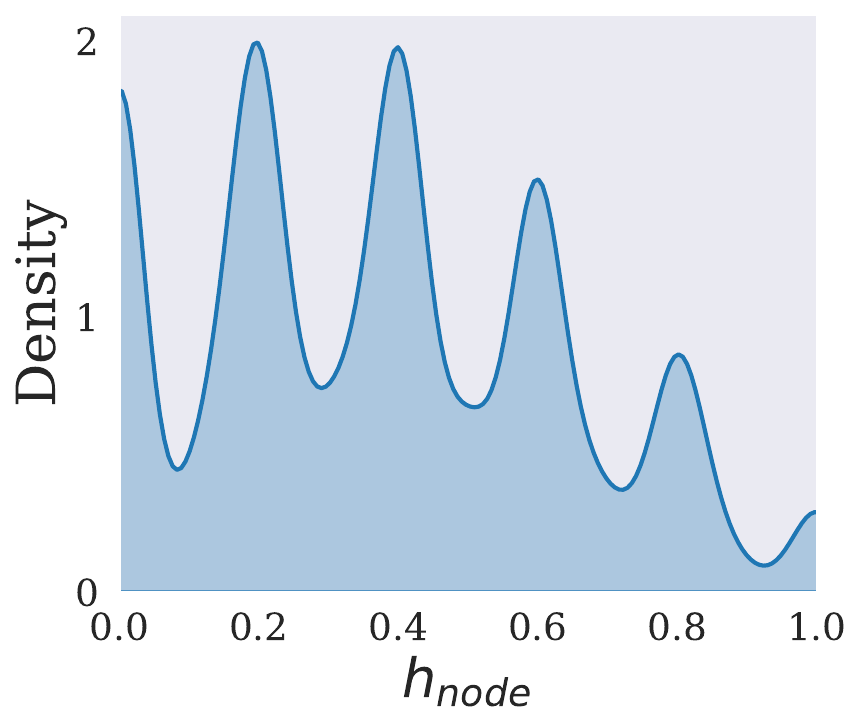}
    \end{minipage}
    }
    \caption{Node homophily ratio distributions of different datasets. Both homophilic and heterophilic nodes exist across all graphs\label{fig:app_node_density}}
\end{figure*}

\begin{table}[!h]
\centering
\caption{the numbers of train, validation, test nodes on OOD data split  \label{tab:ood_statstics}}
\scalebox{1.0}{
\begin{tabular}{lcccc|ccc} 
\hline 
Dataset    & Cora  & CiteSeer & PubMed & Arxiv & Squirrel & Chameleon & \\\hline 
\#train &  1599&     1160&  12466&   85788&  3709&   1642       \\
\#valid &  400&     290&  3117&   21447&     928&   441             \\
\#test &  486&     660&  4134&   62108&     564&   564          \\
\hline 
\end{tabular}
}
\end{table}

\subsection{OOD dataset statistics and details} 
We provide more analysis and statistics details about datasets with our proposed out-of-distribution~(OOD) data split for a more comprehensive analysis. 
The data split statistics can be found in Table~\ref{tab:ood_statstics}. 
20\% of the majority nodes are selected as the validation set.
Notably, we generally use nodes with $h_i < 0.5$ and $h_i > 0.5$ as the majority and minority on heterophilic graph. Nonetheless, the Squirrel dataset only has 71 nodes with $h_i > 0.5$. To ensure enough test nodes, we adapt the nodes with $h_i > 0.4$ as the minority nodes for test. 
We conduct experiments to measure the distributional shift between train and test nodes. 
To further analysis on the distribution shift, we employ Maximum Mean Discrepancy~(MMD)~\cite{gretton2012kernel} as a discrepancy metric. 
A large MMD distance indicates the covariate shift with $P^{\text{train}}(X) \ne P^{\text{test}}(X)$.
MMD measures the distance between distributions as follows:
\begin{equation}
    \operatorname{MMD}^{2}(X, Y)=\left\|\frac{1}{n} \sum_{i=1}^{n} \phi\left(x_{i}\right)-\frac{1}{m} \sum_{j=1}^{m} \phi\left(y_{j}\right)\right\|^{2}
\end{equation}

where $\phi$ is a kernel function that maps the input samples into a higher-dimensional feature space, and  $\mid  ||  \cdot  ||_2 $  denotes the $L_2$ norm.
Using the Gaussian kernel, we can define the kernel matrices as:

\begin{equation}    
    \begin{array}{l}
    K_{x x, i j}=k\left(x_{i}, x_{j}\right), \quad K_{y y, i j}=k\left(y_{i}, y_{j}\right), \\
    K_{x y, i j}=k\left(x_{i}, y_{j}\right), \quad K_{y x, i j}=k\left(y_{i}, x_{j}\right), \\
    \end{array}
\end{equation}

where the kernel function is given by:
\begin{equation}       
    k(x, y)=\exp \left(-\frac{\|x-y\|^{2}}{2 \sigma^{2}}\right)
\end{equation}
and $\sigma$ is a parameter that controls the width of the kernel.
In our experiment, we choose multiple $\sigma$ including [0.01, 0.1, 1, 10, 100]. We average the MMD distance with different $\sigma$ for a more robust distance measurement. 
The MMD distance can be computed using these kernel matrices:

\begin{equation}    
    \operatorname{MMD}^{2}(X, Y)=\frac{1}{n(n-1)} \sum_{i \neq j} K_{x x, i j}-\frac{2}{n m} \sum_{i, j} K_{x y, i j}+\frac{1}{m(m-1)} \sum_{i \neq j} K_{y y, i j},
\end{equation}
where the first and third terms represent the within-set variations, and the second term represents the between-set variations. 
We then conduct experiments to determine where the source of distribution discrepancy from the covariate shift or concept shift.
We measure the MMD distance between the train set and validation set as the i.i.d. distribution distance, and the MMD distance between the training set and test set as the OOD distribution distance.
However, the in-distribution and out-distribution distances may not be directly comparable due to potential estimation errors introduced by different numbers of nodes in valid and test sets.

To enable a fair comparison, we also measure in-distribution and out-distribution distances on the i.i.d. data split as a control group. The MMD distances are measured on the last hidden representation of a well-trained GCN.
The experiment results are provided in Table~\ref{tab:ood_mmd}. 
Focusing on the i.i.d setting, we observe an MMD distance difference on the validation set and test set to the training set, induced by the difference in node counts between the validation and test set. 
Comparing the OOD setting with the i.i.d one no significant MMD differences are found, except for PubMed. This suggests that the primary factor driving the distribution shift is not the covariate shift with $\rmP^{\text{train}}(\rmX) \ne \rmP^{\text{test}}(\rmX)$, but the concept shift with $\rmP^{\text{train}}(\rmY|\rmX) \ne \rmP^{\text{test}}(\rmY|\rmX)$. The above observations further indicate the existence of the concept shift in our proposed OOD scenario. 

\begin{table}[!h]
\centering
\caption{MMD distance between train and validation, test sets on both i.i.d. and ood settings.~\label{tab:ood_mmd}}
\scalebox{1.0}{
\begin{tabular}{lcccc|ccc} 
\hline 
Dataset    & Cora  & CiteSeer & PubMed & Arxiv & Chameleon & Squirrel  \\\hline 
IID valid&  0.565&     0.345&  0.082 &  0.149  &  0.951&  1.04   \\
IID test&  0.610 &     0.600&  0.050&  0.276 &  0.882 & 0.92     \\
OOD valid &  0.564 &     0.233&  0.127&  0.211 &  0.977&  1.192   \\
OOD test &  0.597&     0.598&  0.442&   0.420 &    0.854&  0.92    \\      
\hline 
\end{tabular}
}
\end{table}

\textbf{Additional OOD results\label{subapp:ood_perform}}: 
We show additional experimental results on Cora and CiteSeer datasets in Table~\ref{tab:add_ood_perform} with a consistent observation. Notice that, all experimental results are average with 10 random seeds including (1, 3, 5, 7, 11, 13, 17, 19, 23, 29).

\begin{table}[!ht]
\centering
\caption{Additional performance with OOD data split on Cora and CiteSeer.~\label{tab:add_ood_perform}}
\scalebox{1.0}{
\begin{tabular}{lcc} 
\hline 
Dataset    & Cora & CiteSeer  \\\hline 
GCN~(i.i.d) &  87.53±0.35  &77.30±1.26 \\\hline
GCN &  51.05±0.58  &43.74±0.88  \\
MLP & 51.09±0.74 & 45.26±0.62 \\
GLNN & 54.14±0.89 &47.21±0.33  \\
GCNII & 55.33±0.41 &46.47±0.59 \\
GPRGNN & 55.16±0.51 & 45.76±0.50\\
\hline
SRGNN & 62.79±0.87&  49.62±0.96\\
EERM & 54.34±0.78 & 47.72±0.50\\
EERM(II) & 70.91±1.89 & 51.12 ± 0.82\\ 
\hline 
\end{tabular}
}
\end{table}

\section{Additional results on performance comparison between GCN and MLP-based models\label{app:main_compare}}
\begin{figure*}[!h]
    \subfigure[Cora ($h$=0.81)]{
        \centering
        \begin{minipage}[b]{0.31\textwidth}
        \includegraphics[width=1.0\textwidth]{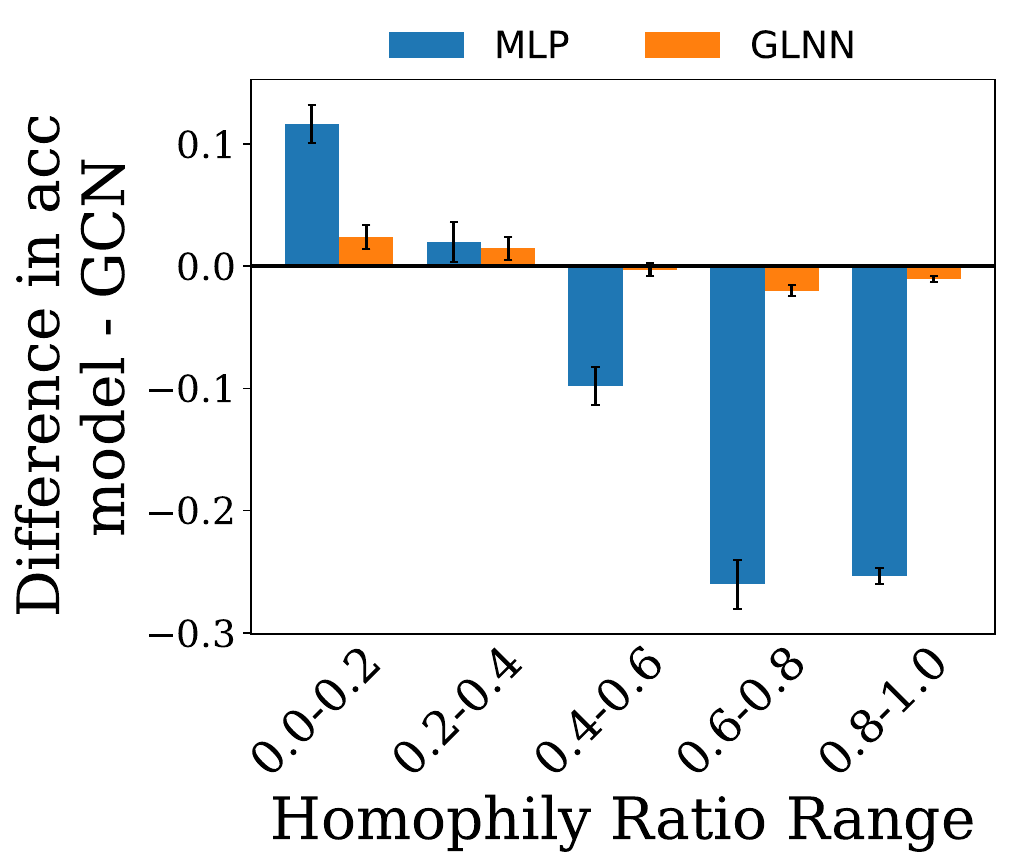}
        \end{minipage}
    }
    \subfigure[CiteSeer ($h$=0.71)]{
    \centering
    \begin{minipage}[b]{0.31\textwidth}
    \includegraphics[width=1.0\textwidth]{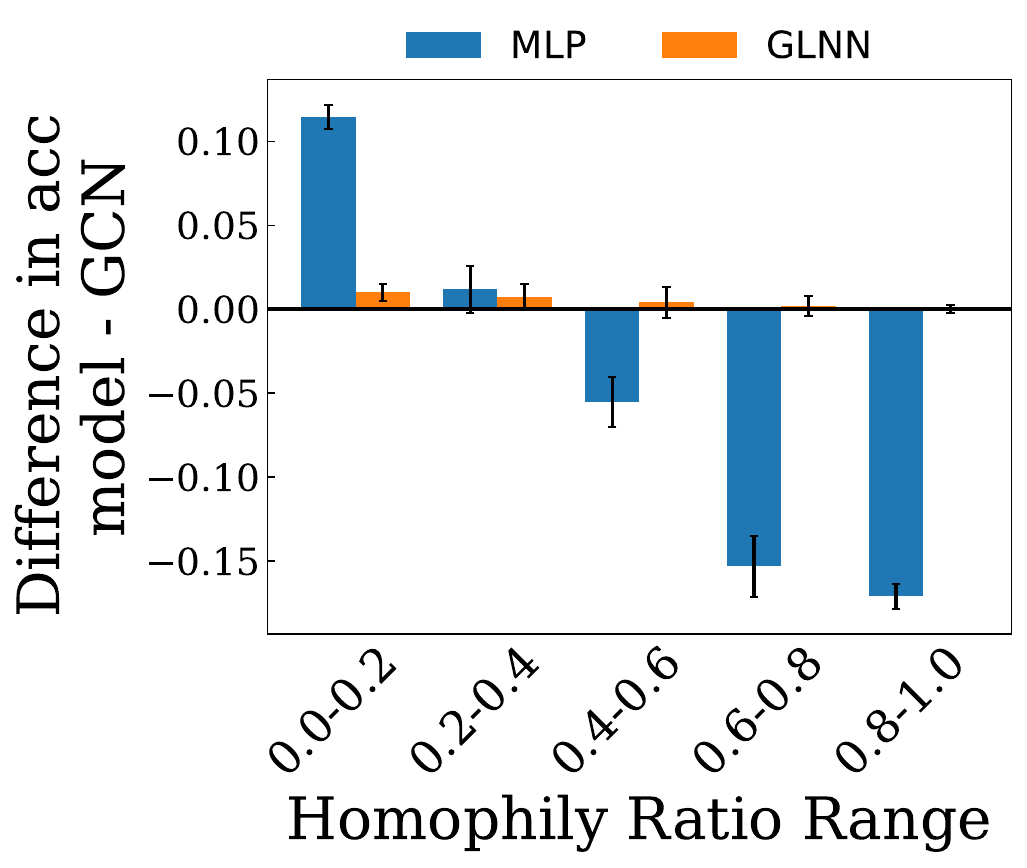}
    \end{minipage}
    }
    \subfigure[IGB-tiny ($h$=0.58)]{
    \centering
    \begin{minipage}[b]{0.31\textwidth}
    \includegraphics[width=1.0\textwidth]{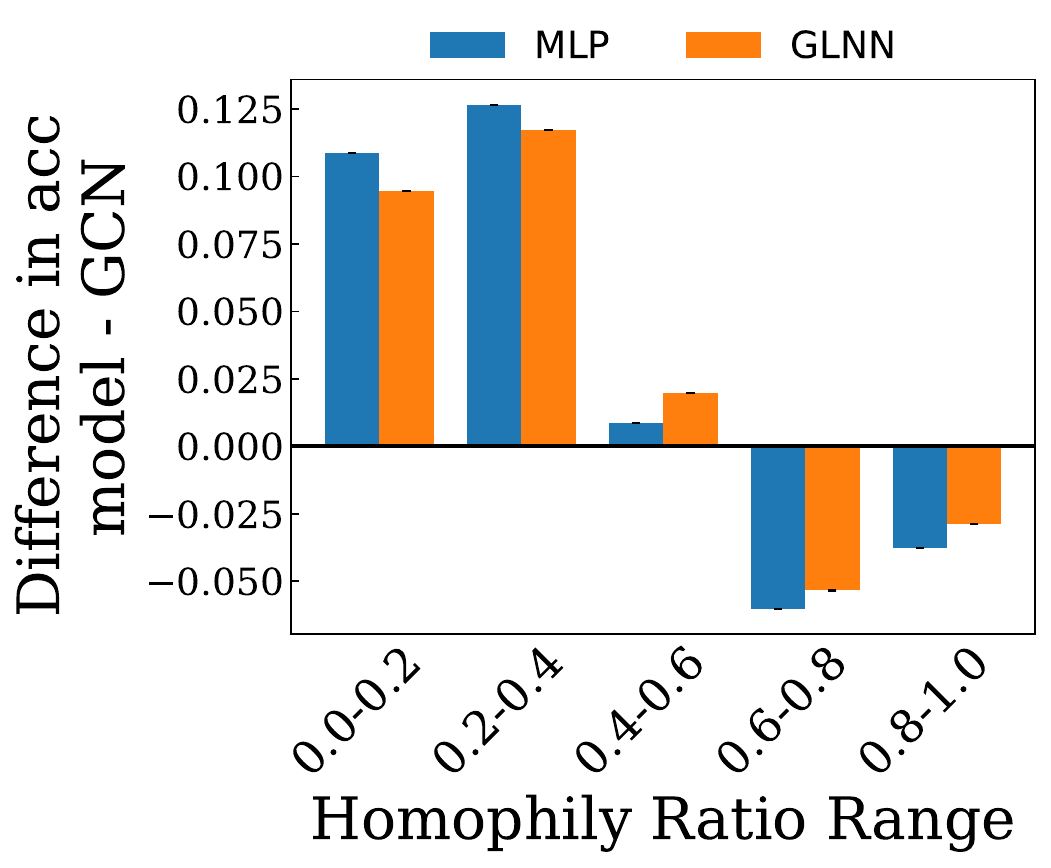}
    \end{minipage}
    }
    
    \subfigure[Actor ($h$=0.22)]{
    \centering
    \begin{minipage}[b]{0.31\textwidth}
    \includegraphics[width=1.0\textwidth]{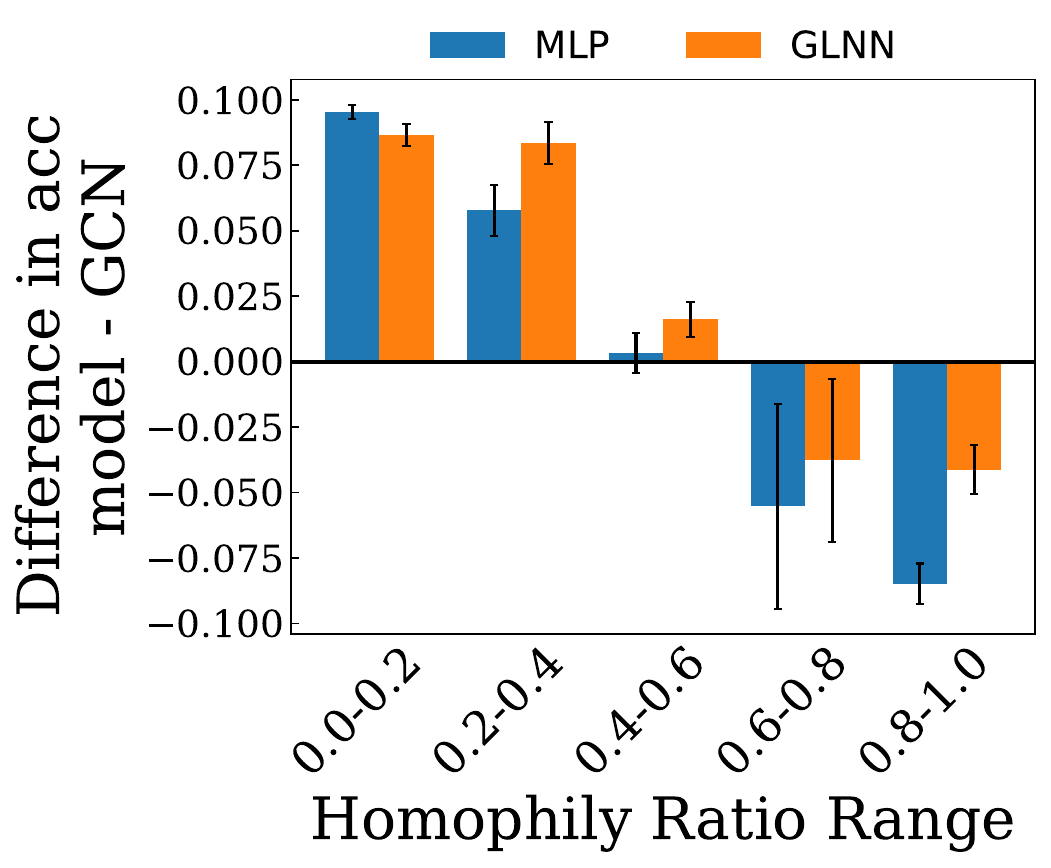}
    \end{minipage}
    }
    \subfigure[twitch-gamers ($h$=0.56)]{
    \centering
    \begin{minipage}[b]{0.31\textwidth}
    \includegraphics[width=1.0\textwidth]{fig/compare/Squirrel.pdf}
    \end{minipage}
    }
    \subfigure[Amazon-ratings ($h$=0.38)]{
    \centering
    \begin{minipage}[b]{0.31\textwidth}
    \includegraphics[width=1.0\textwidth]{fig/compare/Squirrel.pdf}
    \end{minipage}
    }
    \caption{\label{fig:app_main_compare} Performance comparison between GCN and MLP-based models on more datasets. Each bar represents the accuracy gap on a specific test node subgroup exhibiting a local homophily ratio within the range specified on the x-axis. }
\end{figure*}

In Section~\ref{sec:analysis}, we conduct a performance comparison between GCN with two MLP-based models, i.e., MLP and GLNN, on test nodes with different structural patterns. 
In this section, we provide more elaboration on why using GLNN and conduct additional experiments on more datasets. 

We first provide more elaborations on why using GLNN. The experiment is to examine the effectiveness of GCN on utilizing different structural patterns. Therefore, we compare GCN and MLP architectures as GCN utilizes graph structure during inference while MLP cannot, serving as a structure-agnostic baseline. 
When GCN surpasses MLP,it indicates GNN benefits from structural patterns effectively, and vice visa. Notably, GLNN can be viewed as a better-trained MLP model. GLNN also utilizes the same MLP model architecture as vanilla MLP, the only difference between vanilla MLP and GLNN is that GLNN is trained in an advanced distillation manner while vanilla MLP is trained with cross-entropy loss
The reason why we utilize GLNN rather than only comparing GCN with vanilla MLP is that MLP meets optimization issue on training. Such an obstacle leads to a large performance gap (more than 20\%) between under-trained vanilla MLP and well-trained GCN. Consequently, a large performance gap induced by training difficulty hinders the potential for MLP architecture. Contrastly, GLNN enjoys a better training process leads to a more clear comparison between well-trained GNN and well-trained MLP(GLNN) architecture with a convincing conclusion

Additional experiments are illustrated in Fig~\ref{fig:app_main_compare}.
Cora and CiteSeer are two Planetoid datasets, showing similar properties as PubMed. 
Observations on Cora and CiteSeer are consistent with those in Section~\ref{sec:analysis}, further substantiating our conclusions.
It should be noted that our paper primarily focuses on homophilic graphs and heterophilic graphs with a clear major structural pattern. 
Nonetheless, there exists more complicated real-world graphs with more diverse structural properties, as demonstrated in Figure~\ref{fig:app_node_density}.
We conduct further investigations on these datasets, including IGB-tiny, Twitch-gamers, Amazon-ratings, and Actor. 
A comprehensive discussion on those datasets can be found in Appendix~\ref{app:density}. 
Experimental results are depicted in Figure~\ref{fig:app_main_compare}.
We make initial observations as follows:
(1) IGB-tiny is a homophilic dataset exhibiting consistent observations in line with the main analysis in Section~\ref{sec:analysis}, where the MLP-based models outperform GCN on the heterophilic nodes and underperform on homophilic nodes. 
(2) Twitch-gamers and Amazon-ratings are two heterophilic datasets exhibiting consistent observations in line with the main analysis in Section~\ref{sec:analysis}, where the MLP-based models outperform GCN on the homophilic nodes while falling short on the heterophily nodes. 
(3) Actor is a heterophilic dataset where graph information is of limited utility. Empirical evidence shows that MLP-based models show an overall better performance than GCN.   
From a local perspective, the MLP-based models outperform GCN on the heterophilic nodes while underperforming on the homophily nodes. 
Those observations indicate our conclusion can be extended on most datasets with diverse structural patterns 
Nonetheless, it still remains a mystery on the Actor dataset.
We leave a more comprehensive investigation especially on the Actor dataset as the future work.

\section{Additional results on performance comparison between GCN and deeper GNN models\label{app:deeper}}

In Section~\ref{subsec:deeper}, we conduct a performance comparison between GCN with deeper GNN models, on test nodes with different structural patterns.
In this section, we conduct additional experiments on more datasets, correspondingly. 
Cora and CiteSeer are two Planetoid datasets, showing similar properties as PubMed. 
Notably, deeper GNNs generally show better performance across different structural patterns, where the improvement still primarily lies in the minority groups,  
Observations on Cora and CiteSeer are consistent with those in Section~\ref{sec:analysis}, further substantiating our conclusions.

It should be noted that our paper primarily focuses on homophilic graphs and heterophilic graphs with a clear major structural pattern. 
Nonetheless, there exist more complicated real-world graphs with more diverse structural properties, as demonstrated in Figure~\ref{fig:app_node_density}.
We conduct further investigations on these datasets, including IGB-tiny, Twitch-gamers, Amazon-ratings, and Actor. 
A comprehensive discussion on those datasets can be found in Appendix~\ref{app:density}. 
New experimental results are depicted in Figure~\ref{fig:app_compare_deeper}.
We observe that deeper GNNs generally outperform GCN on the heterophilic node while underperforming on the homophilic nodes on those datasets with diverse structural properties. 
We can conclude that the effectiveness of GNN on those datasets with no clear majority structural patterns primarily from the improvement from the heterophilic nodes.  
We leave a more comprehensive investigation as the future work.

\begin{figure*}[!h]
    \subfigure[Cora ($h$=0.81)]{
        \centering
        \begin{minipage}[b]{0.31\textwidth}
        \includegraphics[width=1.0\textwidth]{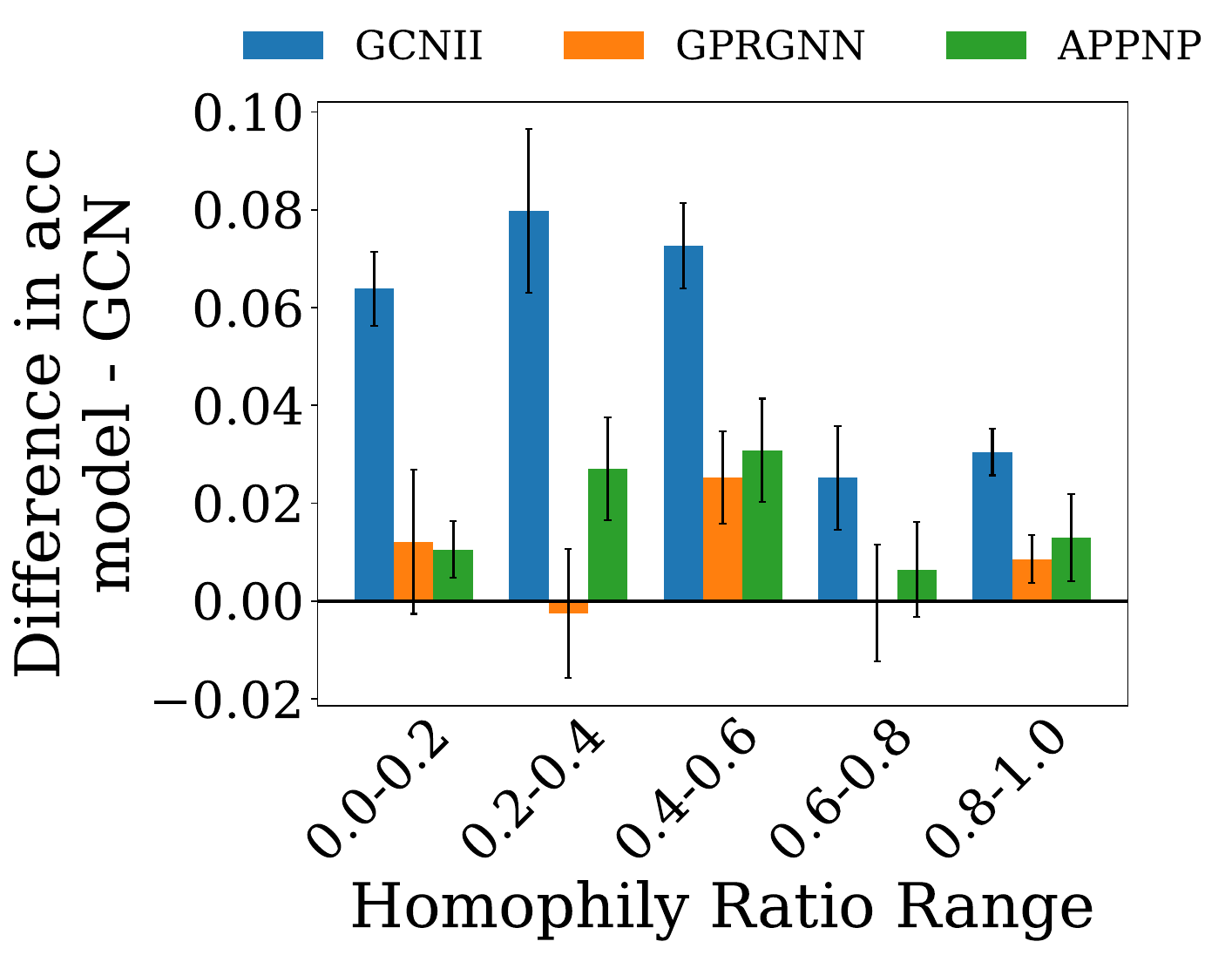}
        \end{minipage}
    }
    \subfigure[CiteSeer ($h$=0.71)]{
    \centering
    \begin{minipage}[b]{0.31\textwidth}
    \includegraphics[width=1.0\textwidth]{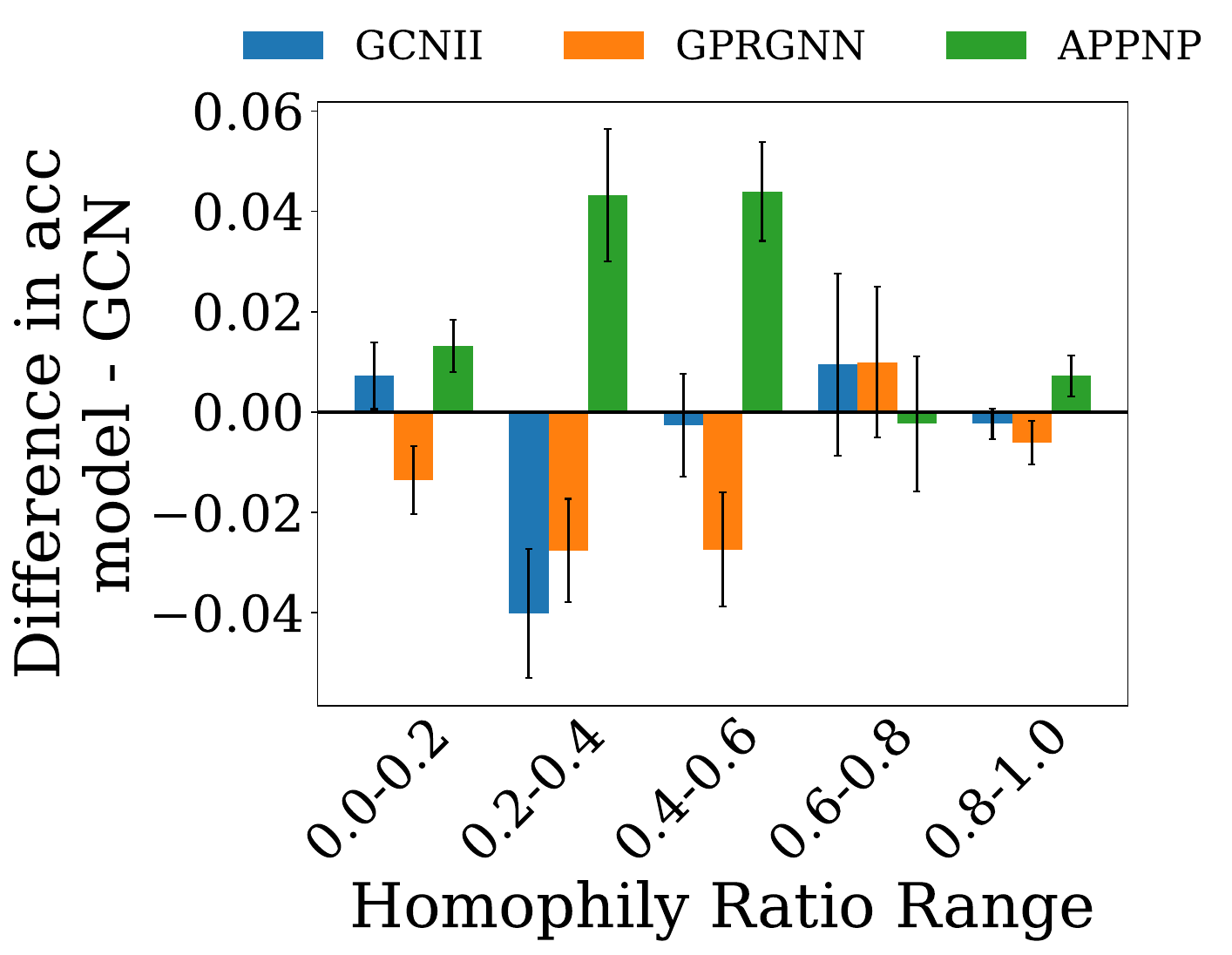}
    \end{minipage}
    }
    \subfigure[IGB-tiny ($h$=0.57)]{
    \centering
    \begin{minipage}[b]{0.31\textwidth}
    \includegraphics[width=1.0\textwidth]{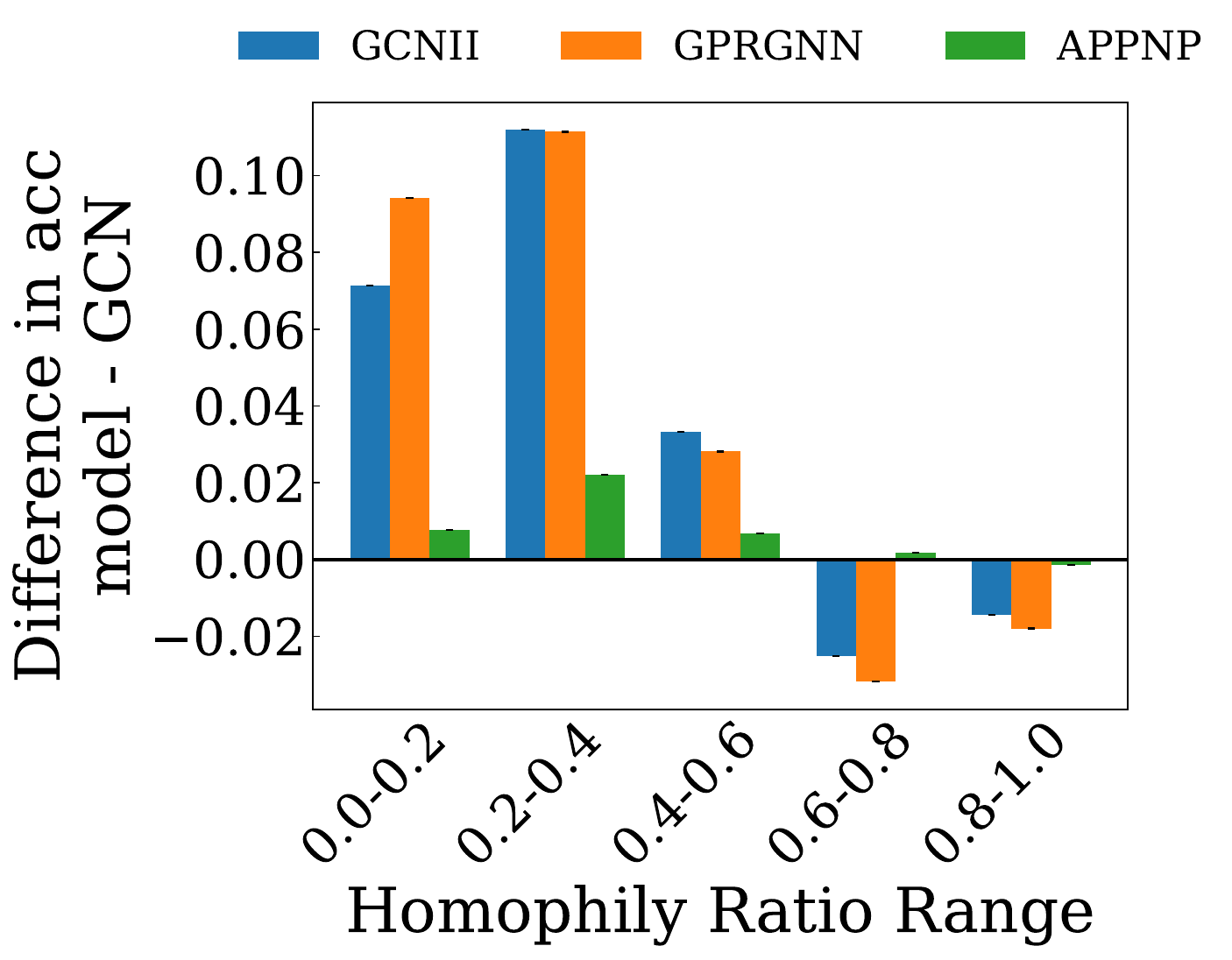}
    \end{minipage}
    }
    \subfigure[Actor ($h$=0.22)]{
    \centering
    \begin{minipage}[b]{0.31\textwidth}
    \includegraphics[width=1.0\textwidth]{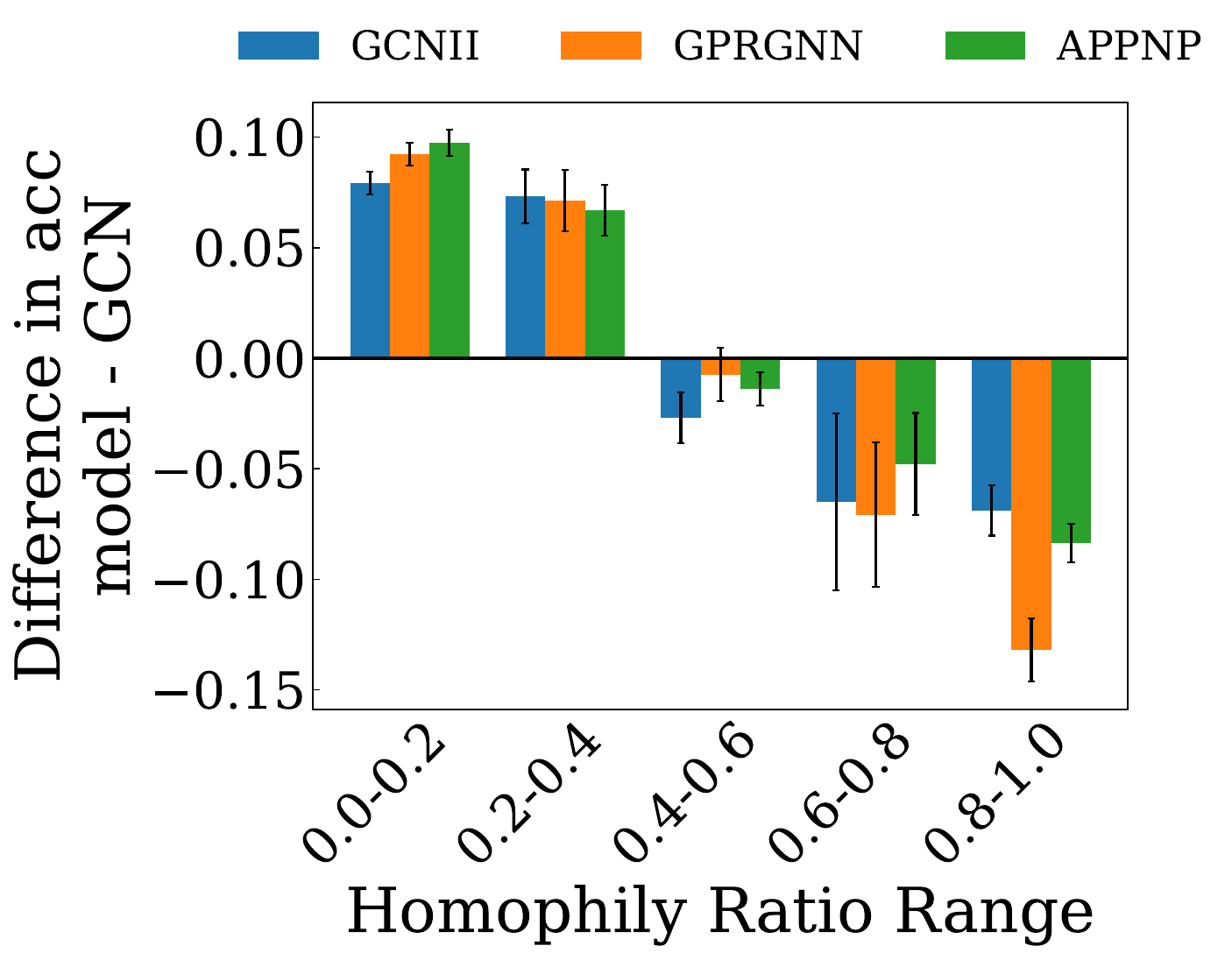}
    \end{minipage}
    }
    \subfigure[twitch-gamers ($h$=0.56)]{
    \centering
    \begin{minipage}[b]{0.31\textwidth}
    \includegraphics[width=1.0\textwidth]{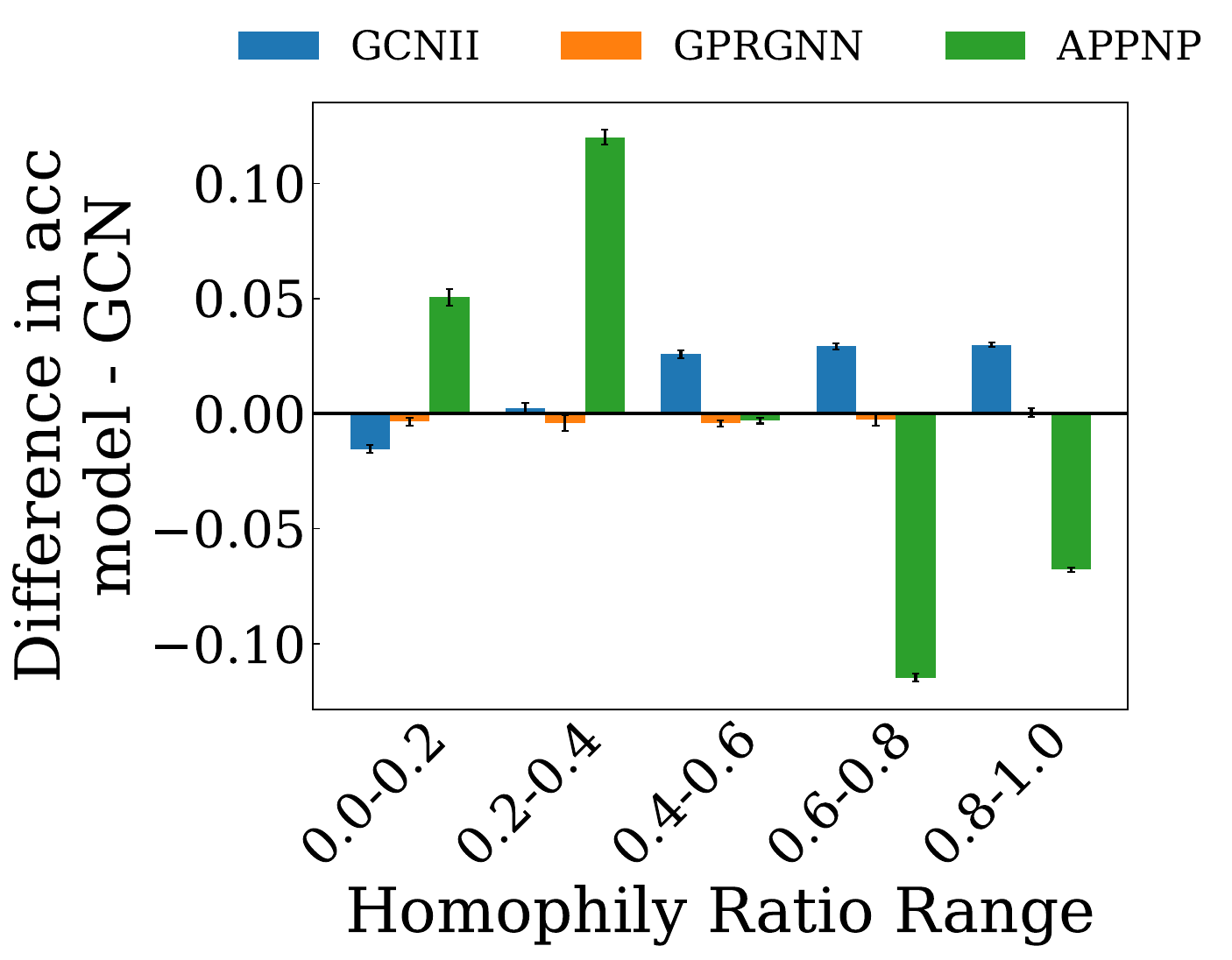}
    \end{minipage}
    }
    \subfigure[Amazon-ratings ($h$=0.38)]{
    \centering
    \begin{minipage}[b]{0.31\textwidth}
    \includegraphics[width=1.0\textwidth]{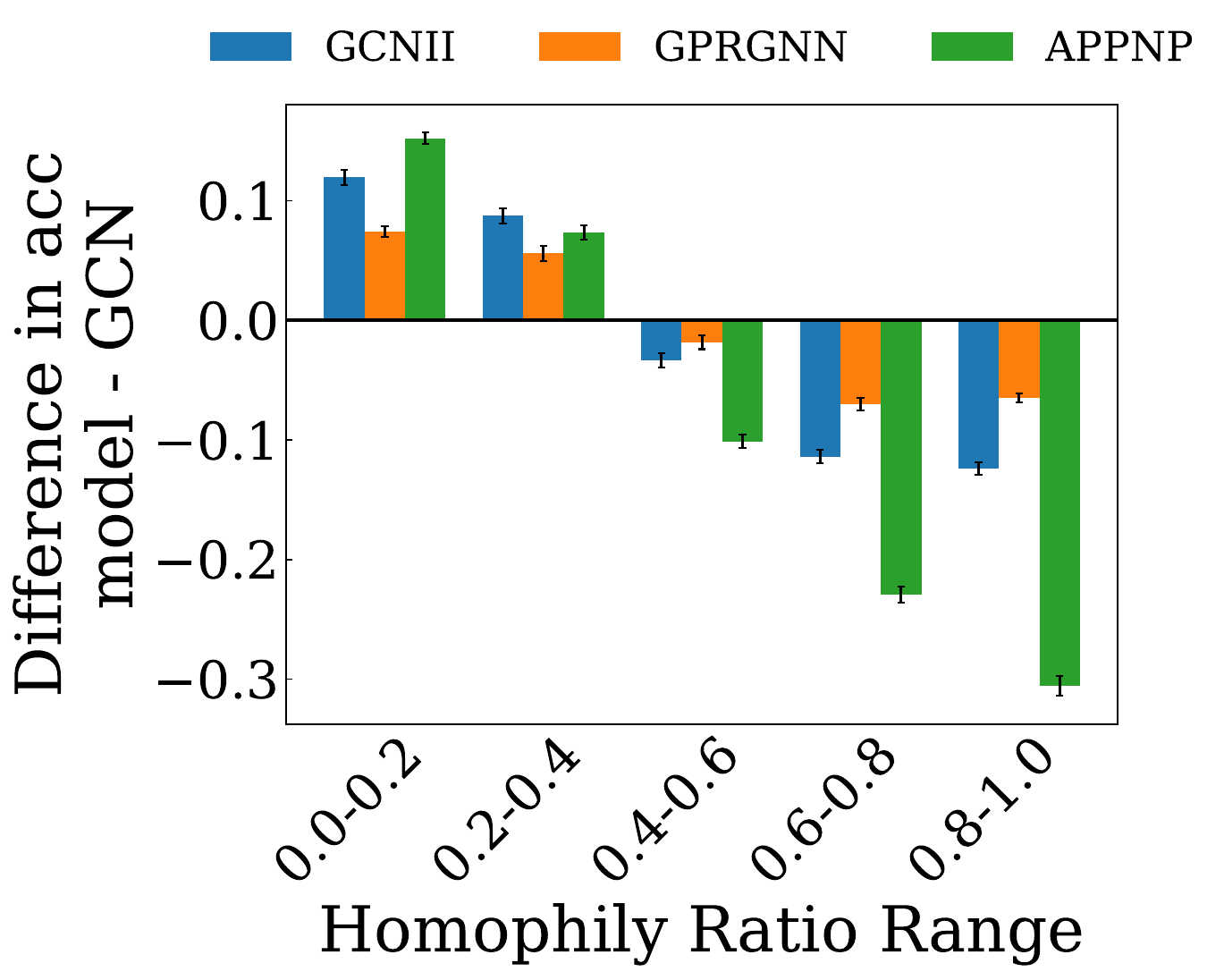}
    \end{minipage}
    }
    \caption{\label{fig:app_compare_deeper} Performance comparison between GCN and deeper GNN models on more datasets. Each bar represents the accuracy gap on a specific test node subgroup exhibiting a local homophily ratio within the range specified on the x-axis. }
\end{figure*}
\section{Additional Experiment on performance disparity across subgroups\label{app:theory_verify}}
In Section~\ref{subsec:theory_verify}, we conduct experiments to empirically examine the effects of two-hop aggregated-feature distance and the homophily ratio difference on the test performance of different GNN models. 
Despite the empirical success on the second hop, we conduct additional experiment results focusing on observations on higher-order homophily ratios, higher-order aggregated feature distance, and more datasets in this section.
Notably, despite more complicated datasets, our theoretical analysis can still achieve empirical success.

\begin{figure*}[!h]
    \subfigure[PubMed]{
        \centering
        \begin{minipage}[b]{0.23\textwidth}
        \includegraphics[width=1.0\textwidth]{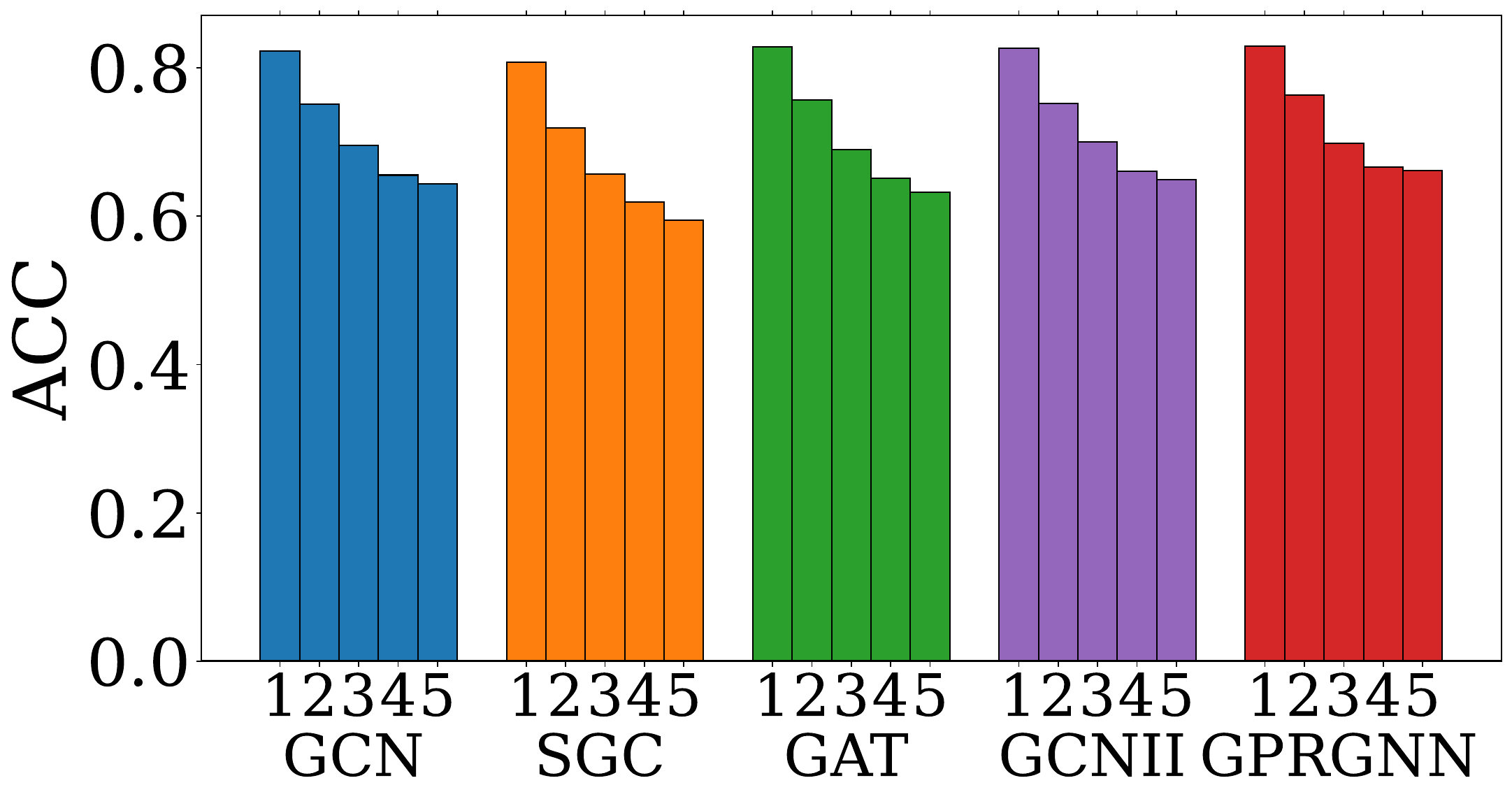}
        \end{minipage}
    }
    \subfigure[Ogbn-arxiv]{
    \centering
    \begin{minipage}[b]{0.23\textwidth}
    \includegraphics[width=1.0\textwidth]{fig/fairness_higher/arxiv_higher_homo.pdf}
    \end{minipage}
    }
    \subfigure[Chameleon]{
    \centering
    \begin{minipage}[b]{0.23\textwidth}
    \includegraphics[width=1.0\textwidth]{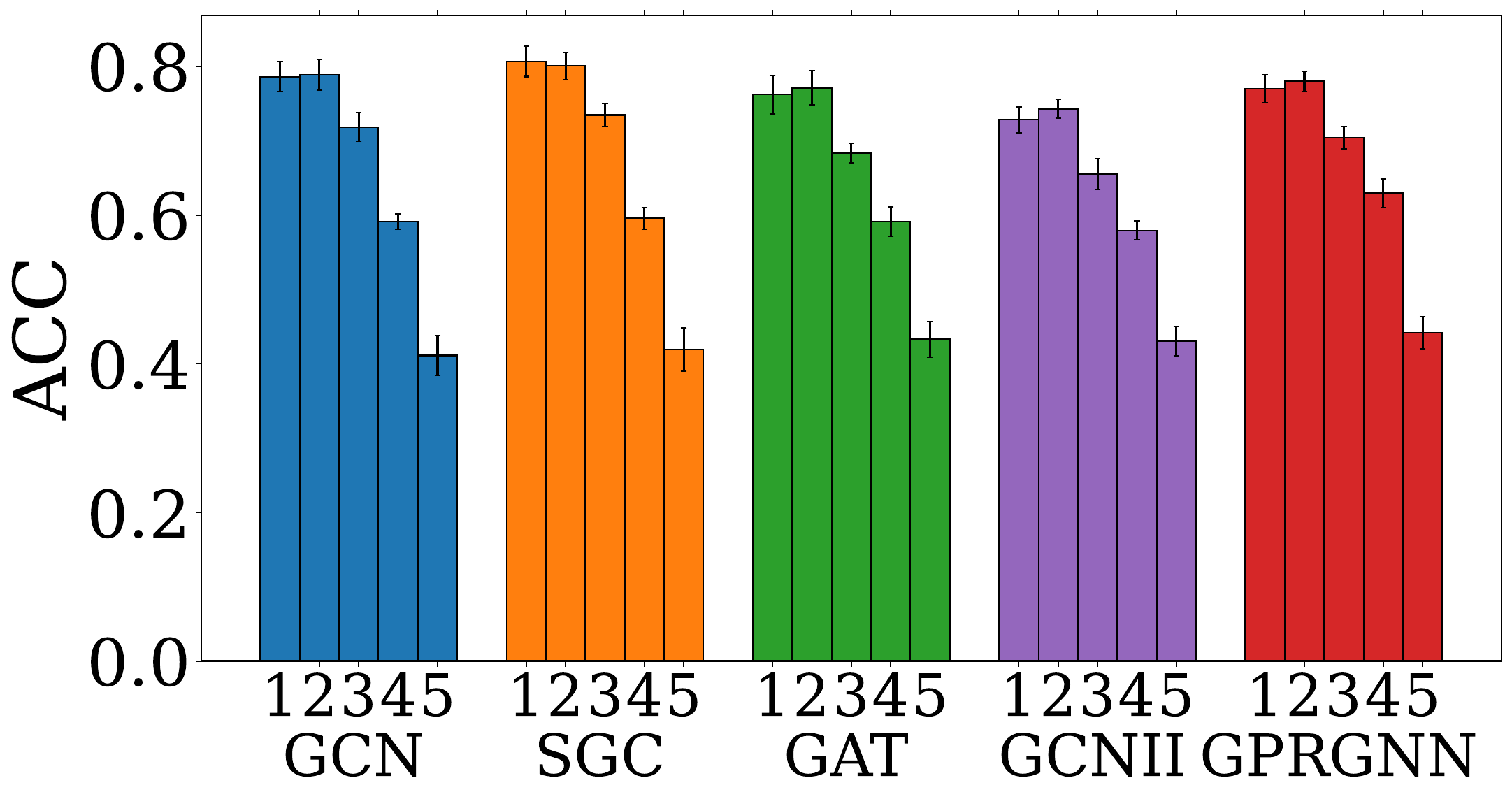}
    \end{minipage}
    }
    \subfigure[Squirrel]{
    \centering
    \begin{minipage}[b]{0.23\textwidth}
    \includegraphics[width=1.0\textwidth]{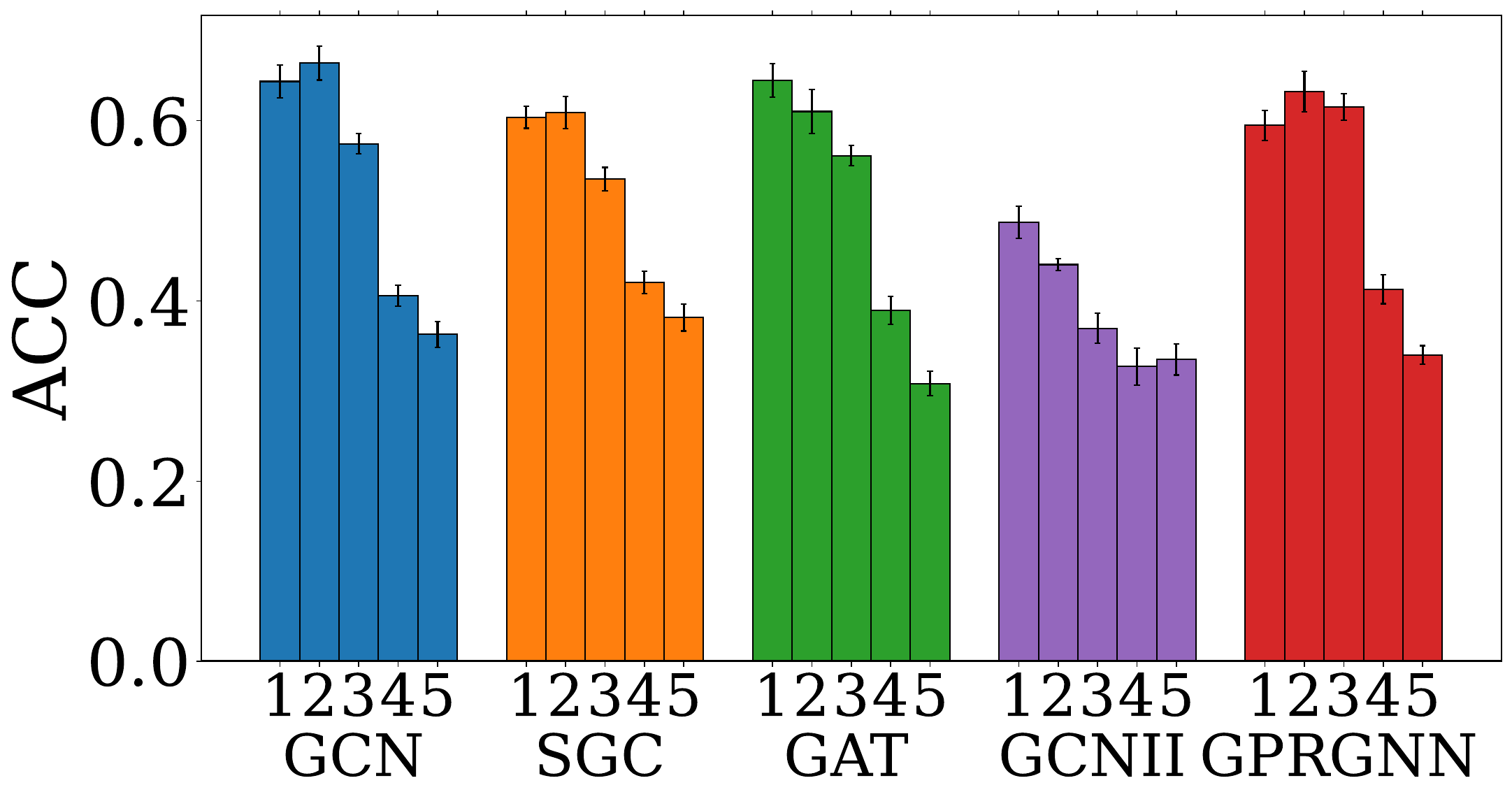}
    \end{minipage}
    }
    \caption{\label{fig:fair_higher_homo} 
    Test accuracy disparity across node subgroups by aggregated-feature distance and \textbf{higher-order homophily ratio differences} to training nodes. Each figure corresponds to a dataset, and each bar cluster corresponds to a GNN model. Bars labeled 1 to 5 represent subgroups with increasing differences to training set. }
\end{figure*}

\begin{figure*}[!h]
    \subfigure[PubMed]{
        \centering
        \begin{minipage}[b]{0.23\textwidth}
        \includegraphics[width=1.0\textwidth]{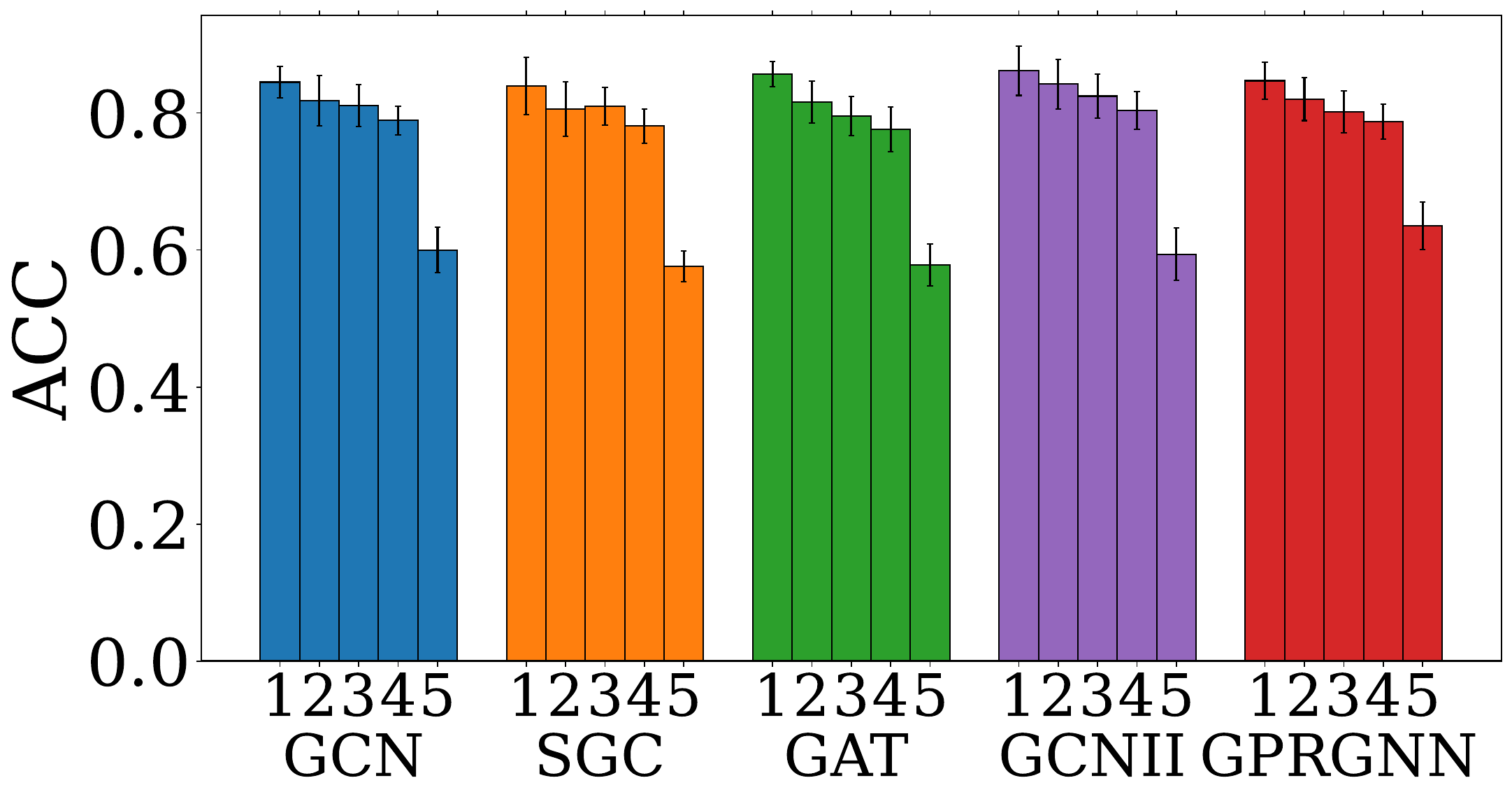}
        \end{minipage}
    }
    \subfigure[Ogbn-arxiv]{
    \centering
    \begin{minipage}[b]{0.23\textwidth}
    \includegraphics[width=1.0\textwidth]{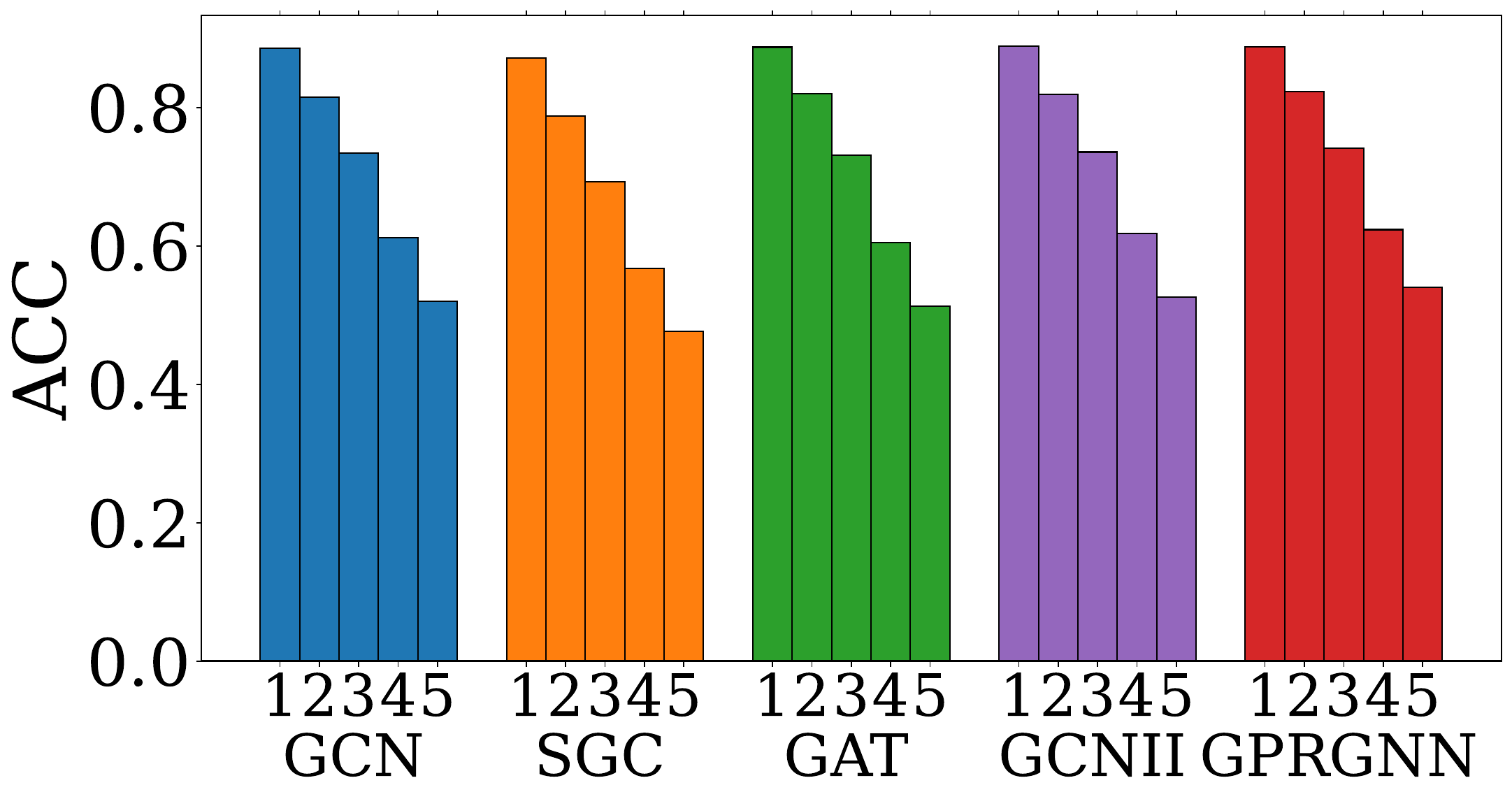}
    \end{minipage}
    }
    \subfigure[Chameleon]{
    \centering
    \begin{minipage}[b]{0.23\textwidth}
    \includegraphics[width=1.0\textwidth]{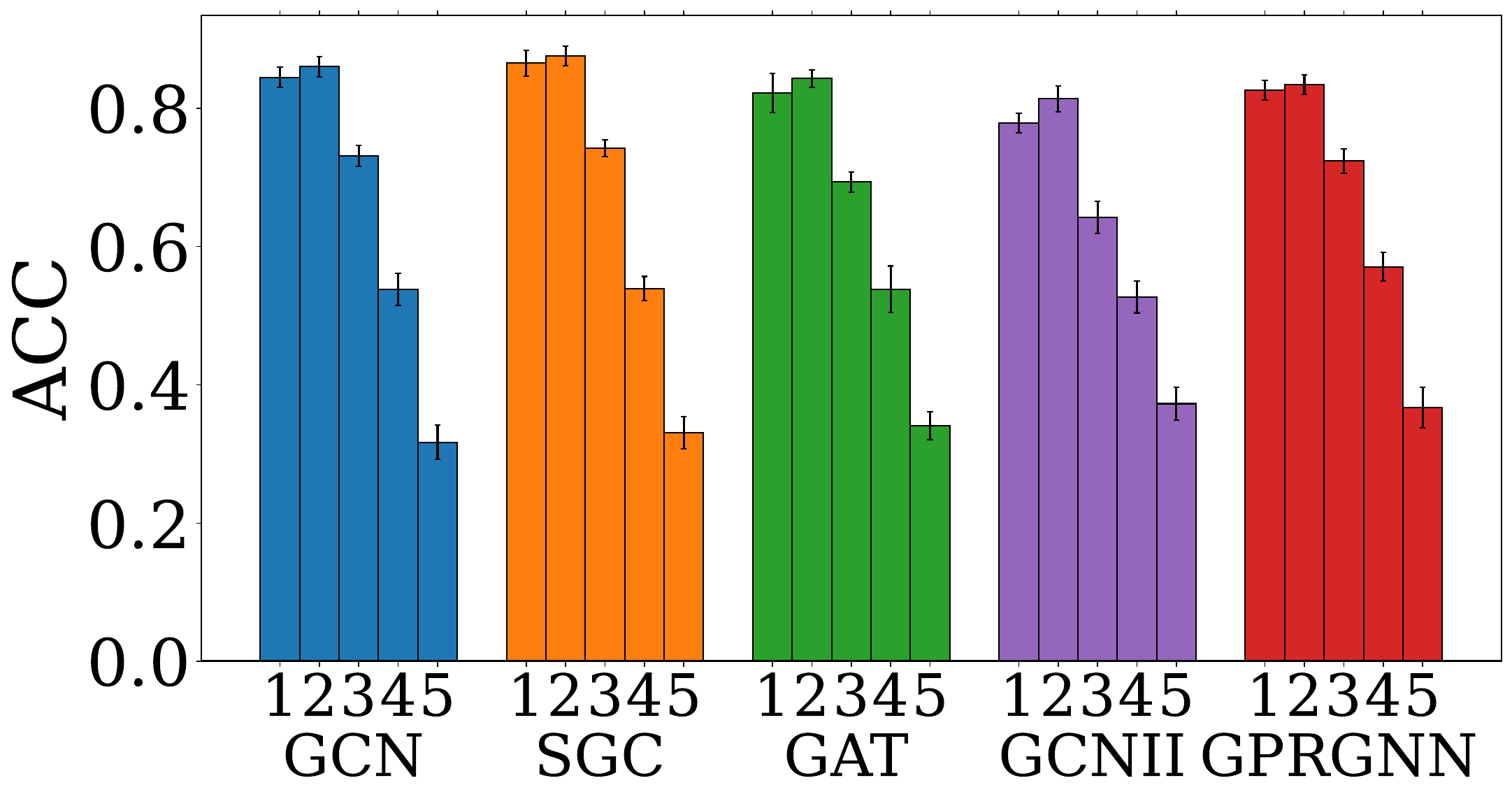}
    \end{minipage}
    }
    \subfigure[Squirrel]{
    \centering
    \begin{minipage}[b]{0.23\textwidth}
    \includegraphics[width=1.0\textwidth]{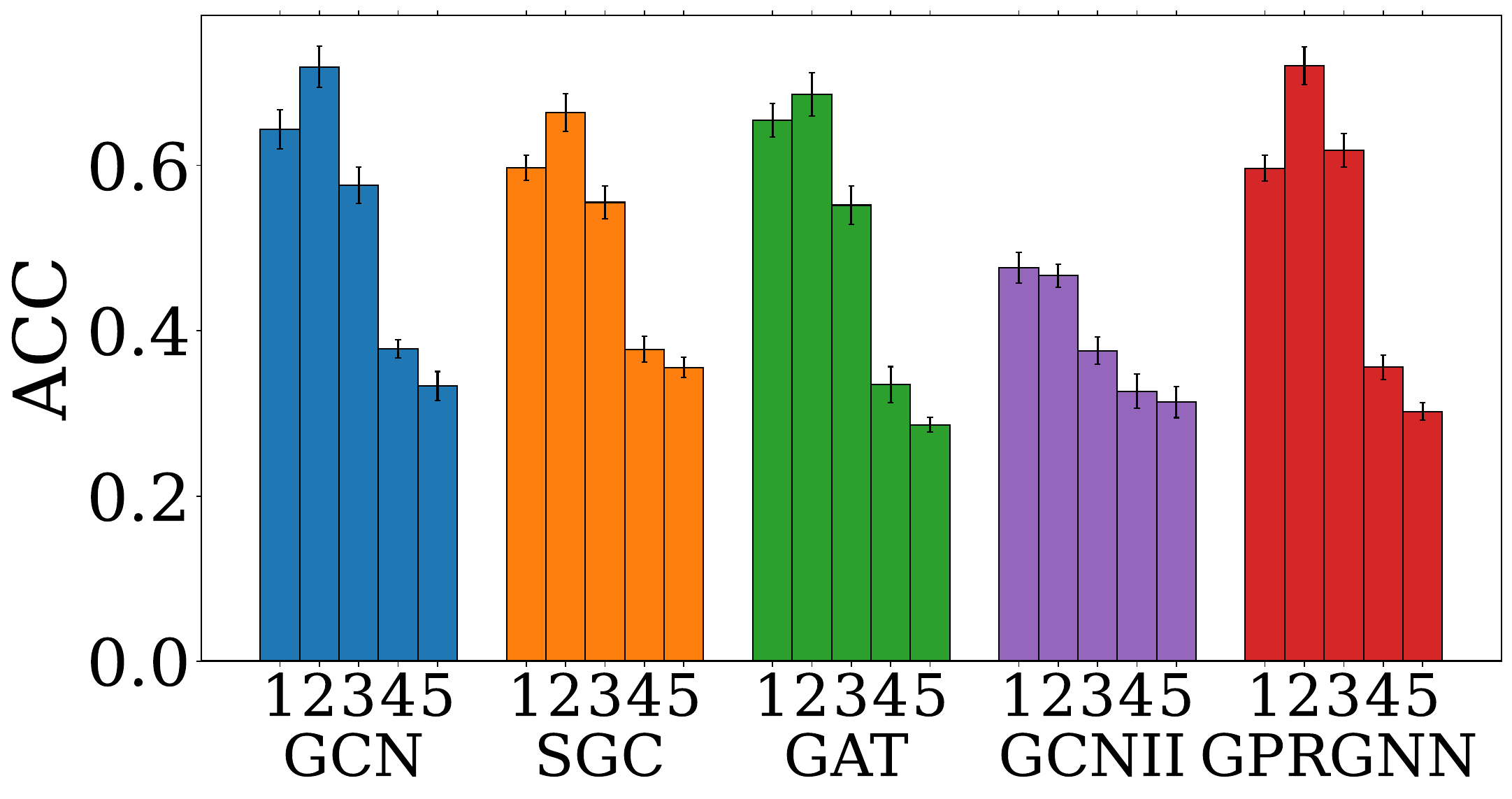}
    \end{minipage}
    }
    \caption{\label{fig:fair_higher_distance} 
    Test accuracy disparity across node subgroups by \textbf{higher-order aggregated-feature distance} and homophily ratio differences to training nodes. Each figure corresponds to a dataset, and each bar cluster corresponds to a GNN model. Bars labeled 1 to 5 represent subgroups with increasing difference to training set. }
\end{figure*}

\begin{figure*}[!h]
    \subfigure[PubMed]{
        \centering
        \begin{minipage}[b]{0.23\textwidth}
        \includegraphics[width=1.0\textwidth]{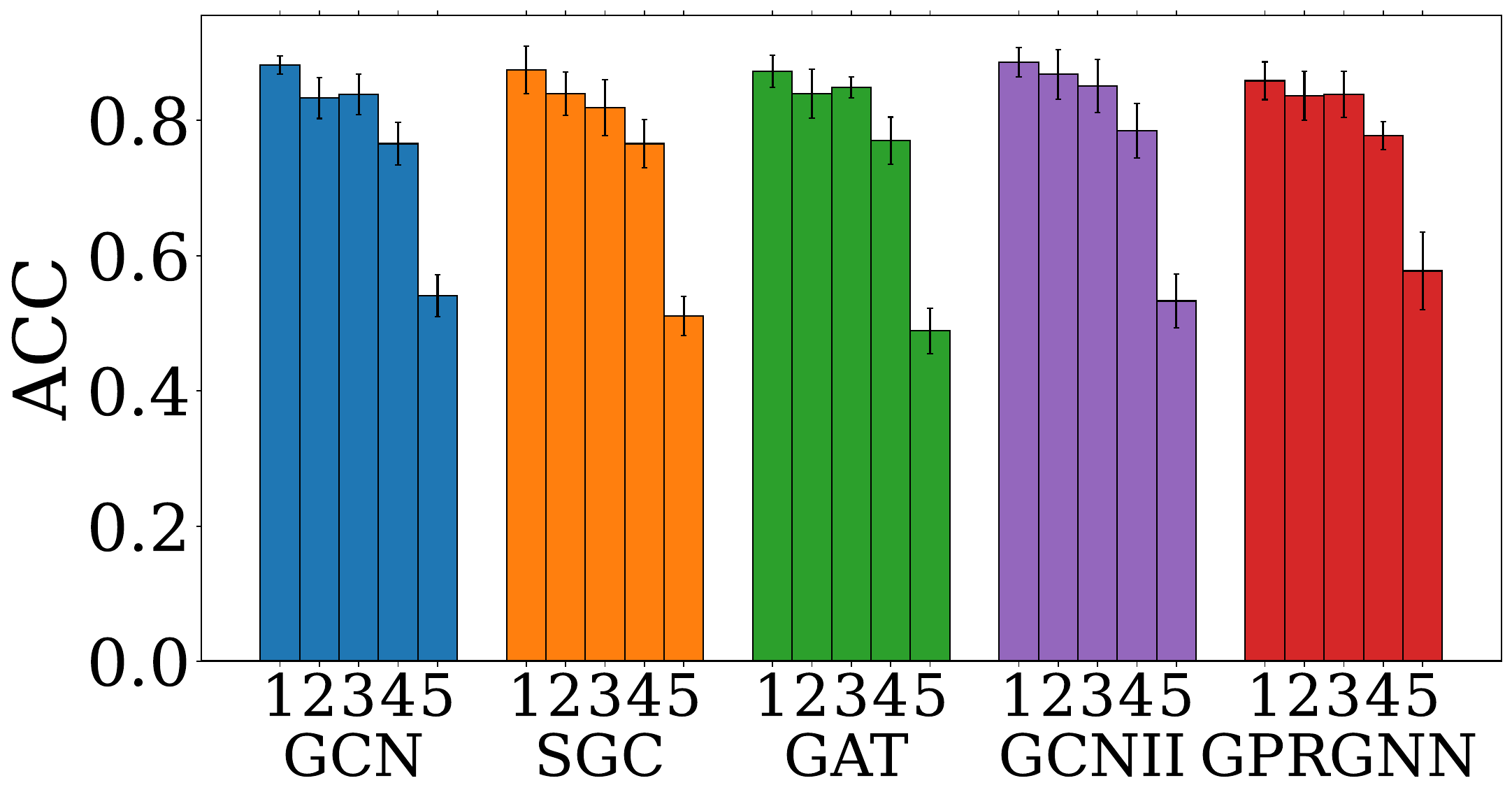}
        \end{minipage}
    }
    \subfigure[Ogbn-arxiv]{
    \centering
    \begin{minipage}[b]{0.23\textwidth}
    \includegraphics[width=1.0\textwidth]{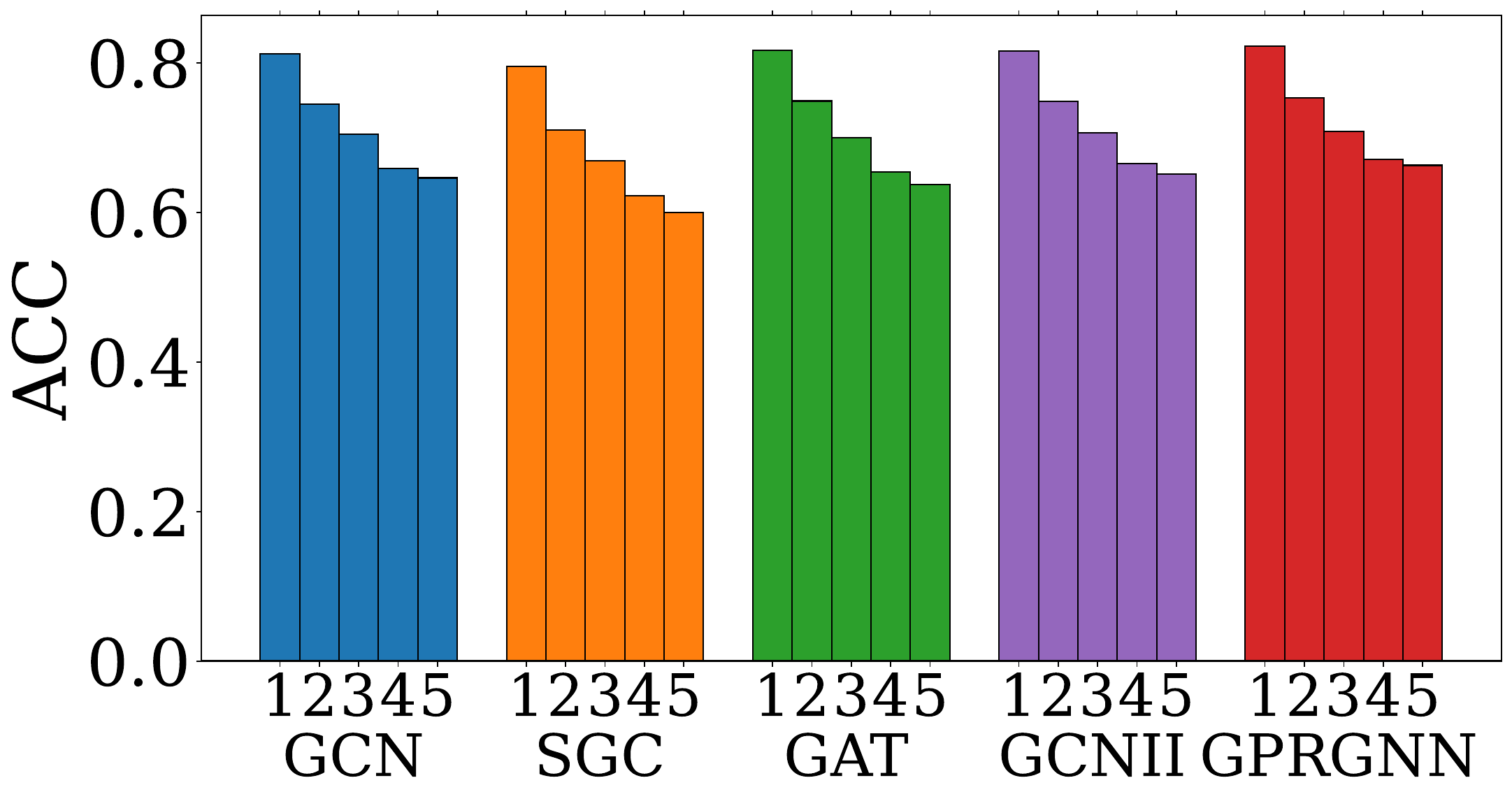}
    \end{minipage}
    }
    \subfigure[Chameleon]{
    \centering
    \begin{minipage}[b]{0.23\textwidth}
    \includegraphics[width=1.0\textwidth]{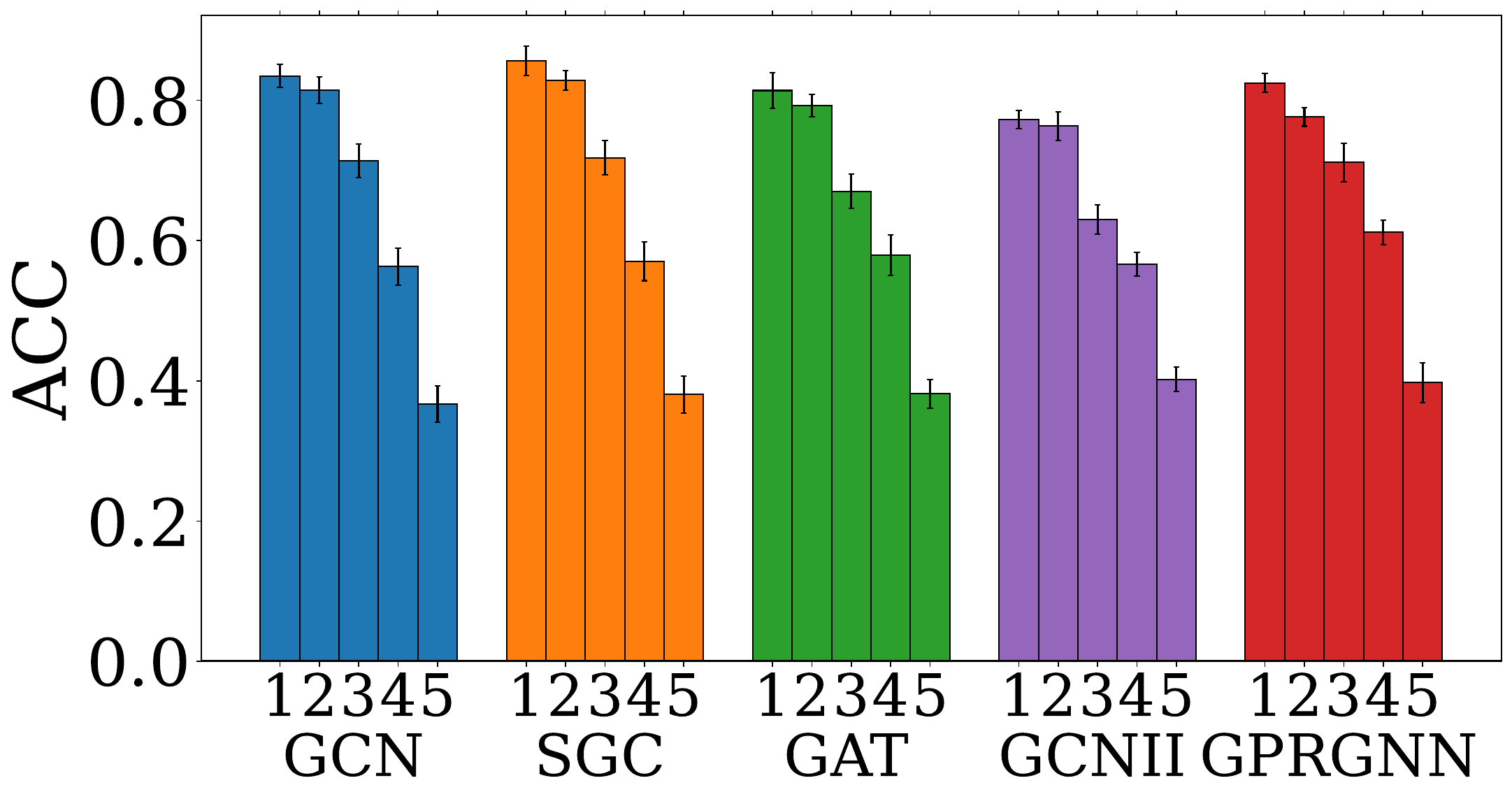}
    \end{minipage}
    }
    \subfigure[Squirrel]{
    \centering
    \begin{minipage}[b]{0.23\textwidth}
    \includegraphics[width=1.0\textwidth]{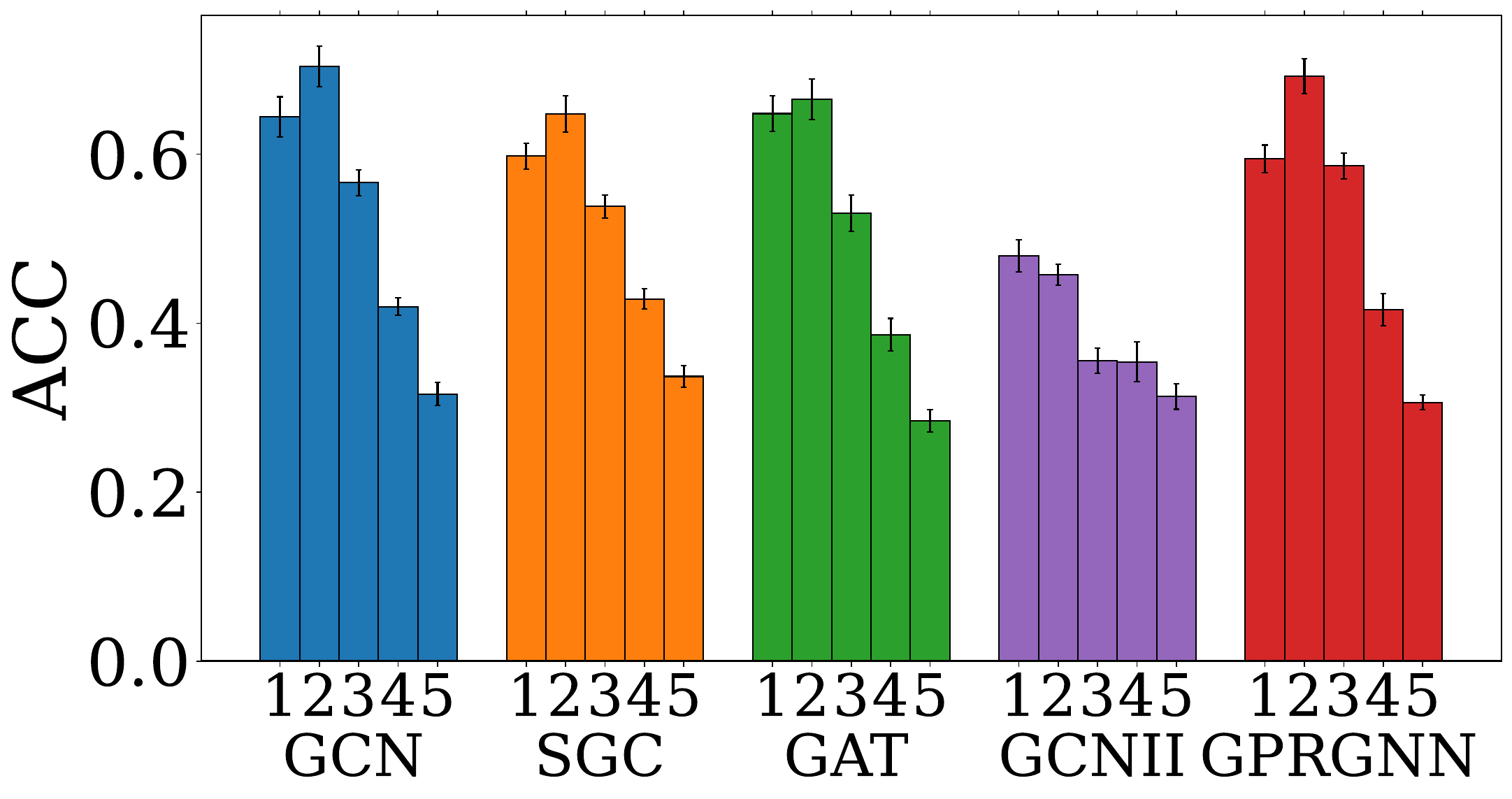}
    \end{minipage}
    }
    \caption{\label{fig:fair_both_higher} 
    Test accuracy disparity across node subgroups by \textbf{higher-order aggregated feature distance and higher-order homophily ratio differences} to training nodes. Each figure corresponds to a dataset, and each bar cluster corresponds to a GNN model. Bars labeled 1 to 5 represent subgroups with increasing difference to training set. }
\end{figure*}

\subsection{Additional investigation on higher-order neighborhood\label{app:sub_fair_higher_homo}}
Our theory~\ref{theorm:main} in Section~\ref{subsec:theory} focuses on the one-hop aggregation while it could be easily extended to the higher-order aggregation case in the real-world scenario. 
More experiments revolving on higher-order homophily ratio and aggregated features are shown in this subsection. 
In particular, the $k$-hop node homophily ratio can be calculated as:
\begin{equation}
    h_i^{(k)} =  \frac{|\{u\in \mathcal{N}_k(v_i): y_u=y_v \}|}{|\mathcal{N}_k(v_i)|}
\end{equation}
where $\mathcal{N}_k(v_i)$ denotes the $k$-hop neighbor node set of $v_i$. 
We then sort test nodes in
terms of the disparity score and split them into 5 equal-size. 
Figures~\ref{fig:fair_higher_homo} and \ref{fig:fair_higher_distance} illustrate results on higher-order~(third hop) homophily ratio difference and higher-order aggregated feature distance~(third hop), respectively while Figure~\ref{fig:fair_both_higher} illustrates on the result on higher-order homophily ratio difference and higher-order aggregated feature distance. 

A similar phenomenon can be found with the expected trend of test accuracy in terms of increasing differences on the aggregated feature distance and homophily ratio differences in different scenarios.
Exceptional observation can be found in the following scenarios: 
(1) Squirrel dataset with higher-order aggregated feature. It may be because such higher-order feature aggregation leads to more distribution overlapping between different categories. 
(2) PubMed with both higher-order aggregated features and homophily ratio. The potential reason is that the PubMed dataset does not rely on higher-order information. 
Overall speaking, those results further indicate that for the generalization of our theory in the real-world scenario.

\subsection{Additional results on more datasets} 
In this subsection, we conduct more experimental results on the datasets with more complicated with more diverse structural patterns, including Cora, CiteSeer, IGB-tiny, Twitch-gamers, Amazon-ratings, and Actor.
A comprehensive discussion can be found in Appendix~\ref{app:density}. 
The performance on different node subgroups is presented in Figure~\ref{fig:app_fair}.
A consistent observation can be found with the expected test accuracy degradation trend in terms of larger feature distance and homophily ratio. 
the difference in most datasets, except for the Actor. 
The potential reason is that its structure could hardly provide useful information. 
The key evidence is that the vanilla MLP model outperforms all the GNN models shown in Table~\ref{tab:hete_results}. 
The actor dataset shows  entirely different properties with other datasets. 
Furthermore, we investigate on the individual effect of aggregated feature distance and homophily ratio difference in Figure~\ref{fig:app_fair_dis} and~\ref{fig:app_fair_homo}, respectively. 
An overall trend of performance decline with increasing disparity score is evident though some exceptions are present. 
When only considering the aggregated feature distance to training nodes, the accuracy in Twitch-gamers exhibits an opposite increasing tendency along with the increasing difference to train nodes.  
Notably, when considering only the homophily ratio difference, the decreasing tendency is clear across datasets exception Actor. We can observe even better phenomenon on the Amazon-ratings dataset than combining both homophily ratio difference and the aggregated feature distance. It indicates that the homophily ratio difference w.r.t structural disparity significantly contributes to those datasets with more diverse structural patterns. 
We make the above initial observations showing the practicability of our theoretical analysis on more datasets with more diverse structural patterns.
Moreover, exception can still be found on the Actor dataset. We leave a more comprehensive discussion on the Actor, as the future work.

\begin{figure*}[!ht]
    \subfigure[Cora ($h$=0.81)]{
        \centering
        \begin{minipage}[b]{0.31\textwidth}
        \includegraphics[width=1.0\textwidth]{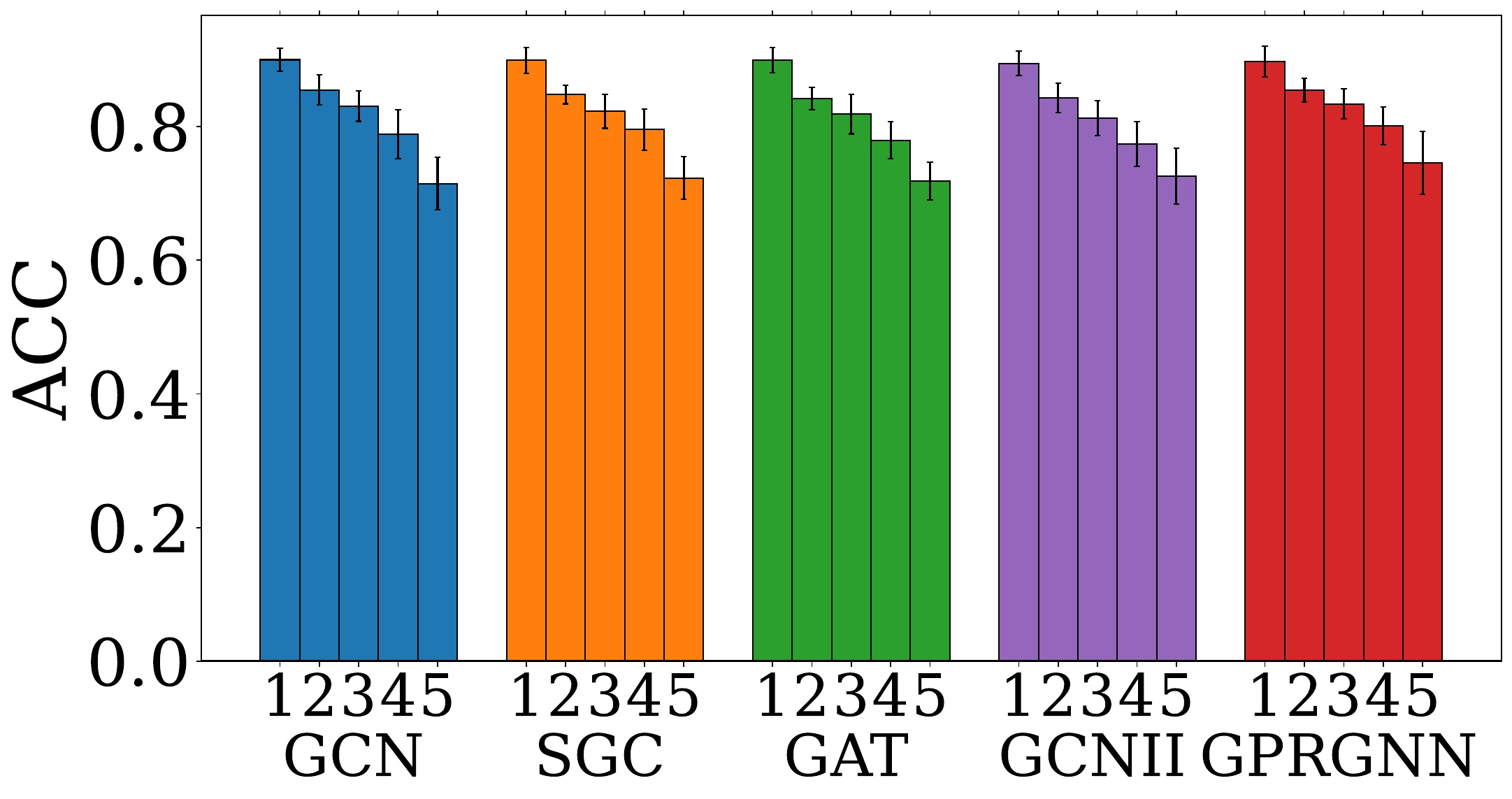}
        \end{minipage}
    }
    \subfigure[CiteSeer ($h$=0.71)]{
    \centering
    \begin{minipage}[b]{0.31\textwidth}
    \includegraphics[width=1.0\textwidth]{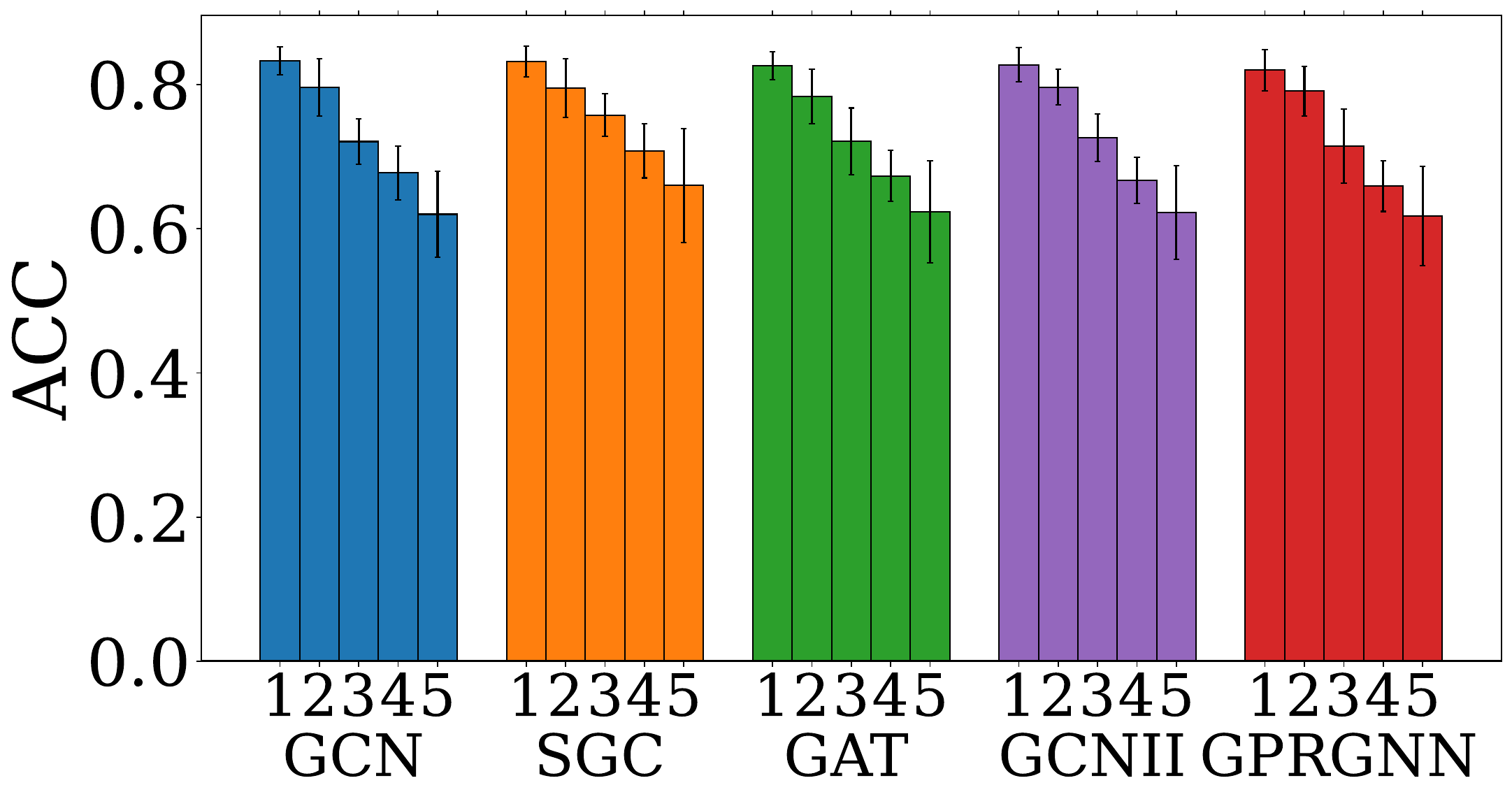}
    \end{minipage}
    }
    \subfigure[IGB-Tiny ($h$=0.57)]{
    \centering
    \begin{minipage}[b]{0.33\textwidth}
    \includegraphics[width=1.0\textwidth]{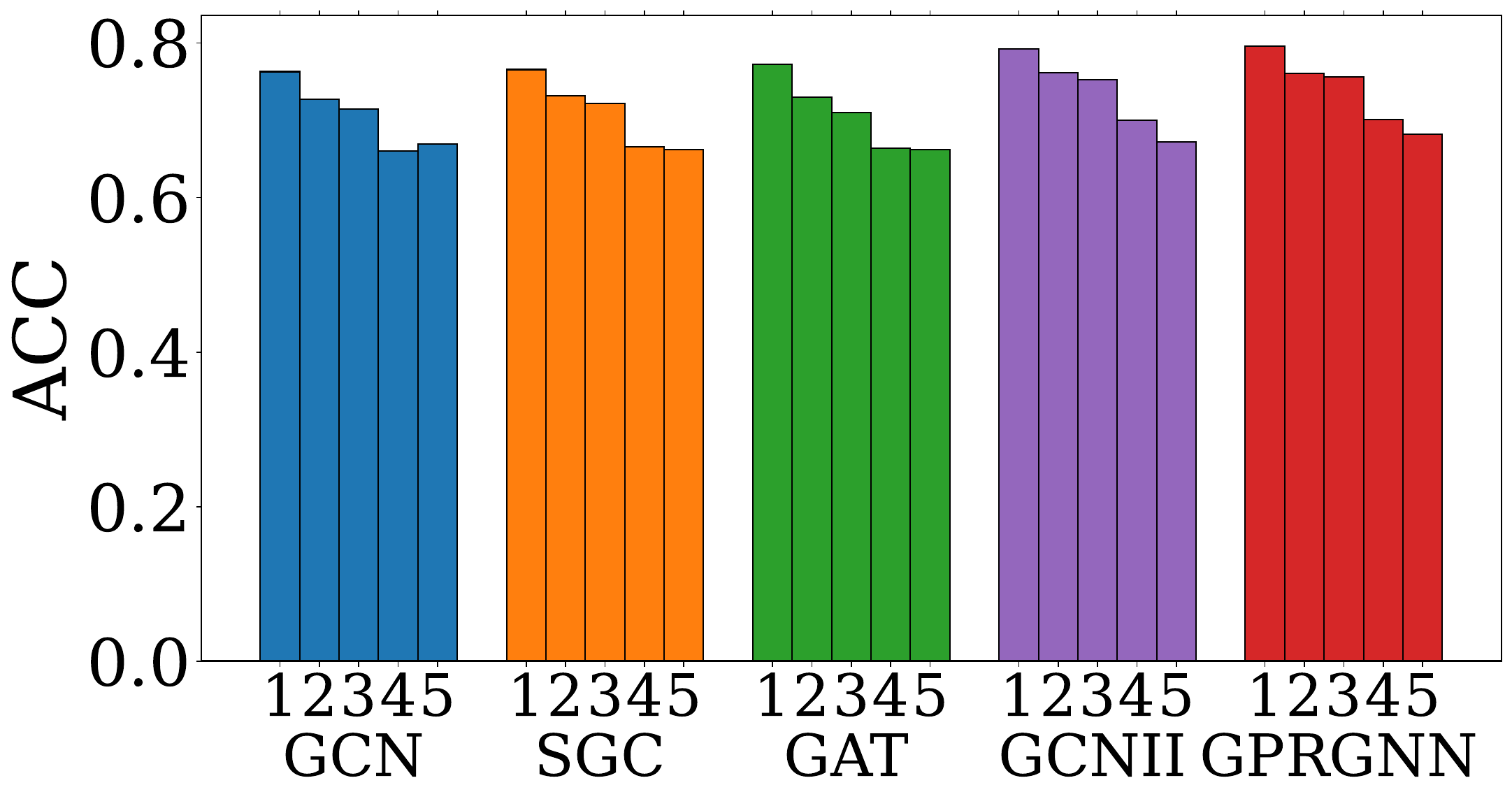}
    \end{minipage}
    }
    \\
    \subfigure[Actor ($h$=0.22)]{
        \centering
        \begin{minipage}[b]{0.31\textwidth}
        \includegraphics[width=1.0\textwidth]{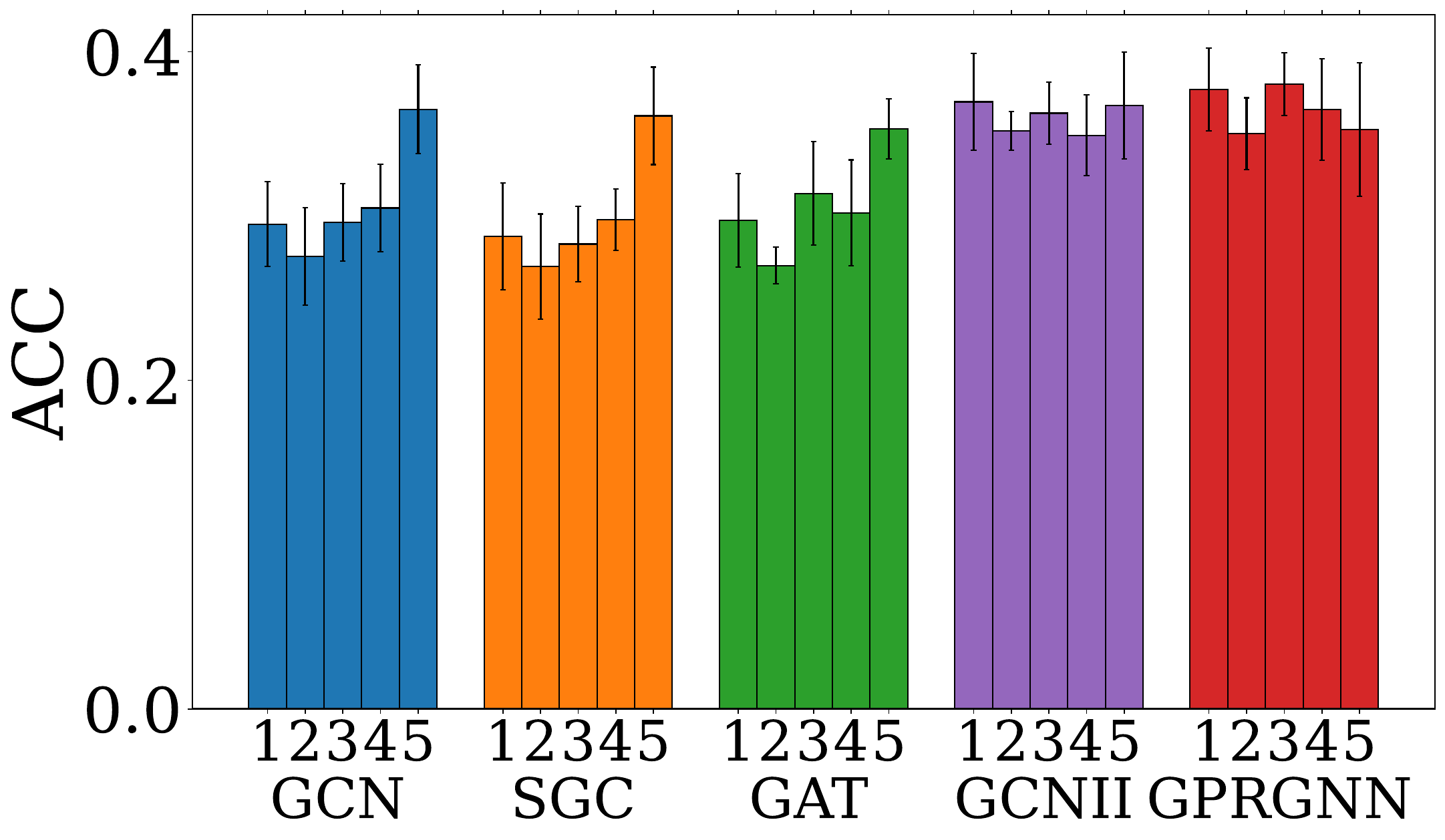}
        \end{minipage}
    }
    \subfigure[Twitch-gamers ($h$=0.56)]{
    \centering
    \begin{minipage}[b]{0.31\textwidth}
    \includegraphics[width=1.0\textwidth]{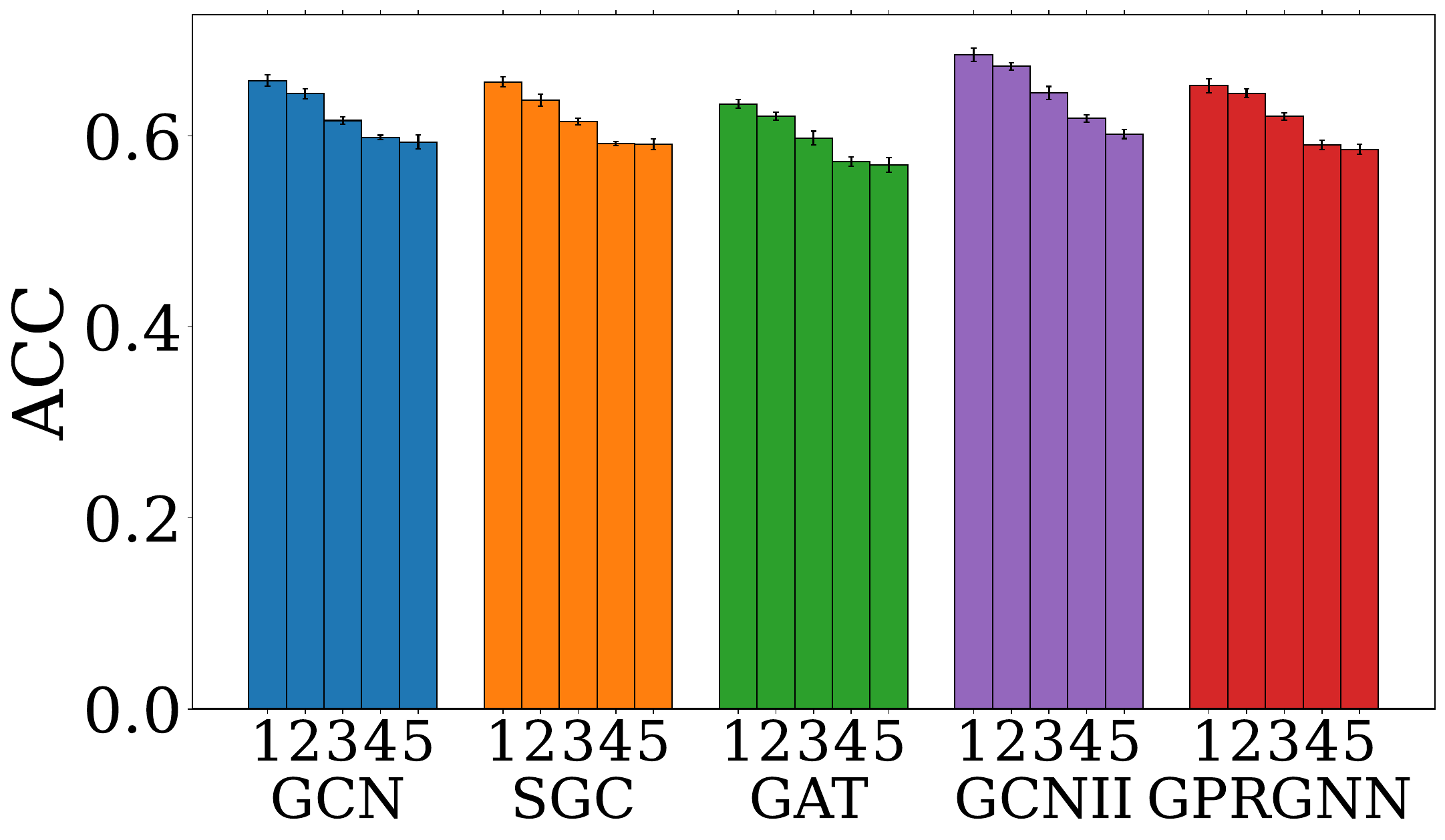}
    \end{minipage}
    }
    \subfigure[Amazon-ratings ($h$=0.38)]{
    \centering
    \begin{minipage}[b]{0.31\textwidth}
    \includegraphics[width=1.0\textwidth]{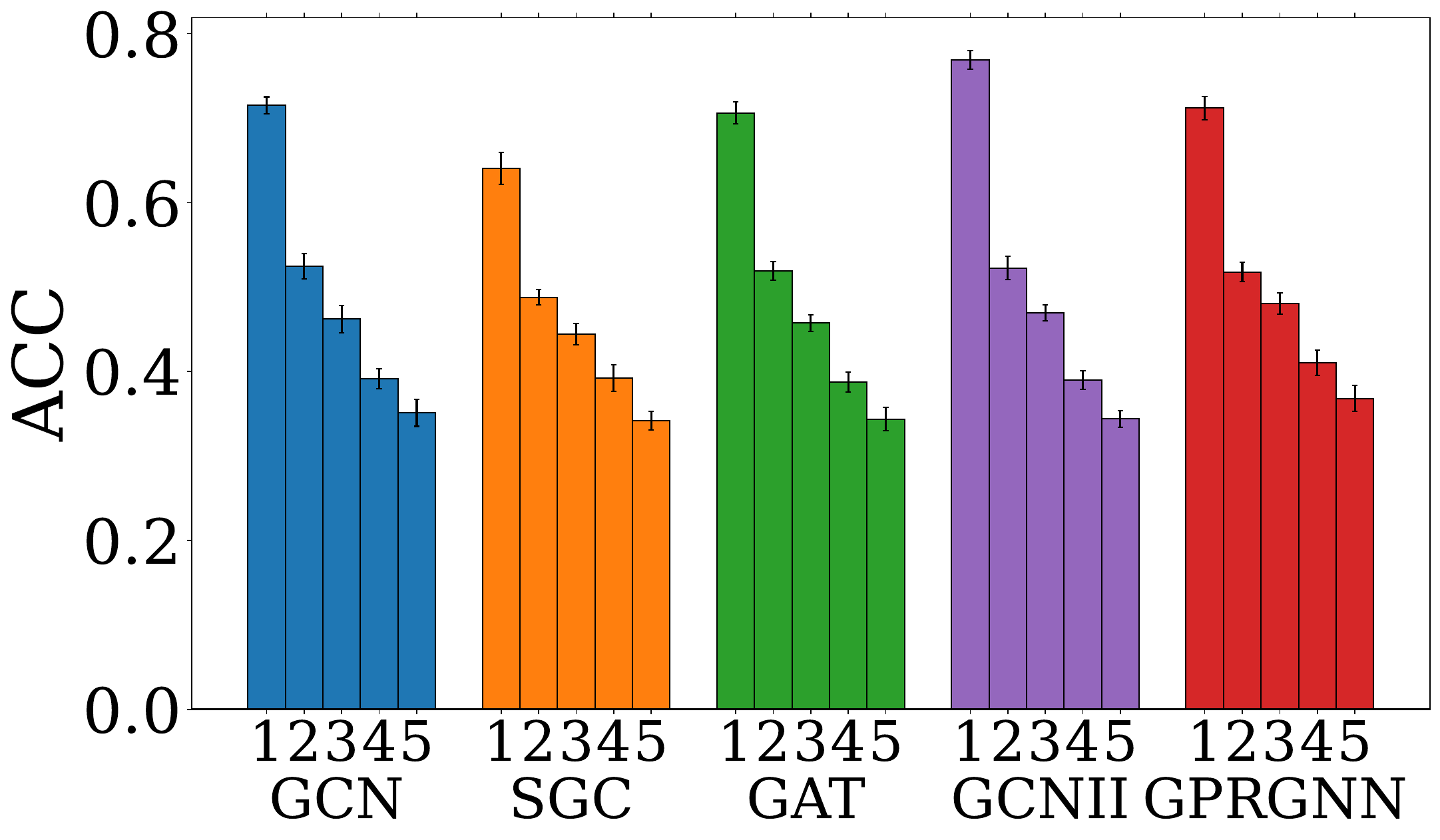}
    \end{minipage}
    }
    \caption{Test accuracy disparity across node subgroups by \textbf{aggregated-feature distance and homophily ratio differences} to training nodes on more datasets. Each figure corresponds to a dataset, and each bar cluster corresponds to a GNN model. Bars labeled 1 to 5 represent subgroups with increasing difference to training set. A clear performance decrease tendency can be found from subgroups 1 to 5 with increasing differences to the training set except the Actor dataset. \label{fig:app_fair}}
\end{figure*}

\begin{figure*}[!ht]
    \subfigure[Cora]{
        \centering
        \begin{minipage}[b]{0.31\textwidth}
        \includegraphics[width=1.0\textwidth]{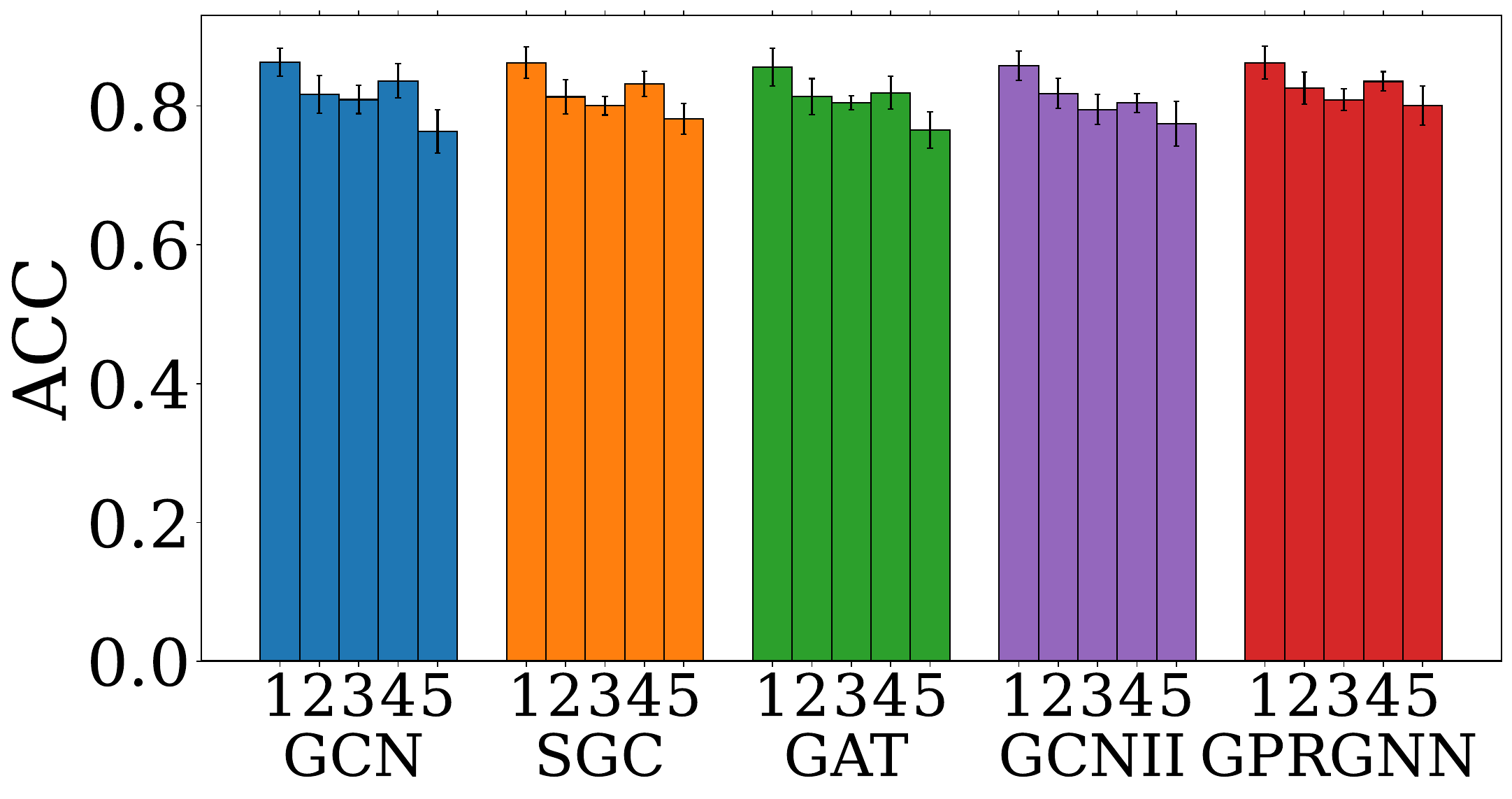}
        \end{minipage}
    }
    \subfigure[CiteSeer]{
    \centering
    \begin{minipage}[b]{0.31\textwidth}
    \includegraphics[width=1.0\textwidth]{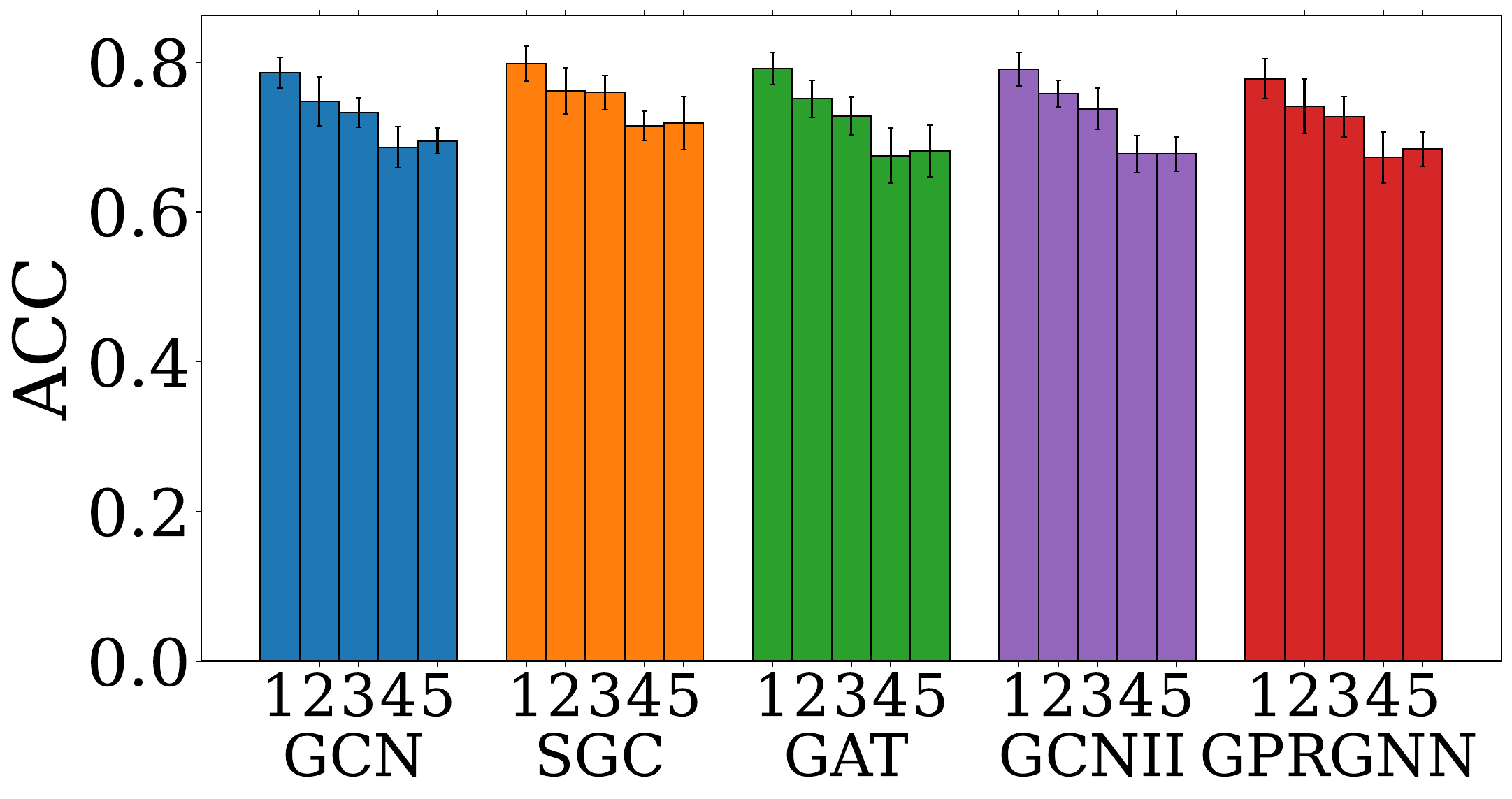}
    \end{minipage}
    }
    \subfigure[IGB-Tiny]{
    \centering
    \begin{minipage}[b]{0.31\textwidth}
    \includegraphics[width=1.0\textwidth]{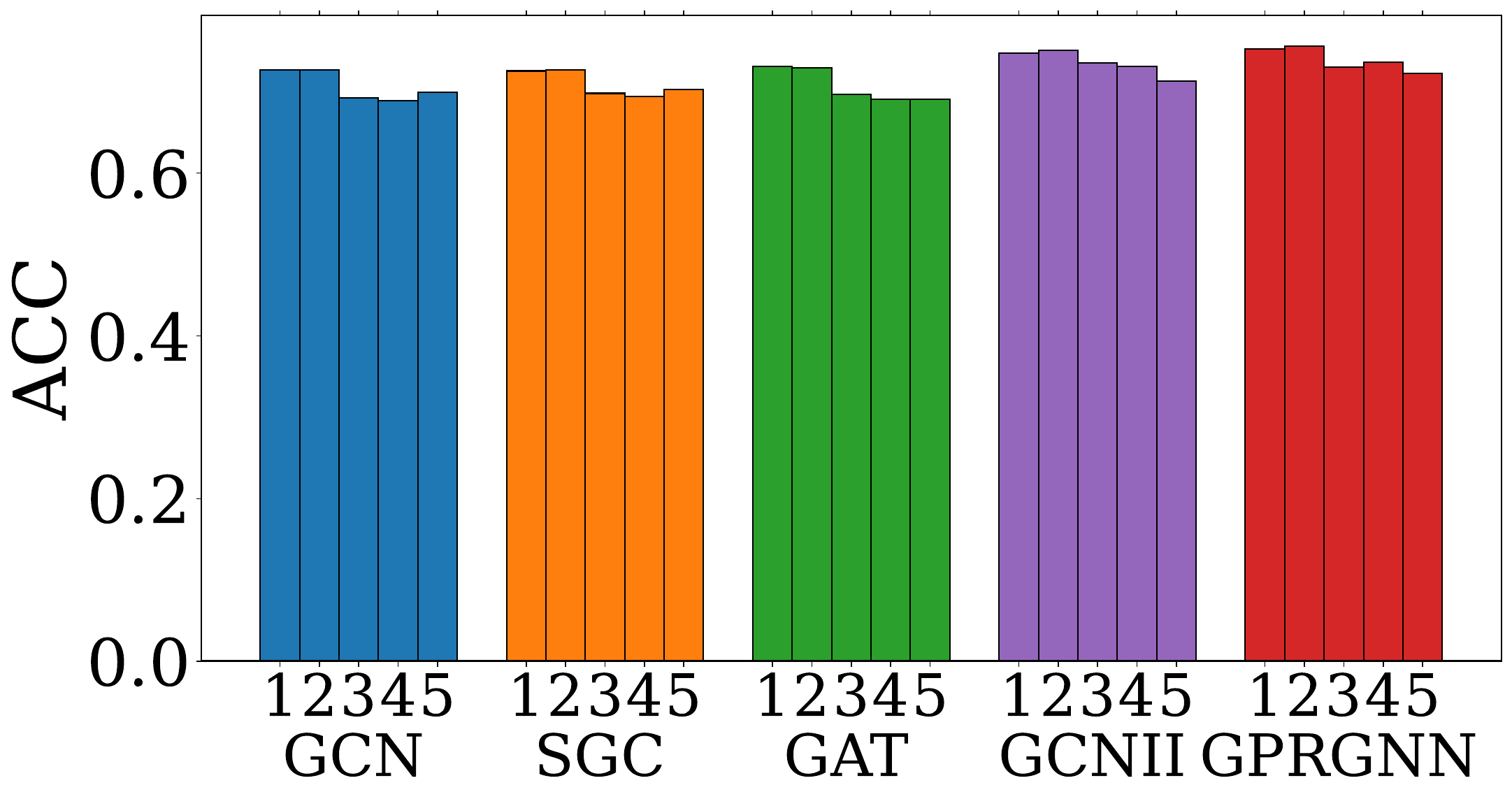}
    \end{minipage}
    }
    \\
    \subfigure[Actor]{
        \centering
        \begin{minipage}[b]{0.31\textwidth}
        \includegraphics[width=1.0\textwidth]{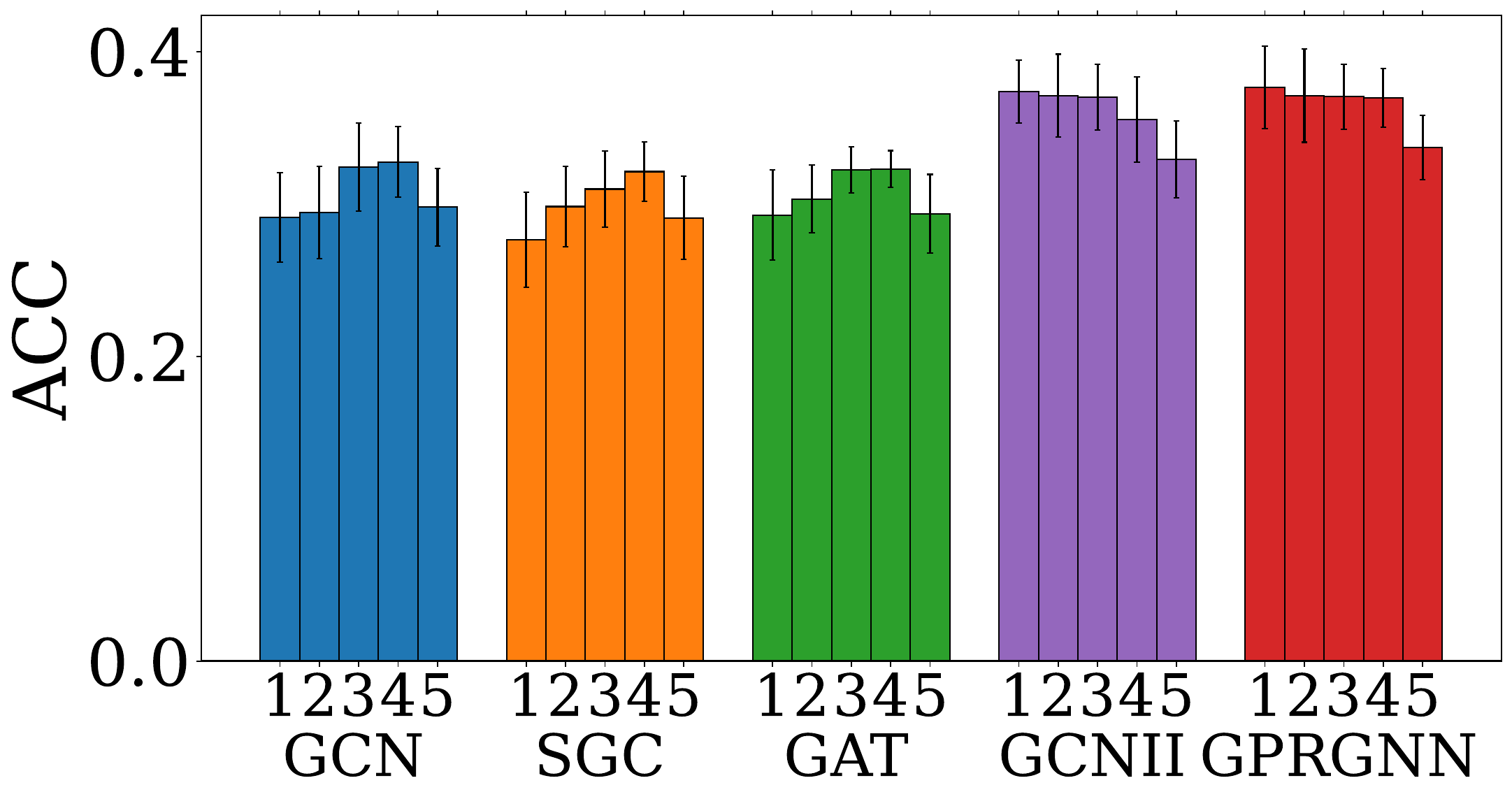}
        \end{minipage}
    }
    \subfigure[Twitch-gamers]{
    \centering
    \begin{minipage}[b]{0.31\textwidth}
    \includegraphics[width=1.0\textwidth]{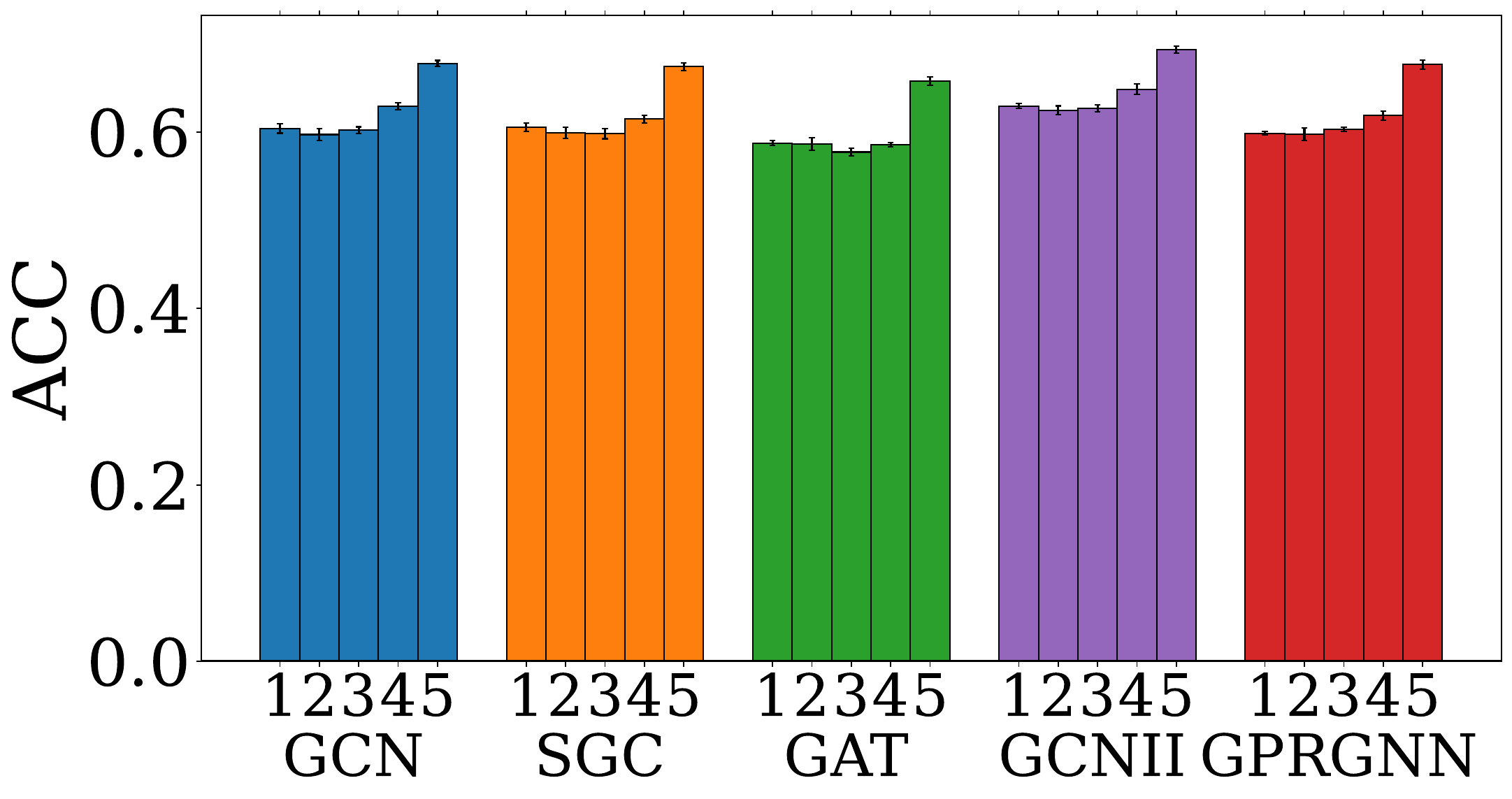}
    \end{minipage}
    }
    \subfigure[Amazon-ratings]{
    \centering
    \begin{minipage}[b]{0.31\textwidth}
    \includegraphics[width=1.0\textwidth]{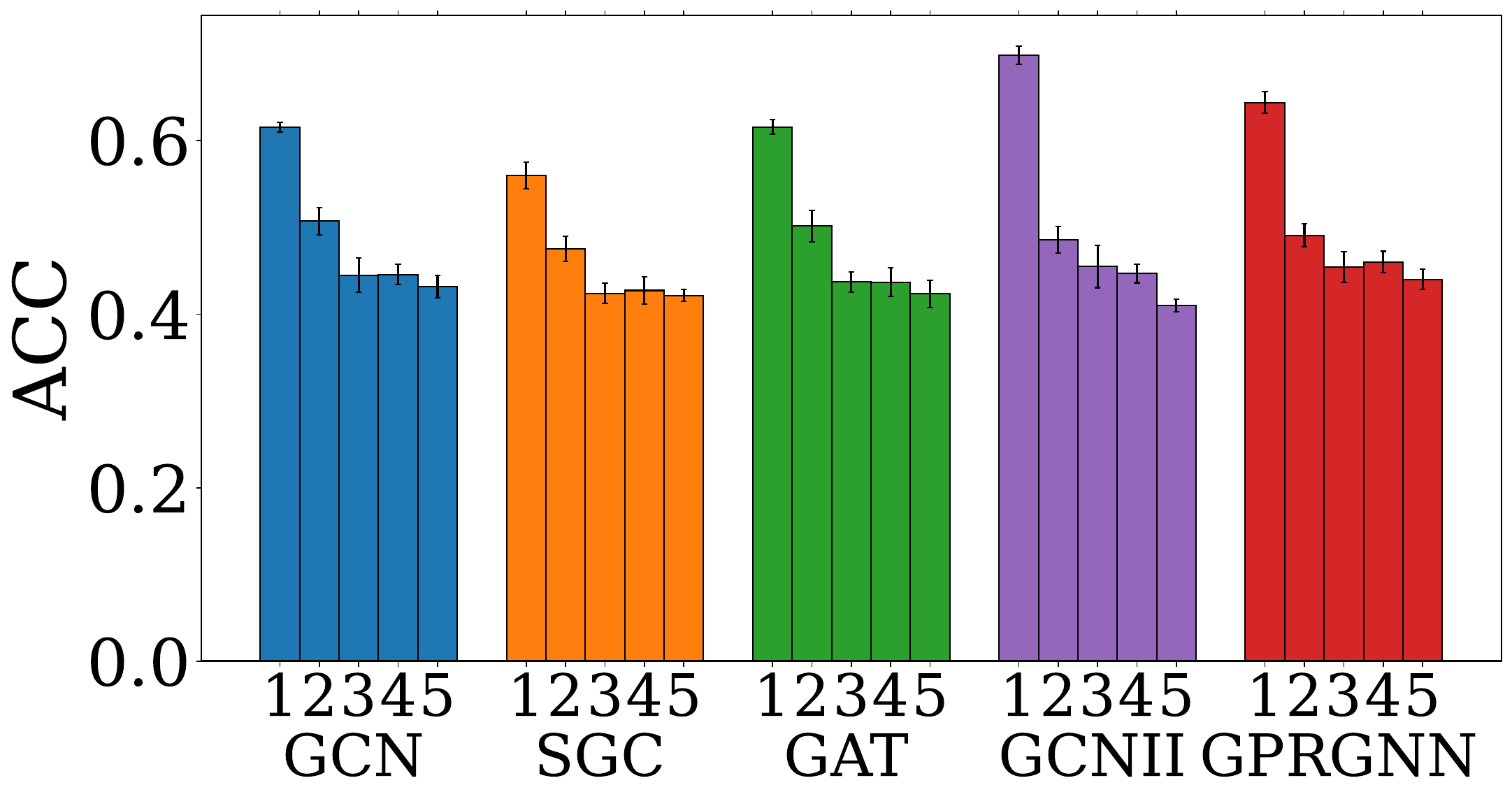}
    \end{minipage}
    }
    \caption{Test accuracy disparity across node subgroups by \textbf{aggregated-feature distance} to training nodes on more datasets. Each figure corresponds to a dataset, and each bar cluster corresponds to a GNN model. Bars labeled 1 to 5 represent subgroups with increasing differences to the training set.\label{fig:app_fair_dis}}
\end{figure*}

\begin{figure*}[!ht]
    \subfigure[Cora]{
        \centering
        \begin{minipage}[b]{0.31\textwidth}
        \includegraphics[width=1.0\textwidth]{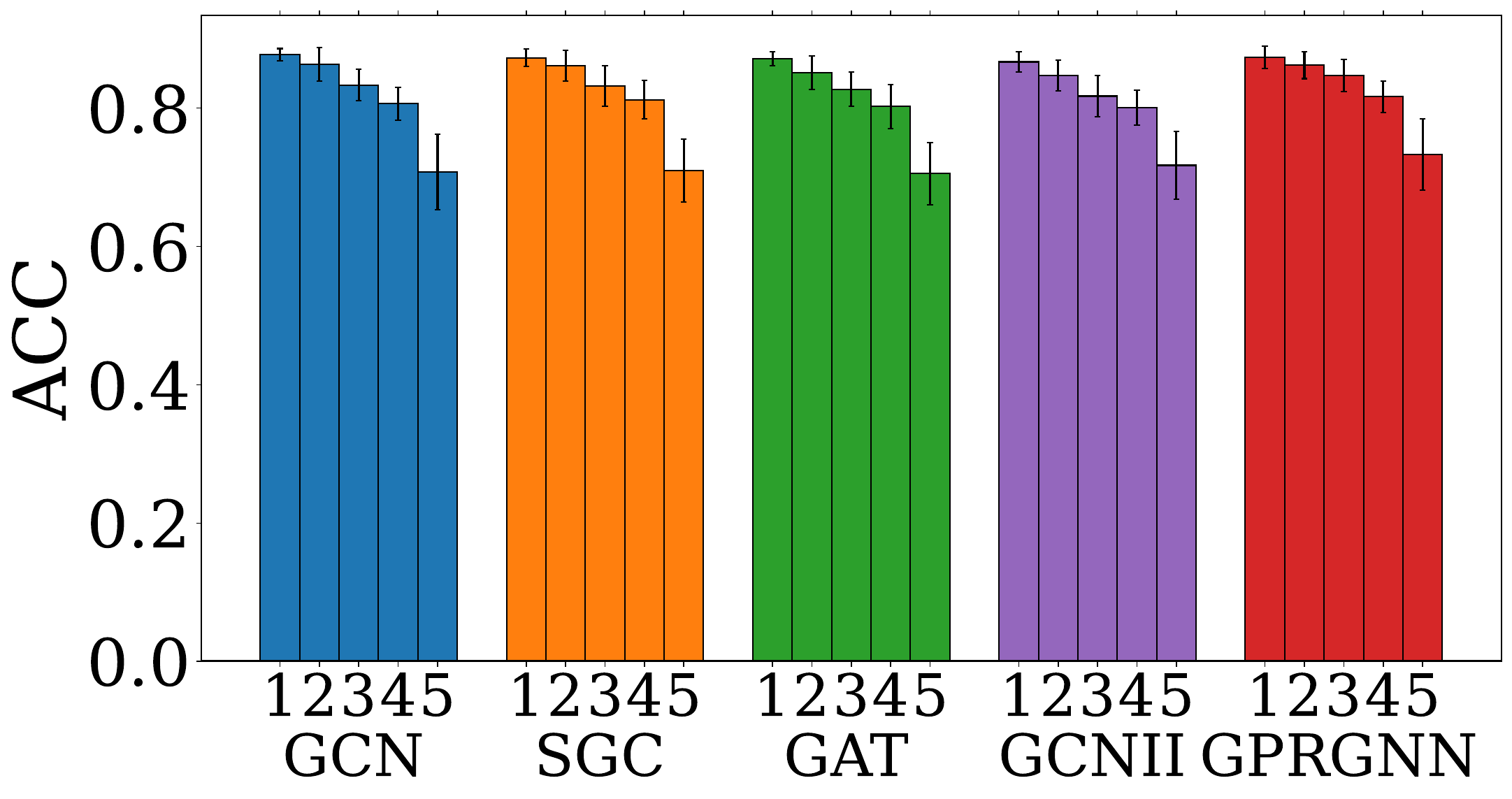}
        \end{minipage}
    }
    \subfigure[CiteSeer]{
    \centering
    \begin{minipage}[b]{0.31\textwidth}
    \includegraphics[width=1.0\textwidth]{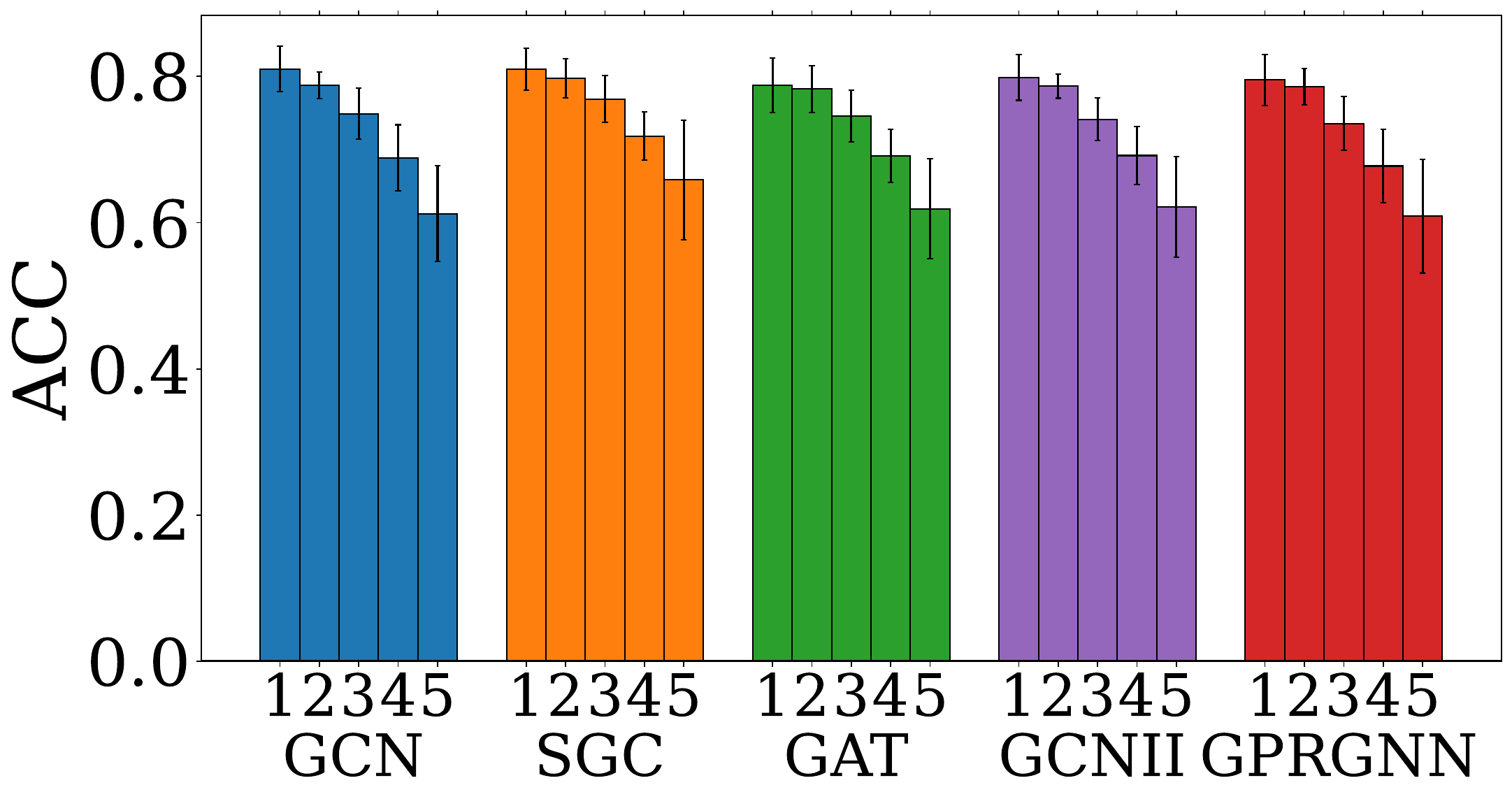}
    \end{minipage}
    }
    \subfigure[IGB-Tiny]{
    \centering
    \begin{minipage}[b]{0.31\textwidth}
    \includegraphics[width=1.0\textwidth]{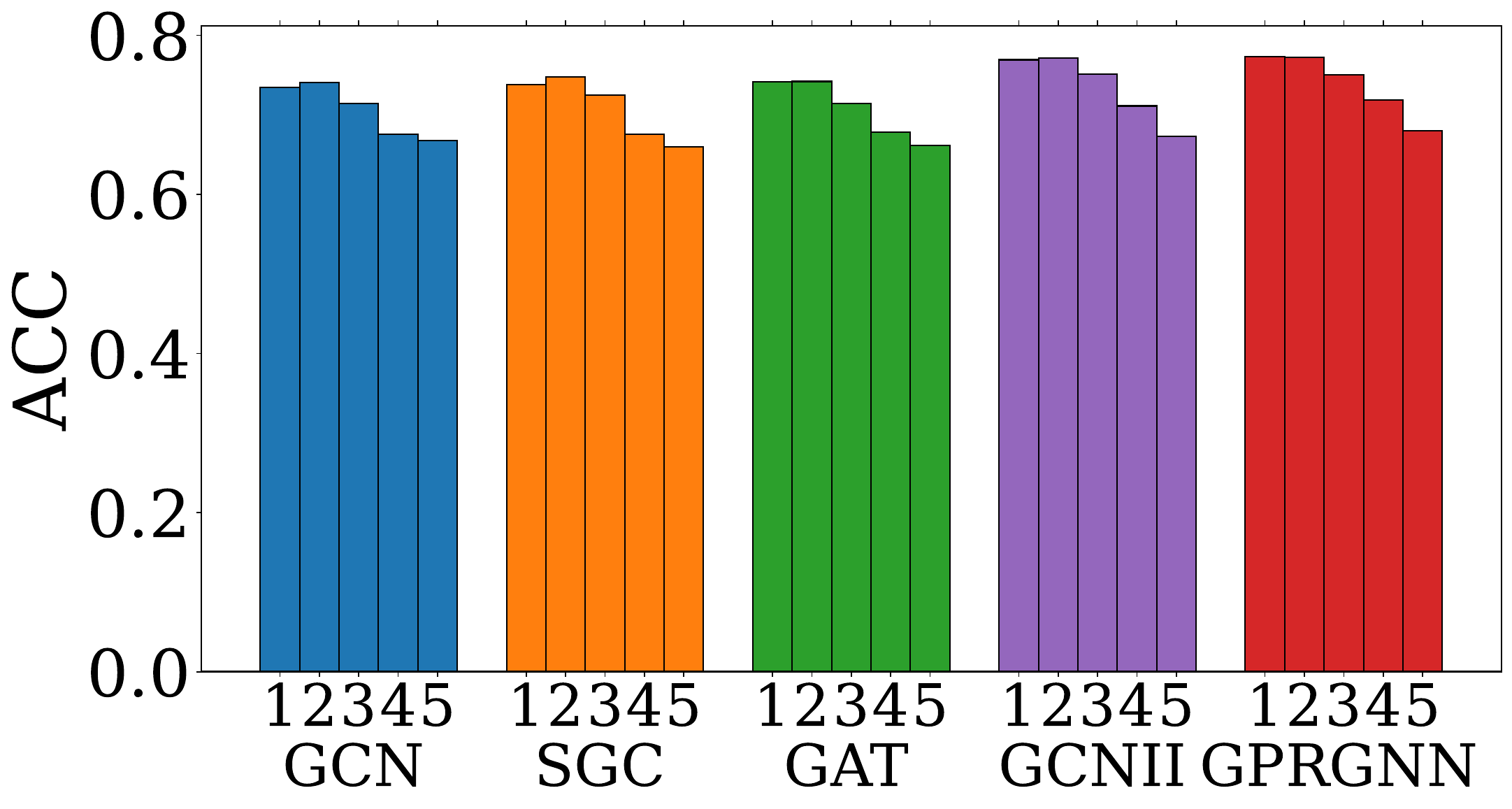}
    \end{minipage}
    }
    \\
    \subfigure[Actor]{
        \centering
        \begin{minipage}[b]{0.31\textwidth}
        \includegraphics[width=1.0\textwidth]{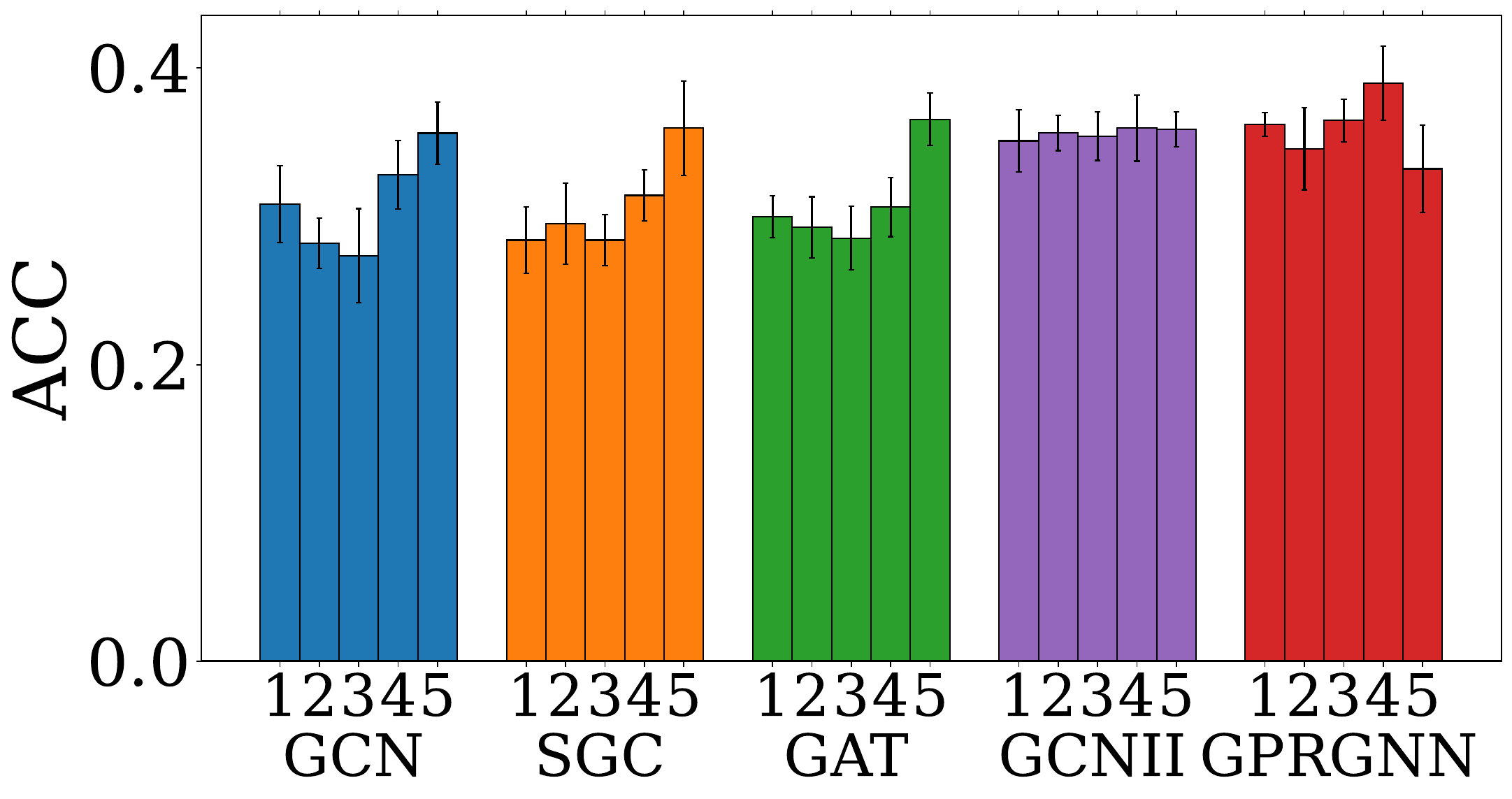}
        \end{minipage}
    }
    \subfigure[Twitch-gamers]{
    \centering
    \begin{minipage}[b]{0.31\textwidth}
    \includegraphics[width=1.0\textwidth]{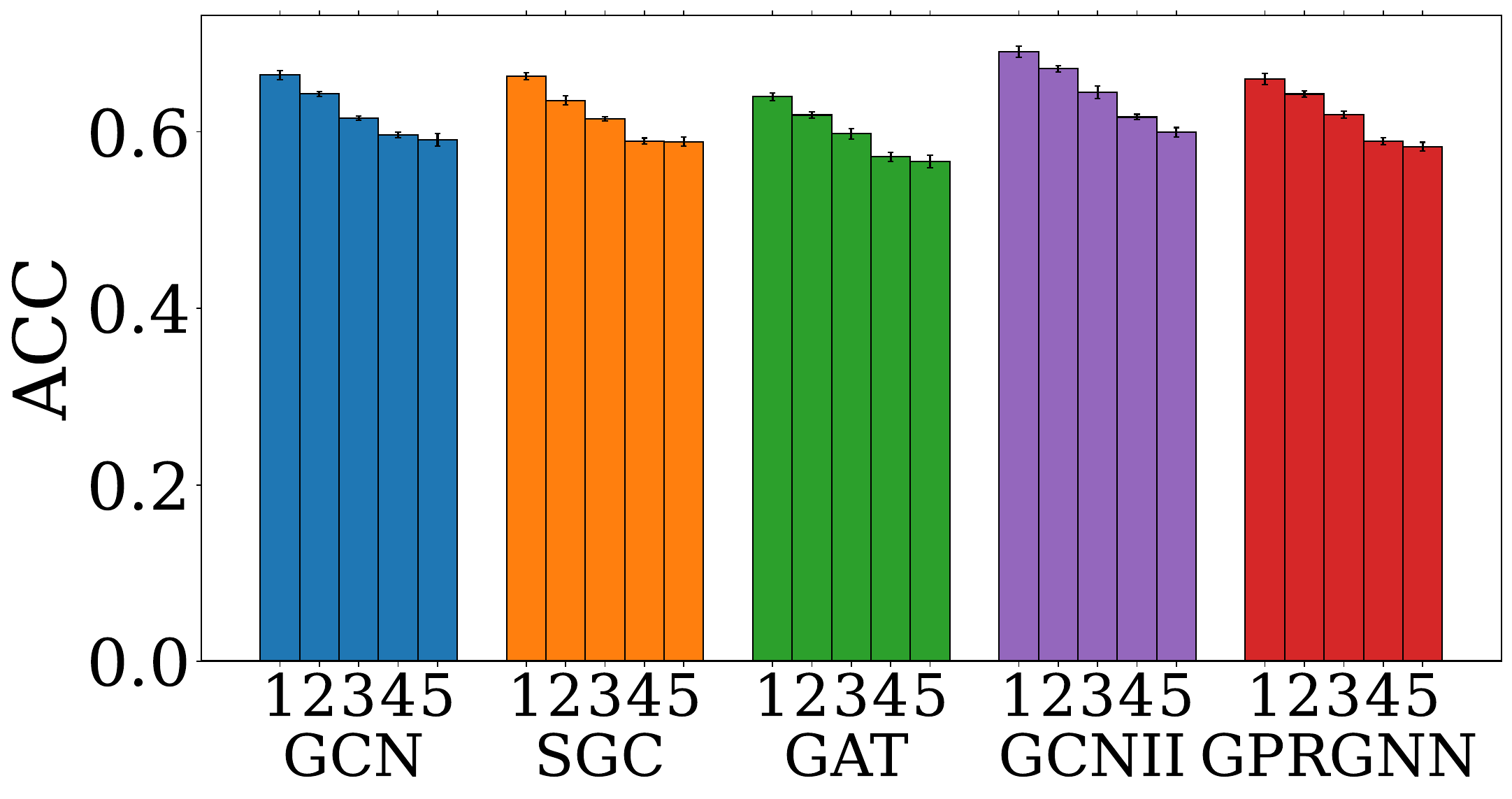}
    \end{minipage}
    }
    \subfigure[Amazon-ratings]{
    \centering
    \begin{minipage}[b]{0.31\textwidth}
    \includegraphics[width=1.0\textwidth]{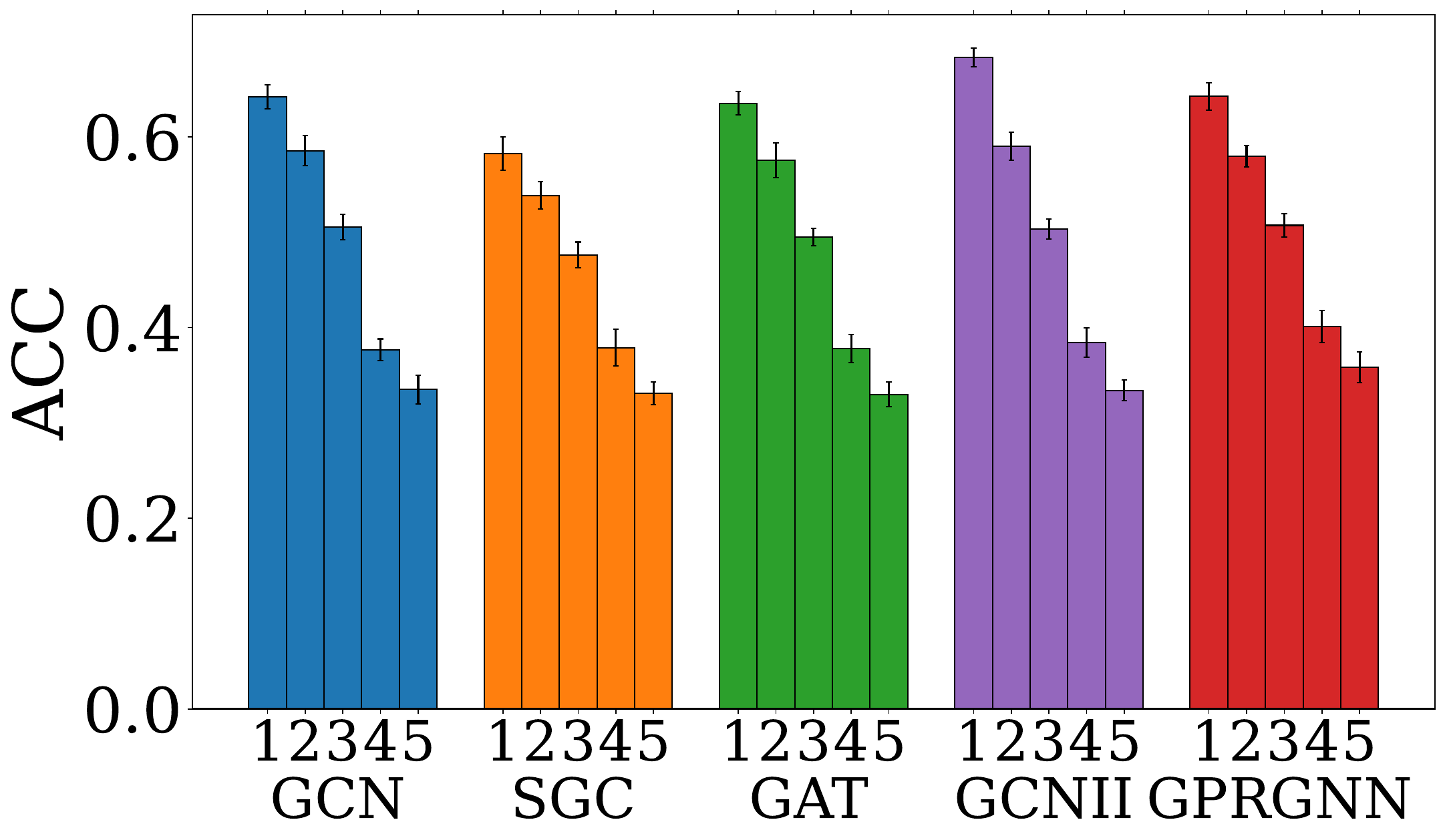}
    \end{minipage}
    }
    \caption{Test accuracy disparity across node subgroups by \textbf{homophily ratio differences} to training nodes on more datasets. Each figure corresponds to a dataset, and each bar cluster corresponds to a GNN model. Bars labeled 1 to 5 represent subgroups with increasing differences to the training set. \label{fig:app_fair_homo}}
\end{figure*}

\section{Additional investigation on the discriminative ability of GCN\label{app:discriminative}}
In Section~\ref{subsec:discrim}, we examine discriminative ratios on input features and multiple hops aggregated features for homophilic and heterophilic test nodes, respectively. 
Nonetheless, those experiments only focus on feature aggregation while ignoring the feature transformation and the training procedure. 
In this section, we further explore the discriminative ratios on well-trained GCNs with both feature aggregation and transformation. 
Typically, we utilize the hidden representation trained with the best hyperparameter as shown in Appendix~\ref{app:experiment_details}. 
Experimental results can be found in Figure~\ref{fig:app_global_GCN}. 
Observations on GCN are consistent with those in Section~\ref{subsec:discrim} focusing on aggregation, where majority nodes generally show a larger discriminative improvement than the minority nodes along with more aggregation layers, indicating a better discriminative ability on majority nodes than the minority ones.
Those observations further substantiate our conclusions in a more general setting.

\begin{figure}
    \centering
    \includegraphics[width=0.4\textwidth]{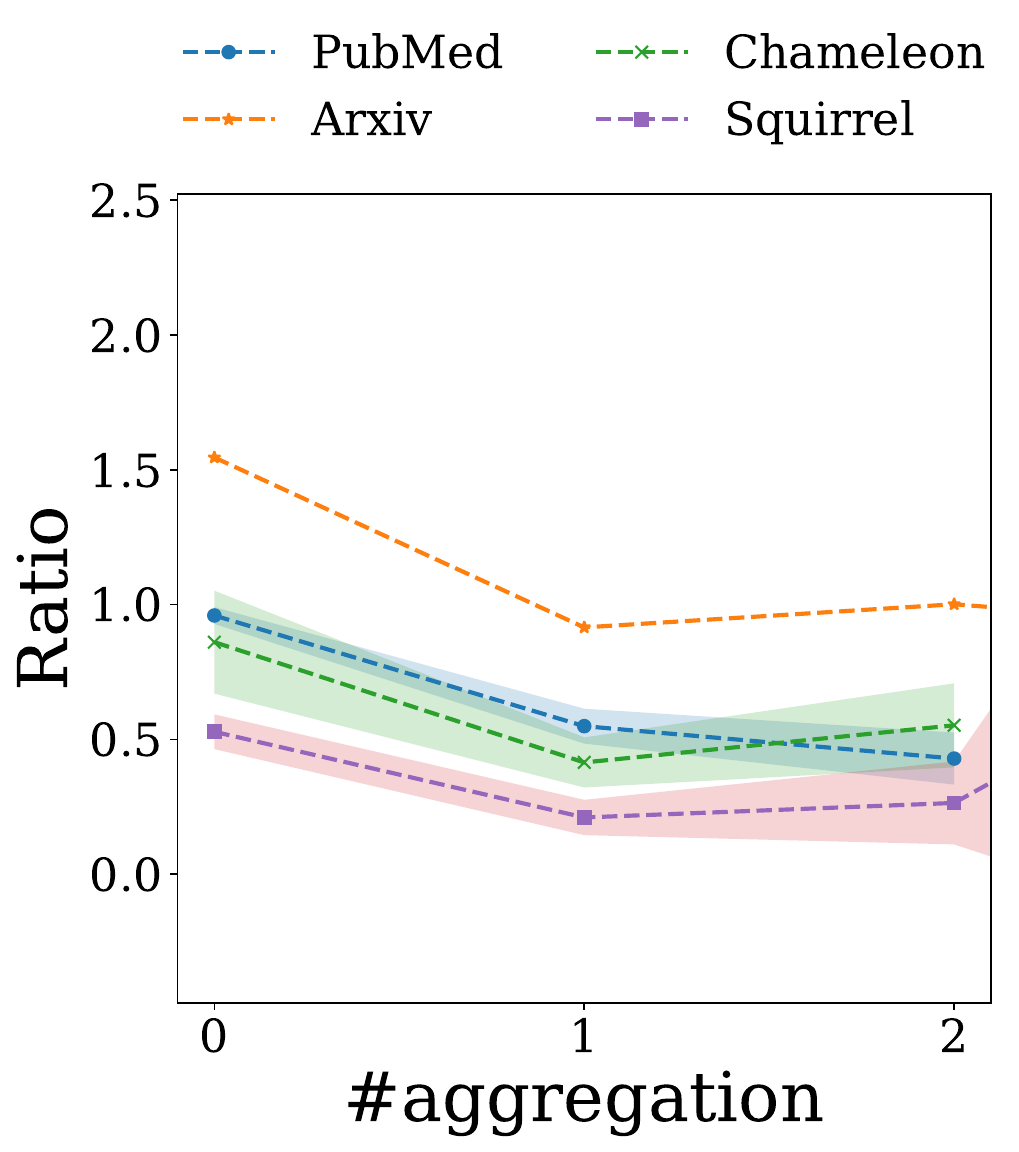}
    \caption{Illustration of the change on relative discriminative ratio along with the aggregation. The x-axis represents the number of aggregations and the y-axis represents the relative discriminative ratio.\label{fig:app_global_GCN}}
    \label{fig:my_label}
\end{figure}

\section{Significant test between GCN and MLP-based models\label{app:sign}}
We conduct the performance comparison between GCN with MLP-based models, on test nodes with different homophily ratios, as shown in Fig~\ref{fig:main_perform_compare}. 
In the corresponding, we include the p-values of a paired t-test and confidence intervals between GCN and MLP-base models in Table~\ref{fig:app_sig}, to determine whether the performance difference is statistically significant.
The $x$-axis corresponds to the node homophily ratio range. 
The $y$-axis corresponds to p-values of whether the performance of GCN is significantly better or worse than the corresponding MLP model.
Typically, $p < 0.05$ indicates statistical significance. 

\begin{figure*}[!h]
    \subfigure[Cora]{
        \centering
        \begin{minipage}[b]{0.31\textwidth}
        \includegraphics[width=1.0\textwidth]{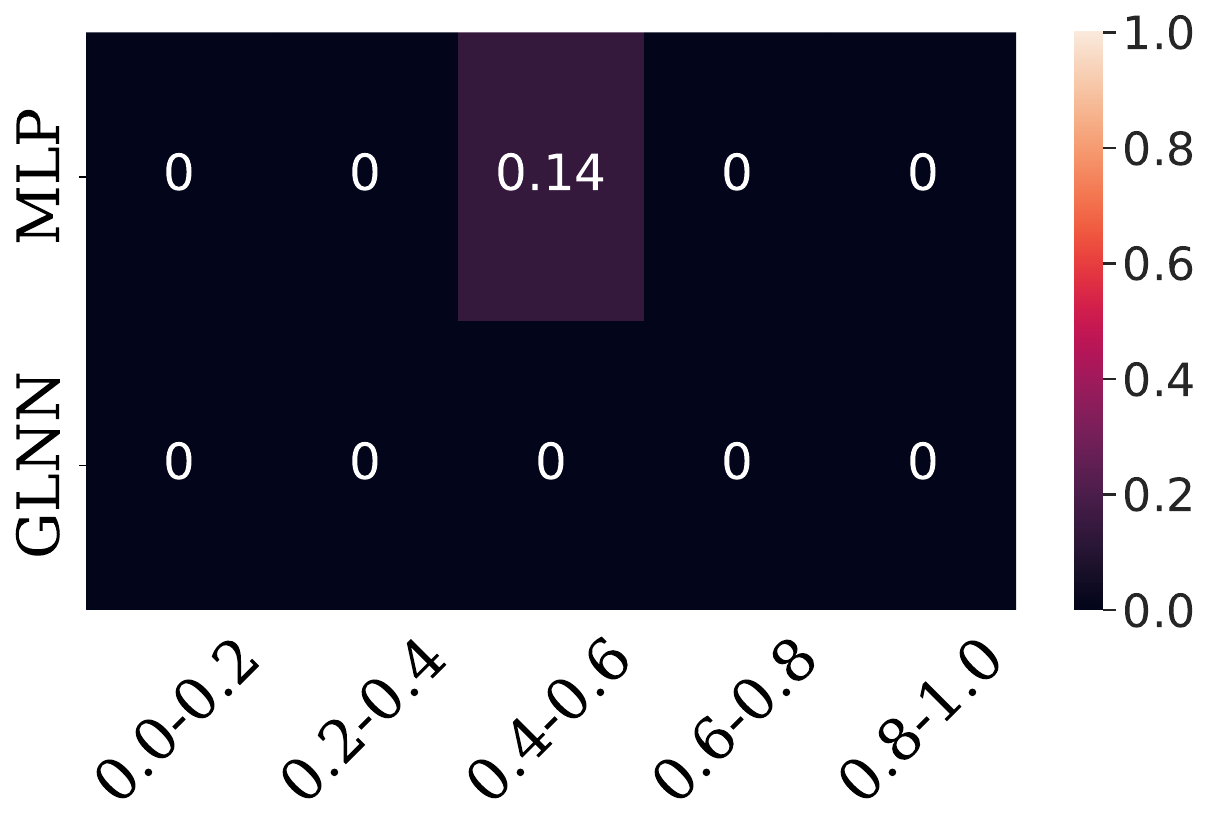}
        \end{minipage}
    }
    \subfigure[CiteSeer]{
    \centering
    \begin{minipage}[b]{0.31\textwidth}
    \includegraphics[width=1.0\textwidth]{fig/significant/PubMed.pdf}
    \end{minipage}
    }
    \subfigure[PubMed]{
    \centering
    \begin{minipage}[b]{0.31\textwidth}
    \includegraphics[width=1.0\textwidth]{fig/significant/PubMed.pdf}
    \end{minipage}
    }
    
    \subfigure[Chameleon]{
    \centering
    \begin{minipage}[b]{0.31\textwidth}
    \includegraphics[width=1.0\textwidth]{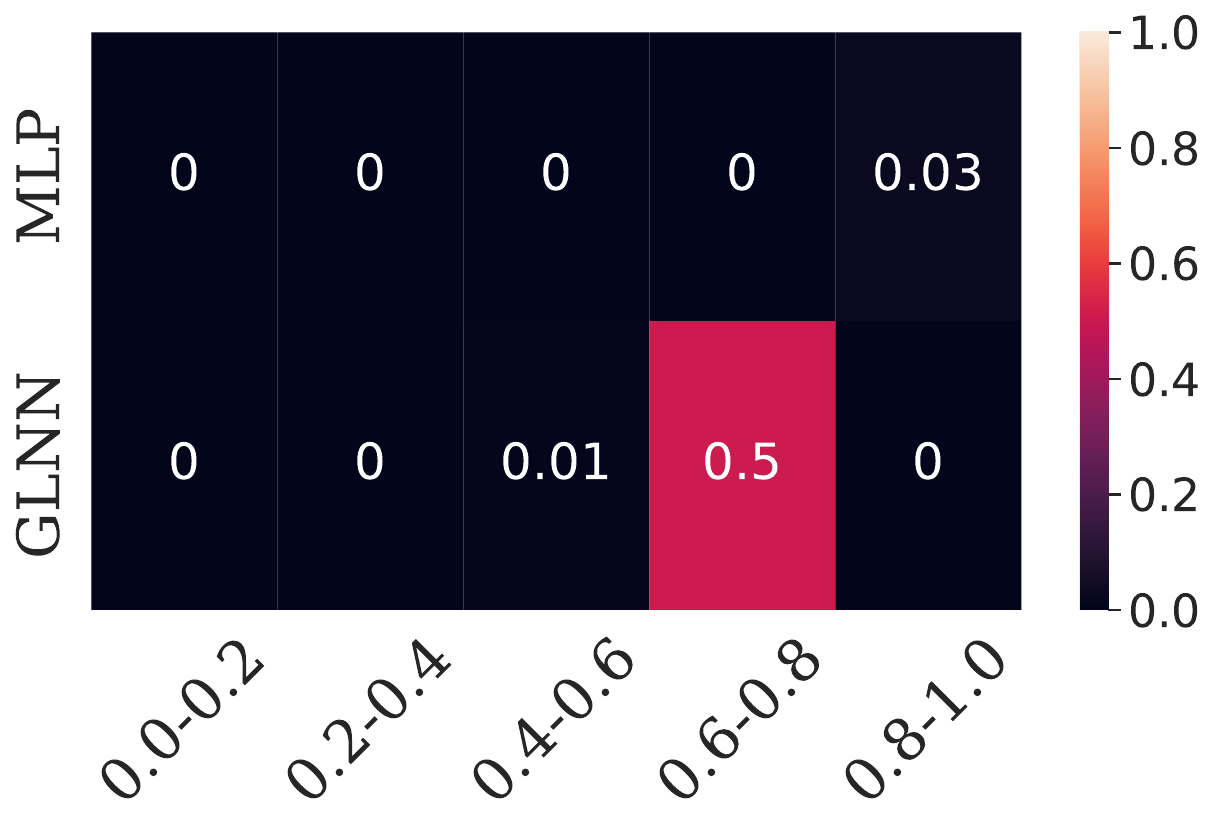}
    \end{minipage}
    }
    \subfigure[Squirrel]{
    \centering
    \begin{minipage}[b]{0.31\textwidth}
    \includegraphics[width=1.0\textwidth]{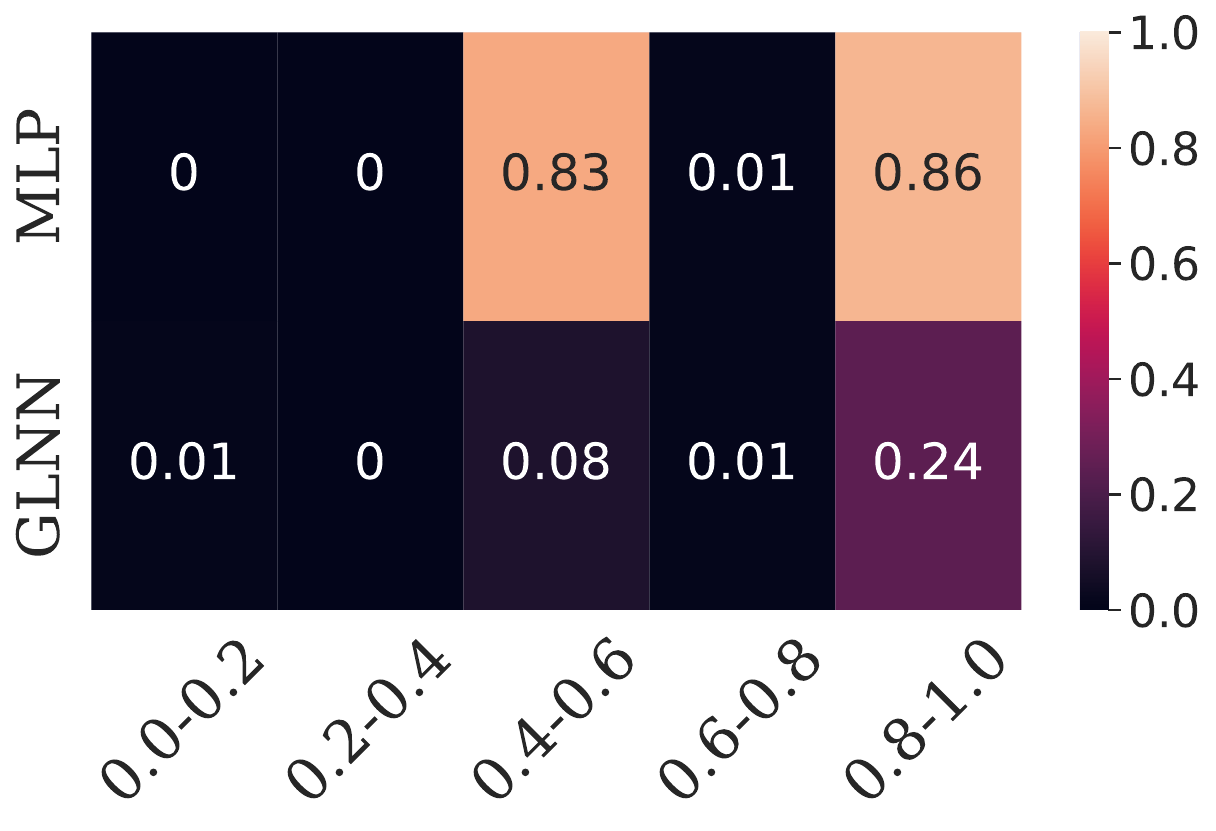}
    \end{minipage}
    }
    \subfigure[Actor]{
    \centering
    \begin{minipage}[b]{0.31\textwidth}
    \includegraphics[width=1.0\textwidth]{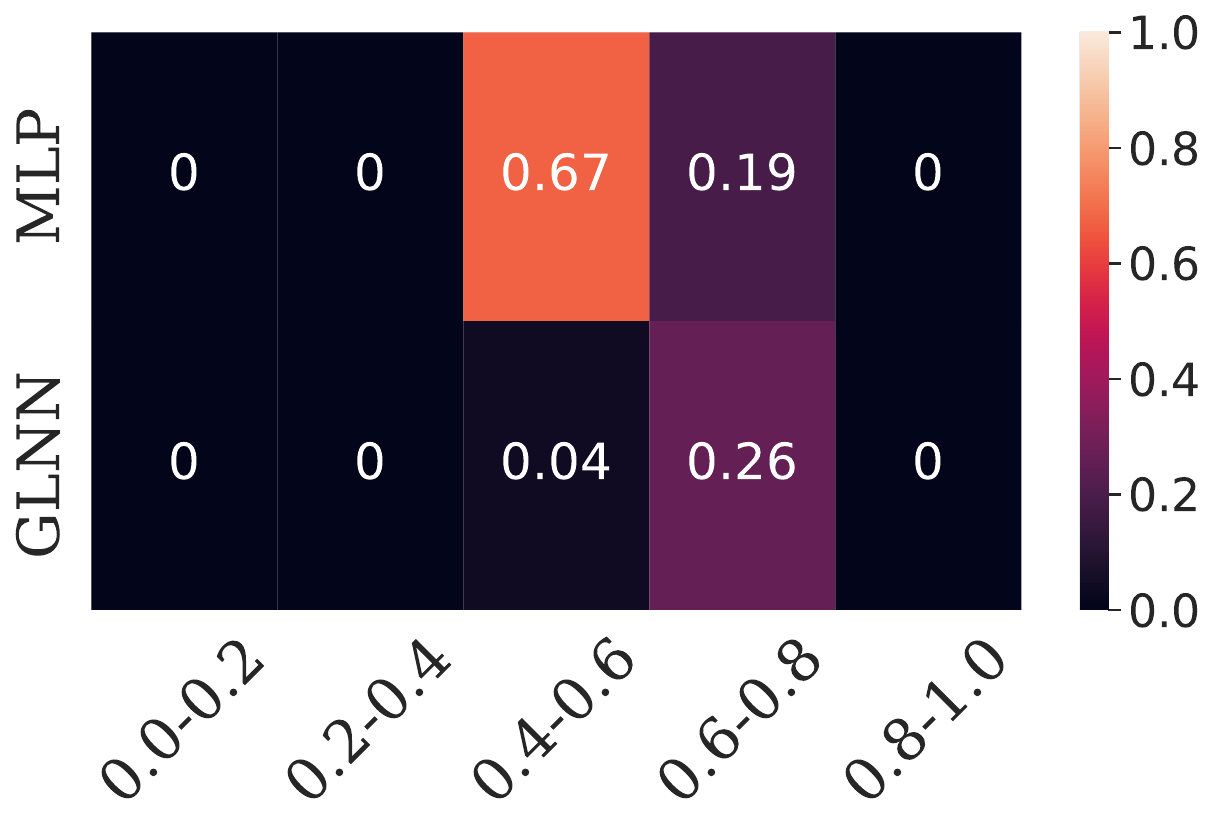}
    \end{minipage}
    }
    \subfigure[Twitch-gamers]{
    \centering
    \begin{minipage}[b]{0.31\textwidth}
    \includegraphics[width=1.0\textwidth]{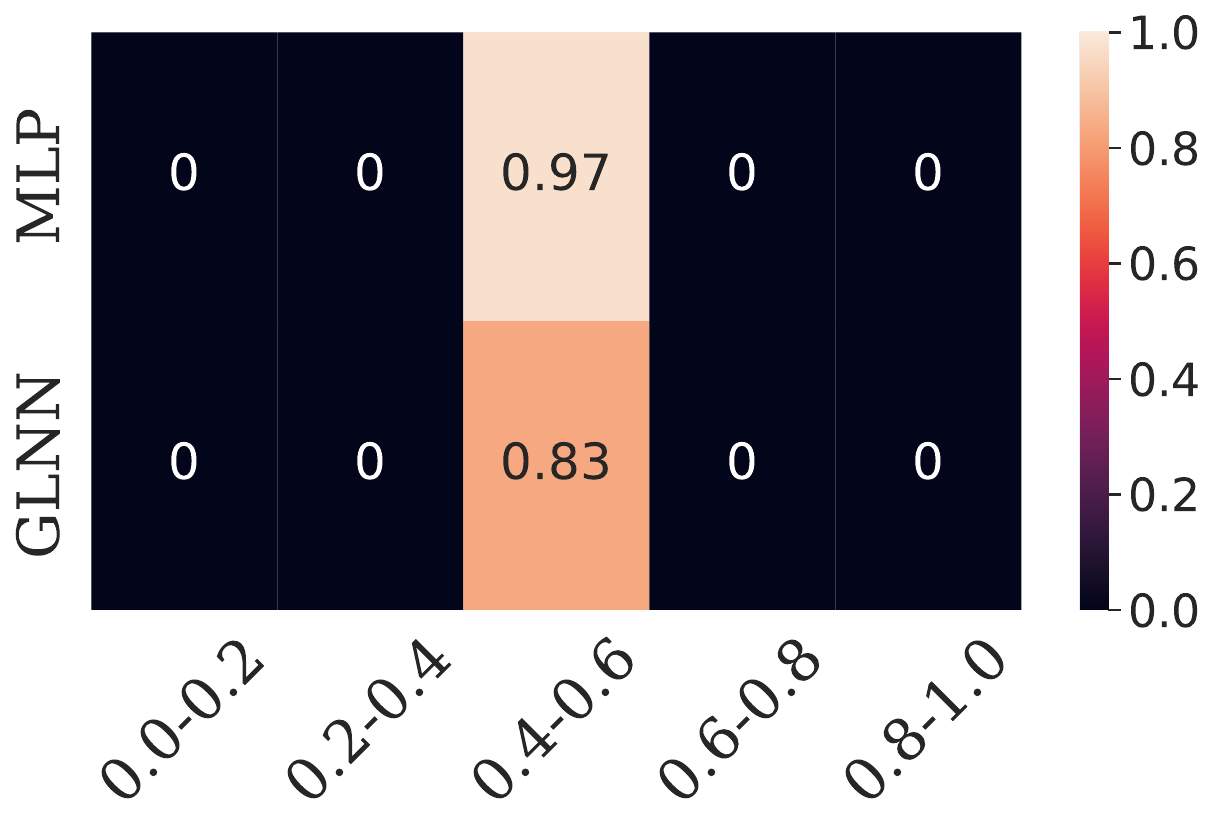}
    \end{minipage}
    }
    \subfigure[Amazon-ratings]{
    \centering
    \begin{minipage}[b]{0.31\textwidth}
    \includegraphics[width=1.0\textwidth]{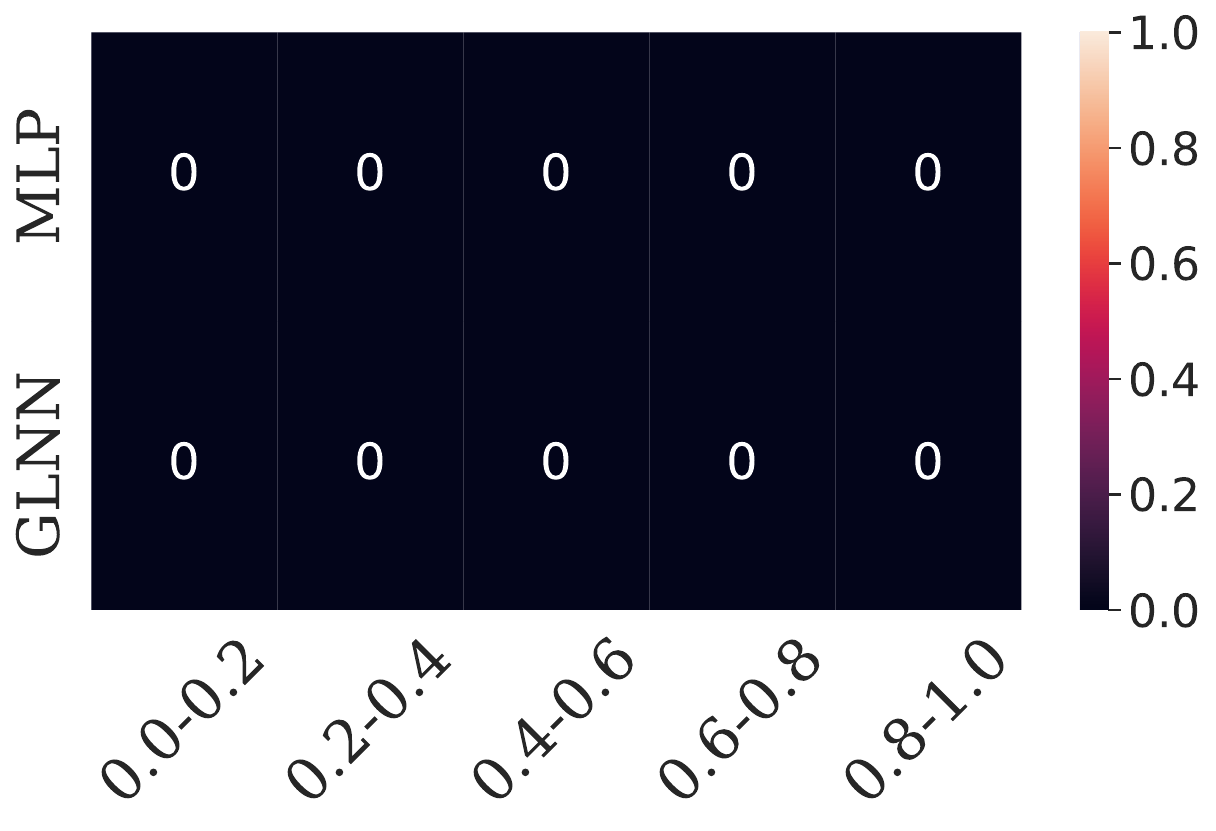}
    \end{minipage}
    }

    \caption{p-value of the paired t-test between GCN and MLP-based models with respect to accuracy on nodes with different homophily ratios. \label{fig:app_sig}}
\end{figure*}

\section{Significant test between GCN and deeper GNN models\label{app:sign_advance}}

We conduct the performance comparison between GCN with deeper GNNs, on test nodes with different homophily ratios, as shown in Fig~\ref{fig:main_advance_compare_perform}.
In the corresponding, we include the p-values of a paired t-test and confidence intervals between GCN and deeper GNN models in Table~\ref{fig:app_sig_advance}, to determine whether the performance difference is statistically significant.
The $x$-axis corresponds to the node homophily ratio range. 
The $y$-axis corresponds to p-values of whether the performance of GCN is significantly better or worse than the corresponding deeper model.
Typically, $p < 0.05$ indicates statistical significance.

\begin{figure*}[!h]
    \subfigure[Cora]{
        \centering
        \begin{minipage}[b]{0.31\textwidth}
        \includegraphics[width=1.0\textwidth]{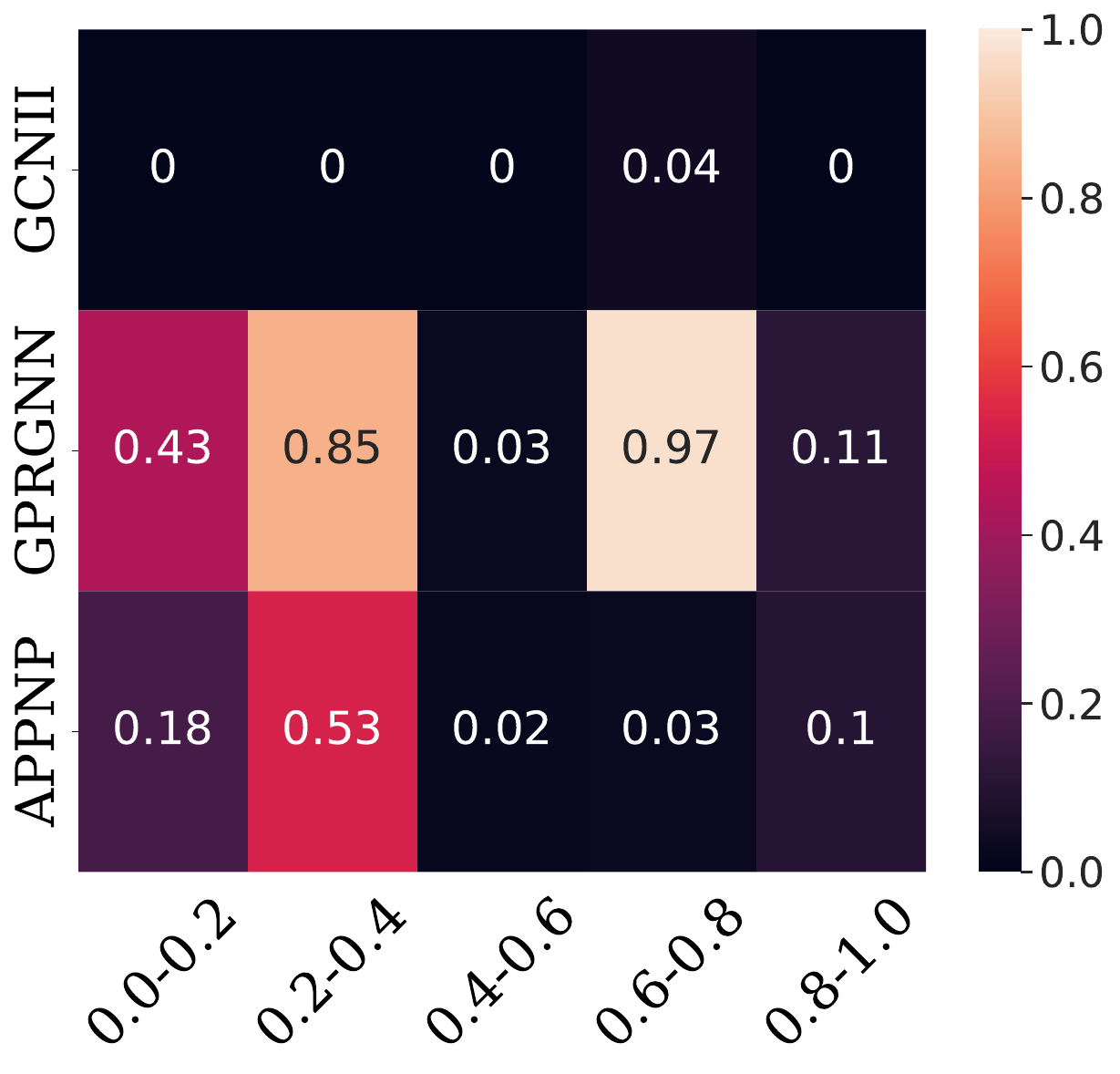}
        \end{minipage}
    }
    \subfigure[CiteSeer]{
        \centering
        \begin{minipage}[b]{0.31\textwidth}
        \includegraphics[width=1.0\textwidth]{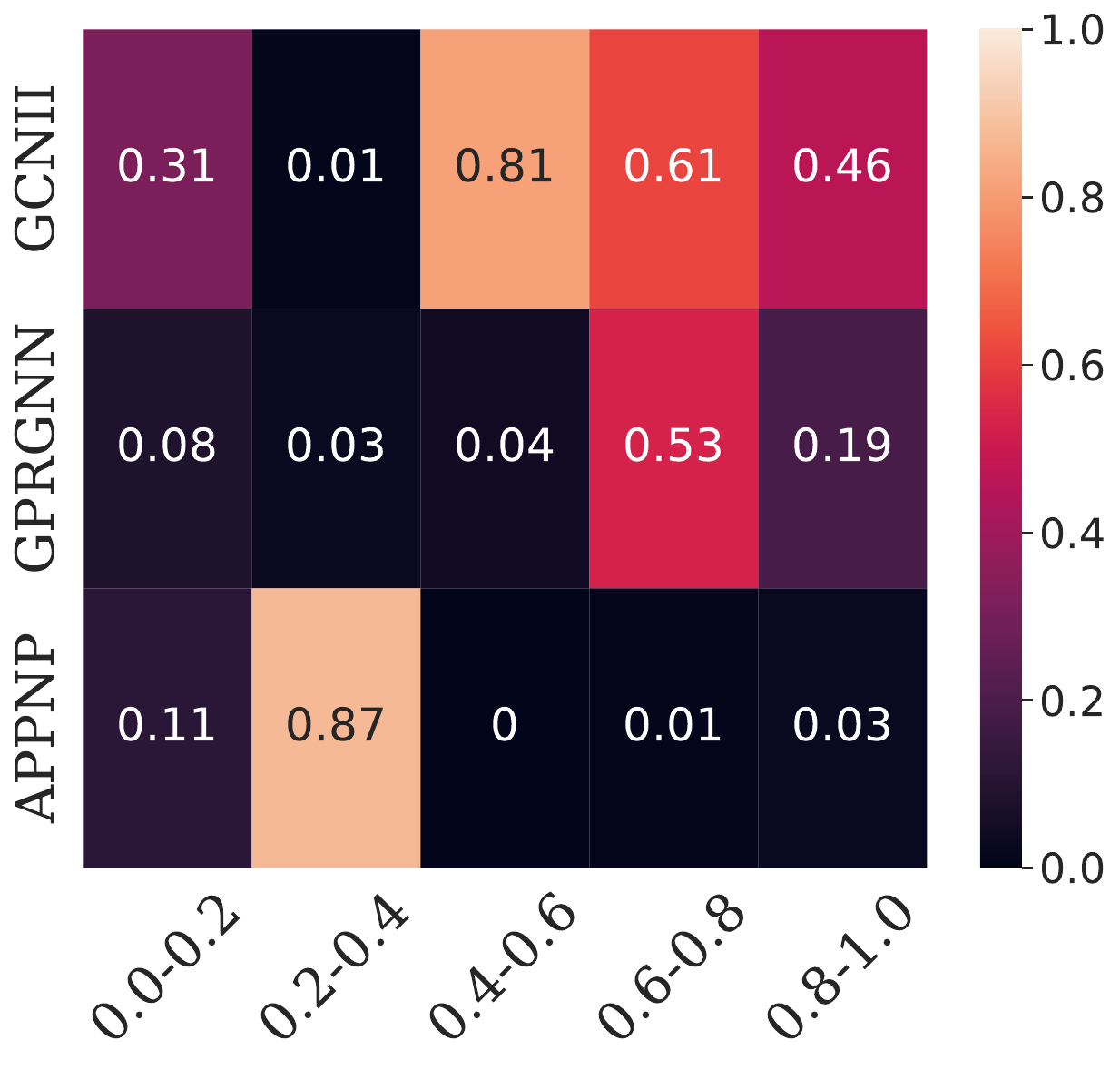}
        \end{minipage}
    }\subfigure[PubMed]{
        \centering
        \begin{minipage}[b]{0.31\textwidth}
        \includegraphics[width=1.0\textwidth]{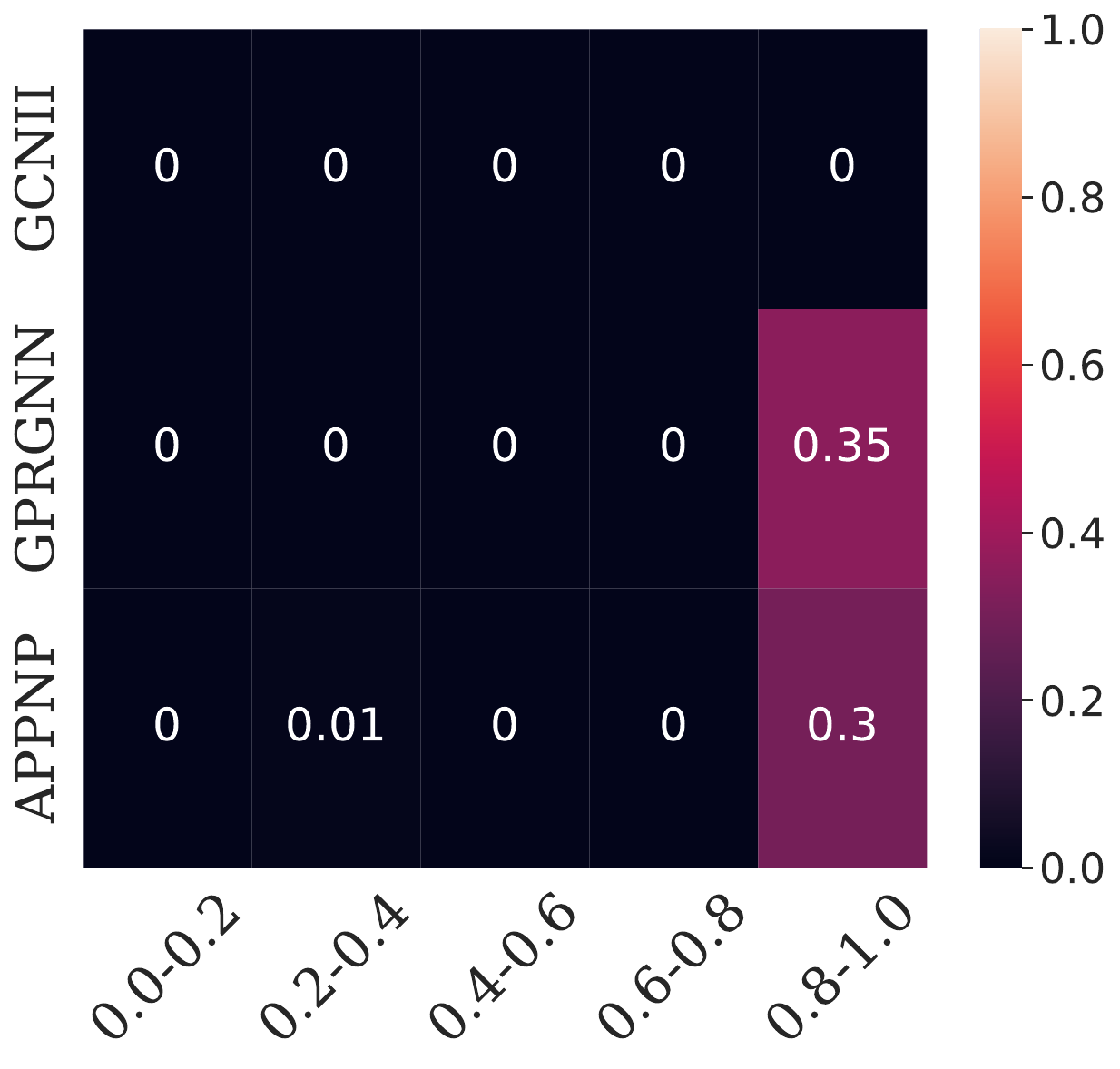}
        \end{minipage}
    }
    
    \subfigure[Chameleon]{
    \centering
    \begin{minipage}[b]{0.31\textwidth}
    \includegraphics[width=1.0\textwidth]{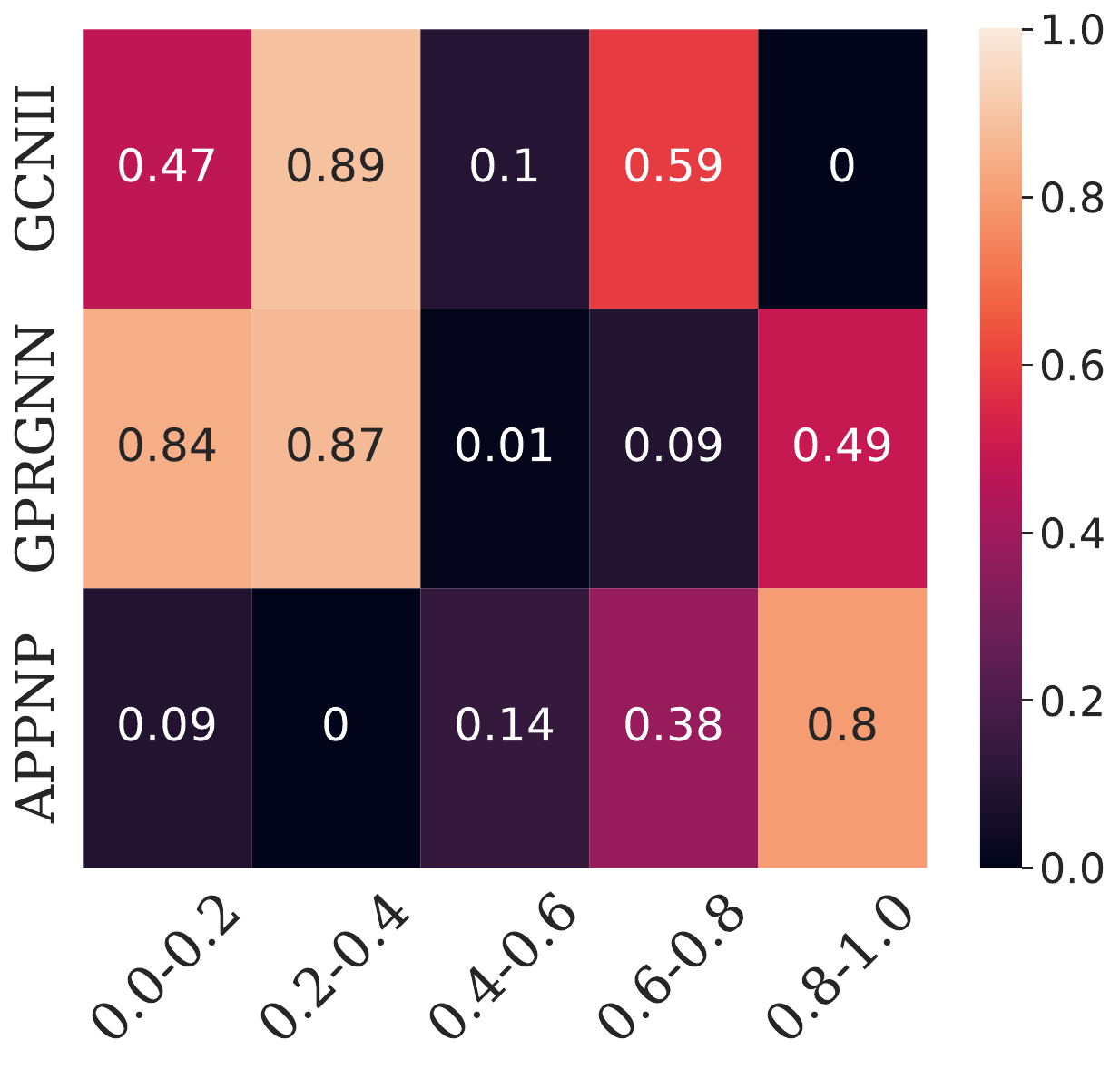}
    \end{minipage}
    }
    \subfigure[Squirrel]{
    \centering
    \begin{minipage}[b]{0.31\textwidth}
    \includegraphics[width=1.0\textwidth]{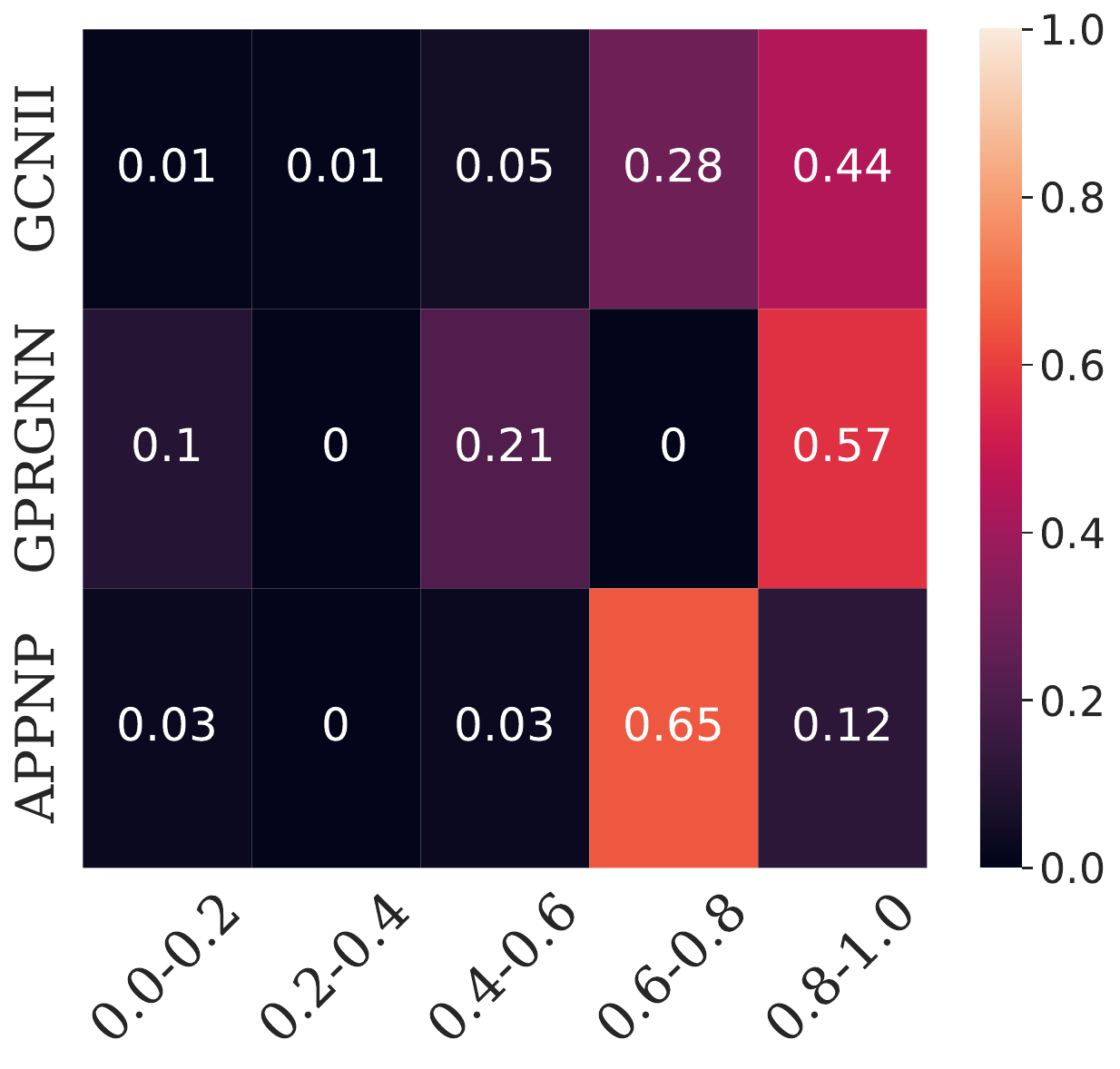}
    \end{minipage}
    }
    \subfigure[Actor]{
    \centering
    \begin{minipage}[b]{0.31\textwidth}
    \includegraphics[width=1.0\textwidth]{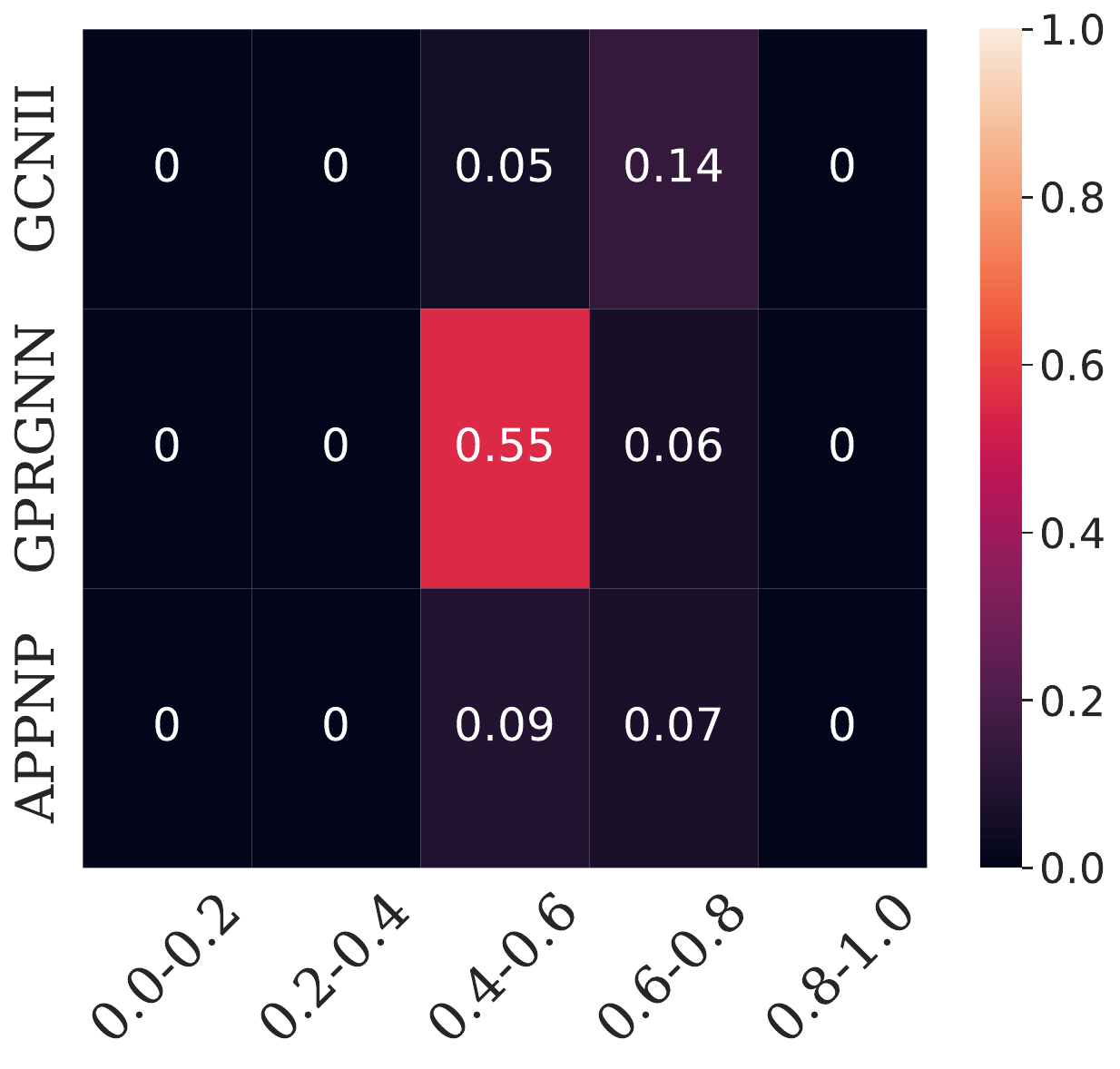}
    \end{minipage}
    }
    
    \subfigure[Twitch-gamers]{
    \centering
    \begin{minipage}[b]{0.31\textwidth}
    \includegraphics[width=1.0\textwidth]{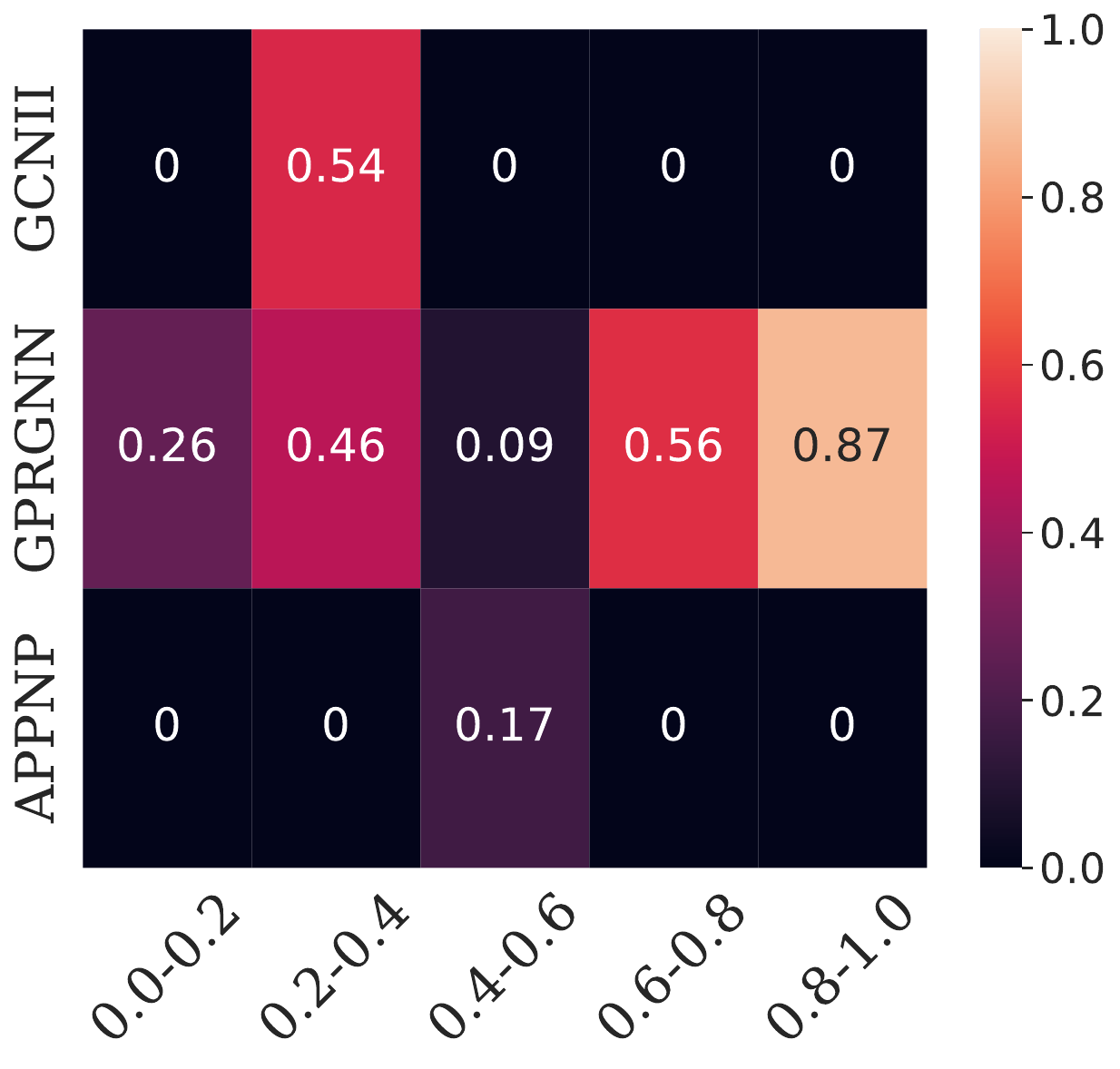}
    \end{minipage}
    }
    \subfigure[Amazon-ratings]{
    \centering
    \begin{minipage}[b]{0.31\textwidth}
    \includegraphics[width=1.0\textwidth]{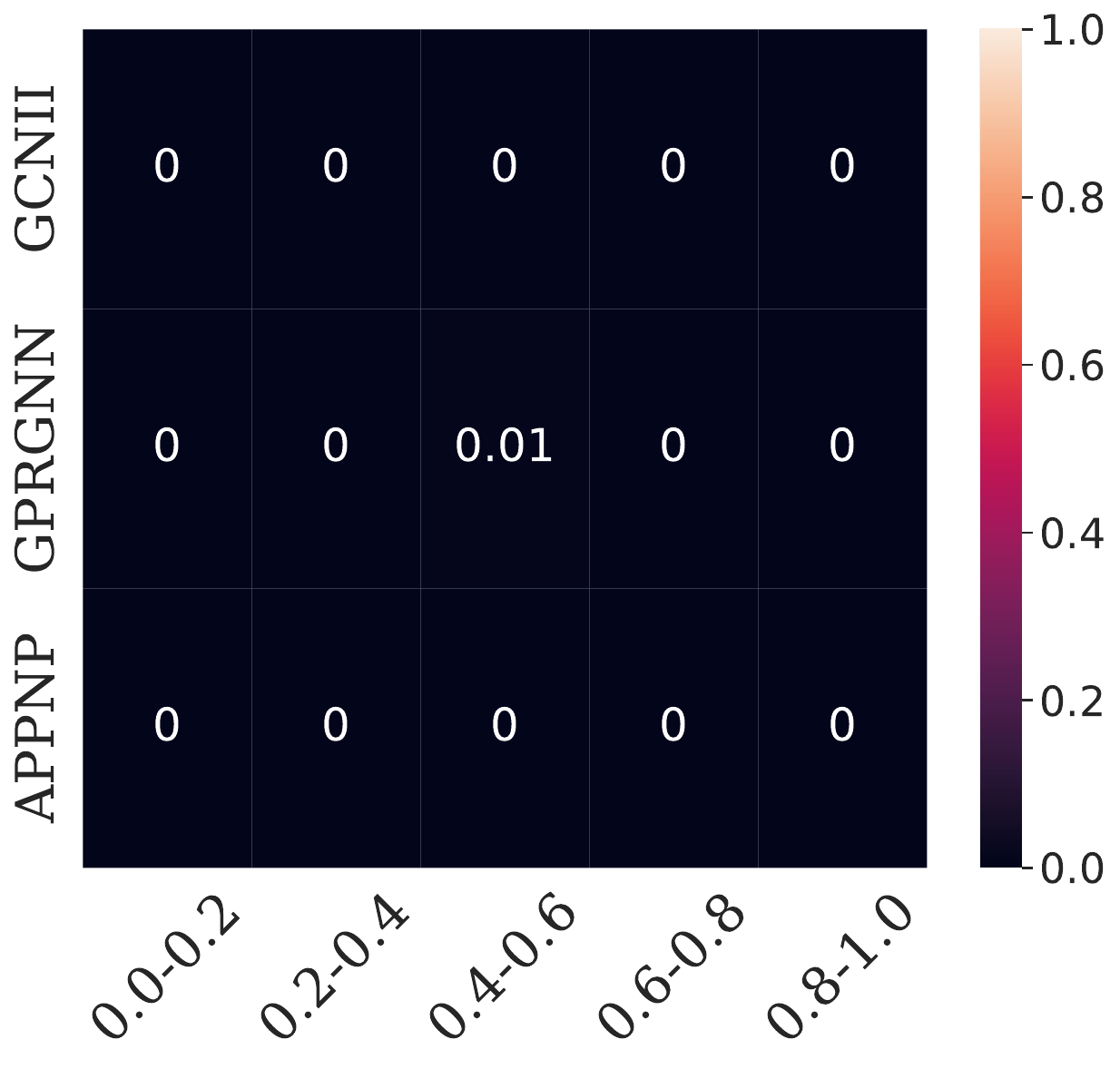}
    \end{minipage}
    }

    \caption{p-value of the paired t-test between GCN and deep GNN models with respect to accuracy on nodes with different homophily ratios. \label{fig:app_sig_advance} }
\end{figure*}

\section{Limitation\label{app:limitation}}
It is important to acknowledge certain limitations in our work despite our research providing valuable insights into the effectiveness of GNN on structural disparity.
In order to facilitate a more feasible theoretical analysis, we have made several assumptions, the most significant of which is the use of the generalized CSBM-S model assumption in our theoretical analysis. This CSBM assumption, although prevalent in graph analysis, is subject to a notable drawback as the CSBM assumption postulates that feature dimensions are independent and identically distributed. 
It restricts the generality of our analysis.

Our current findings lay a strong foundation for further exploration. 
It is worth mentioning that our theoretical analysis predominantly focuses on simple aggregation techniques. 
We hope that these efforts will contribute significantly to the Graph Neural Network research community. 
Moving forward, we aspire to broaden our findings to encompass more advanced message-passing neural networks, with particular emphasis on deeper GNN models. 
Currently, there is only empirical evidence showing the effectiveness of the deeper GNN models. 
Besides, the node homophily ratio is calculated on the undirected graph while the one on the directed graph is still under exploration. 
More comprehensive analyses on the higher-order aggregated feature and higher-order homophily ratio are still under-explored.  
Moreover, We aim to utilize our understanding to mitigate the performance difference induced by the structural disparity in the future.

\section{Broader Impact\label{app:broader}}
Graph Neural Networks~(GNNs) have emerged as a powerful architecture for modeling graph-structured data across a wide range of practical applications~\cite{Kipf2016SemiSupervisedCW, zhang2018link, xupowerful, he2020lightgcn, wieder2020compact, zhu2021neural, fan2019graph, chen2023exploring}. 
A key aspect of GNNs is the inductive bias on leveraging neighborhood information. 
However, such characteristics may lead to a biased test prediction, particularly when majority neighborhood patterns are prevalent. 
For example, a scenario in which we are required to predict the gender of individuals with a dating network. 
GNN may excessively rely on the pattern that when all neighboring nodes are female, the center node should be male. 
This approach disregards the minority group of individuals who are more likely to date others of the same gender. This overlooked drawback in GNN may lead to ethical concerns.

In this study, we reveal the potential for performance disparity inherent in Graph Neural Networks and propose a novel OOD scenario for exploring methods to mitigate such issues. 
It is important to emphasize that our research offers an understanding of these limitations rather than introducing a new methodology or approach. 
We only identify the problem and show the potential solution to address this issue.  
Consequently, we do not foresee any negative broader impacts stemming from our findings. 
We expect our work to contribute significantly to the ongoing research efforts aimed at enhancing the versatility and fairness of GNN models when applied to diverse data settings.

Our findings can also help to boost different graph domains.
In section~\ref{sec:application}, we show how our understanding can help to elucidate the effectiveness of deeper GNN, reveal an overlooked OOD factor, and propose a new Graph out-of-distribution scenario. 
It is worth noting that our findings can derive new understandings of more graph applications and inspire us to identify new problems in graph domains. 
We further illustrate more understanding on the robustness of GNN and more initial understanding on Graph out-of-distribution problem as follows. 

\textbf{Graph Robustness.} Recent studies have exposed the susceptibility of GNNs to graph adversarial attacks~\cite{zugner2018adversarial, sun2020adversarial, xu2019topology, geisler2021robustness, jin2021adversarial}, where small perturbations can mislead GNNs into making incorrect predictions. several literature~\cite{waniek2018hiding, zugneradversarial, zhu2022does} reveals that increasing the heterophily of the homophily graph could be a key factor contributing to a successful attack. 
More recently, \cite{li2023revisiting} observes that (1) learning-based attacks~\cite{zugneradversarial} tend to increase the heterophily in the smaller of train and test sets.  
(2) Directed attacks on particular nodes can outperform the undirected attack on the whole graph. 
Those observations are aligned with the understanding in our paper that (1) Increasing the heterophily of the homophilic graph enlarges the homophily ratio difference, leading to performance degradation.
(2) Perturbations lying in the smaller set can efficiently enlarge the homophily ratio difference between training and test nodes with minimal perturbation budgets. 
(3) Undirected attack on the whole graph may not lead to a larger homophily ratio difference between the train and test nodes with permutation on both train and test nodes, leading to performance degradation.

\textbf{Understanding on Graph out-of-distribution problem.} 
Despite the new proposed Graph Out-Of-Distribution~(OOD) scenario, our findings can further inspire a new understanding of existing graph OOD problems. We provide an initial discussion as follows.
The uniqueness challenge of the OOD problem in the graph domain is that the discrepancy in graph structure can also lead to test performance degradation, in addition to node feature differences. 
There exists various graph OOD scenarios\cite{wu2022handling, zhu2021shift, zhang2019dane, hu2020ogb, guigood, ma2021subgroup}, e.g., graph density, biased training labels, time shift, and popularity shift. 
Nonetheless, despite various graph OOD scenarios with different qualitative concepts, it still lacks of quantitative graph metrics to measure how the distribution shift happens, causing performance degradation.
Inspired by our finding,  We can roughly recognize the main reasons as graph covariate shift and concept shift~\cite{quinonero2008dataset, moreno2012unifying, widmer1996learning,song2022learning} in a graph context: 
(1) Graph covariate shift~\cite{ben2010theory} is defined as $P^{\text{train}}(X) \neq P^{\text{test}}(X)$ where $P^{\text{train}}(\cdot)$ and $P^{\text{test}}(\cdot)$ are training and test distribution, respectively.
Such graph covariate shift corresponds to the large feature distance $\|\rvf_u-\rvf_v \|$ between train and test nodes. 
(2) Graph concept shift is defined as $P^{\text{train}}(Y|X) \neq P^{\text{test}}(Y|X)$. 
Such graph concept shift corresponds to the homophily ratio difference $|h_u - h_v|$ between train and test nodes. 
We leave a more detailed discussion and a careful design on the graph OOD quantative metrics as a further work.

\section{Future work}

\paragraph{Further investigation on Deeper GNN}
In section~\ref{sec:application}, we conduct experiments on the effectiveness of deeper GNNs, indicating the performance gain is majorly from the improved discriminative ability on nodes in minor patterns. It could be a good direction to further explain and understand why such phenomenon happens. A potential reference for conducting discriminative analysis of deeper GNN is~\cite{wu2022non}, aims to theoretically quantify the discriminative ability on deeper GNNs with more aggregations. Nonetheless, their analysis uses the vanilla CSBM model as the data assumption, denoted as $\text{CSBM}(\mu_1, \mu_2, p, q)$. The CSBM model presumes that all nodes follow either homophilic with $p>q$ or heterophilic patterns with $p<q$ exclusively. However, this assumption conflicts with real-world scenarios as homophilic and heterophilic patterns coexist across different graphs, as shown in Figure 2. As far as we can see, conducting a similar discriminative analysis as [1] on our proposed CSBM-M model considering both homophily and heterophily patterns could be a good solution.

\paragraph{Other factors on performance disparity}
Existing literature~\cite{zhu2021shift,ma2021subgroup,tang2020investigating,dong2022edits} shows that some other structural information, e.g., degree, geodesic distance to the training node, and Personal Pagerank score, could lead to a performance disparity. Nonetheless, all those analyses and conclusions are conducted on homophilic graphs, e.g., PubMed and ogbn-arxiv, while ignoring the heterophilic graphs e.g., Chameleon and Squirrel, which also broadly exist in the graph domain. 
It could be a good new research direction to see how the above factors, focus on the homophilic pattern in the context of both homophilic and heterophilic properties. 
Another one could be how to find other important factors on both homophilic and heterophilic graphs. 

\paragraph{Solutions on performance disparity}
In this paper, we propose and analyze the importance of the performance disparity problem without no explicit solution. We provide two potential solution as follows:
\begin{itemize}
    \item Combining MLP and GNN in an adaptive approach since MLPs can achieve better performance on minority nodes and GNNs better on majority nodes. Ideally, we can adaptively select MLP for minority nodes and GNN for majority nodes, using an adaptively gated function to control the proportion of MLP and GNN. Our findings suggest the homophilic/heterophilic pattern selection can serve as a guide for learning the gate function.
    \item Utilizing global structural information, as the global pattern is more robust, showing less disparity than the local structural pattern. Empirical evidence can be found in Figure 9 that the higher-order homophily ratio differences are smaller than the local structure disparity.
\end{itemize}

\end{document}